\newcolumntype{C}{>{\centering\arraybackslash}X}
\newcommand{\cmark}{\ding{51}} %
\newcommand{\xmark}{\ding{55}} %
\let\pdfpagewidth\pagewidth\fi
\let\pdfpageheight\pageheight\fi
\pgfplotsset{compat=1.18}
\definecolor{orchid}{RGB}{218, 112, 214}
\definecolor{blueviolet}{RGB}{138, 43, 226}
\definecolor{darkpurple}{RGB}{80, 0, 128}
\newcommand{\AH}[1]{} %
\newcommand{\WA}[1]{} %
\newcommand{\para}[1]{\vspace{0pt}\noindent\textbf{#1}\,}
\definecolor{CardinalRed}{HTML}{C41E3A}
\definecolor{Dartmouth}{HTML}{00693E}
\colorlet{MyRed}{CardinalRed!50!Crimson}
\colorlet{MyGreen}{Dartmouth}
\colorlet{MyBlue}{DodgerBlue}
\colorlet{MyViolet}{DarkMagenta}
\colorlet{MyLightRed}{MyRed!25}
\colorlet{MyLightGreen}{MyGreen!25}
\colorlet{MyLightBlue}{MyBlue!25}
\colorlet{PrimalColor}{MyBlue}
\colorlet{PrimalFill}{PrimalColor!25}
\colorlet{DualColor}{MyRed}
\colorlet{AlertColor}{MyRed}	%
\colorlet{BadColor}{MyRed}	%
\colorlet{GoodColor}{MyGreen}	%
\colorlet{LinkColor}{MediumBlue}	%
\colorlet{MacroColor}{CardinalRed}	%
\colorlet{RevColor}{MyViolet}	%
	\def\ltx@label#1{\cref@label{#1}}	%
	\def\label@in@display@noarg#1{\cref@old@label@in@display{#1}}	%
\crefname{algo}{Algorithm}{Algorithms}
\crefname{assumption}{Assumption}{Assumptions}
\crefname{case}{Case}{Cases}
\crefname{theorem}{Thm.}{Thms.}
\crefname{example}{Ex.}{Exs.}
\crefname{figure}{Fig.}{Figs.}
\crefname{lemma}{Lem.}{Lems.}
\crefname{assumption}{Asm.}{Asms.}
\crefname{appendix}{App.}{Apps.}
\crefname{section}{\S}{\S\S}
\crefname{subsection}{\S}{\S\S}
\crefname{cond}{Condition}{Conditions}
\crefname{noref}{}{}
\crefname{problem}{Problem}{Problems}
\DeclareRobustCommand{\crefnosort}[1]{%
  \begingroup\@cref@sortfalse\cref{#1}\endgroup
}
\theoremstyle{plain}
\newtheorem{theorem}{Theorem}	%
\newtheorem{lemma}{Lemma}	%
\newtheorem{proposition}{Proposition}	%
\newtheorem*{theorem*}{Theorem}	%
\newtheorem*{corollary*}{Corollary}	%
\theoremstyle{definition}
\newtheorem{definition}{Definition}	%
\newtheorem{assumption}{Assumption}	%
\newtheorem{example}{Example}	%
\newtheorem*{definition*}{Definition}	%
\newtheorem*{assumption*}{Assumptions}	%
\newtheorem*{example*}{Example}	%
\theoremstyle{remark}
\newtheorem{remark}{Remark}	%
\newtheorem*{remark*}{Remark}	%
\newtheorem*{notation*}{Notation}	%
\numberwithin{remark}{section}	%
\numberwithin{example}{section}	%
\newcommand{\draft}[1]{{\color{black}#1}}	%
\newcommand{\revise}[1]{#1}	%
\newcommand{\revsection}[1]{\section{#1}}   %
\newcommand{\revsubsection}[1]{\subsection{#1}}   %
\newcommand{\newmacro}[2]{\newcommand{#1}{\draft{#2}}}	%
\newcommand{\newop}[2]{\DeclareMathOperator{#1}{\draft{#2}}}	%
\newcommand{\newoplims}[2]{\DeclareMathOperator*{#1}{\draft{#2}}}	%
\newcommand{\newsmartmacro}[2]{%
	\NewDocumentCommand{#1}
	{E{_}{{}}}
	{\draft{#2_{##1}}}
	}
\newcommand{\eps}{\varepsilon}	%
\DeclarePairedDelimiter{\braces}{\{}{\}}	%
\DeclarePairedDelimiter{\parens}{(}{)}	%
\DeclarePairedDelimiter{\of}{(}{)}	%
\DeclarePairedDelimiter{\abs}{\lvert}{\rvert}	%
\DeclarePairedDelimiter{\ceil}{\lceil}{\rceil}	%
\DeclarePairedDelimiter{\floor}{\lfloor}{\rfloor}	%
\DeclarePairedDelimiter{\setof}{\{}{\}}	%
\DeclarePairedDelimiterX{\setdef}[2]{\{}{\}}{#1:#2}	%
\DeclarePairedDelimiterXPP{\exclude}[1]{\mathopen{}\setminus}{\{}{\}}{}{#1}	%
\DeclarePairedDelimiterX{\braket}[2]{\langle}{\rangle}{#1,#2}	%
\DeclarePairedDelimiterX{\internalInner}[1]{\langle}{\rangle}{#1}
\NewDocumentCommand \inner {s m g}{
	\IfBooleanTF{#1}
	{
	\internalInner*{#2\IfValueT{#3}{,#3}}
	}
	{
	\internalInner{#2\IfValueT{#3}{,#3}}
	}
}
\DeclarePairedDelimiter{\norm}{\lVert}{\rVert}	%
\DeclarePairedDelimiterXPP{\dnorm}[1]{}{\lVert}{\rVert}{_{\ast}}{#1}	%
\DeclarePairedDelimiterXPP{\onenorm}[1]{}{\lVert}{\rVert}{_{1}}{#1}	%
\DeclarePairedDelimiterXPP{\twonorm}[1]{}{\lVert}{\rVert}{_{2}}{#1}	%
\DeclarePairedDelimiterXPP{\pnorm}[1]{}{\lVert}{\rVert}{_{p}}{#1}	%
\DeclarePairedDelimiterXPP{\qnorm}[1]{}{\lVert}{\rVert}{_{q}}{#1}	%
\DeclarePairedDelimiterXPP{\supnorm}[1]{}{\lVert}{\rVert}{_{\infty}}{#1}	%
\newcommand{\defeq}{\coloneqq}	%
\newcommand{\from}{\colon}	%
\newmacro{\nat}{i}	%
\newmacro{\nats}{\mathbb{N}}	%
\newmacro{\N}{\nats}	%
\newmacro{\integer}{r}	%
\newmacro{\integers}{\mathbb{Z}}	%
\newmacro{\Z}{\integers}	%
\newmacro{\rational}{r}	%
\newmacro{\rationals}{\mathbb{Q}}	%
\newmacro{\Q}{\rationals}	%
\newmacro{\real}{x}	%
\newmacro{\reals}{\mathbb{R}}	%
\newmacro{\R}{\reals}	%
\newmacro{\complex}{z}	%
\newmacro{\complexA}{z}	%
\newmacro{\complexB}{w}	%
\newmacro{\complexC}{z}	%
\newmacro{\complexes}{\mathbb{C}}	%
\newmacro{\C}{\mathbb{C}}	%
\newoplims{\argmax}{arg\,max}	%
\newoplims{\argmin}{arg\,min}	%
\newoplims{\intersect}{\bigcap}	%
\newoplims{\union}{\bigcup}	%
\newop{\aff}{aff}	%
\newop{\bd}{bd}	%
\newop{\bigoh}{\mathcal{O}}	%
\newop{\poly}{poly} %
\newop{\card}{card}	%
\newop{\cl}{cl}	%
\newop{\conv}{conv}	%
\newop{\clconv}{\overline{conv}}	%
\newop{\crit}{crit}	%
\newop{\diag}{diag}	%
\newop{\diam}{diam}	%
\newop{\dist}{dist}	%
\newop{\dom}{dom}	%
\newop{\eig}{eig}	%
\newop{\ess}{ess}	%
\newop{\Hess}{Hess}	%
\newop{\ind}{ind}	%
\newop{\im}{im}	%
\newop{\intr}{int}	%
\newop{\Jac}{Jac}	%
\newop{\one}{\mathds{1}}	%
\newop{\proj}{proj}	%
\newop{\prox}{prox}	%
\newop{\rank}{rank}	%
\newop{\relint}{ri}	%
\newop{\sign}{sgn}	%
\newop{\supp}{supp}	%
\newop{\sym}{Sym}	%
\newop{\tr}{tr}	%
\newop{\unif}{unif}	%
\newop{\vol}{vol}	%
\newcommand{\eg}{e.g.,\xspace}	%
\newcommand{\wrt}{w.r.t.\xspace}    %
\newcommand{\ie}{i.e.,\xspace}	%
\newcommand{\alt}[1]{#1'}	%
\newcommand{\altalt}[1]{#1''}	%
\newmacro{\ball}{\opball}   %
\newmacro{\opball}{\mathbb{B}}	%
\newmacro{\clball}{\overline{\mathbb{B}}}	%
\newmacro{\sphere}{\mathbb{S}^{\vdim-1}}	%
\newmacro{\argdot}{\kern.5pt\cdot\kern.5pt}	%
\newmacro{\dd}{\kern0pt\:d\mkern-1mu{}}	%
\newmacro{\ddt}{\frac{d}{dt}}	%
\newmacro{\del}{\partial}	%
\newmacro{\const}{c}	%
\newmacro{\constA}{a}	%
\newmacro{\constB}{b}	%
\newmacro{\constC}{c}	%
\newmacro{\Const}{C}	%
\newmacro{\ConstA}{A}   %
\newmacro{\param}{\theta}	%
\newmacro{\params}{\Theta}	%
\newmacro{\coef}{\alpha}	%
\newmacro{\coefA}{\lambda}	%
\newmacro{\coefB}{\mu}	%
\newmacro{\coefC}{\nu}	%
\newmacro{\expA}{q}	%
\newmacro{\expB}{r}	%
\newmacro{\expC}{h}	%
\newmacro{\renexp}{\rho}
\newmacro{\precs}{\eps}		%
\newmacro{\precsalt}{\delta}	%
\newmacro{\infprecs}{A} %
\newmacro{\asympteq}{\asymp}
\newmacro{\func}{g} %
\newmacro{\funcalt}{\psi}
\newmacro{\cvxfunc}{h} %
\newmacro{\realspace}{\R^{\vdim}}   %
\newmacro{\vecspace}{\R^{\vdim}}	%
\newmacro{\dspace}{\R^{\vdim}}	%
\newmacro{\subspace}{\mathcal{W}}	%
\newmacro{\coord}{i}	%
\newmacro{\coordA}{i}	%
\newmacro{\coordB}{j}	%
\newmacro{\coordC}{k}	%
\newmacro{\nCoords}{d}	%
\newmacro{\dims}{\nCoords}	%
\newmacro{\vdim}{\nCoords}	%
\newmacro{\bvec}{e}	%
\newmacro{\uvec}{q}	%
\newmacro{\bvecs}{\mathcal{E}}	%
\newmacro{\point}{x}	%
\newmacro{\pointA}{\point}	%
\newmacro{\pointB}{\point'}	%
\newmacro{\pointalt}{\pointB}
\newmacro{\pointC}{\point''}	%
\newmacro{\pointaltalt}{\pointC}
\newmacro{\points}{\vecspace} %
\newmacro{\restrpoints}{\mathcal{X}_{\textup{r}}}	%
\newmacro{\base}{p}	%
\newmacro{\baseA}{q}	%
\newmacro{\baseB}{q}	%
\newmacro{\baseC}{u}	%
\newmacro{\set}{\mathcal{K}}	%
\newmacro{\setA}{\set}	%
\newmacro{\setB}{\alt\set}	%
\newmacro{\setC}{\altalt\set}	%
\newmacro{\idx}{i}
\newmacro{\idxalt}{j}
\newmacro{\idxaltalt}{l}
\newmacro{\idxaltaltalt}{k}
\newmacro{\indices}{I}
\newmacro{\indicesalt}{J}
\newmacro{\closed}{\mathcal{C}}	%
\newmacro{\cpt}{\mathcal{K}}	%
\newmacro{\cptalt}{\alt\cpt}	%
\newmacro{\nhd}{\mathcal{U}}	%
\newmacro{\nhdalt}{\W}	%
\newmacro{\nhdaltalt}{\V}	%
\newmacro{\nbd}{\nhd}
\newmacro{\nbdalt}{\nhdalt}
\newmacro{\nbdaltalt}{\nhdaltalt}
\newmacro{\U}{\mathcal{U}}	%
\newmacro{\V}{\mathcal{V}}	%
\newmacro{\W}{\mathcal{W}}	%
\newmacro{\Wfwd}{\overrightarrow{\mathcal{W}}}	%
\newmacro{\open}{\mathcal{U}}	%
\newmacro{\openA}{\mathcal{U}}	%
\newmacro{\openB}{\mathcal{V}}	%
\newmacro{\mfld}{\mathcal{M}}	%
\newmacro{\gmat}{g}	%
\newmacro{\gdist}{\dist_{\gmat}}	%
\newmacro{\tvec}{z}	%
\newmacro{\form}{\omega}	%
\newmacro{\radius}{r}
\newmacro{\Radius}{R}
\newmacro{\Radiusalt}{\widetilde{\Radius}}
\newmacro{\radiusalt}{\alt\radius}
\newmacro{\margin}{\delta}
\newmacro{\marginalt}{\alt\margin}
\newmacro{\Margin}{\Delta}
\newmacro{\connectedcomp}{C}
\newmacro{\plainset}{A} %
\newmacro{\interv}{A} %
\newmacro{\domain}{D}
\newmacro{\bigcpt}{\mathcal{X}}
\newsmartmacro{\bigcptalt}{\alt \bigcpt}
\newsmartmacro{\bigcptaltalt}{\alt\alt \bigcpt}
\newmacro{\cvx}{\mathcal{C}}	%
\newmacro{\subd}{\partial}	%
\newop{\tspace}{T}	%
\newop{\tcone}{TC}	%
\newop{\dcone}{\tcone^{\ast}}	%
\newop{\ncone}{NC}	%
\newop{\pcone}{PC}	%
\newop{\hull}{\Delta}	%
\newop{\minimize}{minimize}	%
\newop{\Opt}{Opt}	%
\newop{\Sol}{Sol}	%
\newop{\gap}{Gap}	%
\newop{\orcl}{\mathsf{G}}	%
\newop{\err}{\mathsf{Z}}	%
\newmacro{\obj}{f}	%
\newmacro{\objalt}{g}	%
\newmacro{\objA}{f}	%
\newmacro{\objB}{g}	%
\newmacro{\sobj}{F}	%
\newmacro{\loss}{\ell}	%
\newmacro{\Loss}{L} %
\newmacro{\coefLoss}{C_\Loss} %
\newmacro{\empLoss}{\widehat{\Loss}} %
\newcommand{\sol}[1][\point]{#1^{\ast}}	%
\newmacro{\gvec}{g}	%
\newmacro{\gbound}{G}	%
\newmacro{\oper}{A}	%
\newmacro{\vecfield}{v}	%
\newmacro{\vbound}{V}	%
\newmacro{\lips}{L}	%
\newmacro{\strong}{\mu}	%
\newmacro{\smooth}{\beta}	%
\newmacro{\tmplips}{L}	%
\newmacro{\tmpbound}{B}	%
\newmacro{\growth}{M}	%
\newmacro{\regparam}{\lambda}	%
\newmacro{\initset}{\mathcal{X}_0}
\newmacro{\sublevel}{\mathcal{L}}   %
\newmacro{\sublevellevel}{s}
\newop{\ex}{\mathbb{E}}	%
\newop{\prob}{\mathbb{P}}	%
\newop{\probalt}{\mathbb{Q}}	%
\newop{\accprobalt}{\probalt^{\nComps}}
\newop{\var}{\mathbb{V}}	%
\newop{\cov}{cov}	%
\newop{\simplex}{\hull}	%
\newop{\density}{p} %
\newop{\support}{supp} %
\providecommand\given{}	%
\DeclarePairedDelimiterXPP{\exof}[1]{\ex}{[}{]}{}{%
\renewcommand\given{\nonscript\,\delimsize\vert\nonscript\,\mathopen{}} #1}
\DeclarePairedDelimiterXPP{\exwrt}[2]{\ex_{#1}}{[}{]}{}{%
\renewcommand\given{\nonscript\:\delimsize\vert\nonscript\:\mathopen{}} #2}
\DeclarePairedDelimiterXPP{\probof}[1]{\prob}{(}{)}{}{%
\renewcommand\given{\nonscript\:\delimsize\vert\nonscript\:\mathopen{}} #1}
\DeclarePairedDelimiterXPP{\posteriorprobof}[1]{\widehat{\prob}}{(}{)}{}{%
\renewcommand\given{\nonscript\:\delimsize\vert\nonscript\:\mathopen{}} #1}
\DeclarePairedDelimiterXPP{\probwrt}[2]{\prob_{\!#1}}{(}{)}{}{%
\renewcommand\given{\nonscript\:\delimsize\vert\nonscript\:\mathopen{}} #2}
\DeclarePairedDelimiterXPP{\oneof}[1]{\one}{\{}{\}}{}{#1}	%
\DeclarePairedDelimiterXPP{\varof}[1]{\var}{[}{]}{}{%
\renewcommand\given{\nonscript\,\delimsize\vert\nonscript\,\mathopen{}} #1}
\DeclarePairedDelimiterXPP{\covof}[1]{\cov}{(}{)}{}{%
\renewcommand\given{\nonscript\,\delimsize\vert\nonscript\,\mathopen{}} #1}
\DeclarePairedDelimiterXPP{\densityof}[1]{\density}{(}{)}{}{%
\renewcommand\given{\nonscript\,\delimsize\vert\nonscript\,\mathopen{}} #1}
\DeclarePairedDelimiterXPP{\densitywrt}[2]{\density_{#1}}{(}{)}{}{%
\renewcommand\given{\nonscript\,\delimsize\vert\nonscript\,\mathopen{}} #2}
\DeclarePairedDelimiterXPP{\symwrt}[2]{\sym_{#1}}{(}{)}{}{#2}   %
\newcommand{\revfrac}[2]{\frac{#2}{#1}} %
\newmacro{\event}{H}	%
\newmacro{\eventA}{H}	%
\newmacro{\eventB}{L}	%
\newmacro{\seed}{\theta}	%
\newmacro{\seeds}{\Theta}	%
\newmacro{\pdist}{P}	%
\newmacro{\history}{\mathcal{H}}	%
\newmacro{\sample}{x}	%
\newmacro{\samplevec}{x_{1:\nRuns}} %
\newmacro{\samples}{X}	%
\newmacro{\sdim}{m}	%
\newmacro{\sspace}{\mathcal{X}}	%
\newmacro{\sspacealt}{\sspace'} %
\newmacro{\nsamples}{T} %
\newmacro{\nsamplesalt}{M} %
\newmacro{\filter}{\mathcal{F}}	%
\newmacro{\probspace}{(\samples,\filter,\prob)}	%
\newmacro{\mean}{\mu}	%
\newmacro{\sdev}{\sigma}	%
\newmacro{\variance}{\sdev^{2}}	%
\newmacro{\variancealt}{s^2}
\newmacro{\covmat}{\Sigma}
\newmacro{\hessmat}{H}
\newmacro{\rv}{X}
\newmacro{\rvA}{X}  %
\newmacro{\rvAspace}{\mathcal{X}}  %
\newmacro{\rvB}{Y}  %
\newmacro{\rvBspace}{\mathcal{Y}}  %
\newmacro{\trv}{X_\Radius}
\newmacro{\gaussian}{\mathcal{N}}
\newmacro{\partition}{Z}
\newmacro{\mart}{M}
\newmacro{\martealt}{N}
\DeclarePairedDelimiterXPP{\dkl}[2]{{\mathrm{KL}}\!}{(}{)}{}%
  {#1 \,\|\, #2}
\DeclarePairedDelimiterXPP{\tv}[2]{}{\|}{\|_{\mathrm{TV}}}{}%
  {#1, #2}
  \DeclarePairedDelimiterXPP{\wass}[3]{W_{#1}}{(}{)}{}%
{#2,#3}
\DeclarePairedDelimiterXPP{\divergence}[4]{\mathrm{D}_{#1}\!}{(}{)}{}%
  {#3 \,\|\, #4}
\newmacro{\rad}{\eps}
\newmacro{\tvconst}{\kappa}
\newmacro{\depA}{A}
\newmacro{\depB}{B}
\newmacro{\depC}{C}
\newmacro{\subgauss}{\sigma}
\newmacro{\indep}{\perp\!\!\!\perp}
\newmacro{\task}{\theta}
\newmacro{\reftask}{\task_0}
\newmacro{\taskspace}{\Theta}
\newmacro{\taskspacealt}{\Theta'}
\newmacro{\taskdim}{d}
\newmacro{\nTasks}{N}
\newmacro{\nRepeats}{M}
\newmacro{\repeatindex}{m}
\newmacro{\runtask}{n}
\newmacro{\prior}{\pi}  %
\newmacro{\testprior}{\rho}  %
\newmacro{\post}{\widehat{p}}
\DeclarePairedDelimiterXPP{\posteriorwrt}[2]{\post_{#1}}{(}{)}{}{%
\renewcommand\given{\nonscript\,\delimsize\vert\nonscript\,\mathopen{}} #2}
\DeclarePairedDelimiterXPP{\posteriorof}[1]{\post}{(}{)}{}{%
\renewcommand\given{\nonscript\,\delimsize\vert\nonscript\,\mathopen{}} #1}
\newmacro{\refmeas}{\lambda}
\newmacro{\bracketing}{\mathcal{B}}
\newmacro{\covering}{\mathcal{N}}
\newmacro{\sampledim}{k}
\newmacro{\inp}{q}  %
\newmacro{\out}{y}  %
\newmacro{\hyp}{f}
\newmacro{\trainedhyp}{\hat{\hyp}}
\newmacro{\model}{\hyp}
\newmacro{\hypspace}{\mathcal{F}}
\newmacro{\hypclass}{\mathcal{F}}
\newmacro{\modelLip}{L}
\newmacro{\hypclassLip}{M}
\newmacro{\gen}{\mathrm{gen}}
\newmacro{\dudley}{\mathcal{I}}
\newmacro{\thresh}{\delta}
\newmacro{\beforestart}{-1}	%
\newmacro{\start}{0}	%
\newmacro{\afterstart}{1}	%
\newmacro{\running}{\start,\afterstart,\dotsc}	%
\newmacro{\run}{\runA}	%
\newmacro{\runA}{t}	%
\newmacro{\runB}{s}	%
\newmacro{\runC}{m}	%
\newmacro{\nRuns}{T}	%
\newmacro{\nRunsalt}{\alt\nRuns}	%
\newmacro{\runs}{\mathcal{\nRuns}}	%
\newmacro{\runalt}{\runB}	%
\newmacro{\tstart}{0}	%
\newmacro{\timeA}{t}	%
\newmacro{\timeB}{s}	%
\newmacro{\timealt}{\timeB}	%
\newmacro{\timealtalt}{u}
\newmacro{\timeC}{\tau}	%
\newmacro{\timeD}{\lambda}	%
\newmacro{\horizon}{T}	%
\newmacro{\horizonalt}{S}	%
\newmacro{\seq}{a}	%
\newmacro{\seqA}{a}	%
\newmacro{\seqB}{b}	%
\newmacro{\seqC}{c}	%
\newmacro{\state}{x}	%
\newsmartmacro{\accstate}{x^{\step}}	%
\newmacro{\stateA}{x}	%
\newmacro{\stateB}{z}	%
\newmacro{\statealt}{\stateB}
\newmacro{\stateC}{y}	%
\newmacro{\stateD}{p}	%
\newmacro{\statealtalt}{\stateC}
\newmacro{\statealtaltalt}{\stateD}
\newmacro{\projstate}{p}
\newmacro{\accinducedstate}{\statealt^{\nComps}}
\newmacro{\cstate}{X}
\newmacro{\cstateA}{X}
\newmacro{\cstateB}{Z}
\newmacro{\cstatealt}{\cstateB}
\newcommand{\init}[1][\state]{\draft{#1}_{\start}}	%
\newmacro{\startingpoint}{\init}	%
\newmacro{\mat}{M}	%
\newmacro{\hmat}{H}	%
\newmacro{\ones}{\mathbf{1}}	%
\newmacro{\eye}{I}	%
\newmacro{\identity}{\eye}	%
\newmacro{\zer}{\mathbf{0}}	%
\newmacro{\eigval}{\lambda}	%
\newmacro{\flowmap}{\Theta}	%
\DeclarePairedDelimiterXPP{\flowof}[2]{\flowmap_{#1}}{(}{)}{}{#2}	%
\newcommand{\est}[1]{\hat #1}	%
\newmacro{\signal}{\est\gvec}	%
\newmacro{\step}{\eta}	%
\newmacro{\learn}{\eta}	%
\newmacro{\tempinv}{\beta}	%
\newmacro{\batch}{B}
\newmacro{\batchidx}{b}
\newmacro{\drift}{b}	%
\newmacro{\noise}{u}	%
\newmacro{\noisepar}{\sdev}	%
\newmacro{\noisevar}{\variance}	%
\newmacro{\efftime}{\tau}	%
\newmacro{\error}{Z}	%
\newmacro{\nSamples}{N}	%
\newmacro{\datapoint}{\xi}
\newmacro{\statespace}{X}
\newmacro{\transprob}{p}
\newmacro{\transprobup}{\transprob}
\newmacro{\transprobdown}{\transprob}
\newmacro{\transerror}{a}
\newmacro{\mgf}{M}	%
\DeclarePairedDelimiterXPP{\mgfof}[2]{\mgf_{#1}}{(}{)}{}{#2}	%
\newmacro{\cgf}{K}	%
\DeclarePairedDelimiterXPP{\cgfof}[2]{\cgf_{#1}}{(}{)}{}{#2}	%
\newmacro{\ham}{\mathcal{H}}	%
\DeclarePairedDelimiterXPP{\hamof}[2]{\ham_{#1}}{(}{)}{}{#2}	%
\newmacro{\lag}{\mathcal{L}}	%
\DeclarePairedDelimiterXPP{\lagof}[2]{\lag_{#1}}{(}{)}{}{#2}	%
\newmacro{\mom}{p}
\newmacro{\pos}{q}
\newmacro{\vel}{v}
\newmacro{\velalt}{w}
\newmacro{\hamilt}{\mathcal{H}}
\newmacro{\hamiltalt}{\bar\hamilt}
\newmacro{\lagrangian}{\mathcal{L}}
\newmacro{\lagrangianalt}{\bar\lagrangian}
\icmltitlerunning{How Does the Pretraining Distribution Shape In-Context Learning?}
\begin{document}

\newacro{LHS}{left-hand side}
\newacro{RHS}{right-hand side}
\newacro{iid}[i.i.d.]{independent and identically distributed}
\newacro{lsc}[l.s.c.]{lower semi-continuous}
\newacro{usc}[u.s.c.]{upper semi-continuous}
\newacro{rv}[r.v.]{random variable}
\newacro{icl}[ICL]{in-context learning}

\allowdisplaybreaks	%
\acresetall	%

\twocolumn[

  \icmltitle{How Does the Pretraining Distribution Shape In-Context Learning? \\
  A Fundamental Trade-Off}

  \icmlsetsymbol{intern}{*}

  \begin{icmlauthorlist}
    \icmlauthor{Wa\"iss Azizian}{grenoble,intern}
    \icmlauthor{Ali Hasan}{ms}
  \end{icmlauthorlist}

  \icmlaffiliation{grenoble}{Univ.\ Grenoble Alpes, CNRS, Inria, Grenoble INP, LJK, 38000 Grenoble, France}
  \icmlaffiliation{ms}{Machine Learning Research, Morgan Stanley, New York, USA}

  \icmlcorrespondingauthor{Wa\"iss Azizian}{waiss.azizian@univ-grenoble-alpes.fr}
  \icmlcorrespondingauthor{Ali Hasan}{ali.hasan@morganstanley.com}

  \icmlkeywords{Machine Learning, ICML}

  \vskip 0.3in
]

\printAffiliationsAndNotice{\textsuperscript{*}Work done during an internship at Morgan Stanley Machine Learning Research. }

\begin{abstract}
    
The factors driving the performance of in-context learning (ICL) in large language models (LLMs) remain poorly understood despite ICL's surprising effectiveness, enabling models to adapt to new tasks from only a handful of examples. To clarify and improve these capabilities, we characterize how the statistical properties of the pretraining distribution (e.g., tail behavior, coverage) shape ICL. We develop a theoretical framework that encompasses generalization and task selection and show how distributional properties govern sample efficiency, task retrieval, and robustness.
To this end, we generalize existing concentration results to heavy-tailed priors and dependent sequences, better reflecting the structure of LLM pretraining data.
Our framework 
reveals a fundamental design trade-off: heavy-tailed pretraining distributions facilitate robust task selection under distribution shifts but are detrimental to generalization, especially in low-data regimes. 
We then empirically evaluate our predictions by studying how ICL performance varies with the pretraining distribution on challenging tasks such as stochastic differential equations and stochastic processes with memory. Together, these findings suggest that controlling key statistical properties of the pretraining distribution is essential for building ICL-capable and reliable LLMs.

\end{abstract}

\vspace{-0.3em}

\section{Introduction}

In-context learning (ICL) is the phenomenon whereby a model generalizes to a new task from a handful of examples provided in the input context without any model weight updates. This emergent behavior has been observed across models in multiple domains, including in language~\citep{brown2020language}, vision~\citep{radford2021learning}, and reinforcement learning~\citep{moeini2025survey}. 
ICL is a particularly appealing feature in domains where data for a specific task is scarce such as robotics~\citep{ahn2022can}, healthcare~\citep{singhal2023large}, or chemistry~\citep{stokes2020deep}.

Despite the importance of this property, the conditions under which ICL emerges are still poorly understood. Several lines of works have emerged to address this question.
The {algorithmic} view focuses on studying which learning algorithms over the context can be implemented by transformer and thereby perform ICL~\citep{garg2022can,akyurek2023whatlearningalgorithm}. Others have suggested modeling ICL as Bayesian inference~\citep{xie2021explanation,lin2024dual,zhangand,jeon2024an}.
Empirical works have sought to design controlled settings in which ICL can be carefully studied, and these works highlight how sensitive to pretraining choices ICL is \citep{chan2022data,raventos2023pretraining}, indicating that distributional aspects of pretraining play a central role. A crucial line of work also seeks to assess ICL performance on numerical tasks through out-of-distribution robustness of ICL~\citep{wang2024can,kwon2025out,goddard2025can}.%

Taken together, these perspectives suggest that ICL is shaped not only by architecture and learning dynamics, but also by what the model sees during pretraining. What remains missing is a principled way to translate pretraining distributional properties into test-time ICL behavior.
Indeed, several aspects remain particularly underexplored: (i) {heavy-tailed} task priors capturing long-tail effects that have been implicated empirically in ICL \citep{chan2022data,singh2023transient}, (ii) non-i.i.d. and dependent context structure (e.g., long-range dependencies) beyond standard i.i.d. / Markov assumptions \citep{alabdulmohsin2024fractal}, and (iii) how these distributional properties govern ICL behavior under test-time shifts, a key motivation for ICL \citep{wang2024can,kwon2025out,goddard2025can}.

We thus develop a study of ICL with a focus on the influence of the pretraining distribution. We decompose ICL performance into two components: \emph{task selection} (identifying the right task from the context) and \emph{generalization} (performing well on tasks and sequences unseen during training) and focus on the following questions:

\begin{tcolorbox}[colback=darkpurple!10!white, colframe=darkpurple!30!white]
\centering
\textit{How does the pre-training distribution shape ICL performance on new tasks? How does it affect generalization and task selection errors?}
\end{tcolorbox}

Our contributions are as follows:
\begin{itemize}[leftmargin=12pt,itemsep=0pt,topsep=0pt]
\item \textbf{Theoretical framework under heavy tails and dependence.} We develop a general theoretical framework for ICL that focuses on the role of pretraining \emph{distributional} properties, handling both the task selection error and the ICL generalization error. We cover \emph{heavy-tailed} priors and \emph{dependent} sequences, providing conditions that better reflect pretraining data used for LLMs and highlighting the role of these key distributional properties.
\item \textbf{A trade-off in pretraining distribution design.} Our theory reveals a fundamental trade-off in pretraining distribution design for ICL: heavier tails improve task selection at test time, especially under distribution shift, but they degrade generalization when training data is limited. 
\item \textbf{Empirical validation on numerical tasks.} We validate the predictions of our framework on challenging numerical tasks—including stochastic differential equations and processes with memory, assessing ICL via robustness to new tasks and distribution shift.
\end{itemize}

Together, our results suggest that controlling key statistical properties of the pretraining distribution is essential for building ICL-capable and reliable transformer models.

\vspace{-0.3em}
\section{Related Work}

A growing number of works aim to understand in-context learning (ICL) from complementary perspectives.
We focus here on those most relevant to our setting, see \cref{tab:lit-comparison} for a summary comparison and \cref{app:related_work} for additional discussion.

\para{Bayesian Perspectives.}
One of the most influential perspectives on ICL is Bayesian: the pretraining distribution is viewed as a prior over tasks, and ICL corresponds to Bayesian inference on a new task given the context \citep{xie2021explanation}.
\citet{lin2024dual} adopted this perspective to analyze ICL on linear regression tasks, characterizing which task is effectively retrieved at inference time, while \citet{wang2024can} refined this analysis for out-of-distribution tasks.
However, these frameworks are restricted to Gaussian linear task families and therefore do not isolate how the shape of the pretraining distribution impacts ICL. %
\citet{jeon2024an} provide an information-theoretic perspective on task retrieval for ICL but do not model the distribution of tasks.
\citet{park2025competition,wurgaft2025context} study the competition and transition between in-weight learning and ICL, and obtain scaling laws for the emergence of ICL in transformers, while \citet{nguyen2025differential} investigate this transition via a differential kinetics model.
Though offering valuable insights into ICL mechanisms, these works do not provide guarantees that connect key properties of the pretraining distribution to ICL performance and focus on test tasks sampled from the same pretraining distribution. 
\citet{zhangand} introduced a broad Bayesian framework, with results on both task identification and generalization for Markovian sequences.
It is the closest to our work in scope, but it focuses on light-tailed priors and does not characterize how properties of the pretraining distribution affect ICL and the resulting trade-offs, in contrast to our work.

\para{Conditions for the emergence of ICL.}
\citet{raventos2023pretraining} studied how training choices affect the emergence of ICL on linear regression tasks, and in particular how these factors influence the number of pretraining tasks required for strong in-context performance, but their analysis focuses on test tasks sampled from the same distribution as the pretraining tasks.
\Citet{chan2022data} empirically studied properties of the pretraining distribution that promote ICL on international character recognition, which was extended by \citet{singh2023transient}, who showed that ICL can be transient.%
Together, these works suggest that heavier-tailed pretraining distributions can improve ICL performance only up to a point, beyond which performance degrades.
Our work provides a complementary theoretical framework that predicts and explains this trade-off via explicit task-identification and generalization guarantees.%

\para{Generalization in ICL.}
\citet{li2023transformers} derive generalization guarantees via stability of the transformer architecture, but in a setting where the task distribution is fixed and finite and is the same at pretraining and test time.
\citet{zhangand,zekri2025large} provide generalization bounds for ICL on Markov chains, but do not model a pretraining task distribution whose shape can vary and affect performance.
\citet{lotfi2024unlocking} derive generalization bounds for transformers on arbitrary sequences, yet their notion of generalization only covers new tokens as continuations of existing sequences and not new  sequences, which does not correspond to the ICL setting.
In contrast, our generalization guarantee handles a general pretraining task distribution, allowing heavy-tailed priors and dependent contexts,thereby quantifying how properties of the pretraining distribution control ICL generalization.

\para{Numerical Tasks.}
A related line of work uses controlled numerical tasks (e.g., linear regression or dynamical systems) as probes to study ICL in simplified transformer models and in pretrained LLMs.
\citet{zhang2024trained,wu2024how} analyze ICL on linear regression with single-layer or linear-attention models, characterizing the ICL error of the trained model.%
More recently, \citet{lu2025asymptotic} obtain a precise characterization of the emergence of ICL for linear regression in a linear-attention model, including certain out-of-distribution regimes.
\citet{chan2025toward} study a simple Bayesian predictor model to understand the different modes of in-weight learning and ICL while \citet{liu2024llms} investigate ICL of pretrained large language models on Markov processes and report power-law scaling behavior.
Finally \citet{wang2024can,kwon2025out, goddard2025can} 
all show that ICL can extrapolate to out-of-distribution tasks only to a limited extent.
These works primarily focus on architectural mechanisms and scaling behavior in specific probe settings, whereas our focus is complementary: we quantify how properties of the pretraining task distribution shape ICL.%

\para{General Concentration.}
The pioneering work of \citet{yu1994rates} provides concentration inequalities for dependent processes under mixing-type conditions, opening up a fruitful line of research; see, e.g.,
\citet{kontorovich2008concentration,mohri2008rademacher,mohri2010stability,maurer2023generalization,abeles2025generalization},
and for related coupling techniques \citet{chazottes2007concentration,paulin2015concentration}.
While these frameworks can handle general %
dependent sequences, they typically rely on sub-Gaussian-type assumptions that are incompatible with the heavy-tailed task priors considered here.
Another line of work derives concentration for sums of stationary dependent sequences \citep{wu2005nonlinear,wu2011asymptotic,liu2013probability}.
In contrast, our ICL analysis requires concentration for more general function classes and for non-stationary dependence along the context; we are not aware of existing results that simultaneously accommodate these requirements together with heavy tails.
For heavy-tailed concentration in the independent case, the recent frameworks of \citet{bakhshizadeh2023sharp,li2024concentrationB,li2024algorithmic},
provide concentration for non-linear functions of i.i.d.~heavy-tailed random variables.
Our framework extends this line by handling non-linear functions of dependent heavy-tailed sequences, which underpins our generalization guarantees.

\newcommand{\TMnone}{---}   %
\newcommand{\TMfin}{\textsc{Finite}}  %
\newcommand{\TMparam}{\textsc{Arbitrary}} %
\newcommand{\TMlin}{\textsc{Lin-G}}   %

\begin{table*}[t]
\centering
\caption{{Positioning relative to selected prior work on in-context learning.}
\cmark/\xmark~indicate whether an explicit guarantee is provided.
\emph{Task model:} \TMparam (arbitrary task structure), \TMfin (finite reuse), \TMlin (Gaussian linear), \TMnone (no task parameter).
\emph{Task selection:} \textsc{Final} ($t=T$) or \textsc{Avg} (avg.\ over $t=1{:}T$).
\emph{Dependent seq.:} dependence class (e.g., IID/Markov/Ergodic/Arbitrary).
Other columns indicate whether generalization bounds, heavy-tailed priors, and explicit dependence on the pretraining distribution are covered.
}
\label{tab:lit-comparison}

\renewcommand{\arraystretch}{1.1}
\setlength{\tabcolsep}{1pt}
\footnotesize
\begin{tabularx}{\textwidth}{@{}l  *{6}{C}@{}}
\toprule
& \makecell{Task\\model}
& \makecell{Task\\selection}
& \makecell{Gen.\\bounds}
& \makecell{Dependent\\seq.}
& \makecell{Heavy-tailed\\prior}
& \makecell{Influence\\of pretrain}
\\
\midrule

\textbf{Ours}
& \TMparam
& \textsc{Final}
& \cmark
& \textsc{Arbitrary}
& \cmark
& \cmark
\\

\citet{li2023transformers}
& \TMfin
& \xmark
& \cmark
& \textsc{%
Markov}
& \xmark
& \xmark
\\

\citet[\S5]{zhangand}
& \TMnone
& \xmark
& \cmark
& \textsc{Ergodic}
& \xmark
& \xmark
\\

\citet[\S6]{zhangand}
& \TMfin
& \textsc{Avg}
& \xmark
& \textsc{Arbitrary}
& \xmark
& \xmark
\\

\citet{zekri2025large}
& \TMnone
& \xmark
& \cmark
& \textsc{Markov}
& \xmark
& \xmark
\\

\citet{lin2024dual,wang2024can}
& \TMlin
& \textsc{Final}
& \xmark
& \textsc{IID}
& \xmark
& \xmark
\\
\citet{chan2022data,singh2023transient}
& \TMnone
& \xmark
& \xmark
& \textsc{IID} 
& \cmark
& \cmark
\\
\bottomrule
\end{tabularx}
\makeatletter
\if@twocolumn
\else
\vspace{1em}
\fi
\makeatother
\end{table*}

\vspace{-0.3em}
\section{Theoretical framework}
\label{sec:theory}
To connect ICL behaviour at test time to the properties of the pretraining distribution, we model the training data as a mixture of tasks, in line with existing works on ICL \citep{garg2022can,lin2024dual,jeon2024an,zhangand,wang2024can}.
In \cref{subsec:icl-setting}, we present the ICL setting.%
The error of ICL is decomposed into two components: the generalization error of the trained model (\cref{subec:main-gen}), and the ability of the model to identify the correct task given some in-context examples (\cref{subsec:main-task-selection}). 
\subsection{In-context learning setting}
\label{subsec:icl-setting}
We model the training data as a mixture of tasks, with each task defining its own distribution.
Formally, denote by $\taskspace \subset \R^\taskdim$ the space of tasks $\task$ and by $\prior(\task)$ the density of the pretraining task distribution.
Given a task $\task$, the data is generated according to a task-specific distribution with density $\densityof{\cdot \given \task}$
The training data is then generated by first sampling a task $\task$ from the task distribution $\prior$, and then sampling data points $(\sample_\run)_{\run \geq 1}$ according to
\begin{equation*}
    \sample_{\run  +  1} \sim \densitywrt{\run  + 1}{\cdot \given \sample_{1:\run}, \task}\,,
    \quad \text{ where } \sample_{1:\run} = (\sample_1, \ldots, \sample_\run).
\end{equation*}

We first illustrate the setting with several examples.
\begin{example}[Classification]
    Several ICL benchmarks for LLMs such as \citet{bertsch2024in0context,zou2024many,li2024long} are built on classification tasks.
    Each task $\task$ represents a small subset of classes from a larger classification problem and the data sequence $\sample_1, \ldots, \sample_\run$ is a sequence of inputs and labels from these classes.
    The challenge is therefore to both identify the classes and learn to classify them from the in-context examples.
\end{example}

\begin{example}[Linear Regression]
    \label{ex:lr}
    Introduced by \citet{garg2022can}, the regression setting is a popular testbed for ICL.
    Each task $\theta \in \R^\taskdim$ defines a linear model $\out = \task^T \inp + \epsilon$ where $\epsilon$ is some noise.
    The data sequence $\sample_1, \ldots, \sample_{2 \run}$ is a sequence of input-output pairs $\inp_1, \out_1, \ldots, \inp_\run, \out_\run$ generated according to the linear model defined by $\task$.
\end{example}

\begin{example}[Ornstein-Uhlenbeck process]
\label{ex:stoch_proc}
    More generally, we can consider the setting where each task $\task$ defines a stochastic process $\sample_{\run+1} \sim \densitywrt{\run+1}{\cdot \given \sample_{1:\run}, \task}$. We will consider later the specific case of the Ornstein-Uhlenbeck process: each task $\task = (\tau, \mu)$ defines a mean-reverting stochastic process with mean $\mu$ and reversion speed $\tau$:
        \vspace*{-0.5em}
    \begin{equation}
        \mathrm{d} X_t = \tau (\mu - X_t) \mathrm{d}t + \sigma \mathrm{d} W_t\,,
        \label{eq:ou}
    \vspace*{-0.4em}
    \end{equation}
    where $W_t$ is a standard Brownian motion and $\sigma$ is the volatility parameter. The data sequence $\sample_1, \ldots, \sample_\run$ is then a discretization of the stochastic process defined by $\task$.
    In this setting, the learning objective is 
    predict the next sample given the previous ones, implicitly requiring the identification of the parameters $\task$.
\end{example}
We present next examples of prior distributions $\prior$ over tasks that will illustrate our theoretical results.
\begin{example}[Priors in $1$D]
\label{ex:priors-1d}
For simplicity, consider the case where tasks are one-dimensional, \ie $\taskspace \subset \R$.
Student's $t$-distributions with $\nu > 1$ degrees of freedom are an example of heavy-tailed priors with polynomially decaying tails: for large $\theta$, $\prior(\theta) \propto 1/|\theta|^{\nu + 1}$.
$\prior(\theta)$ thus decays more slowly as $\nu$ decreases, leading to heavier tails. By convention, Student's $t$-distribution with $\nu = \infty$ degrees of freedom corresponds to the Gaussian distribution, whose tails decay exponentially.
Generalized Normal distributions, by contrast, still retain exponentially decaying tails but allow to control the rate of decay: for a scale parameter $\alpha > 0$ and a shape parameter $\beta \geq 1$, it has density
$\pi(\theta) \propto \exp(-|\theta/\alpha|^\beta)$. $\prior(\theta)$ thus decays more slowly as $\beta$ decreases, leading to heavier tails.
\end{example}
\vspace*{-0.5em}
Given a dataset of tasks $\task_1,\dots, \task_\nTasks$ and associated samples $\sample_{1:\nRuns}^{(1)},  \dots, \sample_{1:\nRuns}^{(\nTasks)}$, a model $\model$ is trained by minimizing the next-sample prediction loss
    \vspace*{-0.1em}
\begin{equation}
    \hspace{-1em}
    \empLoss(\model, (\task_\runtask, \sample_{1:\nRuns}^{\runtask})_{\runtask \leq \nTasks})
    =
    \frac{1}{\nTasks \nRuns} 
    \sum_{\runtask = 1}^{\nTasks} \sum_{\run=1}^{\nRuns} \loss_\run(\model(\sample_{1:\run-1}^\runtask), \sample_\run^\runtask)\,,
    \label{eq:ICL-training}
\end{equation}
    \vspace*{-0.2em}
where $\loss_\run$ is a per-sample loss which depend on $t$ to encompass regression and classification tasks. Note that the model is trained to predict the next sample $\sample_\run$ given the previous samples $\sample_{1:\run-1}$, without any explicit supervision on the task $\task$. This is why ICL is referred to as an emergent ability of large models \citep{wei2022emergent}. 
\revise{When evaluating the performance of ICL on new tasks, two kinds of error come into play:
    (i) the \emph{generalization error} of the trained model $\trainedhyp$ obtained by minimizing \cref{eq:ICL-training} on a training dataset, and%
(ii) the ability of the model to identify the correct task given some in-context examples, which we refer to as \emph{task selection}.}

\subsection{Generalization error}
\label{subec:main-gen}

The first key statistical question for ICL is its generalization error. %
We therefore study the generalization error of the trained model $\trainedhyp$ obtained by minimizing \cref{eq:ICL-training} on a training dataset.
We consider a dataset consisting of $\nTasks$ tasks $\task_1, \ldots, \task_\nTasks$ sampled independently from the prior $\prior$, and for each task $\task_\runtask$, a sequence of $\nRuns$ samples $\sample_{1:\nRuns}^{\runtask}$ generated according to the task-specific distribution $\densitywrt{\nRuns}{\cdot \given \task_\runtask}$: for $\runtask \leq  \nTasks$, for $\run < \nRuns$, $\sample_{\run+1}^{(\runtask)} \sim \densitywrt{\run+1}{\cdot \given \sample_{1:\run}^{(\runtask)}, \task_\runtask}$.

Motivated by LLMs pre-trained on large corpora of text, we consider here the challenging setting where the data sequence $(\sample_\run)_{\run \leq \nRuns}$ within each task is dependent and possibly non-Markovian and the task distribution $\prior$ can be heavy-tailed.
To the best of our knowledge, existing concentration for dependent sequences do not cover this case.
We thus develop our own framework: we encompass non-\ac{iid} and non-Markovian data sequences through a weak dependence assumption in Wasserstein distance, %
and we handle heavy-tailed task distributions by taking inspiration from the recent framework of \citet{li2024concentration,li2024algorithmic}. The resulting framework is therefore quite general and can be of independent interest beyond ICL, see \cref{app:sec:gen-bounds}.

We present here a simplified version of our assumptions, where we focus on the few key quantities that are relevant in our study: how dependent the data sequence is and how heavy-tailed the prior $\prior$ is, quantified through the maximal moment of $\prior$ that exists\footnote{We focus here on priors with polynomially decaying tails, such as the Student-$t$ family since it is the most representative. A similar result could be established for subexponential tails.}
We refer to~\cref{app:subsec:concentration-icl} for the complete version of the assumptions.
We consider $\hypspace$ a class of models $\model: \cup_{\run} (\R^\sampledim)^\run \to \R^\sampledim$ and $\loss_\run: \R^\sampledim \times \R^\sampledim \to \R_+$ a per-sample loss function that can depend on time $\run$.

\begin{assumption}[Moment condition]
\label{asm:gen:moment}
    There is $\expA \geq 2$ an integer such that $\exwrt*{\task \sim \prior}{\norm{\task}^\expA} < \infty$.
\end{assumption}

\vspace{-0.5em}

This assumption quantifies how ``heavy-tailed'' the prior $\prior$ is: the smaller the exponent $\expA$, the heavier the tail of $\prior$.
This exponent $\expA$ will play a key role in the generalization error of ICL.
We now introduce the assumptions on the dependence structure of the data sequence, where $\wass{1}{\cdot}{\cdot}$ is the 1-Wasserstein distance.
\begin{assumption}[Dependence structure]
    \label{asm:gen:dependence}
    \leavevmode
    
    \begin{enumerate}[label=(\roman*), itemsep=0pt, topsep=0pt]
              \item \textbf{Weak dependence.} There is $\depB_\nRuns > 0$ such that, for any $\runalt < \run \leq \nRuns$, any $\task \in \taskspace$, any $\sample_{1:\runalt}$, $\sample_{\runalt}'$,
                  {
\setlength{\abovedisplayskip}{3pt}
\setlength{\belowdisplayskip}{3pt}
            \begin{equation}
    \hspace{-2em}
    \wass{1}{\densitywrt{\run}{d \sample_\run \given \sample_{1:\runalt}, \task}}{\densitywrt{\run}{d \sample_\run' \given \sample_{1:(\runalt-1)}, \sample_{\runalt}', \task}} \leq \depB_\nRuns (1+\norm{\task})\,.
        \notag
            \end{equation}
        }
        \item \textbf{Influence of the task.} 
            There is $\depA_\nRuns > 0$ such that, any $\run \leq \nRuns$, any $\task, \task' \in \taskspace$,
            {
\setlength{\abovedisplayskip}{3pt}
\setlength{\belowdisplayskip}{3pt}
            \begin{equation}
                 \wass{1}{\densitywrt{\run}{d \sample_\run \given \task}}{\densitywrt{\run}{d \sample_\run' \given \task'}} \leq \depA_\nRuns \norm{\task - \task'}\,.
                 \notag
            \end{equation}
        }
    \end{enumerate}
\end{assumption}
\vspace{-0.2em}
The first assumption quantifies how dependent the data sequence: the higher $\depB_\nRuns$, the more influence past samples have on future samples;
while second assumption quantifies how much the task influences the data distribution. In the extreme case of an \ac{iid} sequence,
 both $\depA_\nRuns$ and $\depB_\nRuns$ are bounded \wrt $\nRuns$, which might not be the case in general.

Finally, we require some regularity on the model class $\hypspace$.
\begin{assumption}[Model regularity]
    \label{asm:gen:model}
    
    \leavevmode

    \begin{enumerate}[label=(\roman*), itemsep=0pt, topsep=0pt, parsep=2pt, partopsep=0pt]
    \item \textbf{Average Lipschitzness.} There is an $L_T > 0$ such that, for any $\model \in \hypspace$, any $\sample_{1:\nRuns}$, $\sample_{\run}'$, 
            \vspace*{-0.15em}
            \begin{equation}
                \textstyle
                \hspace{-1.5em}
                \tfrac{1}{\nRuns} \sum_{\runalt=1}^\nRuns \norm{\model(\sample_{1:\runalt-1}) - \model(\sample_{1:\run-1}, \sample_{\run}', \sample_{\run+1:\runalt-1})} \leq \modelLip_\nRuns \norm{\sample_\run - \sample_{\run}'}\,, 
                \notag
            \end{equation} 

            \vspace*{-0.2em}

        \item \textbf{Usual conditions.} The losses $\loss_\run$ are 1-Lipschitz; the class of models $\hypspace$ is bounded and uniformly Lipschitz with respect to some metric  and $\sample_\run$ conditioned on $\sample_{1:\run-1}, \task$ is uniformly sub-Gaussian.
    \end{enumerate}
\end{assumption}
In addition to technical assumptions common in learning theory, \cref{asm:gen:model}-(i) requires that the model class $\hypspace$ be ``on average'' Lipschitz with respect to changes in the input sequence. 
Thus, this technical condition quantifies how much the model $\model$ uses the older examples in context: for transformers with context length at least $\nRuns$, $\modelLip_\nRuns$ is typically bounded. If, on the contrary, the context length is kept constant and smaller than $\nRuns$, as in \citet{zekri2025large}, $\modelLip_\nRuns$ can decay as $1/\nRuns$. Explicit values of this constant for transformers can be obtained by using the Lipschitzness of attention layers, see e.g.~\citep{kim2021lipschitz,castin2024smoothattention}.

Given $\trainedhyp$ the trained model obtained using the empirical distribution $(\task_\runtask, \sample_{1:\nRuns}^{\runtask})_{\runtask \leq \nTasks}$
the central quantity that our main result bounds is the generalization error:
\makeatletter
\if@twocolumn
\begin{align}
    \nonumber \widehat{\mathrm{\gen}} \defeq  & \exwrt[\bigg]{\task \sim \prior}{\exwrt[\bigg]{\sample_{1:\nRuns} \sim \densitywrt{\nRuns}{\cdot \given \task}}{
        \tfrac{1}{\nRuns} \sum_{\run=1}^\nRuns \loss_\run(\trainedhyp(\sample_{1:\run-1}), \sample_\run)
    }}
    \\&
    -
\empLoss(\trainedhyp, (\task_\runtask, \sample_{1:\nRuns}^{\runtask})_{\runtask \leq \nTasks})
\notag
    \,.
\end{align}
\else
\begin{align}
    \nonumber \widehat{\mathrm{\gen}} \defeq  & \exwrt[\bigg]{\task \sim \prior}{\exwrt[\bigg]{\sample_{1:\nRuns} \sim \densitywrt{\nRuns}{\cdot \given \task}}{
        \tfrac{1}{\nRuns} \sum_{\run=1}^\nRuns \loss_\run(\trainedhyp(\sample_{1:\run-1}), \sample_\run)
    }}
    -
\empLoss(\trainedhyp, (\task_\runtask, \sample_{1:\nRuns}^{\runtask})_{\runtask \leq \nTasks})
\notag
    \,.
\end{align}
\fi
\makeatother

\begin{theorem}
    \label{thm:generalization}
    Under \cref{asm:gen:moment,asm:gen:dependence,asm:gen:model},
    for any $\thresh \in (0, e^{-2})$, with probability at least $1-\thresh$, it holds:
    \begin{enumerate}[label=(\roman*), itemsep=0pt, topsep=0pt]
        \item If $\thresh \geq \nTasks e^{-\expA}$, then 
        \vspace*{-1em}
        \begin{equation}
            \widehat{\gen} \leq 
            \bigoh\parens*{
            \frac{(\log 1/\thresh)^{3/2} \modelLip_\nRuns \sqrt{\nRuns}}{\sqrt{\nTasks}}
                \left(1 +  \depA_\nRuns \sqrt{\nRuns} + \depB_\nRuns \nRuns\right) 
            }\,,
            \notag
        \end{equation}
        \vspace*{-1em}
    \item If $\thresh < \nTasks e^{-\expA}$, then
        \vspace*{-1em}
        \begin{equation}
            \widehat{\gen} \leq 
            \bigoh\parens*{
                \frac{ \modelLip_\nRuns \sqrt{\nRuns}}{\thresh^{1/\expA}\sqrt{\nTasks}}
                \left(1 +  \depA_\nRuns \sqrt{\nRuns} + \depB_\nRuns \nRuns\right) 
            }\,,
            \notag
        \end{equation}
        where the terms in $\bigoh(\cdot)$ depend polynomially on $\expA$, $\log \nTasks$, the scale of $\prior$ and the size of $\hypspace$.
    \end{enumerate}
\end{theorem}

Like standard concentration inequalities for sums of independent heavy-tailed random variables,
\Cref{thm:generalization} provides two regimes. For small deviations, \ie $\thresh$ not arbitrarily small, the generalization error behaves like in a sub-exponential setting. However, for large deviations, \ie $\thresh$ very small, the behaviour of the generalization error worsens and depends on the moment $\expA$ of the prior $\prior$.
The generalization thus depends critically on the moment $\expA$ of the prior $\prior$:
the smaller the moment $\expA$, the heavier the tail of the prior $\prior$ and the worse the generalization error. Indeed, the smaller $\expA$, the higher the threshold $\nTasks e^{-\expA}$ separating the two regimes, leading to worse generalization for small $\thresh$. Moreover, the dependence on $\thresh$ in the second regime also worsens as $\expA$ decreases.
This can be observed on the examples of priors presented in \cref{ex:priors-1d} and in particular Student's $t$-distributions: with $\nu$ degrees of freedom, the maximal moment is $\expA = \ceil{\nu - 1}$ so that smaller values of $\nu$, \ie heavier tails, lead to smaller values of $\expA$ and worse generalization.

This bound also highlights how much larger the number of tasks must be compared to the number of in-context examples to ensure good generalization: in general, one needs $\nTasks$  to be at least much larger than $\nRuns$ to ensure a small generalization error. This is in line with our experiments and previous empirical studies. \citet{raventos2023pretraining} shows that to obtain optimal ICL performance with a context length of 16 or 64 in linear regression, one needs thousands of tasks\footnote{Note however that \citet{park2025competition,wurgaft2025context} highlight that these numbers significantly vary across settings.}.
Moreover, if the data sequence is highly dependent, \ie $\depA_\nRuns$ and $\depB_\nRuns$ are large, the requirement on the number of tasks $\nTasks$  for ICL to generalize well also increases. This will be demonstrated in \cref{sec:volterra_exp}.

Note that the guarantee of \cref{thm:generalization} can be translated into a bound on out-of-distribution generalization, see \cref{subsec:ood_generalization}.
Also, in \cref{sec:repeated-tasks}, we extend this result to the case where tasks are repeated in the training dataset, which is often the case in practice and improves the dependence on $\nTasks$.

\begin{tcolorbox}[colback=darkpurple!10!white, colframe=darkpurple!30!white]
\centering
\textit{\textbf{Takeaway \#1:} Heavier-tailed priors and stronger temporal dependences increase the number of tasks required for reliable ICL generalization.
}
\end{tcolorbox}
\paragraph{Connection to the IWL-ICL transition.}
Although studying the IWL-ICL transition is not our primary focus,
\cref{thm:generalization} provides some insight into it.
Consider $N$ tasks $\task_1, \ldots, \task_N$ sampled from $\prior$.
The IWL regime corresponds to the Bayes-optimal predictor with respect
to the discrete empirical distribution $\hat{\prior}_N = \frac{1}{N}
\sum_{i=1}^N \delta_{\task_i}$, while the ICL regime corresponds to
the Bayes-optimal predictor with respect to the true distribution
$\prior$ \citep{raventos2023pretraining}.
A trained model will be closer to the ICL regime when it minimizes not
only the training error but also the population loss.
Our generalization guarantee can thus be seen as a guarantee on when
the model enters the ICL regime: when the generalization error
(\cref{thm:generalization}) is small, the trained model is close to the
Bayes-optimal predictor for $\prior$, and therefore operates in the ICL
regime rather than the IWL regime.

Though it ensures good performance on out-of-sample tasks from $\prior$, this does not guarantee good performance under 
distribution shift: understanding how the Bayes-optimal predictor 
itself performs on tasks far from the bulk of $\prior$ is the subject 
of the next section.

\subsection{Task selection}
\label{subsec:main-task-selection}
Our second main result concerns the ability of a trained model to perform ICL and in particular to retrieve the correct task given some input sequence.
For this, we adopt the Bayesian point of view: if $\model$ is arbitrarily powerful, trained to optimality and generalization is negligible, $\model$ learns the \emph{Bayesian optimal predictor}. If we denote the posterior $\posteriorwrt{\run}{\task \given \sample_{1:\run-1}}$ the posterior distribution over tasks given the input sequence $\sample_{1:\run-1}$, the Bayesian optimal predictor $\model(\sample_{1:\run-1})$
is given by
\begin{align}
    \argmin_{\hat \sample_\run} \exwrt*{\task \sim \posteriorwrt{\run}{\cdot \given \sample_{1:\run-1}}}{\exwrt*{\sample_{\run} \sim \densitywrt{\run}{\cdot \given \sample_{1:\run-1}, \task}}{\loss_\run(\hat \sample_\run, \sample_\run)}}\,.    
    \label{eq:bayes_optimal_predictor_task}
\end{align}
Assuming that transformer models learn this Bayesian is a common assumption in the literature on ICL \citep{lin2024dual,zekri2025large,jeon2024an,zhangand,wang2024can} supported by empirical evidence \citep{chan2022data,raventos2023pretraining,wurgaft2025context,nguyen2025differential,park2025competition}.

For a model to perform ICL given in-context examples $\sample_{1:\run-1}$ generated from a task $\task^*$, it is therefore necessary that the posterior $\posteriorwrt{\run}{\task \given \sample_{1:\run-1}}$ concentrates around the true task $\task^*$ as the number of in-context examples $\run$ increases.
Our main result provides a quantitative guarantee of this concentration and highlights the role of the properties of the pretraining distribution $\prior$.

For this, we require some mild assumptions on the data generation process only; they do not restrict the prior $\prior$.
Since our focus is on the influence of the prior $\prior$ on task identification, in the main text we mainly focus on assumptions and quantities that involve $\prior$, and defer the detailed assumptions to~\cref{app:sec:bayes_task_selection}.
We will therefore use the notation $\poly(x)$ to denote a quantity that is polynomial in $x$ with coefficients independent of the prior $\prior$ and the number of samples $\nRuns$.
\begin{assumption}[Data generation, informal]
    \label{asm:data_generation_informal}
Let $\task^* \in \taskspace$ be the true task. We assume:
\begin{enumerate}[label=(\roman*), itemsep=0pt, topsep=0pt]
    \item  \textbf{Tail control.} Sequences $\sample_{1:\run}$ generated under the true task $\task^*$ have controlled tails, at most $\poly(\nRuns)$ on typical tail events and $\prior$ admits a second moment.
    \item \textbf{Moment bound.} For any $\nRuns \geq 1$, $\exwrt*{\rvA \sim \densitywrt{\nRuns}{\cdot \given \task^*}}{
        \log^2 \parens*{
        {
            \sup_{\task \in  \taskspace}
            \frac{\densitywrt{\nRuns}{\samplevec \mid \task}}
            {\densitywrt{\nRuns}{\samplevec \mid \sol[\task]}}
}}}$ is at most $\poly(\nRuns)$.
\item \textbf{Local regularity.} %
    The prior density $\prior$ is continuous and,
    for any $\Radius > 0$, $\run \leq \nRuns$, %
    \[
        \log 
        \tfrac{\densitywrt{\run}{\sample_{\run} \given \sample_{1:\run-1}, \task}}{\densitywrt{\run}{\sample_{\run} \given \sample_{1:\run-1}, \task'}}
        \leq \poly(\Radius) \norm{\task - \task'}\quad
    \]
        $\text{ for all } \sample_{1:\run}, \task, \task' \text{ such that } \norm{\sample_\runalt}, \norm{\task}, \norm{\task'} \leq \Radius$
\end{enumerate}
\end{assumption}
\vspace{-0.2em}
These assumptions are quite mild and are satisfied by our examples, see~\cref{app:subsec:examples}.
As a metric to assess the quality of a given retrieved task $\task$ \wrt the true task $\task^*$, we consider $\divergence{\renexp}{\nRuns}{\task}{\task^*}$ the R\'enyi divergence \citep{renyi1961measures} of order $\renexp\in(0,1)$ between the distributions $\densitywrt{\nRuns}{\cdot \given \task}$ and $\densitywrt{\nRuns}{\cdot \given \task^*}$:
\begin{equation*}
\textstyle
 -\frac{1}{\nRuns(1-\renexp)} \log \exwrt*{\rvA \sim \densitywrt{\nRuns}{\cdot \given \task^*}}{
        \prod_{\run=1}^\nRuns \parens*{
            \tfrac{\densitywrt{\run}{\sample_\run \given \sample_{1:\run-1}, \task}}{\densitywrt{\run}{\sample_\run \given \sample_{1:\run-1}, \task^*}}
        }^{\renexp}
}\,.
\end{equation*}
We divide by $\nRuns$ to obtain a per-sample divergence that does not trivially diverge as $\nRuns$ increases.
This metric is standard in the Bayesian consistency literature \citep{zhang2003learning,zhang2006epsilon,ghosalVaar2017fundamentals} and in practical examples it bounds the error of the Bayesian optimal predictor, see~\cref{app:subsec:examples}.

Our main theorem below shows that, under \cref{asm:data_generation_informal}, the posterior distribution over tasks concentrates around the true task $\task^*$ as the number of in-context examples $\nRuns$ increases, at a rate that depends on the properties of the pretraining distribution $\prior$.
\begin{theorem}[Task selection]
    \label{thm:task_selection}
    Let $\renexp \in (0,1)$, under \cref{asm:data_generation_informal}, with $\prior(\sol[\task]) > 0$ and $\sample_{1:\nRuns} \sim \densitywrt{\nRuns}{\cdot \given \task^*}$, the posterior distribution over tasks satisfies
    \makeatletter
    \if@twocolumn
    \begin{align}
        &\exwrt*{\sample_{1:\nRuns}}{
            \exwrt*{\task \sim \posteriorwrt{\nRuns}{\cdot \given \sample_{1:\nRuns}}}{
                \divergence{\renexp}{\nRuns}{\task}{\task^*}
            }
        }
        \notag
        \\
        &\leq 
        \tfrac{1 + \renexp}{(1-\renexp) \nRuns}
        {
\log 1/\prior(\task^*)
    }
+ \bigoh\parens*{\tfrac{\log \nRuns}{\nRuns}}\,,
    \label{eq:thm_task_selection_first_term}
    \notag
    \end{align}
\else
    \begin{align}
        &\exwrt*{\sample_{1:\nRuns}}{
            \exwrt*{\task \sim \posteriorwrt{\nRuns}{\cdot \given \sample_{1:\nRuns}}}{
                \divergence{\renexp}{\nRuns}{\task}{\task^*}
            }
        }
        \notag
        \leq 
        \tfrac{1 + \renexp}{(1-\renexp) \nRuns}
        {
\log 1/\prior(\task^*)
    }
+ \bigoh\parens*{\tfrac{\log \nRuns}{\nRuns}}\,,
    \label{eq:thm_task_selection_first_term}
    \notag
    \end{align}
\fi
    \makeatother
    where the terms in $\bigoh \parens*{\frac{\log \nRuns}{\nRuns}}$ do not depend on the prior $\prior$ or are negligible compared to the first term.
\end{theorem}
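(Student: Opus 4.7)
The plan is to adapt the Ghosal--Ghosh--van der Vaart / Bhattacharya--Pati--Yang posterior contraction framework to this sequential non-i.i.d.\ setting, carefully tracking the prior-mass term. The starting point is the likelihood-ratio identity
\[
    \nRuns(1-\renexp)\,\divergence{\renexp}{\nRuns}{\task}{\task^*}
    = -\log \exwrt*{\samplevec \sim \densitywrt{\nRuns}{\cdot \given \task^*}}{L_\nRuns(\task,\samplevec)^{\renexp}},
\]
where $L_\nRuns(\task,\samplevec) \defeq \prod_{\run=1}^{\nRuns} \densitywrt{\run}{\sample_\run \given \sample_{1:\run-1}, \task}/\densitywrt{\run}{\sample_\run \given \sample_{1:\run-1}, \task^*}$ is the sequential likelihood ratio. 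Integrating against $\prior$ and exchanging expectations via Fubini yields the global R\'enyi free-energy identity $\exwrt{\samplevec}{\int \prior(\task) L_\nRuns(\task,\samplevec)^{\renexp} d\task} = \int \prior(\task) e^{-\nRuns(1-\renexp)\divergence{\renexp}{\nRuns}{\task}{\task^*}} d\task \le 1$, which provides the main global handle on the tempered marginal likelihood of order $\renexp$.

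The first step is a lower bound on the standard marginal likelihood $\tilde Z_{\samplevec} \defeq \int \prior(\task)\, L_\nRuns(\task,\samplevec)\, d\task$ on a high-probability good event. By Assumption~(i), the samples lie in $\{\norm{\sample_\run} \le \Radius_\nRuns\}$ for some $\Radius_\nRuns = \poly(\nRuns)$ with probability at least $1 - 1/\poly(\nRuns)$. On this event, Assumption~(iii) gives $|\log L_\nRuns(\task,\samplevec)| \le \nRuns\,\poly(\Radius_\nRuns)\,\norm{\task-\task^*}$ for $\task$ in a compact neighborhood of $\task^*$. Choosing $\delta_\nRuns = 1/\poly(\nRuns)$ so that this bound is $\bigoh(1)$ uniformly on $B_{\delta_\nRuns}(\task^*)$, and using continuity of $\prior$ at $\task^*$,
\[
    \tilde Z_{\samplevec} \;\ge\; \int_{B_{\delta_\nRuns}(\task^*)} \prior(\task)\,L_\nRuns(\task,\samplevec)\,d\task \;\ge\; \prior(\task^*)\,\exp(-\bigoh(\log \nRuns)),
\]
since $\log\vol(B_{\delta_\nRuns}) = -\bigoh(\log\nRuns)$ (the hidden constant depending on $\taskdim$). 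This is the sole source of the $\log 1/\prior(\task^*)$ term in the final bound.

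The main computation rewrites the target quantity as
\[
    \exwrt*{\samplevec}{\exwrt*{\task \sim \posteriorwrt{\nRuns}{\cdot \given \samplevec}}{\divergence{\renexp}{\nRuns}{\task}{\task^*}}}
    = \exwrt*{\samplevec}{\frac{1}{\tilde Z_{\samplevec}} \int \prior(\task)\, L_\nRuns(\task,\samplevec)\, \divergence{\renexp}{\nRuns}{\task}{\task^*}\,d\task},
\]
and applies a H\"older-type split $L_\nRuns = L_\nRuns^{\renexp} \cdot L_\nRuns^{1-\renexp}$ whose exponents are tuned to produce the $(1+\renexp)/(1-\renexp)$ constant. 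The $L_\nRuns^{\renexp}$ factor is controlled via the Fubini identity of the first paragraph: applied to the integrand $\divergence{\renexp}{\nRuns}{\task}{\task^*}\,e^{-\nRuns(1-\renexp)\divergence{\renexp}{\nRuns}{\task}{\task^*}}$, the elementary inequality $xe^{-ax}\le 1/(ea)$ produces an $\bigoh(1/\nRuns)$ residual; the $L_\nRuns^{1-\renexp}$ factor, combined with the lower bound on $\tilde Z_{\samplevec}$ from the previous step and the Jensen bound $\exwrt{\samplevec}{L_\nRuns^{1-\renexp}} \le 1$ (from concavity of $t \mapsto t^{1-\renexp}$), produces the leading $\frac{1+\renexp}{(1-\renexp)\nRuns}\log 1/\prior(\task^*)$ term. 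Outside the good event, Assumption~(ii)'s second-moment bound on $\log\sup_\task L_\nRuns$ combined with Cauchy--Schwarz and $\prob(\text{bad event}) \le 1/\poly(\nRuns)$ keeps the residual contribution at $\bigoh(\log\nRuns/\nRuns)$.

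The main obstacle is the H\"older split: a naive change-of-measure using $\exwrt{\samplevec}{L_\nRuns(\task,\samplevec)} = 1$ would give a prior-averaged R\'enyi divergence that does not decay in $\nRuns$, so it is essential to retain a fractional power of $L_\nRuns$ to exploit the moment identity $\exwrt{\samplevec}{L_\nRuns^{\renexp}} = e^{-\nRuns(1-\renexp)\divergence{\renexp}{\nRuns}{\task}{\task^*}}$. The factor $1+\renexp$ in the numerator reflects precisely the price paid for retaining an $L_\nRuns^{1-\renexp}$ factor in the posterior weighting, and the delicate point is to choose the exponents so that neither the marginal-likelihood contribution nor the moment contribution dominates with a worse constant.
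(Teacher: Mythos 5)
Your proposal departs substantially from the paper's route and, as sketched, has a gap in the step that is supposed to produce the leading $\log(1/\prior(\task^*))/\nRuns$ term.

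The paper's proof is built on the Donsker--Varadhan variational principle applied twice, following \citet{zhang2003learning}: a PAC-Bayes style inequality (Lemma~\ref{lem:cvx_bound}) with the tempered log-likelihood ratio $\renexp \log \tfrac{d\prob(\rvA\mid\task)}{d\prob(\rvA)}$ as the loss, followed by the \emph{equality case} of Donsker--Varadhan to rewrite the posterior-plus-KL term as $-\log \exwrt*{\task \sim \prior}{\exp(-\exwrt*{\rvA}{\log L_\nRuns(\reftask)/L_\nRuns(\task)})}$ (Proposition~\ref{prop:template-task-selection}), and a Laplace approximation (Proposition~\ref{prop:laplace}) to evaluate that $-\log\exwrt*{\prior}{\exp(\cdot)}$ as $\log(1/\prior(\task^*)) + \bigoh(\poly(\log\nRuns))$. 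The $-\log\exwrt*{\prior}{\exp(\cdot)}$ structure is exactly what turns a small-ball prior mass into a \emph{logarithmic} penalty. The $(1+\renexp)/(1-\renexp)$ constant arises from taking $\coef = 1+\renexp$ in Proposition~\ref{prop:template-task-selection}, and the bracketing argument (Lemma~\ref{lem:discretization},~\ref{lem:discretization_simplified}) controls the residual term with the $\poly(\log\nRuns)$ cost.

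Your plan replaces all of this with a marginal-likelihood denominator bound plus a ``H\"older split'' of $L_\nRuns = L_\nRuns^\renexp \cdot L_\nRuns^{1-\renexp}$. Two problems. First, the split is not a valid move: you cannot peel $L_\nRuns^\renexp$ and $L_\nRuns^{1-\renexp}$ off into separate $\samplevec$-expectations, since they are deterministic functions of one another; applying H\"older with conjugate exponents $1/\renexp$, $1/(1-\renexp)$ just reproduces $\exwrt*{\samplevec}{L_\nRuns/\tilde Z_\samplevec}$ and is circular. Second, and more seriously, the way you claim the $\log(1/\prior(\task^*))$ term arises is from the lower bound $\tilde Z_\samplevec \gtrsim \prior(\task^*)\,e^{-\bigoh(\log\nRuns)}$. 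Plugging a lower bound on $\tilde Z_\samplevec$ into the denominator gives a \emph{multiplicative} factor $1/\prior(\task^*)$ in the final bound, not an additive $\tfrac{1}{\nRuns}\log(1/\prior(\task^*))$. Those are fundamentally different magnitudes: if $\prior(\task^*)=10^{-6}$ one is a factor of a million, the other an additive $14/\nRuns$. In the Ghosal--Ghosh--van der Vaart machinery you cite, the logarithm of the prior small-ball mass enters inside an exponent when you match the rate in a testing argument, and you then need to carefully integrate the posterior tail bound to convert to an expectation; your sketch does neither. Absent a mechanism that extracts a logarithm (either the paper's $\log\ex_\prior[\exp(\cdot)]$ Gibbs structure, or a genuine rate-matching-plus-tail-integration argument), the proposed route does not reproduce the stated bound.

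A secondary omission: the paper explicitly requires a bracketing-number discretization to control $\sup_\task$ of the likelihood ratio on tail events (Lemma~\ref{lem:discretization}), which is what turns Assumption~\hyperref[asm:data_generation_informal]{(iii)} plus the tail bounds into a usable uniform estimate; your sketch invokes local Lipschitzness near $\task^*$ but never addresses the contribution from tasks far from $\task^*$ in the posterior average. You also would need to justify the $(1+\renexp)/(1-\renexp)$ constant, which in the paper comes from an explicit tuning of the free parameter $\coef$ in the template bound; the fixed split $(\renexp, 1-\renexp)$ gives you no such degree of freedom.
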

\cref{thm:task_selection} provides a guarantee on how close the posterior distribution over tasks is to the true task $\task^*$ as the number of in-context examples $\nRuns$ increases.
The right-hand-side decays as $\bigoh(1/\nRuns)$, which shows that the posterior concentrates around the true task as the number of examples in-context increases.
The speed of convergence is governed by the coefficient $\log 1/\prior(\task^*)$, which quantifies how well the prior $\prior$ covers the true task $\task^*$: the smaller $\prior(\task^*)$, the slower the convergence. 
Since in ICL we wish to study the capabilities of learning a new task from in-context examples, this result quantifies the speed at which ICL learns this new task $\task^*$: the further $\task^*$ is from the bulk of the prior $\prior$, the slower ICL learns this new task. 
Thus, the ability to learn a new task and its robustness to new tasks crucially depends on the tail of the prior $\prior$: the slower the tail of $\prior$ decays, the larger $\prior(\task^*)$ is for tasks $\task^*$ far from the modes of $\prior$, and the faster ICL learns these new tasks.
This can be observed on the examples of priors presented in \cref{ex:priors-1d}. For a fixed task $\task^*$ far from the modes of $\prior$, the error for Student's $t$-distributions with $\nu$ degrees of freedom behaves as $(\nu + 1) \log |\task^*|/\nRuns$ for large $|\task^*|$ so that lower values of $\nu$, i.e.~heavier tails, lead to smaller errors.
For Generalized Normal distributions with shape parameter $\beta$, it behaves as $|\task^*|^\beta/\nRuns$ so lower values $\beta$ also lead to smaller errors.

From a technical viewpoint, \cref{thm:task_selection} is proven in \cref{app:sec:bayes_task_selection} using ideas from Bayesian statistics \citep{zhang2003learning,zhang2006epsilon} and is extremely general, covers discrete and continuous task spaces, and does not require any probabilistic structure on the data sequence%
nor specific data distributions.
Moreover, unlike some existing results, \cref{thm:task_selection} provides a guarantee on the posterior distribution given all $\nRuns$ in-context examples, and not only on the regret \ie the average error of the posterior distributions given $1, \ldots, \nRuns$ examples. This better reflects the practical use of ICL, where the user typically only considers the output of the model after all in-context examples have been provided.

Finally, we provide, in \cref{app:subsec:task-selection-bound-icl},  a more refined and sharper version of \cref{thm:task_selection},
which also encompasses the case where $\prior(\task^*) = 0$, in which the ICL error is not vanishing anymore.
In this scenario, it shows that ICL can struggle on out-of-distribution tasks, as empirically studied previously~\citep{goddard2025can,kwon2025out,yadlowsky2023pretraining}.
Also note that this more general result can also be used to obtain task identification guarantees for discrete priors. 

\begin{tcolorbox}[colback=darkpurple!10!white, colframe=darkpurple!30!white]
\centering
\textit{\textbf{Takeaway \#2:} Heavier-tailed priors are beneficial for task identification %
    : they improve the learning speed on new tasks,
 especially far from the bulk 
of the pretraining distribution.
}
\end{tcolorbox}

\subsection{Summary of theoretical predictions}
\label{subsec:main-summary}
Our theory shines a new light on the role of the pretraining distribution in ICL.%
We show that heavier-tailed priors actually lead to a trade-off in ICL performance: heavier-tailed priors are beneficial for task identification and robustness to distribution shifts, but harm generalization.
More precisely, our theory makes the following predictions:
\begin{itemize}[itemsep=0pt, topsep=0pt, partopsep=0pt, parsep=3pt]
    \item \textbf{Task selection under distribution shift (\cref{thm:task_selection}):} Heavier-tailed pretraining distributions lead to better ICL performance under larger distribution shifts. %
    \item \textbf{Generalization (\cref{thm:generalization}):} The generalization penalty of heavier-tailed pretraining distributions becomes significant when the number of pretraining tasks is small. %
   \item \textbf{Temporal dependence (\cref{thm:generalization}):} Stronger temporal dependence harms generalization, especially when the number of pretraining tasks decreases.
\end{itemize}
Note that it is precisely the generality of our theoretical framework --- heavy-tailed distributions, non-\ac{iid} and non-Markovian data --- that allows us to make these novel predictions.
These predictions are coherent with existing empirical observations in the literature: \citet{chan2022data,singh2023transient} show that using heavier-tailed pretraining distributions improves ICL performance up to a certain point. Our framework explains this phenomenon as a trade-off between task identification and generalization.
We validate these predictions empirically in the next section.

\vspace{-0.3em}
\section{Experimental Validation}
\label{sec:experiments}
\newcommand{\mainfactor}{0.3}
\newcommand{\subfactor}{0.49}
Our theoretical framework yields several predictions on how the choice of pretraining distribution affects in-context learning (ICL) performance, in particular under distribution shift.
We now conduct a series of experiments to demonstrate and validate these prediction.%
We thus train transformer models under different pretraining distributions to solve different ICL tasks, and evaluate their performance on shifted tasks.

\begin{figure}
    \centering
    \begin{subfigure}[t]{\subfactor\linewidth}
        \includegraphics[width=\linewidth, trim={0cm, 0cm, 40cm, 0cm}, clip]{./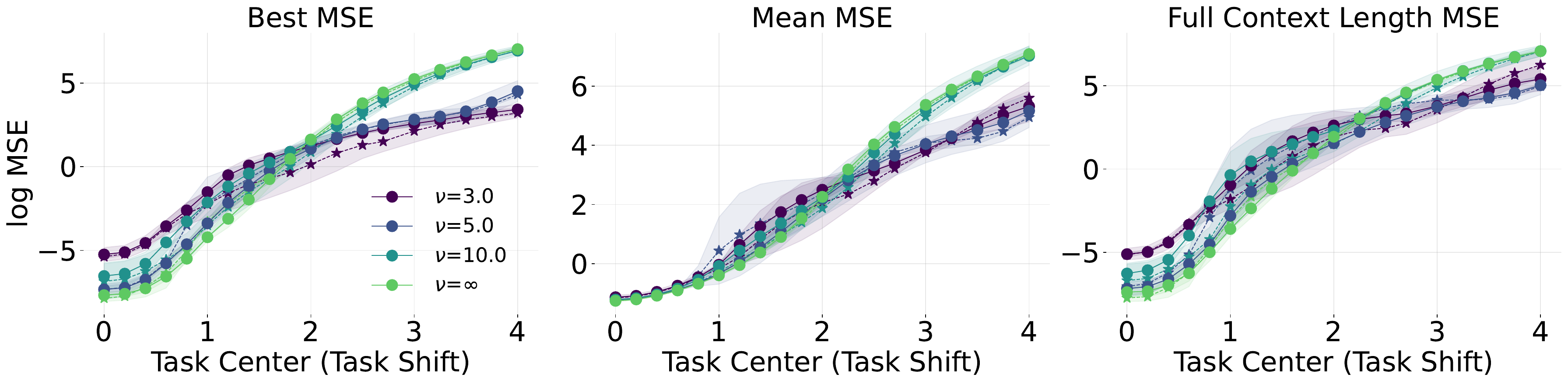}
        \caption{$\nTasks=\infty$}
        \label{fig:lr_t_dist_weight}
    \end{subfigure}
    \begin{subfigure}[t]{\subfactor\linewidth}
        \includegraphics[width=\linewidth, trim={0cm, 0cm, 40cm, 0cm}, clip]{./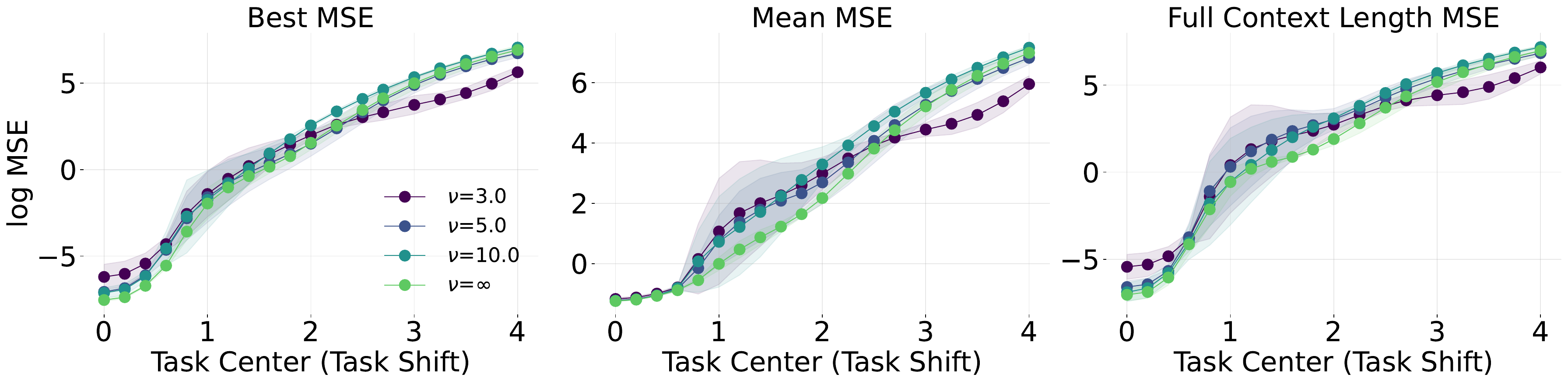}
        \caption{$\nTasks=1000$}
    \end{subfigure}
    \begin{subfigure}[t]{\subfactor\linewidth}
        \includegraphics[width=\linewidth, trim={0cm, 0cm, 40cm, 0cm}, clip]{./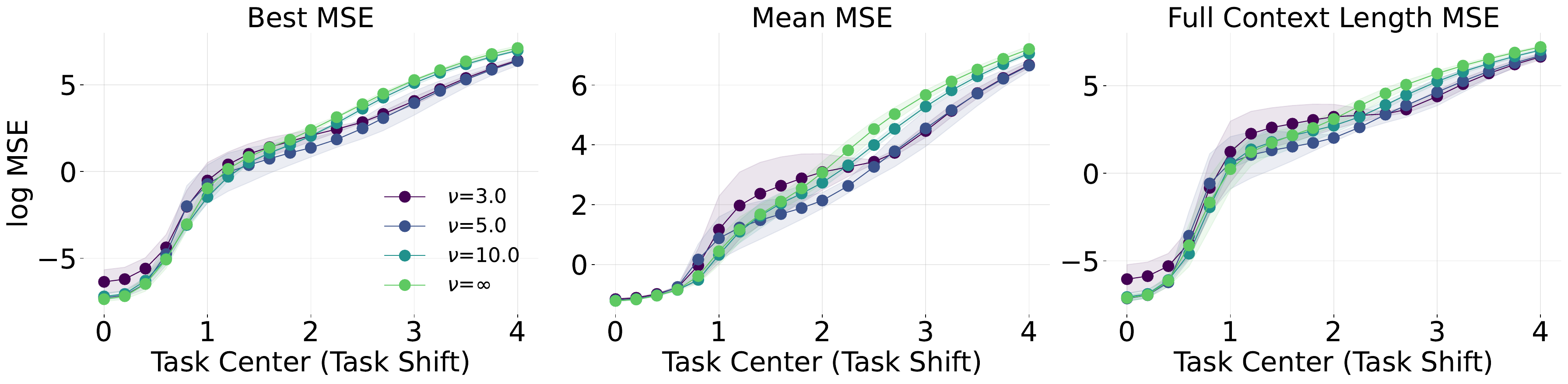}
        \caption{$\nTasks=500$}
    \end{subfigure}
    \begin{subfigure}[t]{\subfactor\linewidth}
        \includegraphics[width=\linewidth, trim={0cm, 0cm, 40cm, 0cm}, clip]{./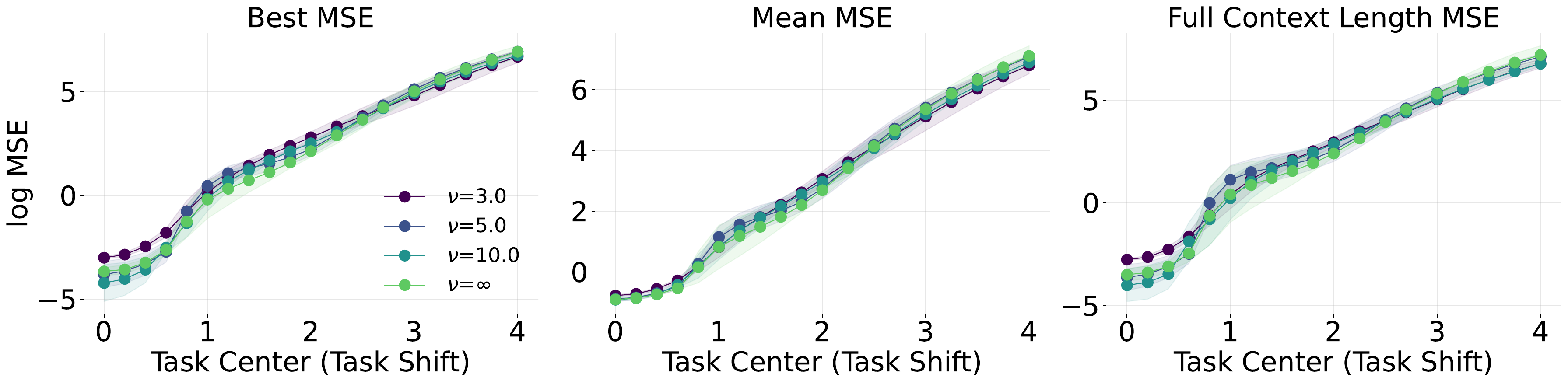}
        \caption{$\nTasks=100$}
    \end{subfigure}
      \caption{Influence of the degree of freedom parameter $\nu$ of a Student-$t$ pretraining distribution
    (lower $\nu$ corresponds to heavier tail) and 
    of the number of tasks $\nTasks$ 
on the ICL error for different task shifts for linear regression.}

    \label{fig:lr_gen}
\end{figure}

\para{ICL evaluation through robustness to distribution shift.}
The transformer is trained on tasks $\theta$ sampled from a pretraining distribution $\prior$ centered at 0.%
We will either use a finite number of tasks $\nTasks$ sampled from $\prior$ or an unbounded number of tasks, \ie a new task from $\prior$ is sampled at each training iteration.
To assess the ICL performance, we evaluate the trained model on tasks $\theta^* \sim \mathcal{N}(\Delta, I_d)$ where $\Delta$ is a deterministic shift and report the ICL error on these shifted tasks as a function of the shift magnitude $\|\Delta\|$. Note that these evaluations tasks are independent of the choice of pretraining distribution.
Studying this error as a function of the shape of the pretraining distribution allows us to validate the theory in~\cref{subsec:main-summary}.
We also study the performance of ICL as a function of the number of pretraining tasks to evaluate our predictions regarding generalization.

\para{Distributions and Metrics.}
We experiment with two different families of pretraining distributions: the Student-$t$ distribution with varying degrees of freedom $\nu \in \{3, 5, 10, \infty\}$ (where $\nu=\infty$ corresponds to the normal distribution) and the generalized normal distribution with varying shape parameter $\beta \in \{1, 1.5, 2, 2.5\}$ (where $\beta=2$ corresponds to the normal distribution).
In both cases, lower parameter values indicate heavier tails of the distribution. Note that the scale parameter is chosen such that all distributions keep the same variance.
We refer to \cref{app:sec:exp} for details on the data generation, model architecture and optimization.
For all experiments, we consider mean squared error (MSE) and report the best MSE over the context length, which is given by $\min_{t} (\hat{f}(x_{1:t}) - x_{t+1})^2$.
The mean MSE and MSE at full context length behave similarly and are reported in \cref{sec:additional_experiments} .

\subsection{Linear Regression}
\label{subsec:main-lr}
We first consider the linear regression setting (\cref{ex:lr}) where each $\task \in \R^\taskdim$ defines a linear regression task $\out_i = \task^T \inp_i + \epsilon_i$ for $i=1,...,64$ where 64 is the context length. 
During pretraining, we sample $\task$ according to different Student-$t$ distributions, with the same location and variance but different shape parameters $\nu$ and thus different tail heaviness.

\para{Task identification.}
The first prediction from \cref{subsec:main-summary}
is that heavier-tailed pretraining distributions should lead to better performance under larger distribution shifts.%
To empirically validate this prediction without confounding effects from generalization,
we  first consider the case where the number of tasks used for pretraining is unbounded ($\nTasks = \infty$).
The results in~\cref{fig:lr_t_dist_weight}
show that for small distribution shifts, the lighter-tailed prior (higher $\nu$) performs best, but as the shift increases, the heavier-tailed priors (lower $\nu$) outperform the lighter-tailed ones, confirming the prediction of \cref{subsec:main-summary}.
To further investigate, we also explore the effect of reweighting the pretraining distribution, see \cref{app:subsec:reweighting} for details.

\begin{figure}
\centering
\begin{subfigure}[t]{\subfactor\linewidth}
    \includegraphics[width=\linewidth, trim={0cm, 0cm, 40cm, 0cm}, clip]{./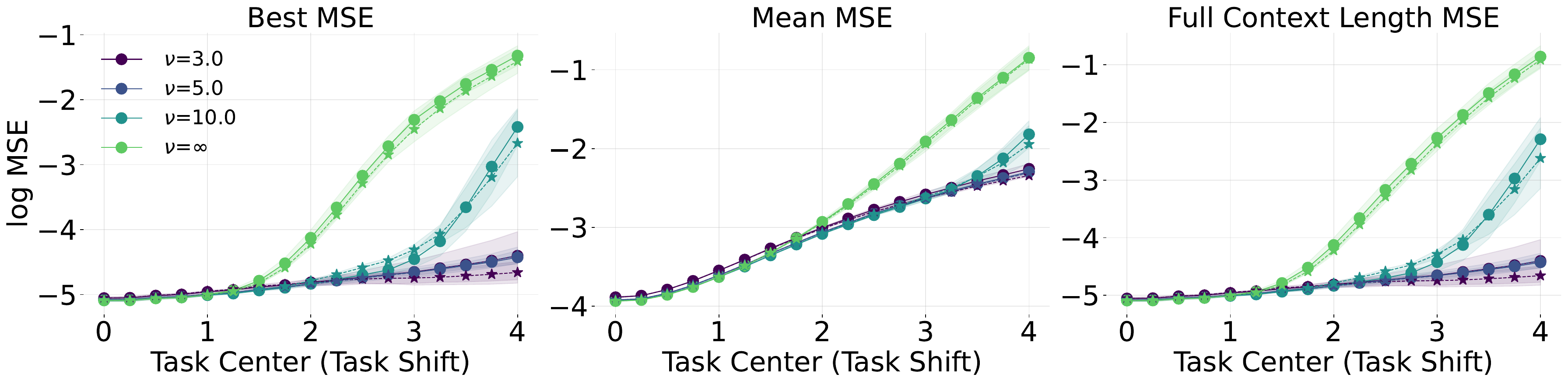}
    \label{fig:ou_t_dist_weight}
\end{subfigure}
\begin{subfigure}[t]{\subfactor\linewidth}
    \includegraphics[width=\linewidth, trim={0cm, 0cm, 40cm, 0cm}, clip]{./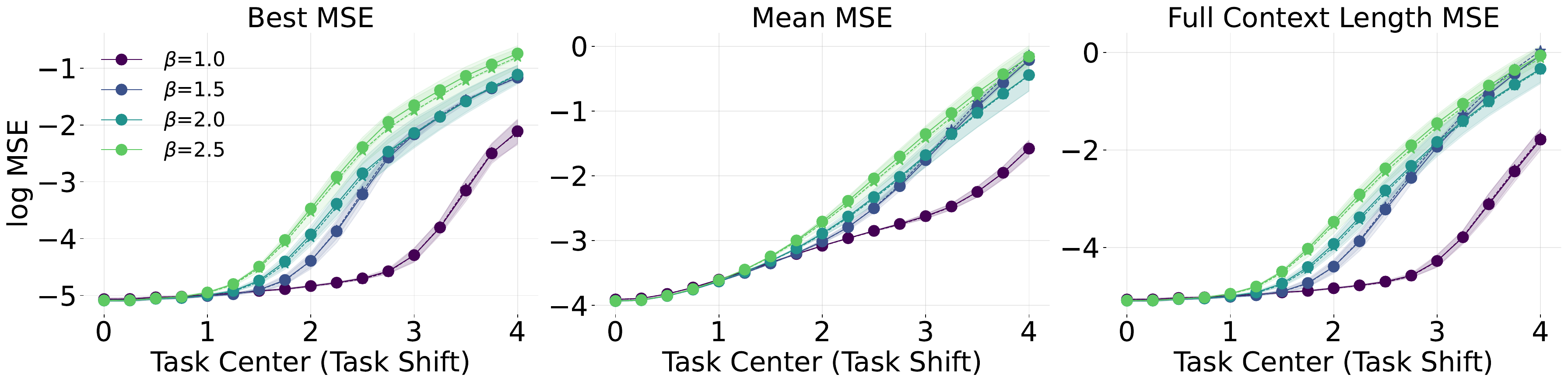}
    \label{fig:ou_gen_weight}
\end{subfigure}
\caption{
    \textbf{(Left)} 
Influence of the degree of freedom parameter $\nu$ of a Student-$t$ pretraining distribution (lower $\nu$ corresponds to heavier tail) on the ICL error for different task shifts for predicting the next step in an OU process with context length of 32.
    \textbf{(Right)}
Influence of the shape $\beta$ of a generalized normal distribution (lower $\beta$ corresponds to heavier tail) on the ICL error for different task shifts for predicting the next step in an OU process.}
\label{fig:ou_both}
\end{figure}

\para{Generalization.}
To validate our second prediction from \cref{subsec:main-summary}, we now consider the case where the number of pretraining tasks is finite and study how well the model generalizes to unseen tasks as a function of the number of pretraining tasks.
\Cref{subsec:main-summary}
    predicts that heavier-tailed priors require more samples to generalize well, so we expect that for small number of pretraining tasks, heavier-tailed priors will loose their advantage over lighter-tailed priors on out-of-distribution tasks.
The results are presented in~\cref{fig:lr_gen}, which quantitatively confirms this prediction: for small $\nTasks$, light-tailed priors eliminate the performance gap with heavy-tailed priors on out-of-distribution tasks tasks, precisely as predicted. 
 Thus, for small number of pretraining tasks, the advantage of heavier-tailed priors for task selection is offset by their worse generalization.

\vspace{-.4em}
\subsection{Linear Stochastic Differential Equations}
\label{subsec:main-ou}

In the next experiment, we follow the setup in~\cref{ex:stoch_proc} with a stochastic process satisfying \eqref{eq:ou}. 
Our metric of success is the MSE $(\hat{X}_{t+1} - \mathbb{E}[X_{t+1} \mid X_t])^2$ where $\hat{X}_{t+1}$ is the prediction with the context of $X_{1:t}$.
The task parameters $\theta,\mu$ are sampled from different pretraining distributions and we again compare the performance of ICL on different test tasks.
We focus on validating the first prediction from \cref{subsec:main-summary}
regarding task selection under distribution shift, and thus consider an unbounded number of pretraining tasks. In addition to the Student-$t$ distribution, we also report results with the generalized normal distribution as a pretraining prior, see \cref{fig:ou_both}.
In both instances, our prediction is quantitatively confirmed: the heavier tailed pretraining distributions (lower $\nu$ or $\beta$) perform better for larger task  shifts.

\vspace{-.4em}
\subsection{Stochastic Volterra Equations}
\label{sec:volterra_exp}
\label{subsec:main-volterra}
In \cref{sec:theory}, we predicted that temporal dependencies in the data would negatively impact generalization in ICL.
To quantitatively validate this prediction, we finally consider stochastic Volterra equations as a model of nonlinear stochastic processes that have long range dependencies. 
These processes are, under certain conditions, known to model fractional Brownian motion, which exhibit self-similarity which has been thought to represent the distribution of tokens in LLMs~\citep{alabdulmohsin2024fractal}.
Each task $\theta$ parametrizes a multi-layer perceptron $b_\theta$
and induces the process: $ X_t = X_0 + \int_0^t (t -s)^{-\alpha} b_\theta(X_s) \mathrm{d}s + \int_0^t (t-s)^{-\alpha} \mathrm{d}W_s, $
where $W_t$ is a standard Brownian motion and the kernel exponent $\alpha > 0$ controls the temporal dependence of the process: the smaller $\alpha$ is, the more past values influence the current value.
The dependency coefficients in \cref{thm:generalization} thus depend explicitly on $\alpha$, they are larger for smaller $\alpha$, see \cref{app:subsec:volterra}.
We consider the generalization capabilities as a function of the number of pretraining tasks in~\cref{fig:volterra_gen} and as a function of $\alpha$.
\Cref{thm:generalization} predicts that generalization should suffer for smaller $\alpha$ due to the increased dependencies, which is validated in the experiments: the performance gap between the different $\alpha$ is larger for smaller  number of tasks. More precisely, sequences with lower kernel exponents such as $1.0$ (higher dependence) have worse performance and degrades faster as the number of tasks decreases compared to sequences with higher kernel exponents such as $2.0$ (lower dependence).
For instance, for kernel exponent 1.0, the MSE at shift 0 is multiplied by 3 when $\nTasks$ goes from 5000 to 500 while for kernel exponent 2.0, the MSE at shift 0 is barely changes.
These results thus validate our predictions.

\begin{figure}
    \centering
    \hspace{-10pt}
    \begin{subfigure}[t]{0.32\linewidth}
        \includegraphics[width=1.09\linewidth, trim={0cm, 0cm, 40cm, 0cm}, clip]{./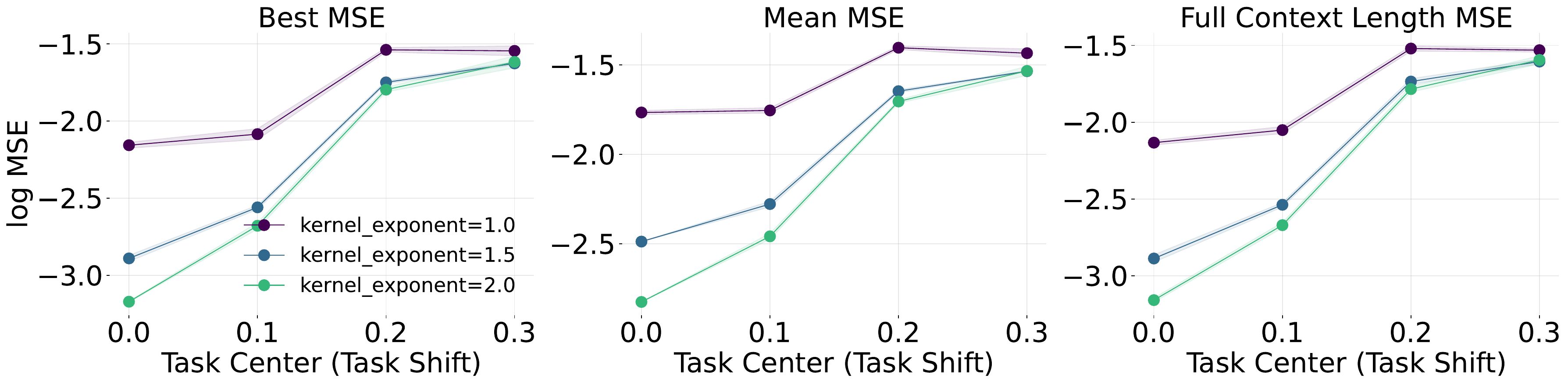}
        \caption{$\nTasks=500$}
    \end{subfigure}
    \begin{subfigure}[t]{0.32\linewidth}
        \includegraphics[width=1.09\linewidth, trim={0cm, 0cm, 40cm, 0cm}, clip]{./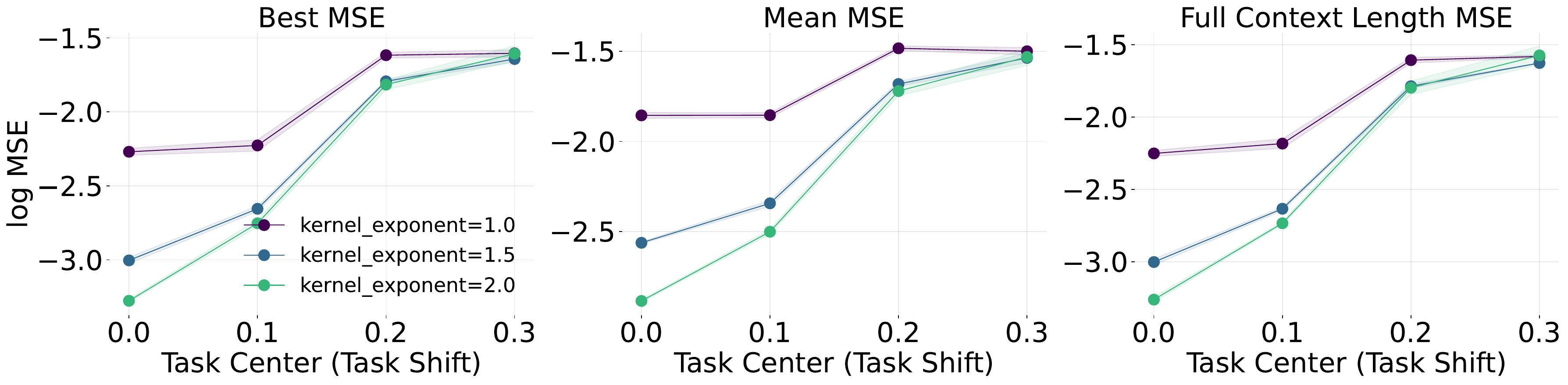}
        \caption{$\nTasks=1000$}
    \end{subfigure}
    \begin{subfigure}[t]{0.32\linewidth}
        \includegraphics[width=1.09\linewidth, trim={0cm, 0cm, 40cm, 0cm}, clip]{./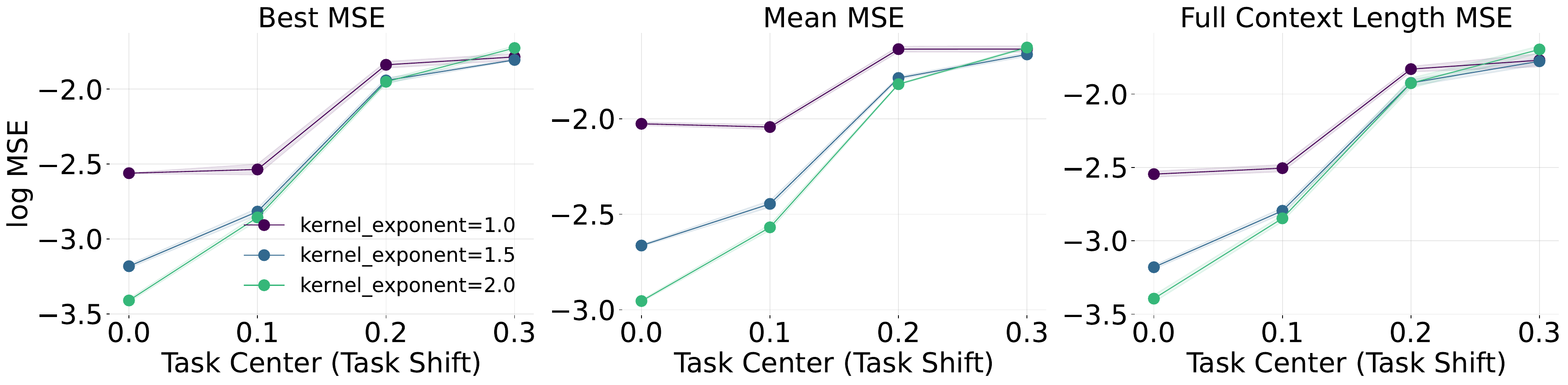}
        \caption{$\nTasks=5000$}
    \end{subfigure}
    \caption{Generalization of a transformer trained to predict the next step of the Volterra as a function of $\nTasks$ the number of tasks with context length of 32.
    }
    \label{fig:volterra_gen}
\end{figure}

\vspace{-0.3em}
\vspace*{-0.3em}
\section{Discussion}
\label{sec:discussion}
\vspace*{-0.1em}

\subsection{Broader impacts of our work}

\para{Connections to practical pretraining settings.}
Though our theoretical framework is developed in a controlled mixture-of-tasks
setting, its predictions connect to practical pretraining pipelines.
For tabular foundation models \citep{hollmann2023tabpfn, qu2025tabicl}, our conclusions apply directly and suggest that the choice of pretraining distribution can directly affect the robustness/generalization trade-off.
In pretraining with text, our work suggests that 
a long-tailed mixture over latent tasks, domains, or input-output
patterns---rather than a mixture concentrated almost entirely on highly
frequent task types---may improve in-context learning performance but hurt generalization.
This connection is consistent with empirical observations:
\citet{chan2022data} and \citet{singh2023transient} show that heavier-tailed
pretraining distributions improve ICL performance up to a point, beyond
which performance degrades, in line with the trade-off our framework
identifies.

\para{Computational implications.}
Our framework focuses primarily on the statistical trade-off between task
identification and generalization.
That said, our results suggest an indirect computational implication:
if a pretraining distribution enables faster task identification, then
fewer in-context examples may be needed at test time, which reduces the
required context length and thus the computational cost of inference.

\vspace*{-0.3em}
\subsection{Limitations}
\label{subsec:limitations}

\para{Sub-Gaussian assumption.}
\Cref{thm:generalization} assumes uniform sub-Gaussianity of the
per-sample loss.
We choose this simplifying assumption to focus on highlighting the impact
of the properties of $\prior$ on the ICL error.
Note that our analysis can be extended to cover the case where the
constant in the sub-Gaussian assumption grows with $\theta$: at the cost of
a more intricate proof, this would yield the same
theorem as \cref{thm:generalization}.
The fully general case is a subject of future work.

\para{Bayesian-optimality gap.}
Our task-selection result (\cref{thm:task_selection}) characterizes the
Bayes-optimal predictor with respect to $\prior$.
Several works have shown empirically that sufficiently expressive trained
transformers closely match the Bayes-optimal predictor
\revise{\citep{chan2022data, raventos2023pretraining, wurgaft2025context,
nguyen2025differential, park2025incontext}}. \cref{thm:task_selection} thus concerns the
regime that actual trained transformers are expected to approach when
they are sufficiently powerful and well trained.
Note, however, that in our experiments with trained transformers, we observe the qualitative trends predicted by the theory.

\para{Empirical scope.}
Our experiments are limited to controlled numerical tasks (linear
regression, Ornstein-Uhlenbeck processes, Volterra equations), which
allow precise control over the statistical variables of interest.
Our empirical support for the task-selection mechanism is thus indirect:
rather than directly measuring posterior concentration, we evaluate
prediction error under task shift, and show that the observed behavior
is consistent with the task-selection effect predicted by the theory.
Validation on large-scale natural language datasets, and a more direct
empirical proxy for task retrieval, remain important directions for
future work.

\vspace{-0.6em}
\vspace*{-0.1em}
\section{Conclusion}
\vspace*{-0.2em}
In this work, we characterize how statistical properties of the pretraining distribution, such as tail behavior, coverage, and temporal dependence, shape ICL performance. Our theory covers both task identification and generalization, extends to heavy-tailed priors and dependent sequences, and exposes a fundamental design trade-off: heavier-tailed pretraining distributions improve ICL performance under distribution shifts, but can degrade generalization in low-data regimes, while stronger dependence increases the amount of data needed to generalize reliably. We validate these predictions on challenging numerical probes, including stochastic differential equations and stochastic processes with memory, highlighting practical guidelines for designing pretraining distributions that enhance ICL capabilities for transformers.
A prominent direction for future work is to extend these insights to LLMs trained on text data, where the pretraining distribution can be shaped through data curation and fine-tuning strategies.

\vspace*{-0.8em}
\section*{Impact Statement}
\vspace*{-0.2em}
This paper presents work whose goal is to advance the field of machine learning. There are many potential societal consequences of our work, none of which we feel must be specifically highlighted here.

\bibliography{refs}

\begin{thebibliography}{89}
\providecommand{\natexlab}[1]{#1}
\providecommand{\url}[1]{\texttt{#1}}
\expandafter\ifx\csname urlstyle\endcsname\relax
  \providecommand{\doi}[1]{doi: #1}\else
  \providecommand{\doi}{doi: \begingroup \urlstyle{rm}\Url}\fi

\bibitem[Ab{\'e}l{\`e}s et~al.(2025)Ab{\'e}l{\`e}s, Clerico, and Neu]{abeles2025generalization}
Ab{\'e}l{\`e}s, B., Clerico, E., and Neu, G.
\newblock Generalization bounds for mixing processes via delayed online-to-pac conversions.
\newblock In \emph{Algorithmic Learning Theory}, pp.\  23--40. PMLR, 2025.

\bibitem[Ahn et~al.(2023{\natexlab{a}})Ahn, Cheng, Daneshmand, and Sra]{ahn2023transformers}
Ahn, K., Cheng, X., Daneshmand, H., and Sra, S.
\newblock Transformers learn to implement preconditioned gradient descent for in-context learning.
\newblock \emph{Advances in Neural Information Processing Systems}, 36:\penalty0 45614--45650, 2023{\natexlab{a}}.

\bibitem[Ahn et~al.(2023{\natexlab{b}})Ahn, Brohan, Brown, Chebotar, Cortes, David, Finn, Fu, Gopalakrishnan, Hausman, et~al.]{ahn2022can}
Ahn, M., Brohan, A., Brown, N., Chebotar, Y., Cortes, O., David, B., Finn, C., Fu, C., Gopalakrishnan, K., Hausman, K., et~al.
\newblock Do as i can, not as i say: Grounding language in robotic affordances.
\newblock In \emph{Conference on robot learning}, pp.\  287--318. PMLR, 2023{\natexlab{b}}.

\bibitem[Akyürek et~al.(2023)Akyürek, Schuurmans, Andreas, Ma, and Zhou]{akyurek2023whatlearningalgorithm}
Akyürek, E., Schuurmans, D., Andreas, J., Ma, T., and Zhou, D.
\newblock What learning algorithm is in-context learning? investigations with linear models.
\newblock In \emph{International Conference on Learning Representations (ICLR)}, 2023.

\bibitem[Alabdulmohsin et~al.(2024)Alabdulmohsin, Tran, and Dehghani]{alabdulmohsin2024fractal}
Alabdulmohsin, I.~M., Tran, V., and Dehghani, M.
\newblock Fractal patterns may illuminate the success of next-token prediction.
\newblock \emph{Advances in Neural Information Processing Systems}, 37:\penalty0 112864--112888, 2024.

\bibitem[Azizian et~al.(2025)Azizian, Iutzeler, Malick, and Mertikopoulos]{azizian2025global}
Azizian, W., Iutzeler, F., Malick, J., and Mertikopoulos, P.
\newblock The global convergence of stochastic gradient descent in non-convex landscapes: Sharp estimates via large deviations.
\newblock In \emph{Forty-second International Conference on Machine Learning}, 2025.

\bibitem[Bai et~al.(2023)Bai, Chen, Wang, Xiong, and Mei]{bai2023transformers}
Bai, Y., Chen, F., Wang, H., Xiong, C., and Mei, S.
\newblock Transformers as statisticians: Provable in-context learning with in-context algorithm selection.
\newblock In \emph{Thirty-seventh Conference on Neural Information Processing Systems}, 2023.

\bibitem[Bakhshizadeh et~al.(2023)Bakhshizadeh, Maleki, and De~La~Pena]{bakhshizadeh2023sharp}
Bakhshizadeh, M., Maleki, A., and De~La~Pena, V.~H.
\newblock Sharp concentration results for heavy-tailed distributions.
\newblock \emph{Information and Inference: A Journal of the IMA}, 12\penalty0 (3):\penalty0 1655--1685, 2023.

\bibitem[Barboni et~al.(2025)Barboni, Peyr{\'e}, and Vialard]{barboni2024understanding}
Barboni, R., Peyr{\'e}, G., and Vialard, F.-X.
\newblock Understanding the training of infinitely deep and wide resnets with conditional optimal transport.
\newblock \emph{Communications on Pure and Applied Mathematics}, 2025.

\bibitem[Barron et~al.(1999)Barron, Schervish, and Wasserman]{barron1999consistency}
Barron, A., Schervish, M.~J., and Wasserman, L.
\newblock The consistency of posterior distributions in nonparametric problems.
\newblock \emph{The Annals of Statistics}, 27\penalty0 (2):\penalty0 536--561, 1999.

\bibitem[Bertsch et~al.(2025)Bertsch, Ivgi, Xiao, Alon, Berant, Gormley, and Neubig]{bertsch2024in0context}
Bertsch, A., Ivgi, M., Xiao, E., Alon, U., Berant, J., Gormley, M.~R., and Neubig, G.
\newblock In-context learning with long-context models: An in-depth exploration.
\newblock In \emph{Proceedings of the 2025 Conference of the Nations of the Americas Chapter of the Association for Computational Linguistics: Human Language Technologies (Volume 1: Long Papers)}, pp.\  12119--12149, 2025.

\bibitem[Boucheron et~al.(2005)Boucheron, Bousquet, Lugosi, and Massart]{boucheron2005moment}
Boucheron, S., Bousquet, O., Lugosi, G., and Massart, P.
\newblock Moment inequalities for functions of independent random variables.
\newblock \emph{Annals of Probability}, 33\penalty0 (2), 2005.

\bibitem[Brown et~al.(2020)Brown, Mann, Ryder, Subbiah, Kaplan, Dhariwal, Neelakantan, Shyam, Sastry, Askell, et~al.]{brown2020language}
Brown, T., Mann, B., Ryder, N., Subbiah, M., Kaplan, J.~D., Dhariwal, P., Neelakantan, A., Shyam, P., Sastry, G., Askell, A., et~al.
\newblock Language models are few-shot learners.
\newblock \emph{Advances in neural information processing systems}, 33:\penalty0 1877--1901, 2020.

\bibitem[Castin et~al.(2024)Castin, Ablin, and Peyr{\'e}]{castin2024smoothattention}
Castin, V., Ablin, P., and Peyr{\'e}, G.
\newblock How smooth is attention?, 2024.

\bibitem[Chan et~al.(2025)Chan, Chen, György, and Schuurmans]{chan2025toward}
Chan, B., Chen, X., György, A., and Schuurmans, D.
\newblock Toward understanding in-context vs. in-weight learning.
\newblock In \emph{International Conference on Learning Representations (ICLR)}, 2025.

\bibitem[Chan et~al.(2022)Chan, Santoro, Lampinen, Wang, Singh, Richemond, McClelland, and Hill]{chan2022data}
Chan, S., Santoro, A., Lampinen, A., Wang, J., Singh, A., Richemond, P., McClelland, J., and Hill, F.
\newblock Data distributional properties drive emergent in-context learning in transformers.
\newblock \emph{Advances in neural information processing systems}, 35:\penalty0 18878--18891, 2022.

\bibitem[Chazottes et~al.(2007)Chazottes, Collet, K{\"u}lske, and Redig]{chazottes2007concentration}
Chazottes, J.-R., Collet, P., K{\"u}lske, C., and Redig, F.
\newblock Concentration inequalities for random fields via coupling.
\newblock \emph{Probability Theory and Related Fields}, 137\penalty0 (1):\penalty0 201--225, 2007.

\bibitem[Dirksen(2015)]{dirksen2015tail}
Dirksen, S.
\newblock Tail bounds via generic chaining.
\newblock \emph{Electronic Journal of Probability}, 20\penalty0 (53):\penalty0 1--29, 2015.

\bibitem[Furuya et~al.(2025)Furuya, de~Hoop, and Peyr{\'e}]{furuya2024transformers}
Furuya, T., de~Hoop, M.~V., and Peyr{\'e}, G.
\newblock Transformers are universal in-context learners.
\newblock In \emph{The Thirteenth International Conference on Learning Representations}, 2025.

\bibitem[Gao et~al.(2024)Gao, Cao, Li, He, Wang, Liu, Klusowski, and Fan]{gao2024global}
Gao, C., Cao, Y., Li, Z., He, Y., Wang, M., Liu, H., Klusowski, J., and Fan, J.
\newblock Global convergence in training large-scale transformers.
\newblock \emph{Advances in Neural Information Processing Systems}, 37:\penalty0 29213--29284, 2024.

\bibitem[Garg et~al.(2022)Garg, Tsipras, Liang, and Valiant]{garg2022can}
Garg, S., Tsipras, D., Liang, P.~S., and Valiant, G.
\newblock What can transformers learn in-context? a case study of simple function classes.
\newblock \emph{Advances in Neural Information Processing Systems}, 35:\penalty0 30583--30598, 2022.

\bibitem[Ghosal \& van~der Vaart(2017)Ghosal and van~der Vaart]{ghosalVaar2017fundamentals}
Ghosal, S. and van~der Vaart, A.
\newblock \emph{Fundamentals of Nonparametric Bayesian Inference}.
\newblock Cambridge Series in Statistical and Probabilistic Mathematics. Cambridge University Press, 2017.

\bibitem[Goddard et~al.(2025)Goddard, Smith, Ngampruetikorn, and Schwab]{goddard2025can}
Goddard, C., Smith, L.~M., Ngampruetikorn, V., and Schwab, D.~J.
\newblock When can in-context learning generalize out of task distribution?
\newblock In \emph{Forty-second International Conference on Machine Learning}, 2025.

\bibitem[Hellstr{\"o}m et~al.(2025)Hellstr{\"o}m, Durisi, Guedj, Raginsky, et~al.]{hellstrom2025generalization}
Hellstr{\"o}m, F., Durisi, G., Guedj, B., Raginsky, M., et~al.
\newblock Generalization bounds: Perspectives from information theory and pac-bayes.
\newblock \emph{Foundations and Trends{\textregistered} in Machine Learning}, 18\penalty0 (1):\penalty0 1--223, 2025.

\bibitem[Hollmann et~al.(2023)Hollmann, M{\"u}ller, Eggensperger, and Hutter]{hollmann2023tabpfn}
Hollmann, N., M{\"u}ller, S., Eggensperger, K., and Hutter, F.
\newblock {TabPFN}: A transformer that solves small tabular classification problems in a second.
\newblock In \emph{The Eleventh International Conference on Learning Representations}, 2023.
\newblock URL \url{https://openreview.net/forum?id=cp5PvcI6w8_}.

\bibitem[Jeon et~al.(2024)Jeon, Lee, Lei, and Roy]{jeon2024an}
Jeon, H.~J., Lee, J.~D., Lei, Q., and Roy, B.~V.
\newblock An information-theoretic analysis of in-context learning.
\newblock In \emph{Forty-first International Conference on Machine Learning}, 2024.

\bibitem[Jin et~al.(2019)Jin, Netrapalli, Ge, Kakade, and Jordan]{jin2019short}
Jin, C., Netrapalli, P., Ge, R., Kakade, S.~M., and Jordan, M.~I.
\newblock A short note on concentration inequalities for random vectors with subgaussian norm.
\newblock \emph{arXiv preprint arXiv:1902.03736}, 2019.

\bibitem[Kim et~al.(2021)Kim, Papamakarios, and Mnih]{kim2021lipschitz}
Kim, H., Papamakarios, G., and Mnih, A.
\newblock The lipschitz constant of self-attention.
\newblock In \emph{International Conference on Machine Learning}, pp.\  5562--5571. PMLR, 2021.

\bibitem[Kontorovich(2014)]{kontorovich2014concentration}
Kontorovich, A.
\newblock Concentration in unbounded metric spaces and algorithmic stability.
\newblock In \emph{International conference on machine learning}, pp.\  28--36. PMLR, 2014.

\bibitem[Kontorovich \& Ramanan(2008)Kontorovich and Ramanan]{kontorovich2008concentration}
Kontorovich, L.~A. and Ramanan, K.
\newblock Concentration inequalities for dependent random variables via the martingale method.
\newblock \emph{Ann. Probab.}, 36\penalty0 (1):\penalty0 2126--2158, 2008.

\bibitem[Kratsios \& Furuya(2025)Kratsios and Furuya]{kratsios2025in1context}
Kratsios, A. and Furuya, T.
\newblock Is in-context universality enough? {MLPs} are also universal in-context.
\newblock \emph{arXiv preprint arXiv: 2502.03327}, 2025.

\bibitem[Kwon et~al.(2025)Kwon, Xu, Yaras, Balzano, and Qu]{kwon2025out}
Kwon, S.~M., Xu, A.~S., Yaras, C., Balzano, L., and Qu, Q.
\newblock Out-of-distribution generalization of in-context learning: A low-dimensional subspace perspective.
\newblock \emph{arXiv preprint arXiv:2505.14808}, 2025.

\bibitem[Lata{\l}a \& Tkocz(2015)Lata{\l}a and Tkocz]{latala2015note}
Lata{\l}a, R. and Tkocz, T.
\newblock A note on suprema of canonical processes based on random variables with regular moments.
\newblock \emph{Electron. J. Probab}, 20\penalty0 (36):\penalty0 1--17, 2015.

\bibitem[Ledoux \& Talagrand(2013)Ledoux and Talagrand]{ledoux2013probability}
Ledoux, M. and Talagrand, M.
\newblock \emph{Probability in Banach Spaces: isoperimetry and processes}.
\newblock Springer Science \& Business Media, 2013.

\bibitem[Li et~al.(2025{\natexlab{a}})Li, Jiao, Huang, Wei, and Chen]{li2025transformers}
Li, G., Jiao, Y., Huang, Y., Wei, Y., and Chen, Y.
\newblock Transformers meet in-context learning: A universal approximation theory.
\newblock \emph{arXiv preprint arXiv: 2506.05200}, 2025{\natexlab{a}}.

\bibitem[Li \& Liu(2024{\natexlab{a}})Li and Liu]{li2024concentration}
Li, S. and Liu, Y.
\newblock Concentration and moment inequalities for general functions of independent random variables with heavy tails.
\newblock \emph{Journal of Machine Learning Research}, 25\penalty0 (268):\penalty0 1--33, 2024{\natexlab{a}}.

\bibitem[Li \& Liu(2024{\natexlab{b}})Li and Liu]{li2024concentrationB}
Li, S. and Liu, Y.
\newblock Concentration inequalities for general functions of heavy-tailed random variables.
\newblock In \emph{Forty-first International Conference on Machine Learning}, 2024{\natexlab{b}}.

\bibitem[Li et~al.(2024)Li, Zhu, and Liu]{li2024algorithmic}
Li, S., Zhu, B., and Liu, Y.
\newblock Algorithmic stability unleashed: generalization bounds with unbounded losses.
\newblock In \emph{Forty-first International Conference on Machine Learning}, 2024.

\bibitem[Li et~al.(2025{\natexlab{b}})Li, Zhang, Do, Yue, and Chen]{li2024long}
Li, T., Zhang, G., Do, Q.~D., Yue, X., and Chen, W.
\newblock Long-context llms struggle with long in-context learning.
\newblock \emph{Transactions on Machine Learning Research}, 2025{\natexlab{b}}.

\bibitem[Li et~al.(2023)Li, Ildiz, Papailiopoulos, and Oymak]{li2023transformers}
Li, Y., Ildiz, M.~E., Papailiopoulos, D., and Oymak, S.
\newblock Transformers as algorithms: Generalization and stability in in-context learning.
\newblock In Krause, A., Brunskill, E., Cho, K., Engelhardt, B., Sabato, S., and Scarlett, J. (eds.), \emph{Proceedings of the 40th International Conference on Machine Learning}, volume 202 of \emph{Proceedings of Machine Learning Research}, pp.\  19565--19594. PMLR, 23--29 Jul 2023.

\bibitem[Lin \& Lee(2024)Lin and Lee]{lin2024dual}
Lin, Z. and Lee, K.
\newblock Dual operating modes of in-context learning.
\newblock In \emph{Forty-first International Conference on Machine Learning}, 2024.

\bibitem[Liu et~al.(2024)Liu, Boull{\'e}, Sarfati, and Earls]{liu2024llms}
Liu, T.~J., Boull{\'e}, N., Sarfati, R., and Earls, C.
\newblock Llms learn governing principles of dynamical systems, revealing an in-context neural scaling law.
\newblock In \emph{Proceedings of the 2024 conference on empirical methods in natural language processing}, pp.\  15097--15117, 2024.

\bibitem[Liu et~al.(2013)Liu, Xiao, and Wu]{liu2013probability}
Liu, W., Xiao, H., and Wu, W.~B.
\newblock Probability and moment inequalities under dependence.
\newblock \emph{Statistica Sinica}, pp.\  1257--1272, 2013.

\bibitem[Lotfi et~al.(2024)Lotfi, Kuang, Finzi, Amos, Goldblum, and Wilson]{lotfi2024unlocking}
Lotfi, S., Kuang, Y., Finzi, M.~A., Amos, B., Goldblum, M., and Wilson, A.~G.
\newblock Unlocking tokens as data points for generalization bounds on larger language models.
\newblock In \emph{The Thirty-eighth Annual Conference on Neural Information Processing Systems}, 2024.

\bibitem[Lu et~al.(2025)Lu, Letey, Zavatone-Veth, Maiti, and Pehlevan]{lu2025asymptotic}
Lu, Y.~M., Letey, M., Zavatone-Veth, J.~A., Maiti, A., and Pehlevan, C.
\newblock Asymptotic theory of in-context learning by linear attention.
\newblock \emph{Proceedings of the National Academy of Sciences}, 2025.
\newblock \doi{10.1073/pnas.2502599122}.
\newblock URL \url{https://www.pnas.org/doi/abs/10.1073/pnas.2502599122}.

\bibitem[Maurer(2023)]{maurer2023generalization}
Maurer, A.
\newblock Generalization for slowly mixing processes.
\newblock \emph{arXiv preprint arXiv:2305.00977}, 2023.

\bibitem[Moeini et~al.(2025)Moeini, Wang, Beck, Blaser, Whiteson, Chandra, and Zhang]{moeini2025survey}
Moeini, A., Wang, J., Beck, J., Blaser, E., Whiteson, S., Chandra, R., and Zhang, S.
\newblock A survey of in-context reinforcement learning.
\newblock \emph{arXiv preprint arXiv:2502.07978}, 2025.

\bibitem[Mohri \& Rostamizadeh(2008)Mohri and Rostamizadeh]{mohri2008rademacher}
Mohri, M. and Rostamizadeh, A.
\newblock Rademacher complexity bounds for non-iid processes.
\newblock \emph{Advances in neural information processing systems}, 21, 2008.

\bibitem[Mohri \& Rostamizadeh(2010)Mohri and Rostamizadeh]{mohri2010stability}
Mohri, M. and Rostamizadeh, A.
\newblock Stability bounds for stationary $\varphi$-mixing and $\beta$-mixing processes.
\newblock \emph{Journal of Machine Learning Research}, 11\penalty0 (2), 2010.

\bibitem[Nguyen \& Reddy(2025)Nguyen and Reddy]{nguyen2025differential}
Nguyen, A. and Reddy, G.
\newblock Differential learning kinetics govern the transition from memorization to generalization during in-context learning.
\newblock In \emph{The Thirteenth International Conference on Learning Representations}, 2025.
\newblock URL \url{https://openreview.net/forum?id=INyi7qUdjZ}.

\bibitem[Park et~al.(2025{\natexlab{a}})Park, Lee, Lubana, Yang, Okawa, Nishi, Wattenberg, and Tanaka]{park2025incontext}
Park, C.~F., Lee, A., Lubana, E.~S., Yang, Y., Okawa, M., Nishi, K., Wattenberg, M., and Tanaka, H.
\newblock {ICLR}: In-context learning of representations.
\newblock In \emph{The Thirteenth International Conference on Learning Representations}, 2025{\natexlab{a}}.
\newblock URL \url{https://openreview.net/forum?id=pXlmOmlHJZ}.

\bibitem[Park et~al.(2025{\natexlab{b}})Park, Lubana, and Tanaka]{park2025competition}
Park, C.~F., Lubana, E.~S., and Tanaka, H.
\newblock Competition dynamics shape algorithmic phases of in-context learning.
\newblock In \emph{The Thirteenth International Conference on Learning Representations}, 2025{\natexlab{b}}.
\newblock URL \url{https://openreview.net/forum?id=XgH1wfHSX8}.

\bibitem[Paulin(2015)]{paulin2015concentration}
Paulin, D.
\newblock Concentration inequalities for markov chains by marton couplings and spectral methods.
\newblock \emph{Electron. J. Probab}, 20\penalty0 (79):\penalty0 1--32, 2015.

\bibitem[Qu et~al.(2025)Qu, Holzm{\"u}ller, Varoquaux, and Le~Morvan]{qu2025tabicl}
Qu, J., Holzm{\"u}ller, D., Varoquaux, G., and Le~Morvan, M.
\newblock {TabICL}: A tabular foundation model for in-context learning on large data.
\newblock In Singh, A., Fazel, M., Hsu, D., Lacoste-Julien, S., Berkenkamp, F., Maharaj, T., Wagstaff, K., and Zhu, J. (eds.), \emph{Proceedings of the 42nd International Conference on Machine Learning}, volume 267 of \emph{Proceedings of Machine Learning Research}, pp.\  50817--50847. PMLR, 13--19 Jul 2025.
\newblock URL \url{https://proceedings.mlr.press/v267/qu25d.html}.

\bibitem[Radford et~al.(2021)Radford, Kim, Hallacy, Ramesh, Goh, Agarwal, Sastry, Askell, Mishkin, Clark, et~al.]{radford2021learning}
Radford, A., Kim, J.~W., Hallacy, C., Ramesh, A., Goh, G., Agarwal, S., Sastry, G., Askell, A., Mishkin, P., Clark, J., et~al.
\newblock Learning transferable visual models from natural language supervision.
\newblock In \emph{International conference on machine learning}, pp.\  8748--8763. PmLR, 2021.

\bibitem[Ravent{\'o}s et~al.(2023)Ravent{\'o}s, Paul, Chen, and Ganguli]{raventos2023pretraining}
Ravent{\'o}s, A., Paul, M., Chen, F., and Ganguli, S.
\newblock Pretraining task diversity and the emergence of non-bayesian in-context learning for regression.
\newblock \emph{Advances in neural information processing systems}, 36:\penalty0 14228--14246, 2023.

\bibitem[R{\'e}nyi(1961)]{renyi1961measures}
R{\'e}nyi, A.
\newblock On measures of entropy and information.
\newblock In \emph{Proceedings of the fourth Berkeley symposium on mathematical statistics and probability, volume 1: contributions to the theory of statistics}, volume~4, pp.\  547--562. University of California Press, 1961.

\bibitem[Rodr{\'\i}guez-G{\'a}lvez et~al.(2024)Rodr{\'\i}guez-G{\'a}lvez, Thobaben, and Skoglund]{rodriguez2024information}
Rodr{\'\i}guez-G{\'a}lvez, B., Thobaben, R., and Skoglund, M.
\newblock An information-theoretic approach to generalization theory.
\newblock \emph{arXiv preprint arXiv:2408.13275}, 2024.

\bibitem[Sander \& Peyr{\'e}(2025)Sander and Peyr{\'e}]{sander2025towards}
Sander, M.~E. and Peyr{\'e}, G.
\newblock Towards understanding the universality of transformers for next-token prediction.
\newblock In \emph{The Thirteenth International Conference on Learning Representations}, 2025.

\bibitem[Sander et~al.(2024)Sander, Giryes, Suzuki, Blondel, and Peyr{\'e}]{sander2024transformers}
Sander, M.~E., Giryes, R., Suzuki, T., Blondel, M., and Peyr{\'e}, G.
\newblock How do transformers perform in-context autoregressive learning?
\newblock In \emph{Proceedings of the 41st International Conference on Machine Learning}, pp.\  43235--43254, 2024.

\bibitem[Singh et~al.(2023)Singh, Chan, Moskovitz, Grant, Saxe, and Hill]{singh2023transient}
Singh, A., Chan, S., Moskovitz, T., Grant, E., Saxe, A., and Hill, F.
\newblock The transient nature of emergent in-context learning in transformers.
\newblock \emph{Advances in Neural Information Processing Systems}, 36:\penalty0 27801--27819, 2023.

\bibitem[Singhal et~al.(2023)Singhal, Azizi, Tu, Mahdavi, Wei, Chung, Scales, Tanwani, Cole-Lewis, Pfohl, et~al.]{singhal2023large}
Singhal, K., Azizi, S., Tu, T., Mahdavi, S.~S., Wei, J., Chung, H.~W., Scales, N., Tanwani, A., Cole-Lewis, H., Pfohl, S., et~al.
\newblock Large language models encode clinical knowledge.
\newblock \emph{Nature}, 620\penalty0 (7972):\penalty0 172--180, 2023.

\bibitem[Stokes et~al.(2020)Stokes, Yang, Swanson, Jin, Cubillos-Ruiz, Donghia, MacNair, French, Carfrae, Bloom-Ackermann, et~al.]{stokes2020deep}
Stokes, J.~M., Yang, K., Swanson, K., Jin, W., Cubillos-Ruiz, A., Donghia, N.~M., MacNair, C.~R., French, S., Carfrae, L.~A., Bloom-Ackermann, Z., et~al.
\newblock A deep learning approach to antibiotic discovery.
\newblock \emph{Cell}, 180\penalty0 (4):\penalty0 688--702, 2020.

\bibitem[Talagrand(2022)]{talagrand2022upper}
Talagrand, M.
\newblock \emph{Upper and lower bounds for stochastic processes: decomposition theorems}, volume~60.
\newblock Springer Nature, 2022.

\bibitem[Varre et~al.(2025)Varre, Y{\"u}ce, and Flammarion]{varre2025learning}
Varre, A., Y{\"u}ce, G., and Flammarion, N.
\newblock Learning in-context $ n $-grams with transformers: Sub-$ n $-grams are near-stationary points.
\newblock In \emph{International Conference on Machine Learning}, 2025.

\bibitem[Vershynin(2018)]{vershynin2018high}
Vershynin, R.
\newblock \emph{High-Dimensional Probability: An Introduction with Applications in Data Science}.
\newblock Cambridge Series in Statistical and Probabilistic Mathematics. Cambridge University Press, 2018.
\newblock ISBN 9781108244541.

\bibitem[Villani(2008)]{villani2008optimal}
Villani, C.
\newblock \emph{Optimal Transport: Old and New}.
\newblock Grundlehren der mathematischen Wissenschaften. Springer Berlin Heidelberg, 2008.

\bibitem[Von~Oswald et~al.(2023)Von~Oswald, Niklasson, Randazzo, Sacramento, Mordvintsev, Zhmoginov, and Vladymyrov]{vonoswald2023transformers}
Von~Oswald, J., Niklasson, E., Randazzo, E., Sacramento, J., Mordvintsev, A., Zhmoginov, A., and Vladymyrov, M.
\newblock Transformers learn in-context by gradient descent.
\newblock In Krause, A., Brunskill, E., Cho, K., Engelhardt, B., Sabato, S., and Scarlett, J. (eds.), \emph{Proceedings of the 40th International Conference on Machine Learning}, volume 202 of \emph{Proceedings of Machine Learning Research}, pp.\  35151--35174. PMLR, 23--29 Jul 2023.

\bibitem[Wainwright(2019)]{wainwright2019hds}
Wainwright, M.~J.
\newblock \emph{High-Dimensional Statistics: A Non-Asymptotic Viewpoint}.
\newblock Cambridge Series in Statistical and Probabilistic Mathematics. Cambridge University Press, 2019.

\bibitem[Wang et~al.(2025{\natexlab{a}})Wang, Convertino, Cheng, Henao, and Carin]{wang2025understanding}
Wang, A.~T., Convertino, W., Cheng, X., Henao, R., and Carin, L.
\newblock On understanding attention-based in-context learning for categorical data.
\newblock In \emph{International Conference on Machine Learning (ICML)}, 2025{\natexlab{a}}.

\bibitem[Wang \& Weinan(2024)Wang and Weinan]{wang2024understanding}
Wang, M. and Weinan, E.
\newblock Understanding the expressive power and mechanisms of transformer for sequence modeling.
\newblock \emph{Neural Information Processing Systems}, 2024.

\bibitem[Wang et~al.(2025{\natexlab{b}})Wang, Wang, Ying, and Wang]{wang2024can}
Wang, Q., Wang, Y., Ying, X., and Wang, Y.
\newblock Can in-context learning really generalize to out-of-distribution tasks?
\newblock In \emph{The Thirteenth International Conference on Learning Representations}, 2025{\natexlab{b}}.

\bibitem[Wei et~al.(2022)Wei, Tay, Bommasani, Raffel, Zoph, Borgeaud, Yogatama, Bosma, Zhou, Metzler, Chi, Hashimoto, Vinyals, Liang, Dean, and Fedus]{wei2022emergent}
Wei, J., Tay, Y., Bommasani, R., Raffel, C., Zoph, B., Borgeaud, S., Yogatama, D., Bosma, M., Zhou, D., Metzler, D., Chi, E.~H., Hashimoto, T., Vinyals, O., Liang, P., Dean, J., and Fedus, W.
\newblock Emergent abilities of large language models.
\newblock \emph{Transactions on Machine Learning Research}, 2022.
\newblock ISSN 2835-8856.
\newblock Survey Certification.

\bibitem[Wong(2001)]{wong2001asymptotic}
Wong, R.
\newblock \emph{Asymptotic approximations of integrals}.
\newblock SIAM, 2001.

\bibitem[Wu et~al.(2024)Wu, Zou, Chen, Braverman, Gu, and Bartlett]{wu2024how}
Wu, J., Zou, D., Chen, Z., Braverman, V., Gu, Q., and Bartlett, P.
\newblock How many pretraining tasks are needed for in-context learning of linear regression?
\newblock In \emph{The Twelfth International Conference on Learning Representations}, 2024.

\bibitem[Wu et~al.(2025)Wu, Su, Hu, Song, and Liu]{wu2025context}
Wu, W., Su, M., Hu, J. Y.-C., Song, Z., and Liu, H.
\newblock In-context deep learning via transformer models.
\newblock In \emph{International Conference on Machine Learning (ICML)}, 2025.

\bibitem[Wu(2005)]{wu2005nonlinear}
Wu, W.~B.
\newblock Nonlinear system theory: Another look at dependence.
\newblock \emph{Proceedings of the National Academy of Sciences}, 102\penalty0 (40):\penalty0 14150--14154, 2005.

\bibitem[Wu(2011)]{wu2011asymptotic}
Wu, W.~B.
\newblock Asymptotic theory for stationary processes.
\newblock \emph{Stat. Interface}, 4\penalty0 (2):\penalty0 207--226, 2011.

\bibitem[Wurgaft et~al.(2025)Wurgaft, Lubana, Park, Tanaka, Reddy, and Goodman]{wurgaft2025context}
Wurgaft, D., Lubana, E.~S., Park, C.~F., Tanaka, H., Reddy, G., and Goodman, N.~D.
\newblock In-context learning strategies emerge rationally.
\newblock \emph{arXiv preprint arXiv:2506.17859}, 2025.

\bibitem[Xie et~al.(2021)Xie, Raghunathan, Liang, and Ma]{xie2021explanation}
Xie, S.~M., Raghunathan, A., Liang, P., and Ma, T.
\newblock An explanation of in-context learning as implicit bayesian inference.
\newblock \emph{International Conference on Learning Representations}, 2021.

\bibitem[Yadlowsky et~al.(2023)Yadlowsky, Doshi, and Tripuraneni]{yadlowsky2023pretraining}
Yadlowsky, S., Doshi, L., and Tripuraneni, N.
\newblock Pretraining data mixtures enable narrow model selection capabilities in transformer models.
\newblock \emph{arXiv preprint arXiv: 2311.00871}, 2023.

\bibitem[Yu(1994)]{yu1994rates}
Yu, B.
\newblock Rates of convergence for empirical processes of stationary mixing sequences.
\newblock \emph{The Annals of Probability}, 22\penalty0 (1):\penalty0 94--116, 1994.
\newblock ISSN 00911798, 2168894X.
\newblock URL \url{http://www.jstor.org/stable/2244496}.

\bibitem[Zekri et~al.(2024)Zekri, Odonnat, Benechehab, Bleistein, Boulle, and Redko]{zekri2025large}
Zekri, O., Odonnat, A., Benechehab, A., Bleistein, L., Boulle, N., and Redko, I.
\newblock Large language models as markov chains.
\newblock \emph{arXiv preprint arXiv:2410.02724}, 2024.

\bibitem[Zhang et~al.(2024)Zhang, Frei, and Bartlett]{zhang2024trained}
Zhang, R., Frei, S., and Bartlett, P.~L.
\newblock Trained transformers learn linear models in-context.
\newblock \emph{Journal of Machine Learning Research}, 25\penalty0 (49):\penalty0 1--55, 2024.

\bibitem[Zhang(2003)]{zhang2003learning}
Zhang, T.
\newblock Learning bounds for a generalized family of bayesian posterior distributions.
\newblock \emph{Advances in Neural Information Processing Systems}, 16, 2003.

\bibitem[Zhang(2006)]{zhang2006epsilon}
Zhang, T.
\newblock From $\varepsilon$-entropy to kl-entropy: Analysis of minimum information complexity density estimation.
\newblock \emph{The Annals of Statistics}, pp.\  2180--2210, 2006.

\bibitem[Zhang et~al.(2025{\natexlab{a}})Zhang, Singh, Latham, and Saxe]{zhang2025training}
Zhang, Y., Singh, A.~K., Latham, P.~E., and Saxe, A.~M.
\newblock Training dynamics of in-context learning in linear attention.
\newblock In \emph{Forty-second International Conference on Machine Learning}, 2025{\natexlab{a}}.

\bibitem[Zhang et~al.(2025{\natexlab{b}})Zhang, Zhang, Yang, and Wang]{zhangand}
Zhang, Y., Zhang, F., Yang, Z., and Wang, Z.
\newblock What and how does in-context learning learn? bayesian model averaging, parameterization, and generalization.
\newblock In \emph{The 28th International Conference on Artificial Intelligence and Statistics}, 2025{\natexlab{b}}.

\bibitem[Zou et~al.(2025)Zou, Khalifa, and Wang]{zou2024many}
Zou, K., Khalifa, M., and Wang, L.
\newblock On many-shot in-context learning for long-context evaluation.
\newblock In \emph{Proceedings of the 63rd Annual Meeting of the Association for Computational Linguistics (Volume 1: Long Papers)}. Association for Computational Linguistics, 2025.

\end{thebibliography}
\bibliographystyle{icml/icml2026}

\newpage
\appendix
\crefalias{section}{appendix}
\crefalias{subsection}{appendix}

\numberwithin{equation}{section}	%
\numberwithin{lemma}{section}	%
\numberwithin{proposition}{section}	%
\numberwithin{theorem}{section}	%
\numberwithin{corollary}{section}	%
\thispagestyle{empty}

\onecolumn
\begin{appendices}

\startcontents[app]{}

\printcontents[app]{l}{1}[2]{}

\newpage
\section{Additional Related Work}
\label{app:related_work}
\paragraph{Training dynamics of ICL}
\citet{varre2025learning} shows that $n$-grams are approximate stationary points in the training of two-layers transformers.
\citet{zhang2025training} studies the training dynamics of a one-layer linear transformer with linear attention on linear regression tasks.
\citet{sander2024transformers} characterize the training dynamics of a one-linear layer transformer on auto-regressive tasks, showing how ICL emerges.
\citet{ahn2023transformers} show that for linear regression problems and a linear transformer,  the global minimizer of the training loss corresponds to performing one step of preconditioned gradient descent. 
In contrast, our approach focuses on the influence of the pre-training distribution on ICL. We therefore assume that the model is sufficiently expressive and trained optimally enough  to approximate the Bayes optimal predictor. We refer to recent works on optimization dynamics of transformers \citet{gao2024global,barboni2024understanding,azizian2025global} and on the approximation capabilities of transformers.

\paragraph{Approximation capabilities of transformers}
The foundational works of \citet{vonoswald2023transformers,akyurek2023whatlearningalgorithm} demonstrate that transformers can implement gradient descent. This has led to a fruitful line of work studying the algorithmic capabilities of transformers.
\citet{bai2023transformers} show that transformers can implement a wide variety of statistical methods.
\citet{wang2025understanding} shows how transformers can implement functional gradient descent on categorical data, generalizing previous works.
\citet{wu2025context} shows how attention transformers can implement gradient descent on a ReLU network.
\Citet{sander2025towards} explicitly constructs a transformer that implements kernel causal regression. 
On a more abstract perspective,
\citet{furuya2024transformers,kratsios2025in1context} show that (causal) transformers can approximate any (causal) map between measures. 
\citet{wang2024understanding} studies quantitatively the approximation properties of transformers on "sparse memory" target functions.
\Citet{li2025transformers} obtains explicit approximation bounds for numerical ICL tasks.

\newpage

\FloatBarrier
\revsection{Additional Experimental Results}
\label{sec:additional_experiments}

\revsubsection{Linear Regression}
We provide comprehensive experimental results for linear regression tasks (detailed in \cref{subsec:main-lr}) using Student-$t$ and generalized normal pretraining distributions.
This section presents the ICL error as a function of context length (ICL step) for Student-$t$ priors with degrees of freedom $\nu \in \{3,5,10,\infty\}$ and generalized normal priors with shape parameters $\beta \in \{1,1.5,2,2.5\}$, see \cref{tab:pre_dist}.
\begin{table}[]
        \centering
    \caption{Pre-training distribution parameters.}
    \label{tab:pre_dist}
    \begin{tabular}{@{}cc@{}}
        \toprule
        Dist. & Param. \\
        \midrule
        Generalized Normal & $\beta \in \{1, 1.5, 2, 2.5\}$ \\
        Student-$t$ & $\nu \in \{3, 5, 10\}$\\
        \bottomrule
    \end{tabular}
\end{table}
For all experiments, we consider mean squared error (MSE). We first report the ICL performance as a function of the number of in-context examples for different levels of distribution shift in \cref{fig:lr-student,fig:lr-gennormal}.

The results in \cref{fig:lr-student} clearly demonstrate the fundamental trade-off in selecting pretraining distributions for ICL: heavy-tailed priors (small $\nu$) achieve superior performance under distribution shift, while light-tailed priors (large $\nu$) excel on in-distribution tasks.
In contrast, \cref{fig:lr-gennormal} shows that varying the shape parameter of generalized normal priors produces more subtle effects on ICL performance in the linear regression setting.

We also notice on \cref{fig:lr-student,fig:lr-gennormal} that longer context lengths are mostly beneficial for in-distribution tasks: as the perturbation magnitude increases, the performance gains from longer contexts diminish. This is in line with \cref{subsec:main-task-selection}: the performance gain per new example is determined  by the prior probability of the task, which decreases with larger perturbations. 
\begin{figure}[ht]
  \centering

  \begin{subfigure}[t]{0.48\textwidth}
    \centering
    \includegraphics[width=\linewidth]{\detokenize{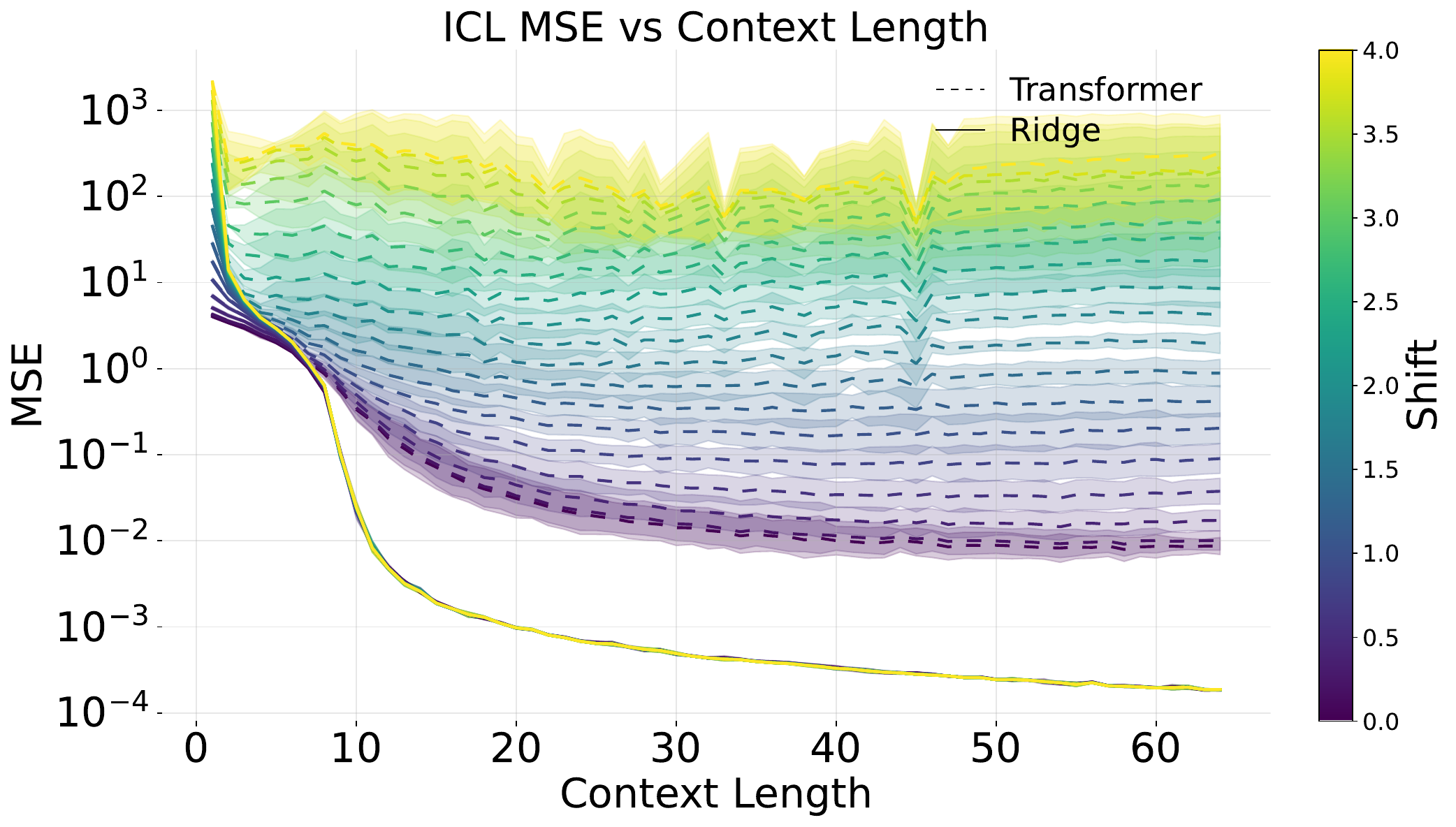}}
    \caption{Student-t, $\nu=3.0$}
  \end{subfigure}\hfill
  \begin{subfigure}[t]{0.48\textwidth}
    \centering
    \includegraphics[width=\linewidth]{\detokenize{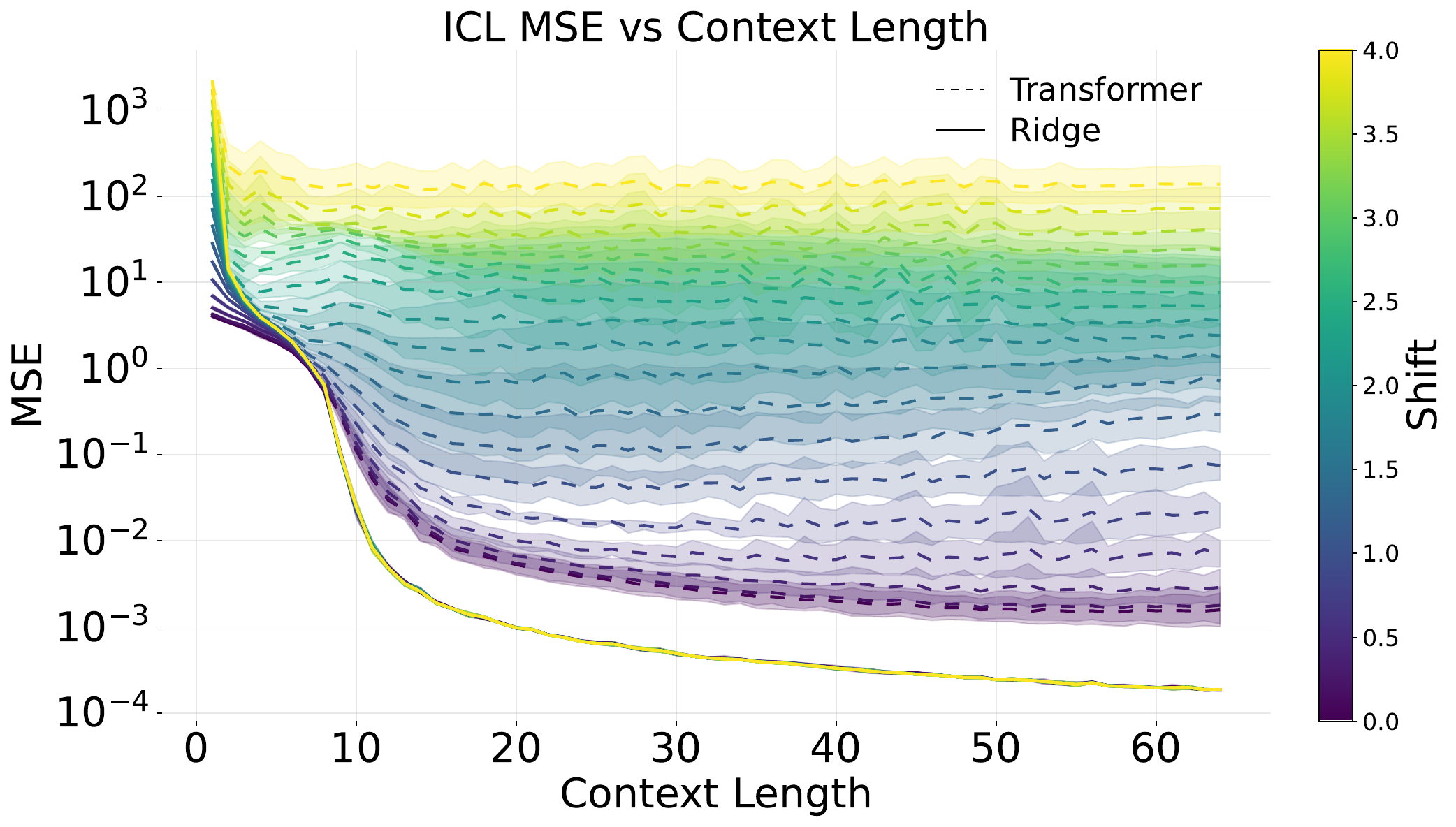}}
    \caption{Student-t, $\nu=5.0$}
  \end{subfigure}

  \medskip

  \begin{subfigure}[t]{0.48\textwidth}
    \centering
    \includegraphics[width=\linewidth]{\detokenize{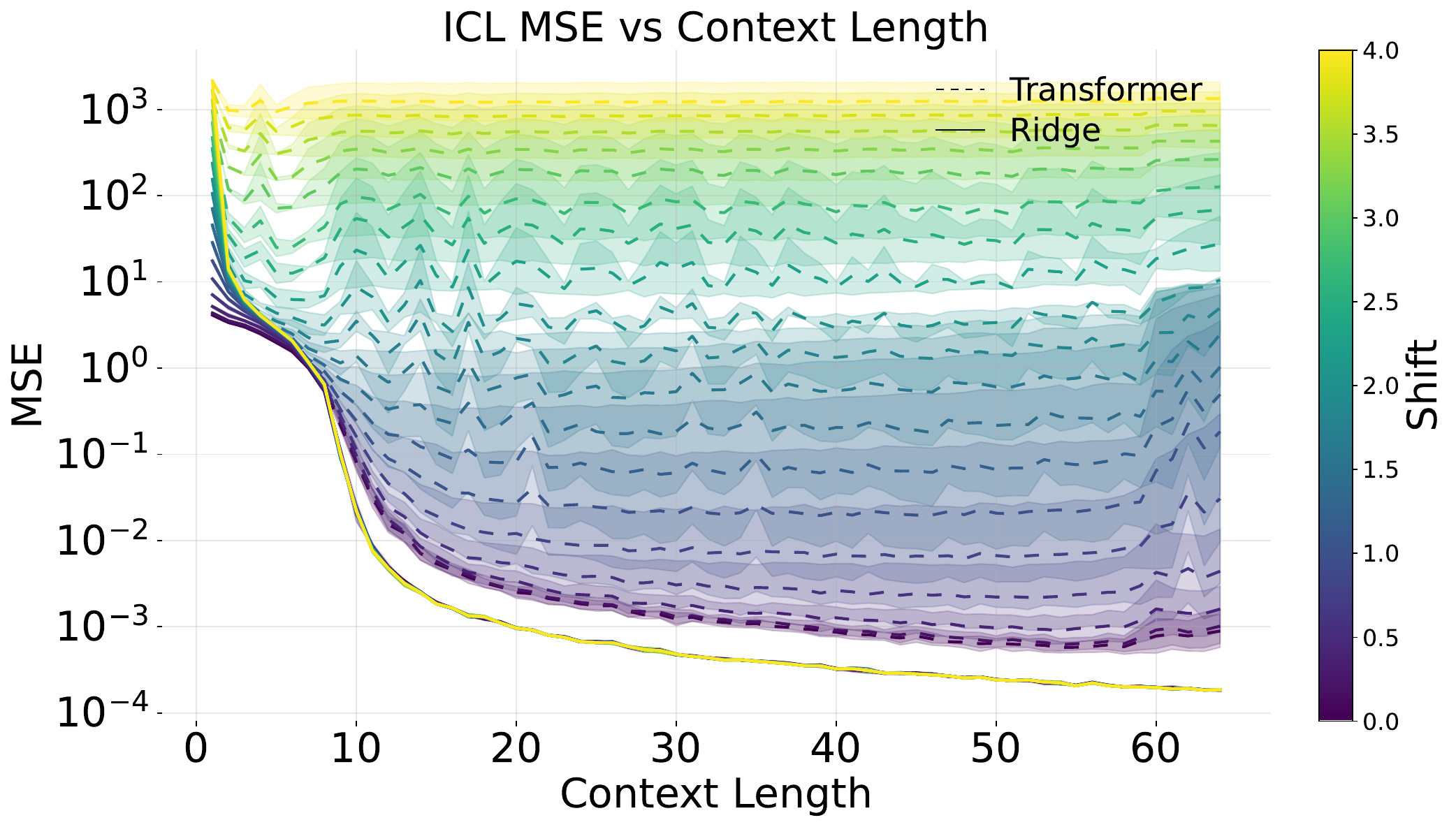}}
    \caption{Student-t, $\nu=10.0$}
  \end{subfigure}\hfill
  \begin{subfigure}[t]{0.48\textwidth}
    \centering
    \includegraphics[width=\linewidth]{\detokenize{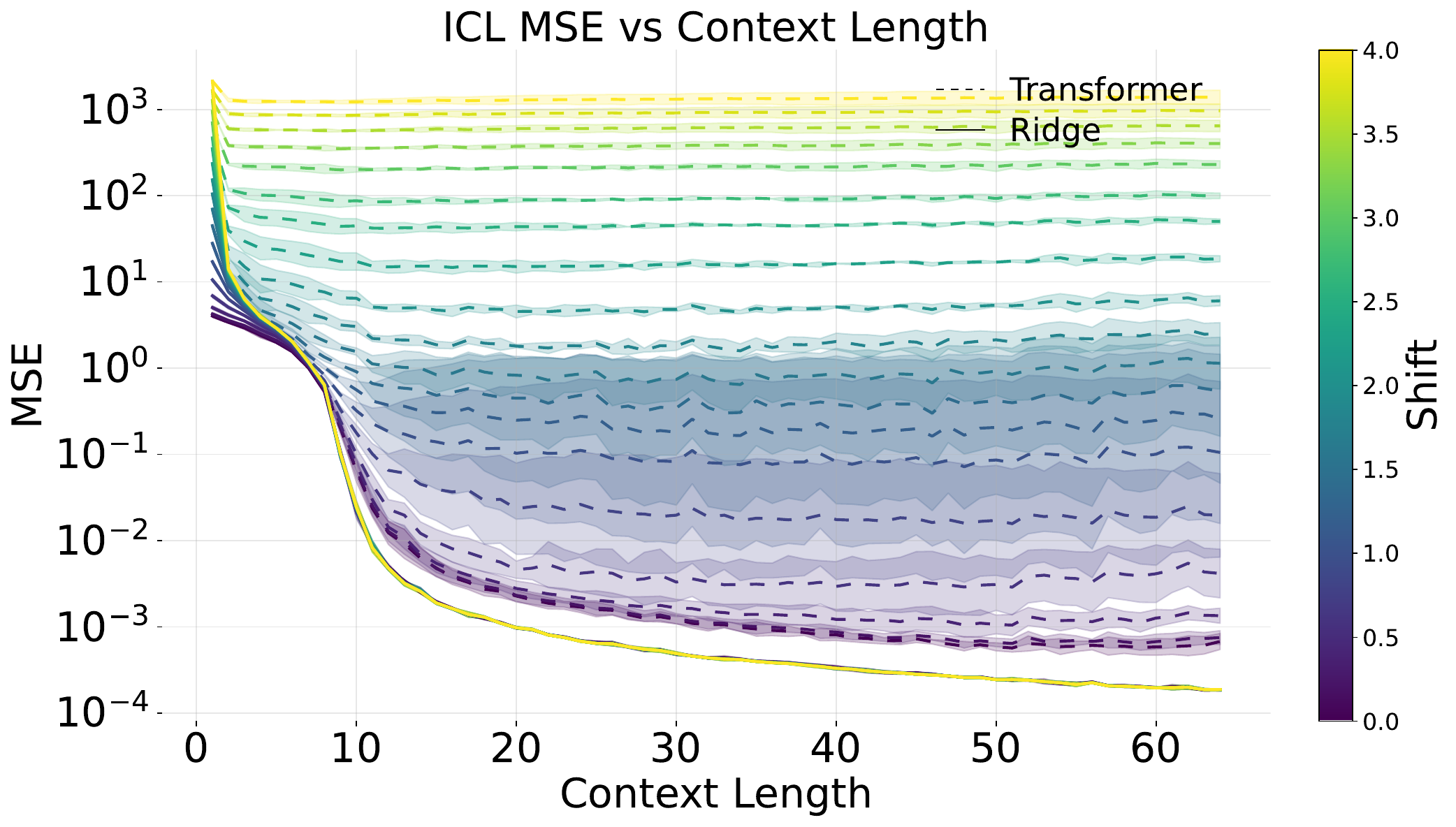}}
    \caption{Gaussian limit, $\nu=\infty$}
  \end{subfigure}

  \caption{Linear regression with Student-$t$ pretraining distributions: MSE as a function of ICL step for different task shift magnitudes. Heavy-tailed priors ($\nu=3$) show superior robustness to distribution shift, while light-tailed priors ($\nu=\infty$, Gaussian) perform better on unperturbed tasks. The Ridge regression baseline provides a reference that remains constant across perturbation magnitudes.}
  \label{fig:lr-student}
\end{figure}

\begin{figure}[t]
  \centering

  \begin{subfigure}[t]{0.48\textwidth}
    \centering
    \includegraphics[width=\linewidth]{\detokenize{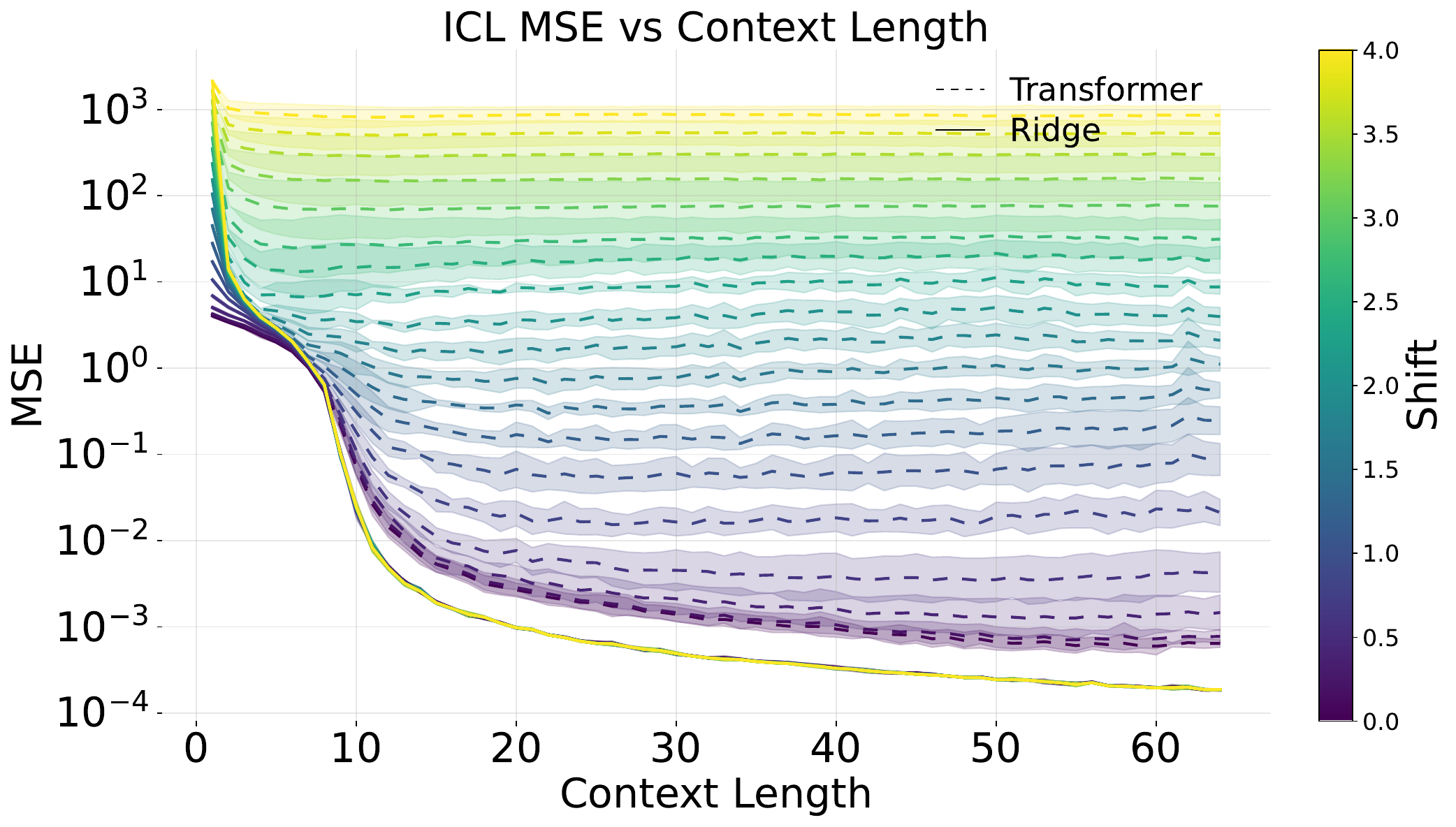}}
    \caption{GenNormal, $\beta=1.0$}
  \end{subfigure}\hfill
  \begin{subfigure}[t]{0.48\textwidth}
    \centering
    \includegraphics[width=\linewidth]{\detokenize{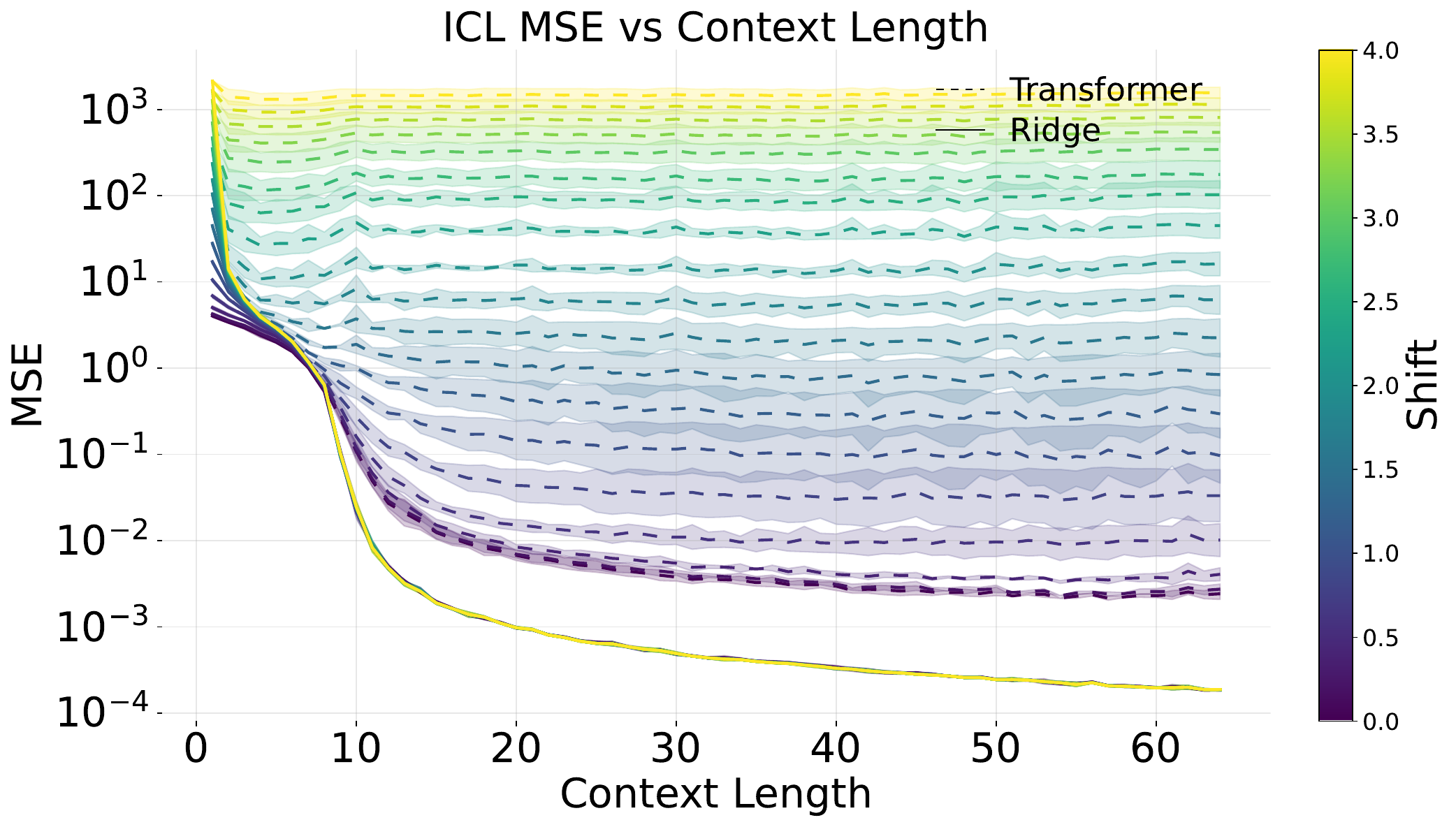}}
    \caption{GenNormal, $\beta=1.5$}
  \end{subfigure}

  \medskip

  \begin{subfigure}[t]{0.48\textwidth}
    \centering
    \includegraphics[width=\linewidth]{\detokenize{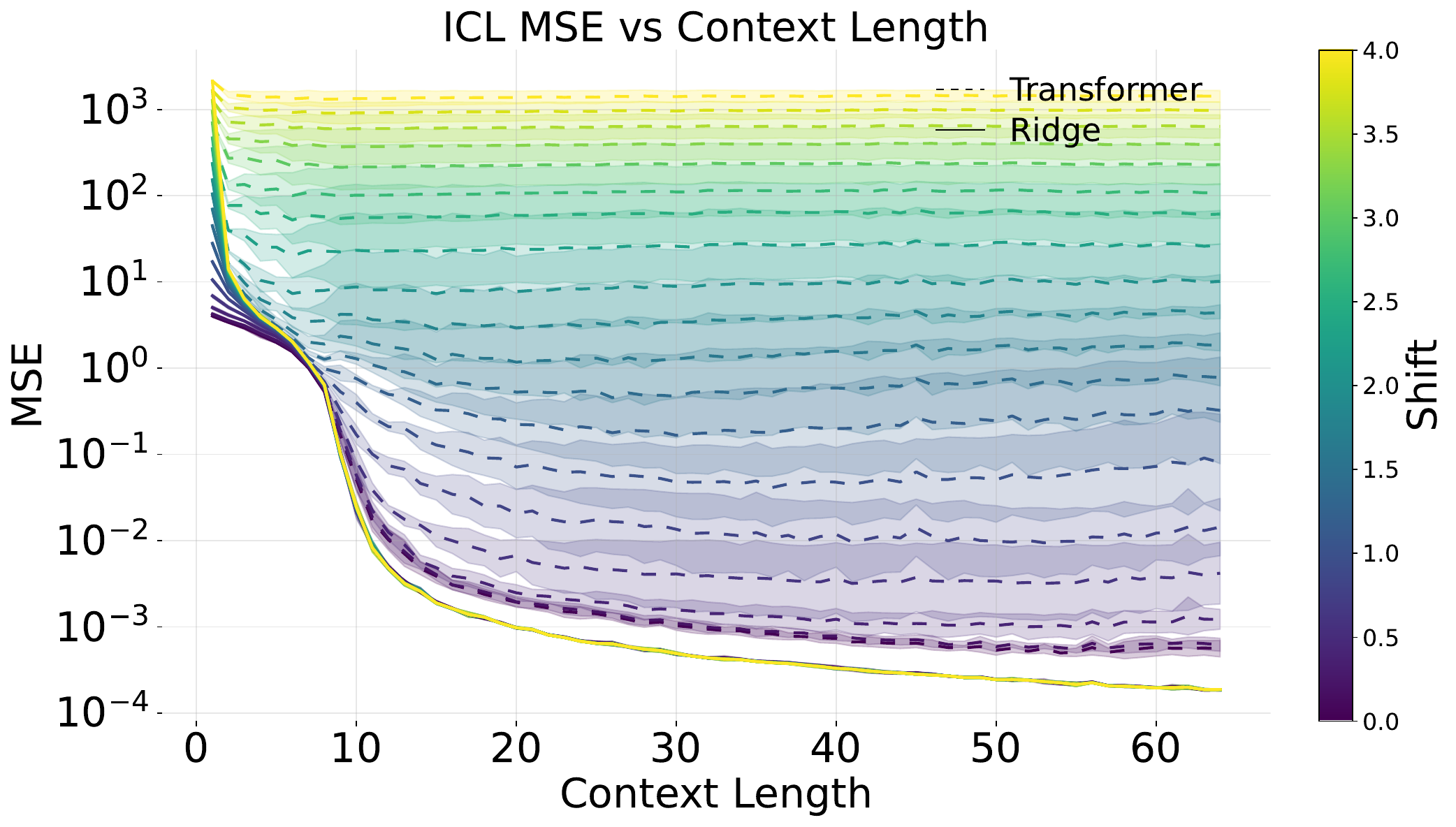}}
    \caption{GenNormal, $\beta=2.0$}
  \end{subfigure}\hfill
  \begin{subfigure}[t]{0.48\textwidth}
    \centering
    \includegraphics[width=\linewidth]{\detokenize{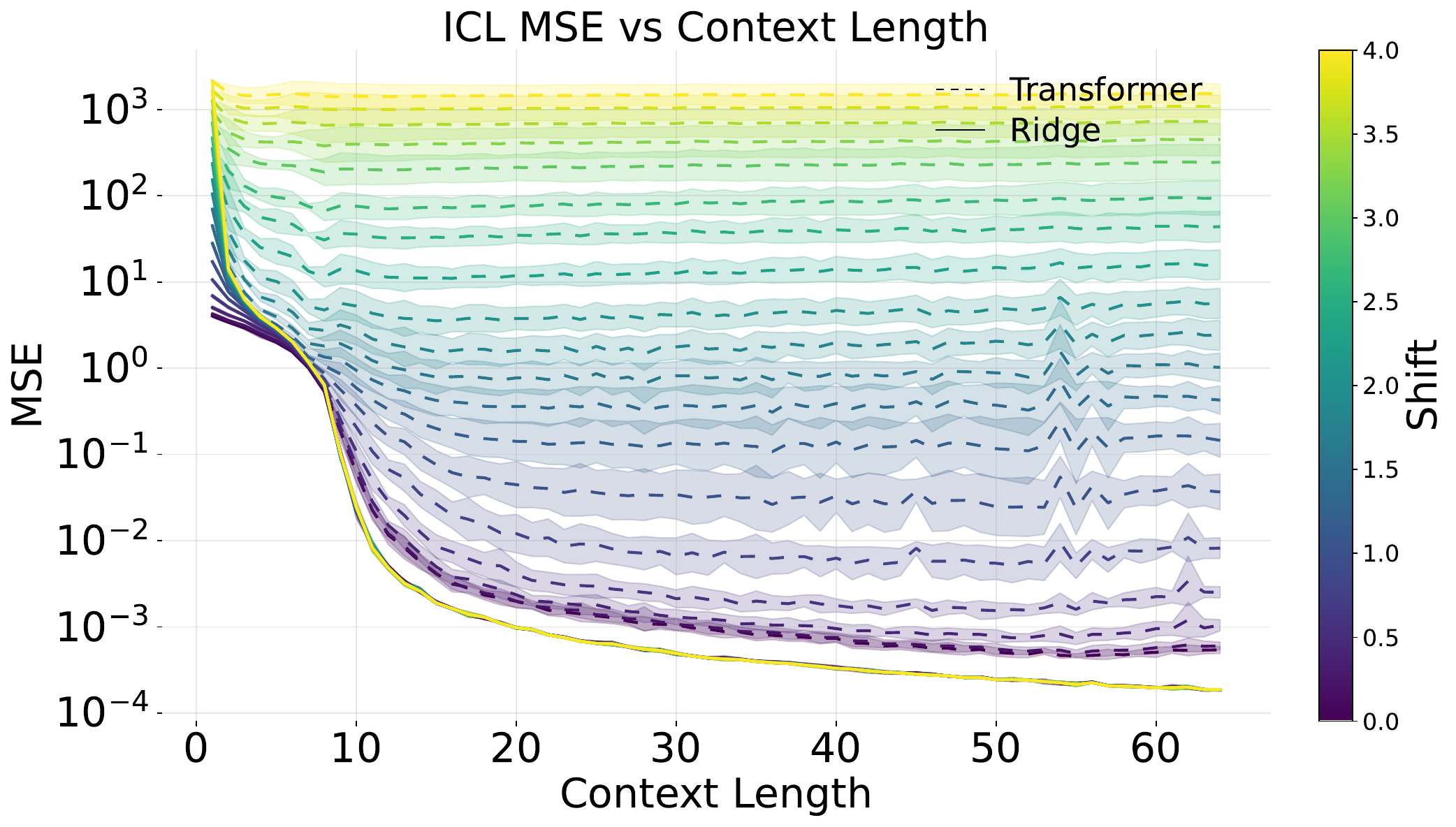}}
    \caption{GenNormal, $\beta=2.5$}
  \end{subfigure}

  \caption{Linear regression with generalized normal pretraining distributions: MSE as a function of ICL step for different task shift magnitudes. The shape parameter $\beta$ has a more modest impact on performance compared to Student-$t$ distributions, with all variants showing similar convergence patterns across perturbation levels.}
  \label{fig:lr-gennormal}
\end{figure}

We now present an extended analysis of the results from \cref{subsec:main-lr}. We report the best MSE over the context length, which is given by $\min_{t} (\hat{f}(x_{1:t}) - x_{t+1})^2$, and, additionally,
the mean MSE given by $\frac{1}{T}\sum_{t=1}^T(\hat{f}(x_{1:t}) - x_{t+1})^2$; and finally the full context length MSE given by $(\hat{f}(x_{1:T-1}) - x_T)^2$.
These allow us to see how the different priors perform while taking into consideration the full context length. 

We first provide an extended version of \cref{fig:lr_t_dist_weight} in \cref{fig:lr_t_dist_weight_app} for Student-$t$ priors with varying degrees of freedom $\nu$ with these additional metrics.

\begin{figure}
    \centering
    
    \includegraphics[width=\linewidth, trim={0cm, 0cm, 0cm, 0cm}, clip]{./new_figs/lr/student_weight/min_mse_analysis_true_prefix_64.pdf}
    \caption{Influence of the degree of freedom parameter $\nu$ of a Student-$t$ pretraining distribution
    (lower $\nu$ corresponds to heavier tail)
on the ICL error for different task shifts with context length of 64 for linear regression.}
\label{fig:lr_t_dist_weight_app}
\end{figure}

We present an extended analysis of the results from \cref{fig:lr_gen} in \cref{fig:lr_gen_app}, examining how the number of pretraining tasks $n$ affects performance across different Student-$t$ tail parameters $\nu$. These results validate \cref{thm:generalization}, showing that heavy-tailed priors require more training tasks to achieve comparable performance to light-tailed priors.
\begin{figure}
    \centering
    \begin{subfigure}[t]{\linewidth}
        \includegraphics[width=\linewidth, trim={0cm, 0cm, 0cm, 0cm}, clip]{./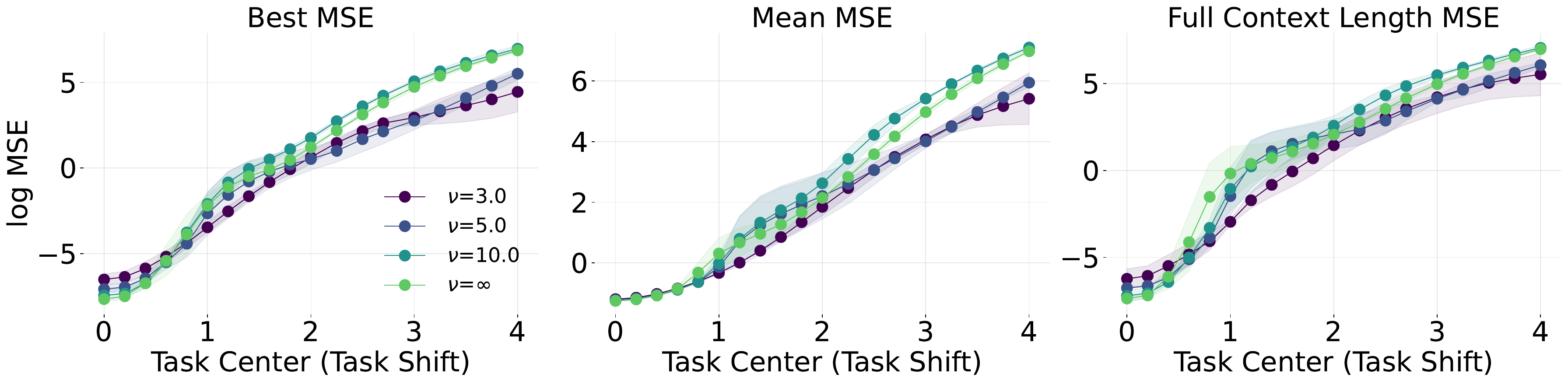}
        \caption{$n=5000$}
    \end{subfigure}
    \begin{subfigure}[t]{\linewidth}
        \includegraphics[width=\linewidth, trim={0cm, 0cm, 0cm, 0cm}, clip]{./new_figs/lr/student_n=1000/min_mse_analysis_true_prefix_64.pdf}
        \caption{$n=1000$}
    \end{subfigure}
    \begin{subfigure}[t]{\linewidth}
        \includegraphics[width=\linewidth, trim={0cm, 0cm, 0cm, 0cm}, clip]{./new_figs/lr/student_n=500/min_mse_analysis_true_prefix_64.pdf}
        \caption{$n=500$}
    \end{subfigure}
    \begin{subfigure}[t]{\linewidth}
        \includegraphics[width=\linewidth, trim={0cm, 0cm, 0cm, 0cm}, clip]{./new_figs/lr/student_n=100/min_mse_analysis_true_prefix_64.pdf}
        \caption{$n=100$}
    \end{subfigure}
    \caption{Generalization analysis for linear regression across different numbers of pretraining tasks $n$ for a context length of 64. As predicted by \cref{thm:generalization}, heavy-tailed priors (small $\nu$) require more tasks to achieve performance comparable to light-tailed priors, but eventually outperform them under distribution shift. The crossover point shifts to larger $n$ for heavier-tailed distributions.} 
    \label{fig:lr_gen_app}
    \vspace{-10pt}
\end{figure}

Finally, we provide an ablation study on the effect of the variance. All other experiments are designed so that the pretraining distribution has unit variance in each dimension. In \cref{fig:lr_var}, we vary the variance of a standard Gaussian pretraining distribution and observe it only changes the ICL performance for in-distribution tasks. 

\begin{figure}
    \includegraphics[width=\linewidth, trim={0cm, 0cm, 0cm, 0cm}, clip]{./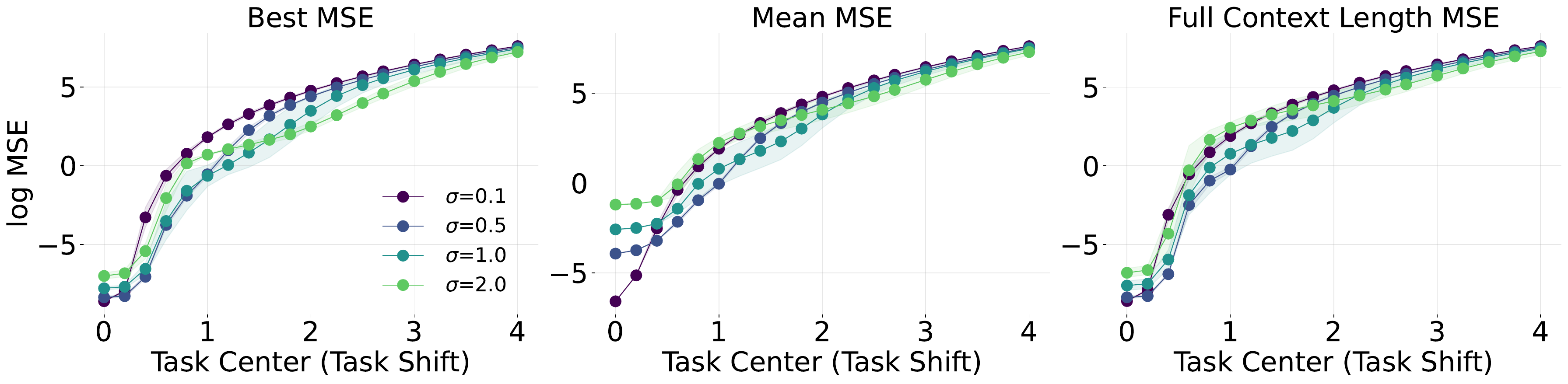}
    \caption{Ablation on the effect of variance for Gaussian pretraining distributions in linear regression. Only in-distribution performance is affected by the variance, with larger variances leading to worse performance.
    }
    \label{fig:lr_var}
\end{figure}

\revsubsection{Ornstein–Uhlenbeck Processes}
We present detailed experimental results for Ornstein–Uhlenbeck (OU) stochastic processes (described in \cref{subsec:main-ou}) using both Student-$t$ and generalized normal pretraining distributions.
The figures show ICL error as a function of context length for Student-$t$ priors with degrees of freedom $\nu \in \{3,5,10,\infty\}$ and generalized normal priors with shape parameters $\beta \in \{1,1.5,2,2.5\}$ (see \cref{tab:pre_dist}) in \cref{fig:ou-student,fig:ou-gennormal}, respectively.

Notably, OU processes exhibit different behavior compared to linear regression: the trade-off between in-distribution and out-of-distribution performance is less pronounced. As shown in both \cref{fig:ou-student,fig:ou-gennormal}, heavy-tailed priors maintain competitive in-distribution performance while still providing improved robustness to distribution shift.

\begin{figure}[ht]
  \centering

  \begin{subfigure}[t]{0.48\textwidth}
    \centering
    \includegraphics[width=\linewidth]{\detokenize{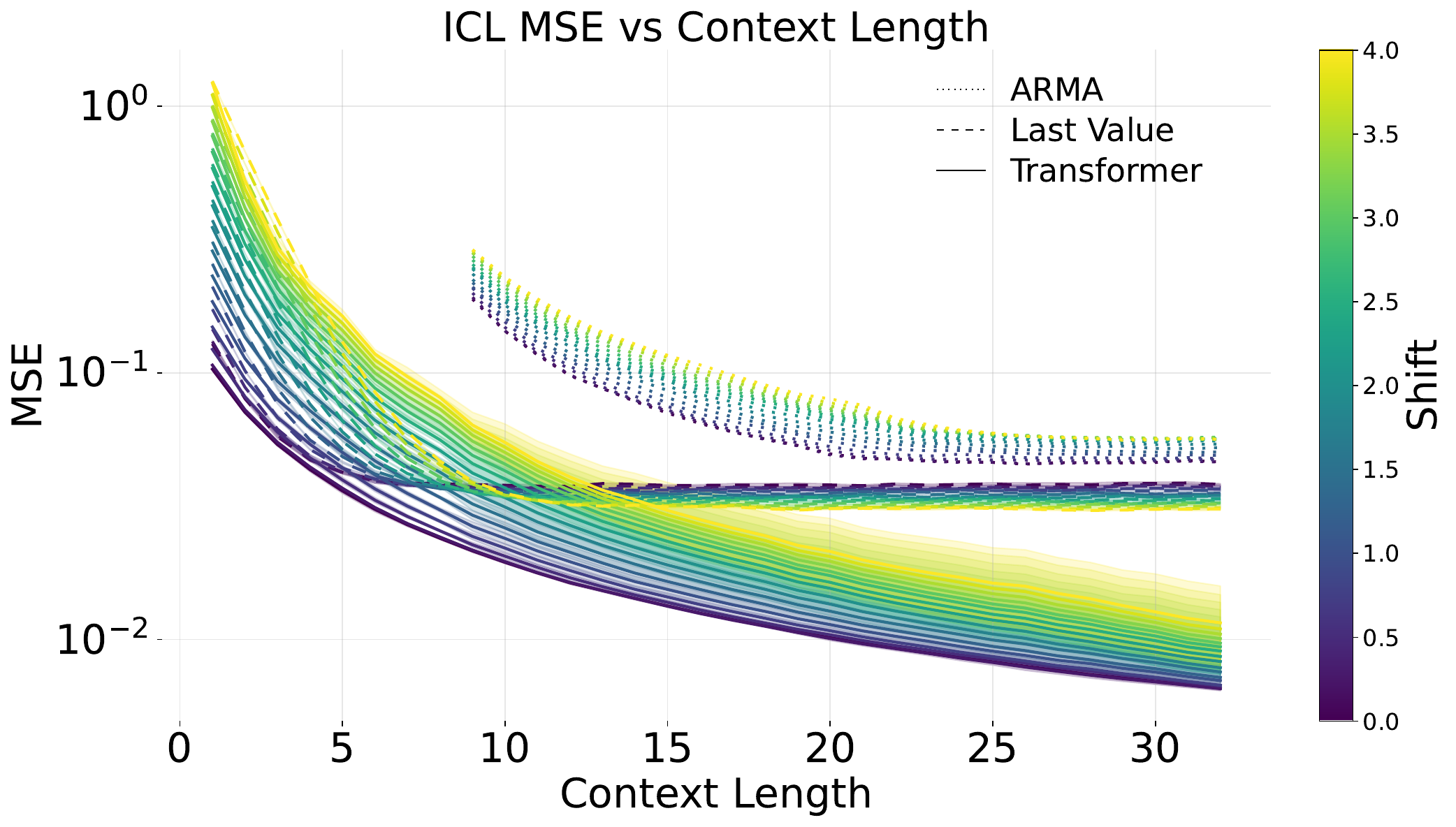}}
    \caption{Student-t, $\nu=3.0$}
  \end{subfigure}\hfill
  \begin{subfigure}[t]{0.48\textwidth}
    \centering
    \includegraphics[width=\linewidth]{\detokenize{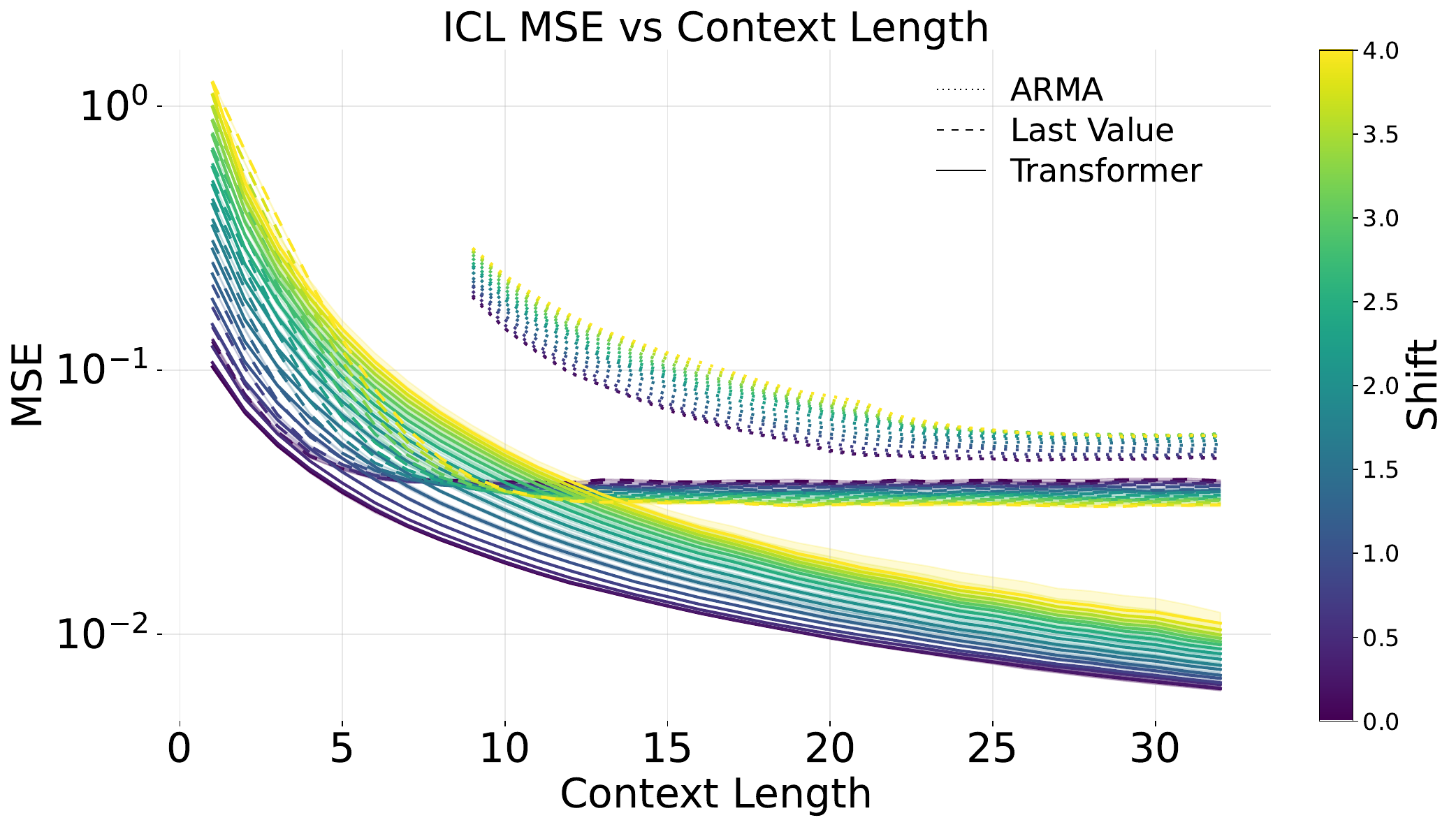}}
    \caption{Student-t, $\nu=5.0$}
  \end{subfigure}

  \medskip

  \begin{subfigure}[t]{0.48\textwidth}
    \centering
    \includegraphics[width=\linewidth]{\detokenize{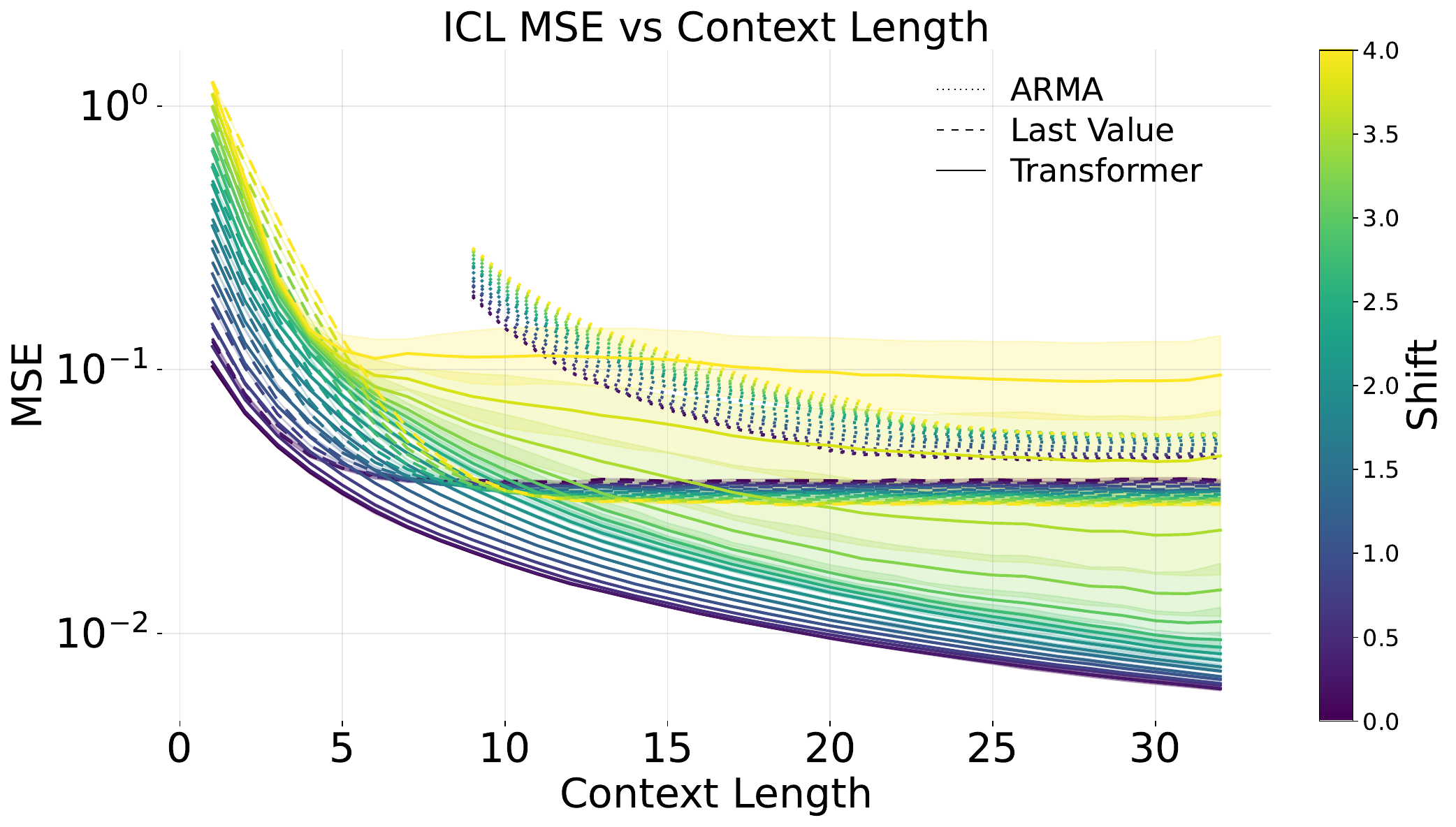}}
    \caption{Student-t, $\nu=10.0$}
  \end{subfigure}\hfill
  \begin{subfigure}[t]{0.48\textwidth}
    \centering
    \includegraphics[width=\linewidth]{\detokenize{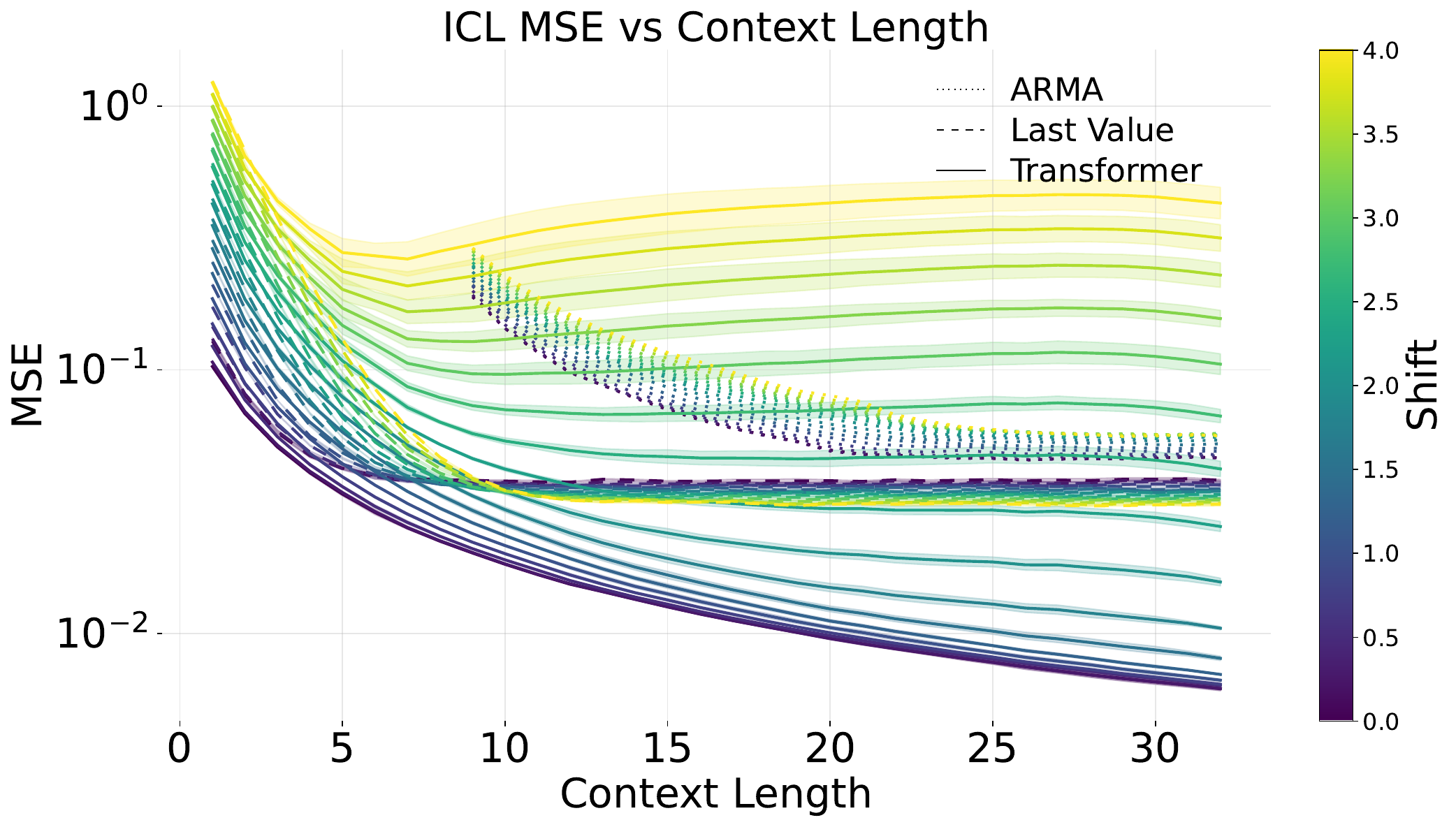}}
    \caption{Gaussian limit, $\nu=\infty$}
  \end{subfigure}

  \caption{Ornstein–Uhlenbeck processes with Student-$t$ pretraining distributions: MSE as a function of ICL step for different task shift magnitudes. Unlike linear regression, heavy-tailed priors maintain strong in-distribution performance while providing superior robustness to perturbations. Baselines include predicting the last observed value and fitting an ARMA(5) model to the context.}
  \label{fig:ou-student}
\end{figure}

\begin{figure}[t]
  \centering

  \begin{subfigure}[t]{0.48\textwidth}
    \centering
    \includegraphics[width=\linewidth]{\detokenize{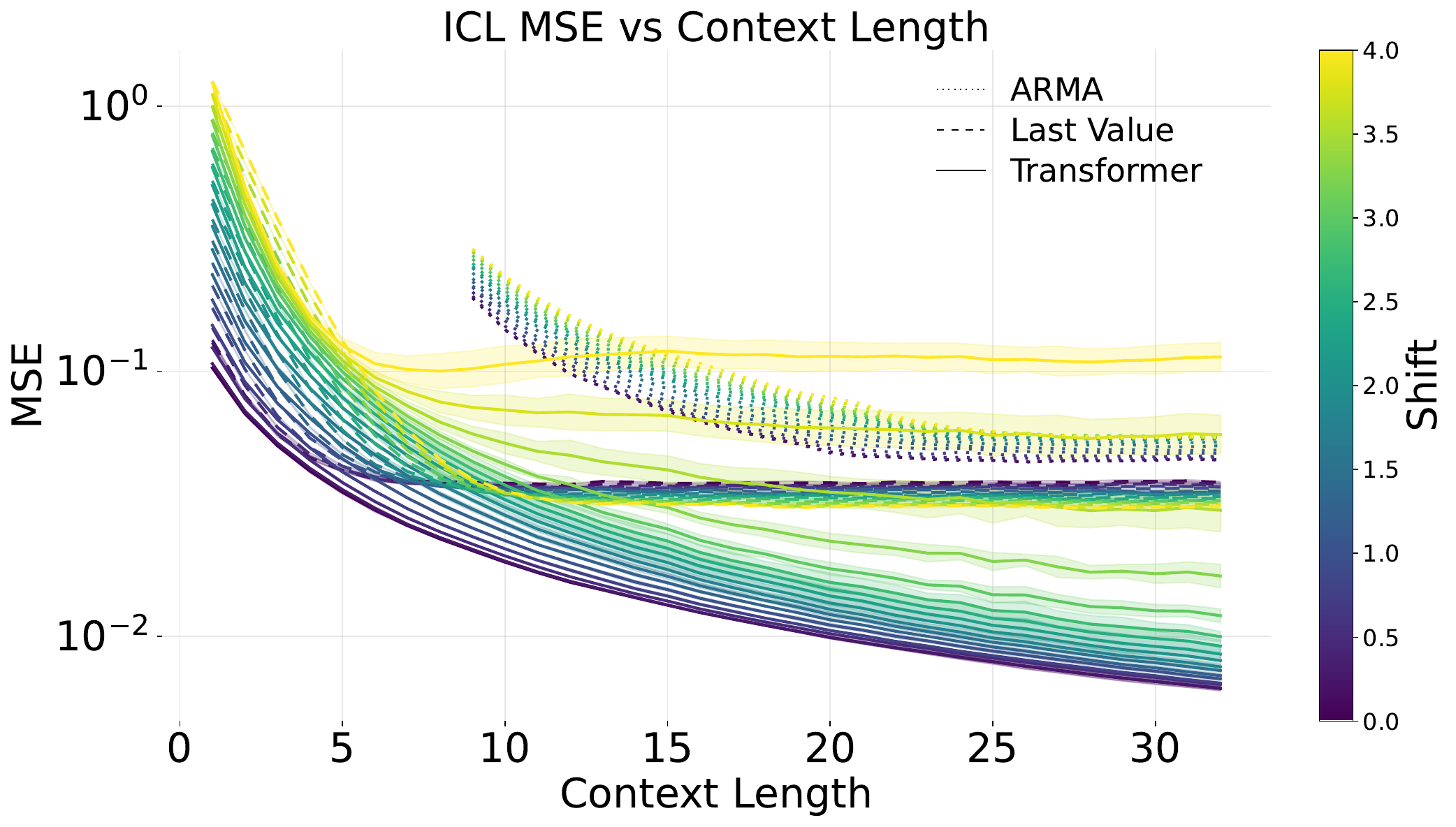}}
    \caption{GenNormal, $\beta=1.0$}
  \end{subfigure}\hfill
  \begin{subfigure}[t]{0.48\textwidth}
    \centering
    \includegraphics[width=\linewidth]{\detokenize{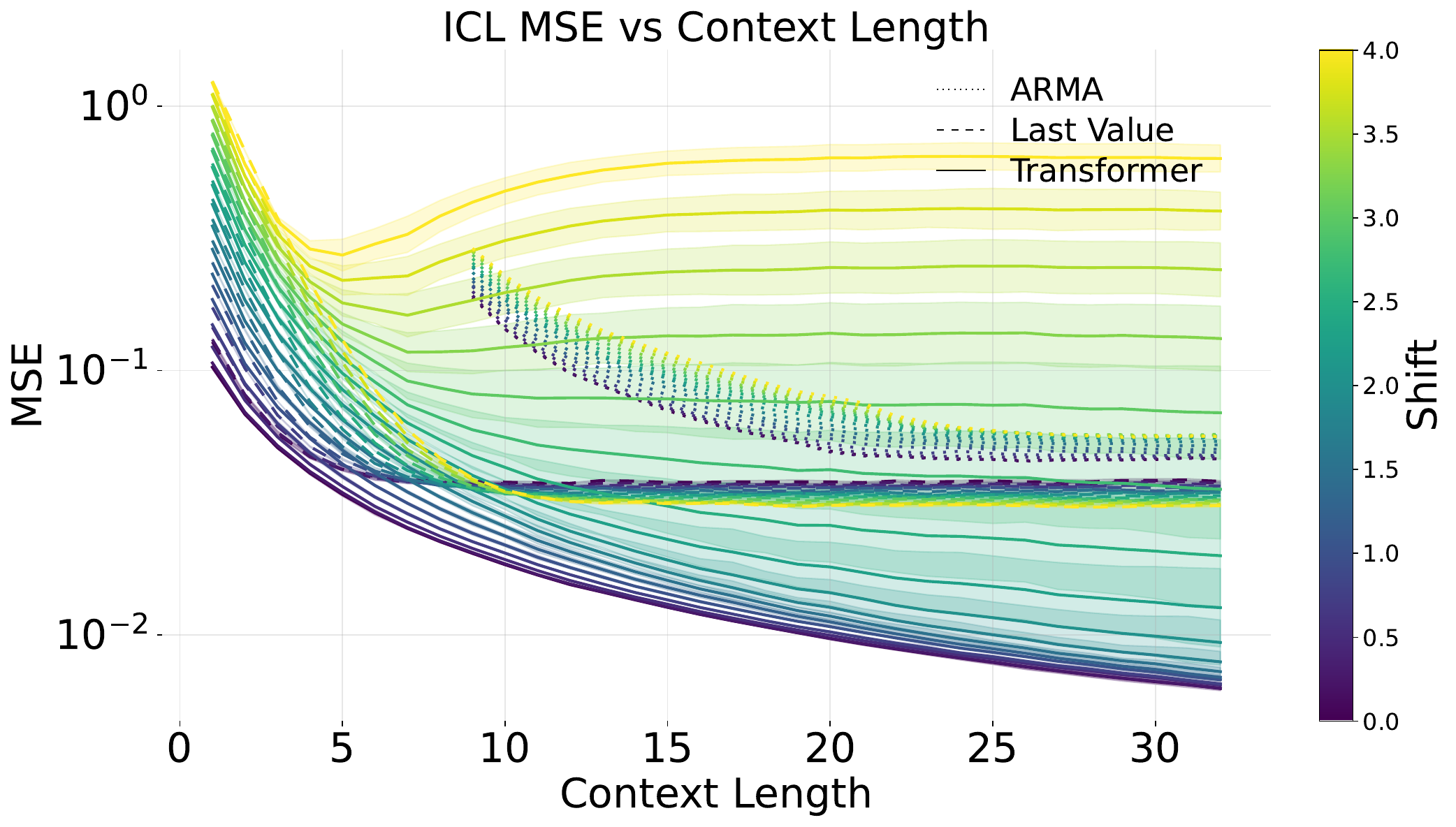}}
    \caption{GenNormal, $\beta=1.5$}
  \end{subfigure}

  \medskip

  \begin{subfigure}[t]{0.48\textwidth}
    \centering
    \includegraphics[width=\linewidth]{\detokenize{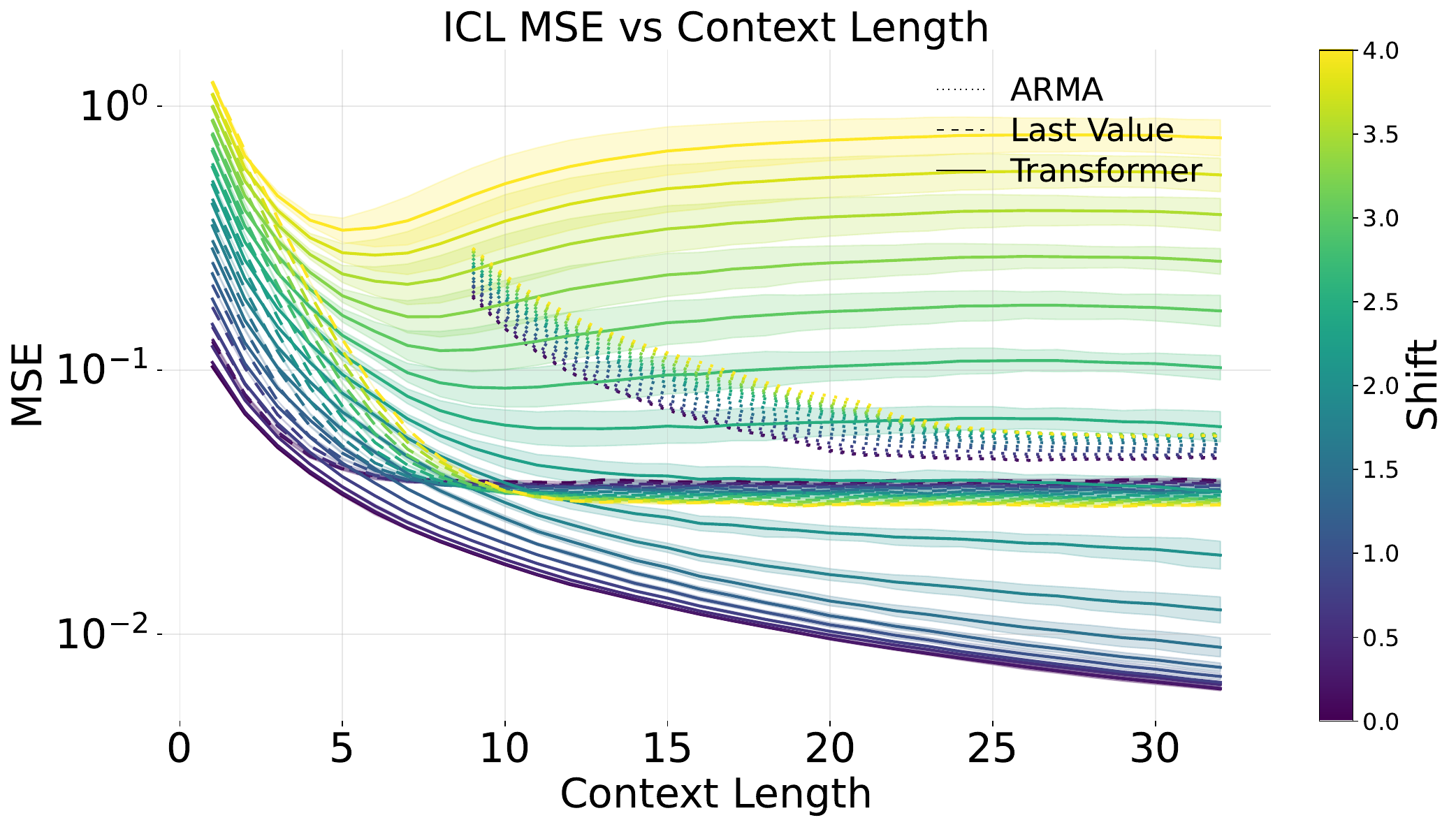}}
    \caption{GenNormal, $\beta=2.0$}
  \end{subfigure}\hfill
  \begin{subfigure}[t]{0.48\textwidth}
    \centering
    \includegraphics[width=\linewidth]{\detokenize{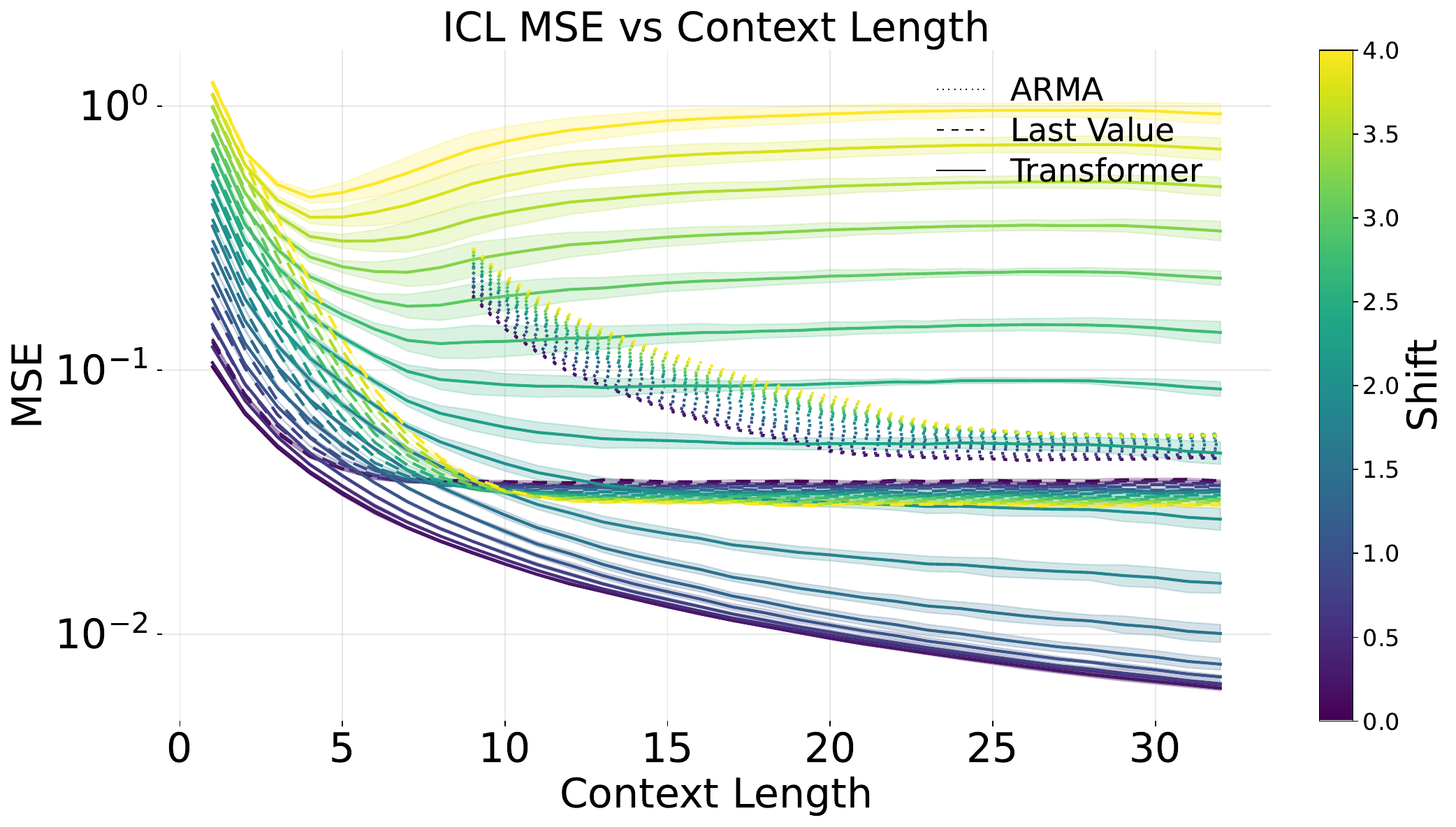}}
    \caption{GenNormal, $\beta=2.5$}
  \end{subfigure}

  \caption{Ornstein–Uhlenbeck processes with generalized normal pretraining distributions (importance weighted): MSE as a function of ICL step for different task shift magnitudes. The shape parameter $\beta$ shows consistent effects across perturbation levels, with all variants significantly outperforming simple baselines. Importance weighting provides modest improvements in robustness.}
  \label{fig:ou-gennormal}
\end{figure}

We now present an extended version of \cref{fig:ou_both} in \cref{fig:ou_t_dist_weight_app} for Student-$t$ priors with varying degrees of freedom $\nu$ with the additional metrics of mean MSE and full context length MSE and \cref{fig:ou_gen_weight_app} for generalized normal priors with varying shape parameters $\beta$. 
\begin{figure}
    \centering
    
    \includegraphics[width=\linewidth,trim={0cm, 0cm, 0cm, 0cm}, clip]{./new_figs/ou/student_weight/min_mse_analysis_true_prefix_32.pdf}
    \caption{Influence of the degree of freedom parameter $\nu$ of a Student-$t$ pretraining distribution (lower $\nu$ corresponds to heavier tail) on the ICL error for different task shifts for predicting the next step in an OU process with context length of 32.}
    \label{fig:ou_t_dist_weight_app}
\end{figure}

\begin{figure}
    \centering
    
    \includegraphics[width=\linewidth,
    trim={0cm, 0cm, 0cm, 0cm}, clip
    ]{./new_figs/ou/gen/min_mse_analysis_true_prefix_32.pdf}
    \caption{Influence of the shape $\beta$ of a generalized normal distribution (lower $\beta$ corresponds to heavier tail) on the ICL error for different task shifts for predicting the next step in an OU process.}
    \label{fig:ou_gen_weight_app}
\end{figure}

We present an extended analysis of the results from \cref{fig:ou_both} in \cref{fig:ou_student_gen_app}, examining how the number of pretraining tasks $n$ affects performance across different Student-$t$ tail parameters $\nu$. These results validate \cref{thm:generalization} in the OU setting, showing that heavier-tailed priors require more training tasks to achieve comparable performance to light-tailed priors.

\begin{figure}
    \centering
    \begin{subfigure}[t]{\linewidth}
        \includegraphics[width=\linewidth, trim={0cm, 0cm, 0cm, 0cm}, clip]{./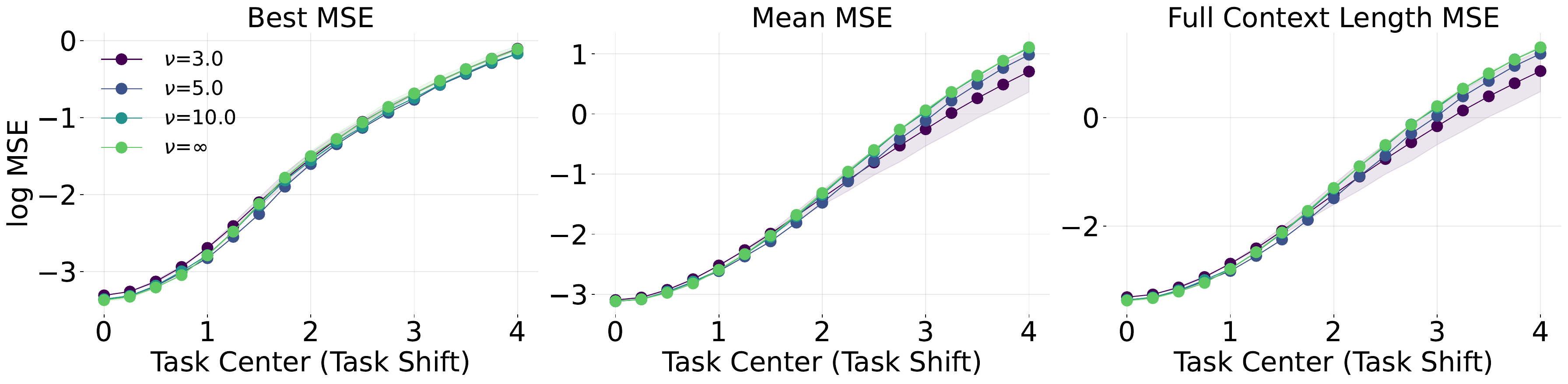}
        \caption{$n=100$}
    \end{subfigure}
    \begin{subfigure}[t]{\linewidth}
        \includegraphics[width=\linewidth, trim={0cm, 0cm, 0cm, 0cm}, clip]{./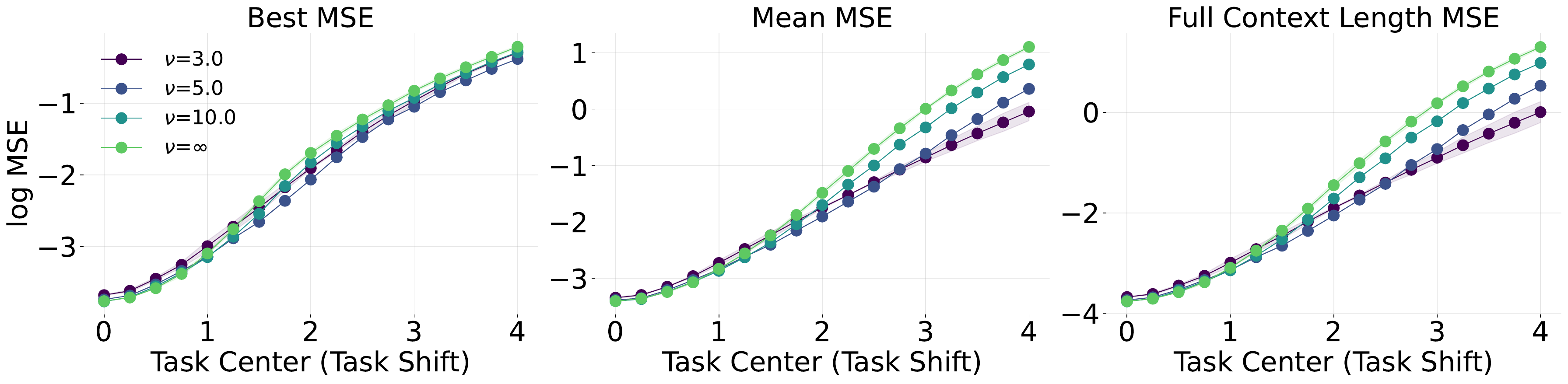}
        \caption{$n=500$}
    \end{subfigure}
    \begin{subfigure}[t]{\linewidth}
        \includegraphics[width=\linewidth, trim={0cm, 0cm, 0cm, 0cm}, clip]{./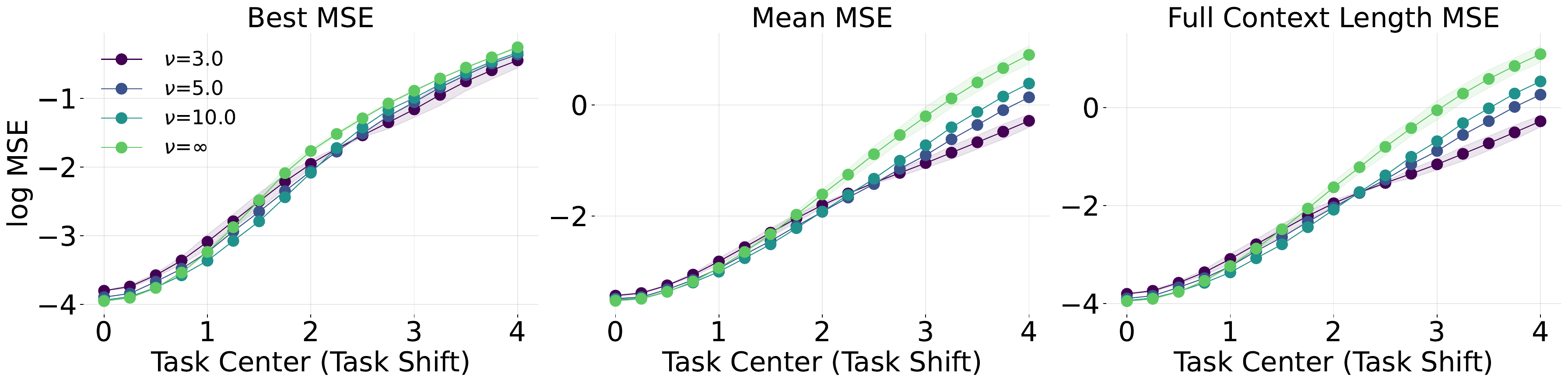}
        \caption{$n=1000$}
    \end{subfigure}
    \caption{Generalization analysis for Ornstein--Uhlenbeck processes with Student-$t$ pretraining distributions across different numbers of pretraining tasks $n$ for a context length of 32. As predicted by \cref{thm:generalization}, heavy-tailed priors require more tasks to achieve performance comparable to light-tailed priors, but eventually outperform them under distribution shift.}
    \label{fig:ou_student_gen_app}
\end{figure}

\revsubsection{Volterra Processes}
We present comprehensive results for stochastic Volterra equations (detailed in \cref{subsec:main-volterra}), which model nonlinear processes with long-range dependencies and connections to fractional Brownian motion.
\Cref{fig:volterra} shows ICL error as a function of context length for different kernel exponents $\alpha \in \{1,1.5,2\}$, where smaller $\alpha$ values correspond to stronger temporal dependencies.

The results confirm our theoretical predictions from \cref{sec:theory}: as the kernel exponent $\alpha$ increases (weaker dependencies), both convergence speed and final performance improve significantly. This validates the dependency structure analysis in \cref{thm:generalization}.

\Cref{fig:volterra_gen_app} extends the generalization analysis from \cref{fig:volterra_gen}, demonstrating how the number of pretraining tasks $n$ interacts with the temporal dependency parameter $\alpha$. The results show that processes with stronger dependencies (smaller $\alpha$) require substantially more training data to achieve comparable performance.

\begin{figure}[t]
  \centering

  \begin{subfigure}[t]{0.48\textwidth}
    \centering
    \includegraphics[width=\linewidth]{\detokenize{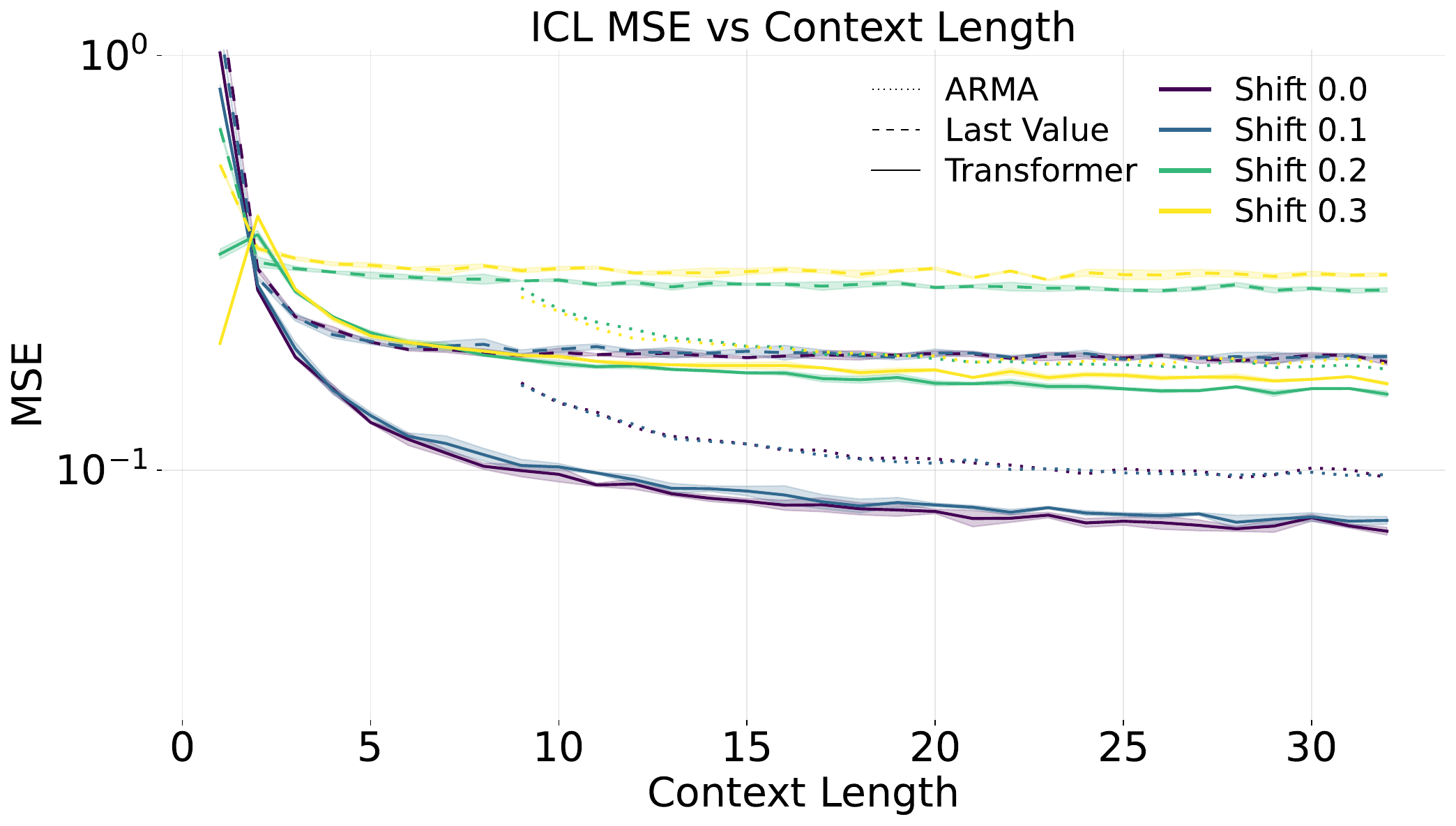}}
    \caption{Kernel exponent $\alpha=1.0$}
  \end{subfigure}\hfill
  \begin{subfigure}[t]{0.48\textwidth}
    \centering
    \includegraphics[width=\linewidth]{\detokenize{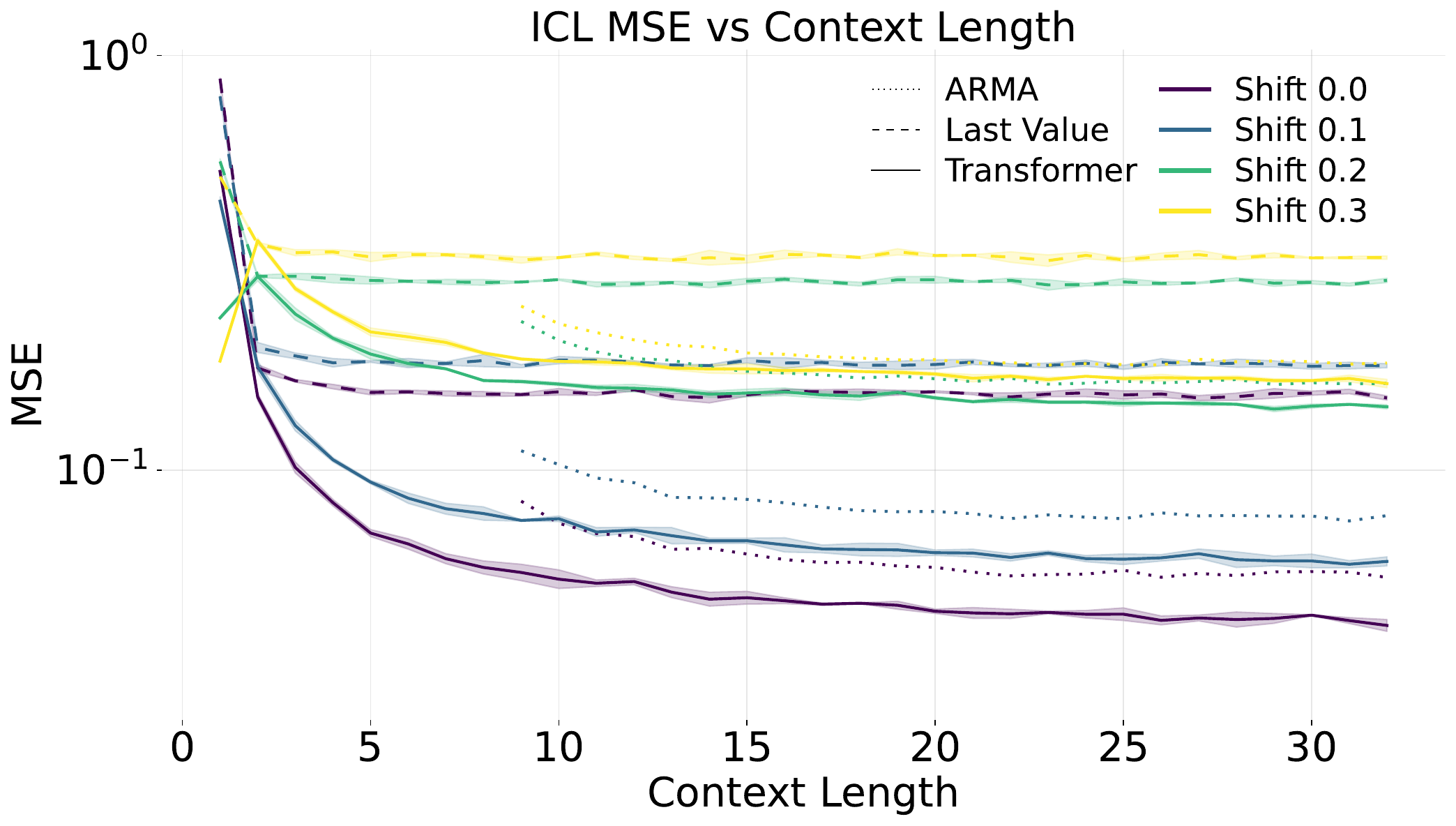}}
    \caption{Kernel exponent $\alpha=1.5$}
  \end{subfigure}

  \medskip

  \begin{subfigure}[t]{0.48\textwidth}
    \centering
    \includegraphics[width=\linewidth]{\detokenize{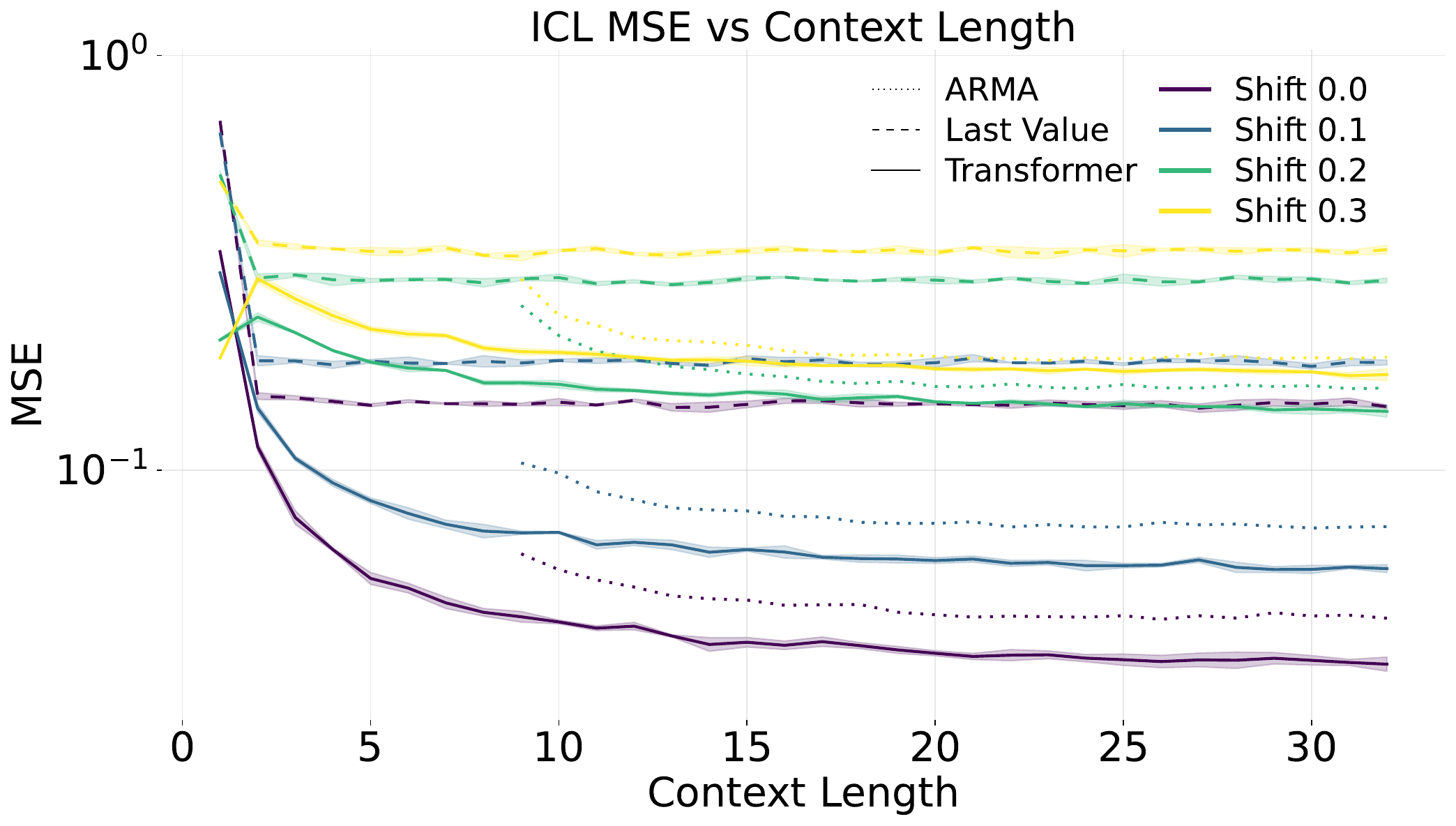}}
    \caption{Kernel exponent $\alpha=2.0$}
  \end{subfigure}

  \caption{Stochastic Volterra equations: MSE as a function of ICL step across different kernel exponents $\alpha$. Smaller $\alpha$ values correspond to stronger long-range dependencies, leading to slower convergence and higher final error. The performance gap between different $\alpha$ values demonstrates the impact of temporal dependency structure on ICL learning. Simple baselines provide reference points for comparison.}
  \label{fig:volterra}
\end{figure}

\begin{figure}
\vspace{-10pt}
    \centering
    \begin{subfigure}[t]{\linewidth}
        \includegraphics[width=\linewidth, trim={0cm, 0cm, 0cm, 0cm}, clip]{./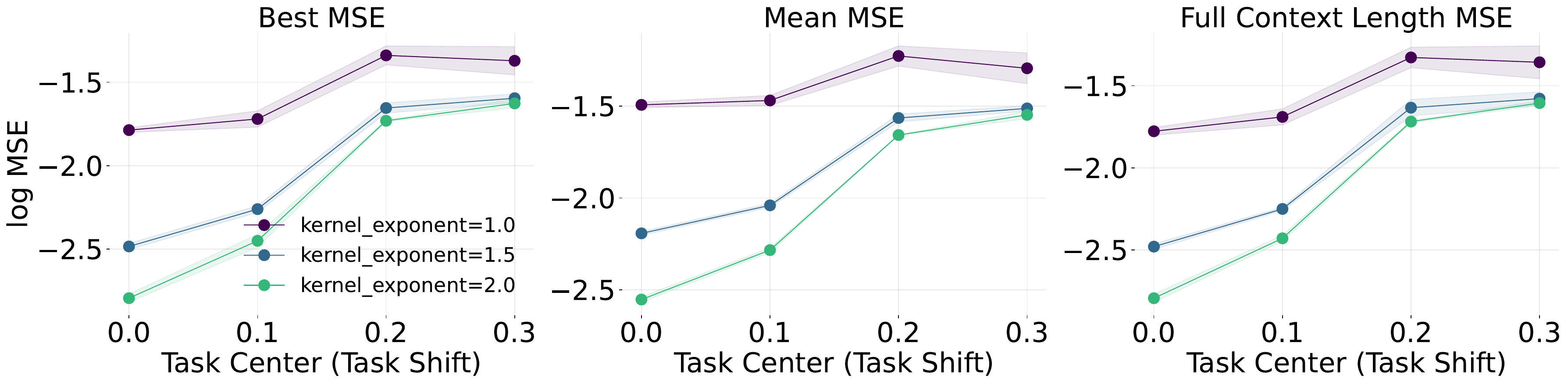}
        \caption{$n=100$}
    \end{subfigure}
    \begin{subfigure}[t]{\linewidth}
        \includegraphics[width=\linewidth, trim={0cm, 0cm, 0cm, 0cm}, clip]{./new_figs/volterra/gen_n=500/min_mse_analysis_true_prefix_32.pdf}
        \caption{$n=500$}
    \end{subfigure}
    \begin{subfigure}[t]{\linewidth}
        \includegraphics[width=\linewidth, trim={0cm, 0cm, 0cm, 0cm}, clip]{./new_figs/volterra/gen_n=1000/min_mse_analysis_true_prefix_32.pdf}
        \caption{$n=1000$}
    \end{subfigure}
    \begin{subfigure}[t]{\linewidth}
        \includegraphics[width=\linewidth, trim={0cm, 0cm, 0cm, 0cm}, clip]{./new_figs/volterra/gen_n=5000/min_mse_analysis_true_prefix_32.pdf}
        \caption{$n=5000$}
    \end{subfigure}
    \caption{Generalization analysis for Volterra processes across different numbers of pretraining tasks $n$. Processes with stronger temporal dependencies (smaller $\alpha$) exhibit larger performance gaps at low $n$, consistent with \cref{thm:generalization}. The dependency coefficients in our theory scale with $\alpha$, explaining why more training tasks are needed to achieve good performance for smaller $\alpha$ values.} 
    \label{fig:volterra_gen_app}
\end{figure}

\subsection{Reweighting}
\label{app:subsec:reweighting}
To further investigate the predictions of \cref{thm:task_selection}, we consider reweighting the pretraining distribution:
if we are given tasks sampled from  a prior distribution $\prior$,
but know that another pretraining distribution $\testprior$ exhibits strong performance, can we improve the performance of distribution $\prior$ by matching $\testprior$ via importance sampling i.e. $\mathbb{E}_\testprior[\ell(X)] = \mathbb{E}_\prior\left[\ell(Y) \frac{\mathrm{d}\testprior}{\mathrm{d}\prior}\right]$?
To test this approach, we reweigh samples such that they are approximately uniform over the support of the empirical distribution.
More precisely, we set $\testprior$ to be the uniform distribution over the range of values observed in the pretraining tasks, and set the weights to be proportional to the ratio of the density of $\testprior$ to that of $\prior$ evaluated at each pretraining task, where $\prior$ is a Student-$t$ distribution with varying degrees of freedom $\nu$.
For linear regression, results are presented in~\cref{fig:lr_t_dist_weight_app} where the reweighting results indicated by the $-\star$ markers.
The results indicate small improvement in the performance under large shifts using the reweighting as compared to without reweighting.
Similarly, for Ornstein–Uhlenbeck processes
of \cref{subsec:main-ou},
the results are presented in~\cref{fig:ou_t_dist_weight_app} and~\cref{fig:ou_gen_weight_app}.
In the generalized normal case, the effect of reweighting is practically negligible, but in the Student-$t$ case, we see some benefit, particularly in the large shift regime. 

\FloatBarrier

\newpage
\section{Experimental Details}
\label{app:sec:exp}
We roughly follow the experimental setup used by~\citet{raventos2023pretraining}.
\subsection{Data Generation}
In all experiments, task parameters $\theta \in \R^d$ are sampled from the distribution mentioned in the main text, data sequences are sampled according to the task.
All task distributions during training are zero mean and unit variance in each dimension, except for the Volterra experiments where they are normalized to have standard deviation 0.2.
For testing, we sample $\theta$ from $\mathcal{N}(\mu \one, I)$ where $\mu \in \R$ is the shift value and $\one$ is the all ones vector, and the data is sampled according to this task.
Unless otherwise specified, a new set of tasks $\theta$ is sampled for each training iteration.  Otherwise, when the number of tasks is specified, we sample that many tasks at the start of training and use those same tasks throughout training.

\paragraph{Linear Regression}
Given a task parameter $\theta \in \R^8$, we sample $x_i \sim \mathcal{N}(0, I_8)$ and $y_i = \langle x_i, \theta \rangle + \epsilon_i$ where $\epsilon_i \sim \mathcal{N}(0, 0.5^2)$.
Given a context of $(x_1, y_1), \ldots, (x_k, y_k)$, the model is trained to predict $y_{k+1}$ given $x_{k+1}$ with the MSE loss.
At evaluation, we evaluate the model output against $x_i^\top \theta$.
We refer to the linear regression experiments in \citet{raventos2023pretraining} for details.

\paragraph{Ornstein-Uhlenbeck (OU) Process}
The OU process is given by $\mathrm{d}X_t = \tau(\mu - X_t)\mathrm{d}t + \sigma \mathrm{d}W_t$ and has two parameters: $\theta$ and $\mu$. 
We study a 8-dimensional process where $X_t\in \mathbb{R}^8$ and $\sigma = {0.5}I_8$.
We consider the initial distribution of $x_0 \sim \mathcal{N}(0,I_8).$
Full paths of $X_t$ are sampled using the Euler-Maruyama method with a step size of $\Delta t = 0.8$.
For the sampling of tasks, $\theta \in \R^9$ is sampled from the described distribution, $\mu$ is then set to be the first 8 components of $\theta$ and $\tau$ is set to $0.3 + 0.2 \times \sigma(-0.4 \theta_9)$ where $\sigma$ is the sigmoid function.
The model is trained to predict $X_{(k+1)\Delta t}$ given $X_0, X_{\Delta t}, \ldots, X_{k\Delta t}$ with the MSE loss with a maximum context length of 32.
For evaluation, we evaluate the model output against $\mathbb{E}[X_{(k+1)\Delta t} | X_0, X_{\Delta t}, \ldots, X_{k\Delta t}]$ which is computable in closed form.

\paragraph{Volterra Process}
We study a Volterra process in dimension 8
given by
\begin{equation}
    X_t = X_0 + \int_0^t (t -s)^{-\alpha} b_\theta(X_s) \mathrm{d}s + \int_0^t (t-s)^{-\alpha} \sigma \mathrm{d}W_s, 
\end{equation}
where the parameter $\alpha$ is chosen according to discrete values in $\{1, 1.5, 2\}$ and $\sigma = 0.6 I_8$.
$X_0$ is sampled from $\mathcal{N}(0, I_8)$ again.
$b_\theta$ a clipped two-layer neural network and hidden dimension 16: formally, with $\theta = (W_1, b_1, W_2, b_2)$ then $b_\theta(x) = \mathrm{clip}(10(W_2 \tanh(W_1 x + b_1) + b_2), -2, 2) - 0.1 x$.

We subsample the paths $(X_t)_t$ with step size $\Delta t = 2$ to obtain discrete samples $(X_0, X_{\Delta t}, X_{2\Delta t}, \ldots, )$ and each 
$X_{k\Delta t}$ is computed from past samples using 10 steps of the Euler-Maruyama method with step size $\Delta t / 10$.
The model is trained to predict $X_{(k+1)\Delta t}$ given $X_0, X_{\Delta t}, \ldots, X_{k\Delta t}$ with the MSE loss with a maximum context length of 32.
For evaluation, we evaluate the model output against $\mathbb{E}[X_{(k+1)\Delta t} | X_0, X_{\Delta t}, \ldots, X_{k\Delta t}]$ which is computable in closed form.

\subsection{Architecture and Optimization Details}

For all experiments, we consider the architecture inspired by GPT-2 as used in~\citet{raventos2023pretraining}.
For linear regression experiments, we use a context length of 64 points, 6 layers, embedding dimension of 32, 8 attention heads and an output dimension of 1.
For the other experiments, we use a context length of 32 points, 8 layers, embedding dimension of 128, 2 attention heads and an output dimension of 8.

All models were trained for $5\times 10^5$ iterations.
Experiments are run with AdamW optimizer with a weight decay of 0.1 with a cosine learning rate schedule and 50,000 warmup steps.
All experiments were run on NVIDIA H100 GPUs.
We performed a hyperparameter sweep over learning rate where we considered two learning rates and chose the best model.
Experiments are repeated 3 different times with different seeds.

\newpage

\section{Generalization bounds}
\label{app:sec:gen-bounds}

\subsection{Moment  bounds for general functions}
In this subsection, we generalize the heavy-tail concentration results of \citet{li2024concentration} to allow for  non-i.i.d.~data.
This section can also be seen as extending concentration results for dependent sequences to the case where the function of interest does not necessarily admit bounded differences but only bounded moments.
In particular, \cref{lem:moment-causal-symmetrization} extends the coupling argument of \citet{chazottes2007concentration} to our setting, in particular not requiring bounded differences but only bounded moments. Indeed, for this, we replace the total variation distance by the Wasserstein-1 distance. It can also be seen as an extension of the bounded differences result of \citet{kontorovich2008concentration} to our setting (see \citet{mohri2010stability} for a presentation of the results of \citet{kontorovich2008concentration} in a setting closer to ours).
Moreover, note that even the handling of the sub-Gaussian increments is much more trickier than in \citet{kontorovich2014concentration}, since we have to carefully apply a convex domination argument to handle the conditional dependence.
The main result of this section is \cref{thm:causal-symmetrization}, which is of independent interest. 

As in the previous section, $\norm{\cdot}$ denotes the Euclidean norm on $\reals^d$ for any $d \in \nats$.

At multiple places, we will use the Wasserstein-1 distance\footnote{This is a slight abuse of terminology, since the Wasserstein-1 distance is usually defined for metric spaces, while we only assume $\rho$ to be a cost function. However, this slight abuse of terminology will not cause any confusion in the following.}
    with respect to a cost function $\rho \from \mathcal{Z} \times \mathcal{Z} \to [0, \infty)$, defined as
    \begin{equation}
        \wass{\rho}{\mu}{\nu} := \inf_{\pi \in \Pi(\mu, \nu)} \int \rho(z, z') d \pi(z, z'),
    \end{equation}
    where $\Pi(\mu, \nu)$ is the set of couplings of $\mu$ and $\nu$. We refer to the textbook \citet{villani2008optimal} for more details.

\begin{lemma}\label{lem:moment-causal-symmetrization}
    Consider $\mathcal{Z}$ measurable space. Let $Z_1, \ldots, Z_m$ be $\mathcal{Z}$-valued random variables with natural filtration $\mathcal{F}_i := \sigma(Z_1, \ldots, Z_i)$. For each $i$, assume there is $Z_i'$ such that
    \begin{equation}
        Z_i' \sim \text{Law}(Z_i \mid \mathcal{F}_{i-1}), \quad Z_i' \perp\!\!\!\perp Z_i \mid \mathcal{F}_{i-1}.
    \end{equation}
    
    Let $\func \from \mathcal{Z}^m \to \reals$ be measurable and coordinate-wise Lipschitz with respect to cost functions $\rho_i \from \mathcal{Z} \times \mathcal{Z} \to [0, \infty)$ such that $\rho_i(z_i, z_i) = 0$,
    with constants $L_i \geq 0$: for any $z, z' \in \mathcal{Z}^m$ differing only in the $i$-th coordinate,
    \begin{equation}
        \abs{\func(z) - \func(z')} \leq L_i \rho_i(z_i, z_i').
    \end{equation}
    
       With $\wass{\rho_j}{\cdot}{\cdot}$ the Wasserstein-1 distance with respect to $\rho_j$, define, for $i < j$,
    \begin{equation}
        \delta_{i, j}(z_{1:i},z_i') =   \wass{\rho_j}{\text{Law}(Z_j \mid Z_{1:i} = z_{1:i})}{\text{Law}(Z_j \mid Z_{1:i-1} = z_{1:i-1}, Z_i = z_i')}.
    \end{equation}
    for $i \in \{1, \ldots, m\}$, 
    \begin{equation}
        \abs{\exof{\func(Z_{1:m}) \given \mathcal{F}_i} - \exof{\func(Z_{1:i-1}, Z_i', Z_{i+1:m}) \given \mathcal{F}_{i-1}, Z_i'}} \leq
        L_i \rho_i(Z_i, Z_i') + \sum_{j=i+1}^{m} L_j \delta_{i,j}(Z_{1:i}, Z_i')
    \end{equation}
    
\end{lemma}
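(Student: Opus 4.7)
Condition on $\mathcal{F}_{i-1}$, $Z_i$, $Z_i'$ and write $\mu_1 := \text{Law}(Z_{i+1:m}\mid Z_{1:i})$ and $\mu_2 := \text{Law}(Z_{i+1:m}\mid Z_{1:i-1}, Z_i')$. Under the natural re-sampling interpretation, the two conditional expectations on the left-hand side become integrals of $\func(Z_{1:i-1}, Z_i, \cdot)$ against $\mu_1$ and of $\func(Z_{1:i-1}, Z_i', \cdot)$ against $\mu_2$ respectively. My first step is to add and subtract $\int \func(Z_{1:i-1}, Z_i', w)\,d\mu_1(w)$, which separates a position-$i$ contribution, bounded pointwise by $L_i\rho_i(Z_i,Z_i')$ via the coordinate-wise Lipschitz property applied only at index $i$, from a residual of the form $\int F\,d\mu_1 - \int F\,d\mu_2$, where $F(w) := \func(Z_{1:i-1}, Z_i', w)$ still inherits the coordinate-wise Lipschitz constants $L_{i+1},\dots,L_m$.

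For the residual, the starting observation is that for any coupling $\pi \in \Pi(\mu_1, \mu_2)$ the coordinate-wise Lipschitz inequality yields $|\int F\,d\mu_1 - \int F\,d\mu_2| \leq \sum_{j=i+1}^m L_j \int \rho_j(w_j, w_j')\,d\pi(w,w')$. The natural construction is a causal coupling in the spirit of \citet{chazottes2007concentration}: initialize with $(W_i, W_i')=(Z_i, Z_i')$ and, for each $j>i$ in order, sample $(W_j, W_j')$ from an optimal $\rho_j$-coupling of the one-step conditionals of the two chains given the already-coupled past. Applying the Lipschitz split to the coupled pair then produces a bound by a sum of expected one-step conditional Wasserstein distances, which at step $i+1$ is exactly $\delta_{i,i+1}(Z_{1:i}, Z_i')$.

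The main obstacle will be to relate the deeper steps ($j > i+1$) of the causal coupling, whose expected cost is naturally a conditional Wasserstein distance given the intermediate coupled history, back to the \emph{marginal} Wasserstein quantities $\delta_{i,j}(Z_{1:i}, Z_i')$ that appear in the statement and which integrate out $Z_{i+1:j-1}$. The plan is to combine the tower property with the convexity of $\wass{\rho_j}{\cdot}{\cdot}$ under mixtures: averaging each chain's one-step conditional against the law of its intermediate past reconstitutes the marginal law $\text{Law}(Z_j\mid Z_{1:i})$ on one side and $\text{Law}(Z_j\mid Z_{1:i-1}, Z_i')$ on the other, and convexity collapses the expected one-step costs down to $\delta_{i,j}(Z_{1:i}, Z_i')$. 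Equivalently, one can bypass the causal construction altogether and invoke a gluing/disintegration argument from optimal transport to construct a coupling $\pi$ whose $(j,j)$-marginal is, for each $j$, an optimal $\rho_j$-coupling of the marginal laws, realizing the bound directly. Summing these contributions with the $L_i\rho_i(Z_i,Z_i')$ piece from the first step yields the lemma. Replacing total variation by Wasserstein-1 throughout is precisely what extends the bounded-differences coupling of \citet{chazottes2007concentration} to the moment-controlled, unbounded-differences setting needed downstream for our heavy-tailed ICL concentration bound.
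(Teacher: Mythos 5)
Your setup mirrors the paper's through the first step: condition on $\mathcal{F}_{i-1}, Z_i, Z_i'$, split off the coordinate-$i$ contribution to get $L_i\rho_i(Z_i,Z_i')$, and reduce to bounding $\int F\,d\mu_1 - \int F\,d\mu_2$ for the tail function $F$. The problem lies in how you pass from per-coordinate coupling costs to the \emph{marginal} quantities $\delta_{i,j}(Z_{1:i},Z_i')$ appearing in the statement, and both routes you sketch have a gap there.

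The convexity argument points the wrong way. The causal coupling produces, at coordinate $j$, an expected cost of the form $\ex\bigl[\wass{\rho_j}{\mathrm{Law}(Z_j\mid \text{coupled past})}{\mathrm{Law}(Z_j\mid \text{coupled ghost past})}\bigr]$, averaged over the realization of the intermediate coupled history. Joint convexity of the Wasserstein distance under mixtures gives
\[
\wass{\rho_j}{\ex[P]}{\ex[Q]} \;\le\; \ex\bigl[\wass{\rho_j}{P}{Q}\bigr],
\]
so the marginal distance $\delta_{i,j}$ is a \emph{lower} bound on the expected conditional one-step cost, not an upper bound; collapsing the causal-coupling cost \emph{down} to $\delta_{i,j}$ requires the reverse inequality, which convexity does not supply. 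Your gluing alternative also meets an obstruction: a single coupling of $\mathrm{Law}(Z_{(i+1):m}\mid Z_{1:i})$ and $\mathrm{Law}(Z_{(i+1):m}\mid Z_{1:i-1},Z_i')$ whose $(j,j)$-marginal is $\rho_j$-optimal for every $j$ simultaneously need not exist. For instance, on $\mathcal{Z}=\{0,1\}$ with Hamming cost, $\mu=\mathrm{Unif}\{(0,0),(1,1)\}$ and $\nu=\mathrm{Unif}\{(0,1),(1,0)\}$ have identical coordinate marginals, so every single-coordinate optimal coupling is the diagonal $z_j=z_j'$, yet no coupling of $\mu$ and $\nu$ realizes both diagonals at once. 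By contrast, the paper's proof never passes through conditional Wasserstein distances at all: it telescopes directly on hybrid sequences $(Z_{(i+1):j}', Z_{(j+1):m})$ and at the $j$-th swap introduces an optimal coupling of the two \emph{marginal} laws of $Z_{j+1}$, so the quantities $\delta_{i,j}$ enter the estimate immediately rather than via a convexity or gluing step; if you wish to pursue that route you would still need to argue that the per-step couplings can be realized coherently, which is not addressed in your sketch.
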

\begin{proof}
    
    Fix $i \in \{1,\ldots,m\}$. We condition on $\mathcal{F}_{i-1}$. Let $u := Z_i$ and $u' := Z_i'$. 
    Not to overburden notations, all expectations and probabilities in the following are conditional on $\mathcal{F}_{i-1}, Z_i = u, Z_i' = u'$.
    Define the tail functions
    \begin{align}
        \funcalt(z_{i+1:m}) &:= \func(Z_{1:(i-1)}, u, z_{i+1:m}), \\
        \funcalt'(z_{i+1:m}) &:= \func(Z_{1:(i-1)}, u', z_{i+1:m}).
    \end{align}
    
    Denote $Z_{(i+1):m} \sim \text{Law}(Z_{(i+1):m} \mid \mathcal{F}_{i-1}, Z_i = u)$ and 
    $Z_{(i+1):m}' \sim \text{Law}(Z_{(i+1):m} \mid \mathcal{F}_{i-1}, Z_i = u')$. We decompose
    \begin{align}
      &\abs{ \exof{\func(Z_{1:m}) } - \exof{\func(Z_{1:(i-1)}, Z_{i:m}')  }} \\
      &= \big|\exof{\funcalt(Z_{(i+1):m})} - \exof{\funcalt'(Z_{(i+1):m}')} \big| \\
      &\leq \exof*{\big|\funcalt(Z_{(i+1):m}) - \funcalt'(Z_{(i+1):m})\big|} + \Big|\exof*{\funcalt'(Z_{(i+1):m})} - \exof*{\funcalt'(Z_{(i+1):m}')}\Big|.
    \end{align}
   
    We bound the two terms separately.  

    By the coordinate-wise Lipschitz condition at $i$,
    \begin{equation}
         \exwrt*{P}{|\funcalt(Z_{(i+1):m}) - \funcalt'(Z_{(i+1):m})|} \leq L_i \rho_i(u, u')  = L_i \rho_i(Z_i, Z_i').
    \end{equation}

    We write the following telescoping decomposition:
    \begin{align}
        \Big|\exof*{\funcalt'(Z_{(i+1):m}) } - \exof*{\funcalt'(Z_{(i+1):m}')}\Big|
        &\leq \sum_{j=i}^{m-1} \Big|\exof*{\funcalt'(Z_{(i+1):j}', Z_{(j+1):m})} - \exof*{\funcalt'(Z_{(i+1):(j+1)}',  Z_{(j+1):m})}\Big| \,.
    \end{align}

        By the definition of the Wasserstein-1 distance, there exists a coupling of $(Z_{j+1}, Z_{j+1}')$ such that
    \begin{equation}
        \exof*{\rho_{j+1}(Z_{j+1}, Z_{j+1}') \given \mathcal{F}_i, Z_i'} = \wass{\rho_{j+1}}{\text{Law}(Z_{j+1} \mid \mathcal{F}_i)}{\text{Law}(Z_{j+1} \mid \mathcal{F}_{i-1}, Z_i')) \leq \delta_{i,j+1}(Z_{1:i-1}, Z_i'}\,.
    \end{equation}

    We obtain a bound on the increment at coordinate $j$ by combining the coupling with the coordinate-wise Lipschitz condition at $j$:
    \begin{align}
        &\Big|\exof*{\funcalt'(Z_{(i+1):j}', Z_{(j+1):m})} - \exof*{\funcalt'(Z_{(i+1):(j+1)}',  Z_{(j+1):m})}\Big| \\
        &\leq \exof*{\big|\funcalt'(Z_{(i+1):j}', Z_{(j+1):m}) - \funcalt'(Z_{(i+1):(j+1)}',  Z_{(j+1):m})\big|} \\
        &\leq L_{j+1} \exof*{\rho_{j+1}(Z_{j+1}, Z_{j+1}')} \\ 
        &= L_{j+1} \wass{\rho_{j+1}}{\text{Law}(Z_{j+1} \mid \mathcal{F}_i)}{\text{Law}(Z_{j+1} \mid \mathcal{F}_{i-1}, Z_i')) = L_{j+1} \delta_{i,j+1}(Z_{1:i}, Z_i'} \,. 
    \end{align}

    Combining the above estimates gives
    \begin{align}
        \Big|\exof*{\funcalt'(Z_{(i+1):m}) } - \exof*{\funcalt'(Z_{(i+1):m}')}\Big|
        &\leq \sum_{j=i}^{m-1} L_{j+1} 
        \delta_{i,j+1}(Z_{1:i}, Z_i') \,.
    \end{align}
    which yields the desired result.
\end{proof}

We now state a classic convex domination lemma which is a slight variant of \citet[Lem.~4.6]{ledoux2013probability}.
\begin{lemma}[Convex domination]
    \label{lem:convex-domination}
    Consider $X, Z$ a zero-mean symmetric random variables such that
    \begin{equation}
        \probof{|X| > t} \leq C \probof{|Z| > t}\,,
    \end{equation}
    for some $C > 0$ and all $t > 0$.

    Then, for any convex function $h \from \reals \to \reals$,
    \begin{equation}
        \exof{h(X)} \leq \exof{h(CZ)}\,.
    \end{equation}
\end{lemma}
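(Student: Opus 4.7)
The plan is to reduce the statement to a layer-cake computation for a nonnegative, nondecreasing convex function on $[0,\infty)$, and close with a convex super-homogeneity step. Throughout I focus on the typical case $C \geq 1$; the boundary case $C \in (0,1)$ is ruled out whenever $X$ and $Z$ have no atom at the origin (since letting $t \downarrow 0$ in the hypothesis then forces $1 \leq C$), and I would handle it separately.

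First, I would symmetrize the convex function $h$. Because $X$ and $CZ$ are symmetric, replacing $h$ by $\tilde{h}(x) \defeq \tfrac{1}{2}(h(x) + h(-x))$ preserves both expectations $\exof{h(X)} = \exof{\tilde{h}(X)}$ and $\exof{h(CZ)} = \exof{\tilde{h}(CZ)}$, and $\tilde{h}$ is now convex and even. Subtracting the constant $\tilde{h}(0)$ (which cancels on both sides), I may further assume $\tilde{h}(0) = 0$, so that $\tilde{h}$ is nonnegative and nondecreasing on $[0,\infty)$. This reduction is what makes the layer-cake formula directly applicable to $\tilde{h}(|X|)$.

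Next, the layer-cake identity gives
\[
\exof{\tilde{h}(X)} \;=\; \exof{\tilde{h}(|X|)} \;=\; \int_0^{\infty} \probof{|X| > t} \, d\tilde{h}(t)\,,
\]
with an analogous expression for $Z$. Since $d\tilde{h}(t) \geq 0$, integrating the tail hypothesis $\probof{|X| > t} \leq C \probof{|Z| > t}$ term by term yields $\exof{\tilde{h}(X)} \leq C \exof{\tilde{h}(Z)}$. The final step converts the multiplicative factor $C$ into a rescaling of the argument: since $\tilde{h}$ is convex with $\tilde{h}(0) = 0$, writing $t = \tfrac{1}{C}(Ct) + (1 - \tfrac{1}{C}) \cdot 0$ and applying convexity gives $\tilde{h}(t) \leq \tfrac{1}{C}\tilde{h}(Ct)$, i.e.\ $\tilde{h}(Ct) \geq C\tilde{h}(t)$ for all $t \geq 0$ whenever $C \geq 1$. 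Taking expectations with $t = |Z|$ gives $\exof{\tilde{h}(CZ)} \geq C \exof{\tilde{h}(Z)} \geq \exof{\tilde{h}(X)}$, and the chain closes.

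The main (minor) obstacle is the scaling step, which is the only point where the sign of $C - 1$ matters: convexity of $\tilde{h}$ yields super-homogeneity only for $C \geq 1$, and the same inequality reverses for $C < 1$. Everything else in the argument is mechanical once symmetrization has cast $h$ into a nonnegative, nondecreasing, convex function vanishing at $0$, so the content of the lemma really lies in that reduction plus the layer-cake and convexity comparison.
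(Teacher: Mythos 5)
Your proof is correct for $C\ge 1$, and this is in fact the only regime in which the lemma can hold: for $C<1$ the statement is false (take $Z$ Rademacher, $X$ equal to $0$ with probability $1/2$ and to $\pm 1$ each with probability $1/4$, $C=1/2$, $h(x)=x^2$; then $\probof{|X|>t}=\tfrac12\probof{|Z|>t}$ for all $t>0$, but $\exof{h(X)}=\tfrac12>\tfrac14=\exof{h(CZ)}$). The paper's own proof also implicitly requires $C\ge 1$, since it introduces a $\mathrm{Bernoulli}(1/C)$ variable, so the ``$C>0$'' in the lemma statement is a slight inaccuracy on the paper's side; your caveat about handling $C<1$ separately is therefore moot, as there is nothing to prove there.

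Your route is genuinely different from the paper's. The paper shows that $|\delta X|$, with $\delta\sim\mathrm{Bernoulli}(1/C)$, is stochastically dominated by $|Z|$, constructs a monotone coupling $|\delta X|\le|Z|$, writes both variables as a Rademacher sign times their modulus, runs a conditional convexity argument (the map $a\mapsto\exof{h(a\,\rad\,|Z|)\mid\cdot}$ is convex on $[-1,1]$ and hence maximized at the endpoints) to pass from $\delta X$ to $Z$, and finally uses Jensen's inequality to replace $\delta$ by its mean $1/C$. You instead symmetrize $h$ so it may be taken even, convex, and vanishing at the origin, convert the tail hypothesis into $\exof{\tilde h(|X|)}\le C\,\exof{\tilde h(|Z|)}$ via the layer-cake representation, and absorb the factor $C$ through super-homogeneity of convex functions with $\tilde h(0)=0$ (namely $\tilde h(Ct)\ge C\,\tilde h(t)$ for $C\ge 1$). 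Your argument is shorter, avoids any coupling construction, and makes the role of $C\ge 1$ completely transparent at the single inequality that uses it; the paper's coupling-based argument is heavier but lends itself more directly to the conditional, filtration-laden setting in which the lemma is actually invoked in \cref{lem:causal-symmetrization}. Both reach the same conclusion and both are sound.
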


\begin{proof}
Let $\delta \sim \mathrm{Bernoulli}(1/C)$ be independent of $(X,Z)$. Then, for all $t>0$,
$\probof{|Z|>t} \geq \tfrac{1}{C}\,\probof{|X|>t}
        = \probof{|\delta X|>t}.$
        Hence $|\delta X|$ is stochastically dominated by $|Z|$ and we may construct a coupling such that %
\begin{equation}
|\delta X|\leq|Z| \qquad \text{a.s.} \label{eq:coupling}
\end{equation}

Since $X$ is symmetric, we may write in distribution $X \stackrel{d}{=} \rad\,|X|$ where $\rad$ is a Rademacher variable independent of $|X|$. Likewise, $Z \stackrel{d}{=} \rad'\,|Z|$ with an independent Rademacher $\rad'$.

Condition on $(\delta,X,Z)$ and define
\begin{equation}
    \Phi(a)\;:=\;\exof[\big]{\,\cvxfunc\!\big(a\,\rad\,|Z|\big)\,\bigm|\,\delta,X,Z}\,,\qquad a\in[-1,1].
\end{equation}
The map $a\mapsto \Phi(a)$ is convex (as an average of convex functions). By convexity, its maximum on $[-1,1]$ is attained at an extreme point $\{-1,1\}$. On the coupling where \eqref{eq:coupling} holds, define
\begin{equation}
    a \;:=\; \begin{cases}
        \dfrac{\delta \abs{X}}{|Z|}, & \text{if } Z\neq 0,\\[6pt]
        0, & \text{if } Z=0,
    \end{cases}
\end{equation}
so that $a\in[-1,1]$ almost surely thanks to $|X|\le |\delta Z|$. Therefore,
\begin{align}
    \exof[\big]{\,h\!\big(\rad\,|X|\,\delta\big)\,\bigm|\,\delta,X,Z}
    \;=\; \Phi(a)
    \;\le\; \max\{\Phi(-1),\Phi(1)\}
    \;=\; \exof[\big]{\,h\!\big(\rad\,|Z|\big)\,\bigm|\,\delta,|X|,Z}.
\end{align}
Taking expectations and using $X \stackrel{d}{=} \rad|X|$ and $Z \stackrel{d}{=} \rad|Z|$,
\begin{equation}
    \exof{h\!\big(\delta X\big)} \;\le\; \exof{h(Z)}. \label{eq:deltaX-vs-Z}
\end{equation}

Since $h$ is convex and $\exof{\delta\mid X,Z}=1/C$, we have, by Jensen's inequality,
\begin{equation}
    \exof{h\!\big(X/C\big)}
    \;=\; \exof[\big]{\,h\!\big(\exof{\delta X\mid X,Z}\big)}
    \;\le\; \exof[\big]{\,\exof{h(\delta X)\mid X,Z}\,}
    \;=\; \exof{h(\delta X)}
    \leq 
     \exof{h(Z)}\,,
\end{equation}
Finally, apply the previous inequality with the convex function $u\mapsto h(Cu)$ to obtain
\[
    \exof{h(X)} \;=\; \exof{h\!\big(C\cdot (X/C)\big)} \;\le\; \exof{h\!\big(CZ\big)}.
\]
This is exactly the desired bound.

\end{proof}
We now state a fact of sub-Gaussian random variables, which can be found in \citet[Thm.~2.6]{wainwright2019hds} for instance.
\begin{lemma}[Convex domination]
    \label{lem:dom-subG}
    Consider $X$ a zero-mean real-valued $\sigma^2$-sub-Gaussian random variable,
    which is, in addition, symmetric, \ie $X \stackrel{d}{=} -X$.
    Then, for $Z \sim \mathcal{N}(0, \sigma^2)$,
    \begin{equation}
        \probof{|X| > t} \leq 8 \probof{|Z| > t}\,.
    \end{equation}
\end{lemma}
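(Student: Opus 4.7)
The plan is to split on the magnitude of $t$ into a small-$t$ regime where the inequality is automatic and a large-$t$ regime where it follows from a Chernoff bound on $X$ combined with a Mills' ratio lower bound on the Gaussian tail.

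First I would derive the standard sub-Gaussian tail bound: applying Markov's inequality to $e^{\lambda X}$ using the MGF bound $\exof{e^{\lambda X}} \leq e^{\lambda^{2} \sigma^{2}/2}$ and optimizing at $\lambda = t/\sigma^{2}$ gives $\probof{X > t} \leq e^{-t^{2}/(2\sigma^{2})}$; the symmetry hypothesis $X \stackrel{d}{=} -X$ then yields the two-sided estimate $\probof{|X|>t} \leq 2 e^{-t^{2}/(2\sigma^{2})}$ for every $t > 0$.

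Next I would choose a threshold $t_{0}$ of order $\sigma$ and split. In the regime $t \leq t_{0}$ I use that $\probof{|Z|>t} \geq \probof{|Z|>t_{0}}$ is an explicit constant strictly larger than $1/8$ (for instance $2\Phi(-1) \approx 0.317$ at $t_{0}=\sigma$), so that $\probof{|X|>t}\leq 1 \leq 8\,\probof{|Z|>t}$ is immediate. In the regime $t \geq t_{0}$ I would combine the Chernoff bound above with the classical Mills' inequality $\probof{Z > t} \geq \tfrac{\sigma}{\sqrt{2\pi}} \cdot \tfrac{t}{t^{2}+\sigma^{2}}\, e^{-t^{2}/(2\sigma^{2})}$ to reduce the claim to a scalar inequality in the variable $u = t/\sigma \geq 1$.

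The main obstacle I anticipate is keeping the constant $8$ uniform in the large-$t$ regime: the naive quotient of the two exponential bounds carries a polynomial factor of order $u = t/\sigma$ that is not bounded, so the raw Chernoff estimate is not enough on its own. To close this gap I would exploit the symmetry of $X$ together with a $\psi_{2}$-Orlicz characterization of sub-Gaussian variables (as in Wainwright Thm.~2.6), deriving a bound of the form $\exof{e^{X^{2}/(c\sigma^{2})}} \leq C$ and reinjecting it into the tail inequality to recover the missing $\sigma/t$ prefactor in $\probof{|X|>t}$, so that the ratio to $\probof{|Z|>t}$ remains bounded by $8$ uniformly in $t$.
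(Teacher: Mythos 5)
Your diagnosis of the obstacle is exactly right, and it is fatal to the last step: the plan to ``recover the missing $\sigma/t$ prefactor'' from a $\psi_2$-Orlicz bound cannot work. An inequality $\exof{e^{X^2/(c\sigma^2)}}\le C$ yields, by the very Markov step you already used, another pure exponential tail estimate $\probof{|X|>t}\le C e^{-t^2/(c\sigma^2)}$ with no polynomial prefactor, so the ratio to $\probof{|Z|>t}$ still diverges linearly in $t$. No refinement of the sub-Gaussian hypothesis can supply that prefactor, because the displayed inequality with $Z\sim\mathcal{N}(0,\sigma^2)$ and the fixed constant $8$ is false: take $X=\pm b\sigma$ with probability $p/2$ each and $X=0$ otherwise, with $p=e^{-b^2/2}$; comparing Taylor coefficients gives $1-p+p\cosh(bu)\le e^{u^2/2}$ for all $u$, so $X$ is zero-mean, symmetric, and $\sigma^2$-sub-Gaussian, yet for $t$ just below $b\sigma$ one has $\probof{|X|>t}=p$ while $\probof{|Z|>t}\le\frac{2}{b\sqrt{2\pi}}e^{-b^2/2}$, so the ratio is at least $\frac{b\sqrt{2\pi}}{2}$, which already exceeds $8$ at $b=7$ and grows without bound.

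The version that is actually provable (and is what Wainwright's Theorem~2.6 gives, and what the paper uses downstream: in the causal-symmetrization lemma the comparison Gaussians $G_j$ are drawn from $\mathcal{N}(0,8\sigma_j^2)$, not $\mathcal{N}(0,\sigma_j^2)$) replaces $Z\sim\mathcal{N}(0,\sigma^2)$ with $Z\sim\mathcal{N}(0,\tau^2)$ for some $\tau>\sigma$. With the inflated variance your original two-regime plan closes without any Orlicz detour: for $t\le\tau$ use $\probof{|Z|>t}\ge\probof{|Z|>\tau}=2\Phi(-1)>1/8$, and for $t\ge\tau$ divide the Chernoff bound $2e^{-t^2/(2\sigma^2)}$ by the Mills lower bound for $\mathcal{N}(0,\tau^2)$: the factor $e^{-\frac{1}{2}(1/\sigma^2-1/\tau^2)t^2}$ now dominates the linear-in-$t$ term, and a one-variable maximization bounds the ratio by an absolute constant (for $\tau^2=2\sigma^2$ one gets roughly $3.1$, comfortably below $8$). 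So keep your decomposition; the fix is to inflate the Gaussian variance in the statement, not to try to sharpen the sub-Gaussian tail.
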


\begin{lemma}[Causal symmetrization]\label{lem:causal-symmetrization}
    Let $m \in \nats$ and $(\mathcal{Z}, \mathcal{A})$ be a standard Borel measurable space. Let $Z_1, \ldots, Z_m$ be $\mathcal{Z}$-valued random with natural filtration $(\mathcal{F}_i)_{i=0,\dots,m}$ 
    Let $h \from \reals \to \reals$ be convex. 

    Consider $\func \from \mathcal{Z}^m \to \reals$ be measurable. Set $S := \func(Z_1, \ldots, Z_m)$.
    For each $i \in \{1, \ldots, m\}$, assume there exists a conditionally independent resample
    \begin{equation}
        Z_i' \sim \mathrm{Law}(Z_i \mid \mathcal{F}_{i-1}), \quad Z_i' \perp\!\!\!\perp Z_i \mid \mathcal{F}_{i-1}. 
    \end{equation}
    Let $\rad_{1:m}, \rad_{1:m}'$ be independent Rademacher variables, independent of all $Z, Z'$ and $\mathcal{F}_m$.
    
    Assume there exist measurable functions $c_i \from \mathcal{Z} \times \mathcal{Z} \to [0, \infty)$, $d_i: \mathcal{Z} \to [0, \infty)$ and $J \subset \setof*{1, \ldots, m}$ such that, the following conditions hold:
    \begin{enumerate}[label=(\roman*)]
        \item For any  $i$, there exists $j(i) \in J$, such that, for any $z_{1:i-1} \in \mathcal{Z}^{i-1}$ and $z_i, z_i' \in \mathcal{Z}$,
        \begin{equation}
            \abs*{
                \exof*{S \given Z_{1:i} = z_{1:i}}
                - \exof{S \given Z_{1:i-1} = z_{1:i-1}, Z_i = z_i'}
            } \leq c_i(z_i, z_i') + d_i(z_{j(i)}) \oneof{i \not \in J}\,.
        \end{equation}
    \item For any $i \notin J$, $ \rad_i c_i(Z_i, Z_i')$ is $\sigma_i^2$-sub-Gaussian conditionally on $\mathcal{F}_{i-1}$.
            \item For any $j \in J$, $Z_j$ is independent of $\mathcal{F}_{j-1}$.
    \end{enumerate}
    Then, there are Gausssian random variables $G_j, G_j' \sim \mathcal{N}(0, 8 \sigma_j^2)$ independent and independent of all $Z, Z', \rad, \mathcal{F}_m$ such that
    \begin{equation}
        \exof{\cvxfunc(S - \exof{S})} \leq \exof*{\cvxfunc\parens*{\sum_{i \notin J} \symwrt*{j(i)}{\rad_i \parens*{\abs{G_i} + d_i(Z_{j(i)})}}
        + \sum_{j \in J} \rad_j c_j(Z_j, Z_j')}},
    \end{equation}
    where we use the notation:
    \begin{equation}
        \symwrt*{j(i)}{\rad_i \parens*{\abs{G_i} + d_i(Z_{j(i)})}} \defeq
        \rad_{j(i)}
        \parens*{\rad_i \parens*{\abs{G_i} + d_i(Z_{j(i)})}
        - \rad_i' \parens{\abs{G_i'} + d_i(Z_{j(i)}')}}\,.
    \end{equation}
\end{lemma}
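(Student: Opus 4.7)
The plan is to (i) perform a martingale decomposition of $S - \mathbb{E}[S]$ using the conditional resamples $Z_i'$, (ii) reduce to a Rademacher-symmetrized expression coordinate-wise using the bound $|A_i - A_i'| \le c_i(Z_i, Z_i') + d_i(Z_{j(i)}) \mathbf{1}\{i \notin J\}$ from assumption~(i), (iii) swap each $\rad_i c_i(Z_i, Z_i')$ for $i \notin J$ with a Gaussian via sub-Gaussian convex domination (Lemmas~\ref{lem:dom-subG} and~\ref{lem:convex-domination}), and (iv) produce the $\mathrm{Sym}_{j(i)}$ operator by a second-layer symmetrization of the residual $d_i(Z_{j(i)})$ contributions, using the independence $Z_{j(i)} \perp \mathcal{F}_{j(i)-1}$ granted by $j(i) \in J$.

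First I write $S - \mathbb{E}[S] = \sum_i D_i$ with $D_i = A_i - A_{i-1}$, $A_i = \mathbb{E}[S \mid \mathcal{F}_i] = g_i(Z_{1:i})$, and set $A_i' := g_i(Z_{1:i-1}, Z_i')$. The independence and equality in law of $Z_i'$ and $Z_i$ given $\mathcal{F}_{i-1}$ yield $\mathbb{E}[A_i' \mid \sigma(Z_{1:m})] = A_{i-1}$, hence $D_i = \mathbb{E}[A_i - A_i' \mid \sigma(Z_{1:m})]$, and Jensen's inequality gives
\[
\mathbb{E}[h(S - \mathbb{E}[S])] \le \mathbb{E}\!\left[h\!\left(\sum_{i=1}^m (A_i - A_i')\right)\right].
\]
I then process the coordinates sequentially: conditioned on $\mathcal{F}_{i-1}$ the pair $(Z_i, Z_i')$ is exchangeable, so $A_i - A_i'$ is symmetric with $|A_i - A_i'| \le c_i(Z_i, Z_i') + d_i(Z_{j(i)}) \mathbf{1}\{i \notin J\}$; introducing an independent Rademacher $\rad_i$ and using that $a \mapsto \tfrac{1}{2} h(W + a) + \tfrac{1}{2} h(W - a)$ is even and convex, hence nondecreasing in $|a|$, lets me dominate each $A_i - A_i'$ inside $h$ by $\rad_i \bigl( c_i(Z_i, Z_i') + d_i(Z_{j(i)}) \mathbf{1}\{i \notin J\} \bigr)$. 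The fact that $c_i$ and $d_i$ depend only on $(Z_i, Z_i')$ and $Z_{j(i)}$ is what keeps the other summands intact under the iterated conditioning.

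For $i \notin J$, assumption~(ii) states that $\rad_i c_i(Z_i, Z_i')$ is symmetric and $\sigma_i^2$-sub-Gaussian given $\mathcal{F}_{i-1}$. By Lemma~\ref{lem:dom-subG} its tails are dominated (up to a universal factor) by those of $\mathcal{N}(0, \sigma_i^2)$, and Lemma~\ref{lem:convex-domination}, applied to the conditionally convex map $x \mapsto h(\mathrm{rest} + x)$, converts this into a convex-moment inequality, letting me replace $\rad_i c_i(Z_i, Z_i')$ by $|G_i|$ with $G_i \sim \mathcal{N}(0, 8\sigma_i^2)$ (the $8$ absorbs the factors coming from both lemmas).

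Finally, since $d_i(Z_{j(i)})$ is not zero-mean, I add a classical symmetrization step. By assumption~(iii), $Z_{j(i)}$ is independent of $\mathcal{F}_{j(i)-1}$, so the copy $Z_{j(i)}'$ is a genuine i.i.d.\ copy of $Z_{j(i)}$; I take fresh $\rad_i', G_i'$ independent of everything. Applying the classical inequality $\mathbb{E}[h(\mathrm{rest} + X)] \le \mathbb{E}[h(\mathrm{rest} + X - X')]$ (Jensen with conditional mean zero) to $X = \rad_i(|G_i| + d_i(Z_{j(i)}))$ with independent copy $X'$, then multiplying $X - X'$ by the Rademacher $\rad_{j(i)}$, which leaves the distribution of the symmetric $X - X'$ unchanged but couples it to the $\rad_{j(i)}$ already acting on $c_{j(i)}(Z_{j(i)}, Z_{j(i)}')$ in the $j \in J$ block, yields the $\mathrm{Sym}_{j(i)}$ structure and the stated bound. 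The main obstacle is coordinating the two layers of symmetrization so that the shared $\rad_{j(i)}$ acts consistently on both the $c_{j(i)}$ term and every $d_i(Z_{j(i)})$ with $j(i) = j$; this is precisely where $j(i) \in J$ is used, since the independence of $Z_{j(i)}$ from $\mathcal{F}_{j(i)-1}$ is what makes the outer Rademacher swap measure-preserving on the joint law of $(Z_{1:m}, Z_{1:m}')$.
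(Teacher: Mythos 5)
The overall skeleton you describe (martingale decomposition plus Jensen; coordinate-wise conditional symmetrization bounded via assumption~(i); sub-Gaussian-to-Gaussian swap via \cref{lem:dom-subG,lem:convex-domination}; and a second layer of symmetrization producing the $\mathrm{Sym}_{j(i)}$ term) is the same one the paper uses. The step that establishes $\mathbb{E}[h(S-\mathbb{E} S)]\le\mathbb{E}[h(\sum_i(A_i-A_i'))]$ is sound, and the Gaussian substitution for $i\notin J$ is essentially the right idea.

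The gap is in how you order and interleave the two symmetrization layers. You write that processing coordinates ``sequentially'' and then ``finally'' adding a classical symmetrization for the $d_i(Z_{j(i)})$ terms works, on the grounds that ``$c_i$ and $d_i$ depend only on $(Z_i,Z_i')$ and $Z_{j(i)}$ [which] keeps the other summands intact under the iterated conditioning.'' That claim fails precisely when you reach an index $j\in J$: in the reverse-order pass, by the time you hit $j$, the coordinates $i>j$ with $j(i)=j$ have already been converted into terms $\rad_i\bigl(\lvert G_i\rvert + d_i(Z_j)\bigr)$, and these \emph{do} depend on $Z_j$. So when you condition on $\mathcal{F}_{j-1}$ to symmetrize $A_j - A_j'$ by $\rad_j$, the ``rest of the sum'' is not fixed as $Z_j$ varies, and the conditional-symmetry domination by $\rad_j\,c_j(Z_j,Z_j')$ does not go through. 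Postponing the treatment of the $d_i(Z_j)$ contributions to a separate ``finally'' step cannot repair this, because the obstruction occurs \emph{at} the coordinate-$j$ step, before the finally step. This is exactly why the paper's proof does not separate the two passes but instead runs a single reverse induction (its induction hypothesis tracks, for each $i$, whether $j(i)$ has been reached yet), and in the $k\in J$ case it combines $\Phi(Z_{1:k})-\Phi(Z_{1:k-1},Z_k')$ with all terms $\rad_i(\lvert G_i\rvert + d_i(Z_k))$, $j(i)=k$, into one block, injects the independent copies $\rad_i'(\lvert G_i'\rvert + d_i(\cdot))$ via a conditional-expectation identity and Jensen, and only then applies a joint convex-domination step (with $C=1$) to install the $\rad_k$; this is what produces $\mathrm{Sym}_k$. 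Your outline names the obstacle and gestures at the right fix, but the claimed separation of passes is exactly the place where a direct implementation would break.
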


\begin{proof}

    Define $\mathcal{G} = \sigma(\rad_{1:m}, G_{1:m})$.

We show the result by induction on $k$: our goal is to show that, for any $k \in \{0, \ldots, m\}$,

\begin{align}
\exof{\cvxfunc(S - \exof{S})}
&\le \ex \biggl[ h\Bigl(
   \sum_{\substack{i \notin J\\ i \ge k+1}}
      \bigl(
          \oneof{j(i) \leq k} 
          \rad_i \parens*{\abs{G_i} + d_i(Z_{j(i)})}
          + \oneof{j(i) \geq k+1}
          \symwrt*{j(i)}{\rad_i \parens*{\abs{G_i} + d_i(Z_{j(i)})}}
      \bigr)
\\
&\qquad\qquad\Bigl.
   + \sum_{\substack{i \in J\\ i \ge k+1}} \rad_i c_i(Z_i, Z_i')
   + \exof{S \given Z_{1:k}} - \exof{S}
\Bigr) \biggr] \,,
\label{eq:induction-hyp}
\end{align}

where $G_i, G_i' \sim \mathcal{N}(0, 8 \sigma_i^2)$ are independent and independent of all $Z, Z', \rad, \rad', \mathcal{F}_m$.
\Cref{eq:induction-hyp} holds trivially for $k = m$. We now show that if it holds for some $k \in \{1, \ldots, m\}$, then it also holds for $k-1$.

Note that we can rewrite
\begin{align}
   &\sum_{\substack{i \notin J\\ i \ge k+1}}
      \bigl(
          \oneof{j(i) \leq k} 
          \rad_i \parens*{\abs{G_i} + d_i(Z_{j(i)})}
          + \oneof{j(i) \geq k+1}
          \symwrt*{j(i)}{\rad_i \parens*{\abs{G_i} + d_i(Z_{j(i)})}}
      \bigr)
\\
&\quad\Bigl.
   + \sum_{\substack{i \in J\\ i \ge k+1}} \rad_i c_i(Z_i, Z_i')
   \\
    =& 
    \underbrace{\sum_{\substack{i \notin J\\ i \ge k+1}}
          \oneof{j(i) \geq k+1}
          \symwrt*{j(i)}{\rad_i \parens*{\abs{G_i} + d_i(Z_{j(i)})}}
   + \sum_{\substack{i \in J\\ i \ge k+1}} \rad_i c_i(Z_i, Z_i')}_{=:  Y_\indep}
   \\
    &\quad+ \underbrace{\sum_{\substack{i \notin J\\ i \ge k+1}}
          \oneof{j(i) \leq k} 
          \rad_i \parens*{\abs{G_i} + d_i(Z_{j(i)})}
    }_{=: Y_k}\\
    &= Y_\indep + Y_k \,,
\end{align}
where $Y_\indep$ is independent of $\mathcal{F}_k$ and $Y_k$ is $\mathcal{F}_k$-measurable.
More precisely, we show that
\begin{align}
    &\exof*{\cvxfunc\parens*{Y_\indep + Y_k
    + \exof{S \given Z_{1:k}} - \exof{S}} \given Y_\indep} \\
    &\leq 
    \ex\left[h\left(Y_\indep + Y_{k-1} + \oneof{k \notin J} \rad_k(\abs{G_k} +  d_k(Z_{j(k)}) )\right.\right.\\
    &\quad + \oneof{k \in J} (\rad_k c_k(Z_k, Z_k') \\ 
    &\quad   +\sum_{\substack{i \notin J\\i\geq k+1\\j(i) = k}} \symwrt*{k}{\rad_i \parens*{\abs{G_i} + d_i(Z_{k})}}) 
        \exof{S \given Z_{1:k-1}} - \exof{S}) \vert Y_\indep],,
\label{eq:induction-rewritten}
\end{align}
with $Y_{k-1} := \sum_{i \notin J, i\geq k+1} \rad_i \oneof{j(i) \leq k-1}\parens*{\abs{G_i} + d_i(Z_{j(i)})}$,
which will imply the induction step \cref{eq:induction-hyp} with $k \gets k-1$
by taking expectations over $Y_\indep$.
Since $Y_\indep$ is considered constant in \cref{eq:induction-rewritten}, we may assume without loss of generality that $Y_\indep = 0$, at the potential cost of replacing $h$ by $h(\cdot + Y_\indep)$, which is still convex.
Therefore, it suffices to show
\begin{align}
    &\exof*{\cvxfunc\parens*{Y_k
    + \exof{S \given Z_{1:k}} - \exof{S}} \given Y_\indep} \\
    &\leq 
    \ex\left[h\left( Y_{k-1} + \oneof{k \notin J} \rad_k(\abs{G_k} +  d_k(Z_{j(k)}) )\right.\right.\\
    &\quad + \oneof{k \in J} (\rad_k c_k(Z_k, Z_k') \\ 
    &\quad   +\sum_{\substack{i \notin J\\i\geq k+1\\j(i) = k}} \symwrt*{k}{\rad_i \parens*{\abs{G_i} + d_i(Z_{k})}}) 
        \exof{S \given Z_{1:k-1}} - \exof{S}) \vert Y_\indep],,
\label{eq:induction-rewritten-again}
\end{align}

We first consider the case of $k \notin J$.
Define $\Phi(z_{1:k}) := \exof{S \given Z_{1:k} = z_{1:k}}$. We rewrite the \ac{RHS} of \cref{eq:induction-rewritten-again} as 
\begin{align}
    &\exof*{\cvxfunc\parens*{Y_k
    + \exof{S \given Z_{1:k}} - \exof{S}} \given Y_\indep}\\
    &= \exof*{\cvxfunc\parens*{Y_k
    + \Phi(Z_{1:k}) - \exof*{\Phi(Z_{1:k-1},Z_k') \given Z_{1:k-1}} + \exof*{S \given Z_{1:k-1}} - \exof{S}} \vert Y_\indep}\\
    &= \exof*{\cvxfunc\parens*{Y_k
    + \exof*{\Phi(Z_{1:k}) - \Phi(Z_{1:k-1},Z_k') \given Z_{1:k}} + \exof*{S \given Z_{1:k-1}} - \exof{S}} \vert Y_\indep}\\
    &= \exof*{\cvxfunc\parens*{Y_k
    + \exof*{\Phi(Z_{1:k}) - \Phi(Z_{1:k-1},Z_k') \given Z_{1:k}, \mathcal{G}} + \exof*{S \given Z_{1:k-1}} - \exof{S}} \vert Y_\indep}\\
\end{align}
where we used the fact that $\exof{S \given Z_{1:k-1}} = \exof{\Phi(Z_{1:k-1},Z_k') \given Z_{1:k-1}} = \exof{\Phi(Z_{1:k_1},Z_k') \given Z_{1:k}} = \exof{\Phi(Z_{1:k-1},Z_k') \given Z_{1:k}, \mathcal{G}}$,
since $Z_k' \sim \text{Law}(Z_k \mid Z_{1:k-1})$ and $Z_k' \perp\!\!\!\perp Z_k \mid Z_{1:k-1}$ and $\mathcal{G}$ is independent of all $Z, Z'$.
Since both $Y_k$ and $\exof{S \given Z_{1:k-1}} - \exof{S}$ are $\sigma(\mathcal{F}_{k}, \mathcal{G})$-measurable, by Jensen's inequality (convexity of $h$) applied to the conditional expectation \wrt $Z_{1:k}, \mathcal{G}$, we have
\begin{align}
    &\exof*{\cvxfunc\parens*{Y_k
    + \exof{S \given Z_{1:k}} - \exof{S}} \given Y_\indep}\\
    &\leq \exof*{\cvxfunc\parens*{Y_k
    + \Phi(Z_{1:k}) - \Phi(Z_{1:k-1},Z_k') + \exof*{S \given Z_{1:k-1}} - \exof{S}} \given Y_\indep}\,.
\end{align}

Since $k \not \in J$, then $Y_k$ is $\sigma(\mathcal{F}_{k-1}, \mathcal{G})$-measurable. The following argument will now be made conditionally on $\mathcal{F}_{k-1}, \mathcal{G}, Y_\indep$. 

We have that $\Phi(Z_{1:k}) - \Phi(Z_{1:k-1},Z_k')$ is symmetric. Moreover, since $\abs{\Phi(Z_{1:k}) - \Phi(Z_{1:k-1},Z_k')} \leq c_k(Z_k, Z_k') + d_k(Z_{j(k)}))$ by assumption (i), we have that, for any $t > 0$,
\begin{align}
    &\probof*{|\Phi(Z_{1:k}) - \Phi(Z_{1:k-1},Z_k')| > t \given \mathcal{F}_{k-1}, \mathcal{G}, Y_\indep} \\ 
    &\leq \probof*{c_k(Z_k, Z_k') + d_k(Z_{j(k)}) > t \given \mathcal{F}_{k-1}, \mathcal{G}, Y_\indep} \\
    &\leq \probof*{c_k(Z_k, Z_k') > t - d_k(Z_{j(k)}) \given \mathcal{F}_{k-1}, \mathcal{G}, Y_\indep} \\
    &\leq 8 \probof*{|G_k| > t - d_k(Z_{j(k)}) \given \mathcal{F}_{k-1}, \mathcal{G}, Y_\indep}\,,
\end{align}
where we used that $ \rad_k c_k(Z_k, Z_k')$ is $\sigma_k^2$-sub-Gaussian conditionally on $\mathcal{F}_{k-1}$ by assumption (ii) and \cref{lem:dom-subG}.
Therefore, we can apply \cref{lem:convex-domination} with $X \gets \Phi(Z_{1:k}) - \Phi(Z_{1:k-1},Z_k')$ and $Z \gets \rad_k(|G_k| + d_k(Z_{j(k)}))$ with $C = 8$ conditionally on $\mathcal{F}_{k-1}, Y_\indep$ to obtain 
\begin{align}
    &\exof*{\cvxfunc\parens*{Y_k
    + \exof{S \given Z_{1:k}} - \exof{S}} \given Y_\indep}\\
    &\leq \exof*{\cvxfunc\parens*{Y_k
    + \rad_k(|G_k| + d_k(Z_{j(k)})) + \exof*{S \given Z_{1:k-1}} - \exof{S}} \given Y_\indep}\,,
\end{align}
which is \cref{eq:induction-rewritten-again} in the case $k \notin J$.

For the case $k \in J$, we use a similar argument.
We now have, as before,
\begin{align}
    \exof{S \given Z_{1:k-1}} &= \exof{\Phi(Z_{1:k-1}, Z_k') \given Z_{1:k-1}} \\
                              &= \exof{\Phi(Z_{1:k-1}, Z_k') + \sum_{\substack{i \notin J\\i\geq k+1\\j(i) = k}} \rad_i' \parens*{\abs{G_i'} + d_i(Z_{k})} \given Z_{1:k-1}} \\
                              &= \exof{\Phi(Z_{1:k-1}, Z_k') + \sum_{\substack{i \notin J\\i\geq k+1\\j(i) = k}} \rad_i' \parens*{\abs{G_i'} + d_i(Z_{k})} \given Z_{1:k}, \mathcal{G}} \,,
\end{align}
by construction.

Since both $Y_k$ and $\exof{S \given Z_{1:k-1}} - \exof{S}$ are $\sigma(\mathcal{F}_{k}, \mathcal{G})$-measurable,
by Jensen's inequality  (convexity of $h$) applied to the conditional expectation \wrt $Z_{1:k}, \mathcal{G}$, we have
\begin{align}
    &\exof*{\cvxfunc\parens*{Y_k
    + \exof{S \given Z_{1:k}} - \exof{S}} \given Y_\indep}\\
    &\leq \exof*{\cvxfunc\parens*{Y_k
    + \Phi(Z_{1:k}) - \Phi(Z_{1:k-1},Z_k') - \sum_{\substack{i \notin J\\i\geq k+1\\j(i) = k}} \rad_i' \parens*{\abs{G_i'} + d_i(Z_{k})} + \exof*{S \given Z_{1:k-1}} - \exof{S}} \given Y_\indep}\,.
\end{align}
We  write $Y_k$  as 
\begin{equation}
    Y_k = Y_{k-1} + \sum_{\substack{i \notin J\\i\geq k+1\\j(i) = k}} \rad_i \parens*{\abs{G_i} + d_i(Z_{k})} \,, 
\end{equation}
where $Y_{k-1}$ is $\sigma(\mathcal{F}_{k-1}, \mathcal{G})$-measurable and obtain,

\begin{align}
    &\exof*{\cvxfunc\parens*{Y_{k-1}
    + \exof{S \given Z_{1:k}} - \exof{S}} \given Y_\indep}\\
    &\leq \exof*{\cvxfunc\parens*{Y_{k-1}
    + \Phi(Z_{1:k}) - \Phi(Z_{1:k-1},Z_k') + \sum_{\substack{i \notin J\\i\geq k+1\\j(i) = k}} 
    \rad_i \parens*{\abs{G_i} + d_i(Z_{k})}
    -
    \rad_i' \parens*{\abs{G_i'} + d_i(Z_{k})} + \exof*{S \given Z_{1:k-1}} - \exof{S}} \given Y_\indep}\,.
\end{align}

We now make the following domination argument conditionally on $\mathcal{F}_{k-1}, Y_{k-1}, Y_\indep$.
The random variable
\begin{equation}
    \Phi(Z_{1:k}) - \Phi(Z_{1:k-1},Z_k') + \sum_{\substack{i \notin J\\i\geq k+1\\j(i) = k}} 
    \rad_i \parens*{\abs{G_i} + d_i(Z_{k})}
    -
    \rad_i' \parens*{\abs{G_i'} + d_i(Z_{k})}
\end{equation}
is symmetric and, by assumption (i) and the triangle inequality, bounded in absolute value by
\begin{equation}
    \abs*{
        \rad_k 
            c_k(Z_k, Z_k') 
            + \sum_{\substack{i \notin J\\i\geq k+1\\j(i) = k}}
            \symwrt*{k}{\rad_i \parens*{\abs{G_i} + d_i(Z_{k})}}
        }\,.
\end{equation}
Applying \cref{lem:convex-domination} conditionally on $\mathcal{F}_{k-1}, Y_{k-1}, Y_\indep$ with $C = 1$ (hence no constant appears) yields the desired result.

\end{proof}

We can now combine \cref{lem:moment-causal-symmetrization} and \cref{lem:causal-symmetrization} to obtain the main moment bound of this section.

\begin{theorem}[Causal symmetrization]\label{thm:causal-symmetrization}
    Let $m \in \nats$ and $(\mathcal{Z}, \mathcal{A})$ be a standard Borel measurable space. Let $Z_1, \ldots, Z_m$ be $\mathcal{Z}$-valued random with natural filtration $(\mathcal{F}_i)_{i=0,\dots,m}$.
    Let $\cvxfunc \from \reals \to \reals$ be convex.

    Let $\func \from \mathcal{Z}^m \to \reals$ be measurable and coordinate-wise Lipschitz with respect to cost  functions $\rho_i \from \mathcal{Z} \times \mathcal{Z} \to [0, \infty)$ such that $\rho_i(z_i,z_i) = 0$ with constants $L_i \geq 0$: for any $z, z' \in \mathcal{Z}^m$ differing only in the $i$-th coordinate,
    \begin{equation}
        |\func(z) - \func(z')| \leq L_i \rho_i(z_i, z_i').
    \end{equation}
 Set $S := \func(Z_1, \ldots, Z_m)$ and 
    
    For each $i \in \{1, \ldots, m\}$, assume there exists a conditionally independent resample
    \begin{equation}
        Z_i' \sim \mathrm{Law}(Z_i \mid \mathcal{F}_{i-1}), \quad Z_i' \perp\!\!\!\perp Z_i \mid \mathcal{F}_{i-1}. 
    \end{equation}
    Let $\rad_{1:m}, \rad_{1:m}'$ be independent Rademacher variables, independent of all $Z, Z'$ and $\mathcal{F}_m$.
    
    Assume there exist constants $c_{ik} \geq 0$, measurable functions $d_{i k}: \mathcal{Z} \to [0, \infty)$ and $J \subset \setof*{1, \ldots, m}$ such that, the following conditions hold:
    \begin{enumerate}[label=(\roman*)]
        \item For any  $i < k$, there exists $j(i) \in J$, such that, for any $z_{1:i-1} \in \mathcal{Z}^{i-1}$ and $z_i, z_i' \in \mathcal{Z}$,
        \begin{equation}
            \wass*{\rho_k}{\mathrm{Law}(Z_k \mid Z_{1:i} = z_{1:i})}{\mathrm{Law}(Z_k \mid Z_{1:i-1} = z_{1:i-1}, Z_i = z_i')}
            \leq c_{i k}\rho_i(z_i, z_i') + d_{i k}(z_{j(i)}) \oneof{i \notin J}\,.
        \end{equation}
    \item For any $i \notin J$, $ \rad_i \rho_i(Z_i, Z_i')$ is $\sigma_i^2$-sub-Gaussian conditionally on $\mathcal{F}_{i-1}$.
            \item For any $j \in J$, $Z_j$ is independent of $\mathcal{F}_{j-1}$.
    \end{enumerate}
    Then, there are Gausssian random variables $G_j, G_j' \sim \mathcal{N}(0, 8 \sigma_j^2)$ independent and independent of all $Z, Z', \rad, \mathcal{F}_m$ such that
    \begin{align}
        &\exof{\cvxfunc(S - \exof{S})}\\ &\leq \exof*{\cvxfunc\parens*{\sum_{i \notin J} \symwrt*{j(i)}{\rad_i \parens*{L_i \abs{G_i} +
        \sum_{k>i} L_k c_{i k}\abs{G_i} + L_kd_{i k}(Z_{j(i)})}}
        + \sum_{j \in J} \rad_j \parens*{
    L_j \rho_j(Z_j, Z_j')
    + \sum_{k>j} L_k c_{j k}\rho_j(Z_j, Z_j')
}}},
    \end{align}
    where we use the notation:
    \begin{align}
        &\symwrt*{j(i)}{\rad_i \parens*{
                L_i \abs{G_i} +
        \sum_{k>i} L_k c_{i k}\abs{G_i} + L_kd_{i k}(Z_{j(i)})}} \defeq \\
        &\rad_{j(i)}
        \parens*{\rad_i \parens*{
                L_i \abs{G_i} +
            \sum_{k>i} L_k c_{i k}\abs{G_i} + L_kd_{i k}(Z_{j(i)})}
        - \rad_i' \parens*{
                L_i \abs{G_i'} +
    \sum_{k>i} L_k c_{i k}\abs{G_i'} + L_kd_{i k}(Z_{j(i)})}}\,.
    \end{align}
\end{theorem}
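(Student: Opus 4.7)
The plan is to compose the two tools developed just above: \cref{lem:moment-causal-symmetrization} converts the coordinate-wise Lipschitz structure of $\func$ into a martingale-type increment bound for $S = \func(Z_{1:m})$, expressed through the Wasserstein quantities $\delta_{i,k}$; and \cref{lem:causal-symmetrization} then turns any bound of the form ``pointwise $c_i(z_i,z_i') + d_i(z_{j(i)})\oneof{i \notin J}$'' into the desired symmetrized Gaussian expression. The Wasserstein dependency condition (i) of the theorem is exactly what is needed to link the two.

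Concretely, the first step is to plug the theorem's assumption (i) into the definition of $\delta_{i,k}$ appearing in \cref{lem:moment-causal-symmetrization}, yielding
\[
\delta_{i,k}(Z_{1:i}, Z_i') \leq c_{ik}\,\rho_i(Z_i, Z_i') + d_{ik}(Z_{j(i)})\,\oneof{i \notin J}.
\]
Substituting this into the conclusion of \cref{lem:moment-causal-symmetrization} and collecting terms gives, for every $i \leq m$,
\[
\abs*{\exof{S \given \mathcal{F}_i} - \exof{\func(Z_{1:i-1}, Z_i', Z_{i+1:m}) \given \mathcal{F}_{i-1}, Z_i'}} \leq \bar c_i\, \rho_i(Z_i, Z_i') + \bar d_i(Z_{j(i)})\,\oneof{i \notin J},
\]
where $\bar c_i := L_i + \sum_{k>i} L_k c_{ik}$ and $\bar d_i(z) := \sum_{k>i} L_k d_{ik}(z)$. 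This is precisely hypothesis (i) of \cref{lem:causal-symmetrization} with the choices $c_i(z,z') := \bar c_i\, \rho_i(z, z')$ and $d_i := \bar d_i$.

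The two remaining hypotheses of \cref{lem:causal-symmetrization} are then immediate: (iii) is inherited verbatim from the theorem's assumption (iii), while (ii) holds because $\rad_i c_i(Z_i, Z_i') = \bar c_i\, \rad_i \rho_i(Z_i, Z_i')$ is conditionally sub-Gaussian with parameter $\bar c_i^{\,2}\sigma_i^2$, by the theorem's assumption (ii). Applying \cref{lem:causal-symmetrization} therefore yields a symmetrized bound involving Gaussians $\tilde G_i, \tilde G_i' \sim \gaussian(0, 8\bar c_i^{\,2}\sigma_i^2)$.

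The final step is purely cosmetic rescaling. Since $\tilde G_i \sim \gaussian(0, 8\bar c_i^{\,2}\sigma_i^2)$ has the same law as $\bar c_i G_i$ for $G_i \sim \gaussian(0, 8\sigma_i^2)$, we may substitute $\abs{\tilde G_i} = \bar c_i \abs{G_i} = L_i \abs{G_i} + \sum_{k>i} L_k c_{ik}\abs{G_i}$, and similarly for $\tilde G_i'$. Unfolding $\bar d_i(Z_{j(i)})$ on the $i \notin J$ side and $\bar c_i \rho_j(Z_j, Z_j')$ on the $j \in J$ side then reproduces verbatim the claimed bound. There is no conceptual obstacle beyond careful bookkeeping; the one point worth verifying is that the universal factor $8$ in the Gaussian variance, introduced via \cref{lem:dom-subG} and \cref{lem:convex-domination}, is unaffected by the rescaling by $\bar c_i$, so that the same $G_i$ can legitimately be reused simultaneously in the diagonal term $L_i \abs{G_i}$ and in each of the propagation terms $L_k c_{ik}\abs{G_i}$ for $k>i$.
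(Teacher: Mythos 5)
Your proof is correct and follows exactly the route the paper intends: the paper states (without detail) that \cref{thm:causal-symmetrization} is obtained by composing \cref{lem:moment-causal-symmetrization} and \cref{lem:causal-symmetrization}, and your bookkeeping---plugging assumption (i) into the $\delta_{i,k}$, aggregating into $\bar c_i = L_i + \sum_{k>i}L_k c_{ik}$ and $\bar d_i = \sum_{k>i}L_k d_{ik}$, invoking \cref{lem:causal-symmetrization} with $c_i = \bar c_i\rho_i$, $d_i = \bar d_i$, and finally rescaling the Gaussian so $\tilde G_i \stackrel{d}{=} \bar c_i G_i$---is precisely the intended argument, including the correct observation that the factor $8$ in the Gaussian variance commutes with scaling.
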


\subsection{Technical lemmas}

We will make use of the following elementary lemma.
\begin{lemma}
    \label{lem:abstract-concentration}
    Let $Z$ be a real-valued random variable. Assume there exist $c \geq 1$, $f,g \from \R \to \R_+$ non-decreasing and $p \geq 2$ integer such that, for any integer $q \in [2, p]$,
    \begin{equation}
        \exof{|Z|^q}^{1/q} \leq f(q) + c^{1/q} g(q)
    \end{equation}
    Then, for any $\delta \in (0,e^{-2}]$, with probability at least $1-\delta$,
    \begin{equation}
        \abs*{Z} \leq
        \begin{cases}
            e  f \parens*{\log(1/\delta) + 1} + g(\log(1/\delta) + 1) e  &\text{if } \delta \geq c e^{-p}\\
            \frac{f(p) + c^{1/p} g(p)}{\delta^{1/p}} &\text{if } \delta < c e^{-p}\,.
        \end{cases}
    \end{equation}
\end{lemma}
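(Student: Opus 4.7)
}
The natural approach is to convert the moment bound into a tail bound via Markov's inequality and then choose the integer exponent $q \in [2,p]$ adaptively as a function of $\delta$. Concretely, for any admissible $q$, Markov yields
\begin{equation}
    \probof*{\abs{Z} \geq t} \;\leq\; \frac{\exof{\abs{Z}^q}}{t^q} \;\leq\; \parens*{\frac{f(q) + c^{1/q} g(q)}{t}}^{q},
\end{equation}
so setting $t = (f(q) + c^{1/q} g(q))\,\delta^{-1/q}$ makes the right-hand side equal to $\delta$. This gives the template bound
\begin{equation}
    \abs{Z} \;\leq\; \bigl(f(q) + c^{1/q} g(q)\bigr)\,\delta^{-1/q}
    \qquad \text{with probability at least } 1-\delta,
\end{equation}
valid for every integer $q \in [2,p]$. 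The rest of the proof is just an optimized choice of $q$ in each of the two regimes.

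In the large-deviation regime $\delta < c\,e^{-p}$, I would simply take $q = p$: the template bound becomes $(f(p)+c^{1/p} g(p))/\delta^{1/p}$, which is exactly the stated case~(b). In the small-deviation regime $\delta \geq c\,e^{-p}$, the plan is to pick $q$ close to $\log(1/\delta)+1$ so that the factor $\delta^{-1/q} = e^{\log(1/\delta)/q}$ is at most $e$. Since the hypothesis $\delta \leq e^{-2}$ forces $\log(1/\delta)\geq 2$, the integer $q = \lfloor \log(1/\delta)\rfloor + 1$ lies in $[2,\,p]$ under the regime condition (using $c\geq 1$ to control the upper bound); it satisfies $q \geq \log(1/\delta)$, hence $\delta^{-1/q}\leq e$, while the condition $\delta \geq c e^{-p}$ combined with the choice of $q$ allows $c^{1/q}$ to be absorbed into a constant. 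Monotonicity of $f,g$ then lets me replace the argument $q$ by $\log(1/\delta)+1$, producing the bound in case~(a).

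The main (only) obstacle is the integer-rounding bookkeeping: one must verify simultaneously that the chosen $q$ is an integer in $[2,p]$ \emph{and} that each of the factors $\delta^{-1/q}$ and $c^{1/q}$ collapses to the claimed constant $e$. This is where the constraint $\delta\in(0,e^{-2}]$ is used (to get $q\geq 2$) and where the regime boundary $c e^{-p}$ enters (to get $q\leq p$ and to control $c^{1/q}$). Everything else is a direct substitution into the Markov template, so no further probabilistic ideas are required.
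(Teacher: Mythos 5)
Your overall template — Markov's inequality, then an adaptive choice of the integer exponent $q$, with $q=p$ in the large-deviation regime — is exactly the paper's strategy, and the large-deviation case is handled identically. The gap is in the small-deviation regime, specifically in the sentence claiming that "$c^{1/q}$ can be absorbed into a constant." With your choice $q = \lfloor \log(1/\delta)\rfloor + 1$, the factor $\delta^{-1/q}$ is indeed at most $e$, but the remaining factor $c^{1/q}$ is \emph{not} bounded by any universal constant. Take for instance $c = e^{100}$, $\delta = e^{-2}$, $p = 103$, so that $\delta \geq c e^{-p}$ holds; then $q = 3$ and $c^{1/q} = e^{100/3}$, which is astronomical. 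In the paper's eventual applications $c$ is of order $\nTasks$ (the number of tasks) and can be arbitrarily large, so this is not a removable edge case.

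The paper's proof avoids this by tying $q$ to $c$: it takes $q = \lceil \log(c/\delta)\rceil$. Then $(c/\delta)^{1/q} \leq e$ directly, which kills the problematic product $c^{1/q}\delta^{-1/q}$ in front of $g$, and for the $f$-term one writes $\delta^{-1/q} = (c/\delta)^{1/q}/c^{1/q} \leq e/c^{1/q} \leq e$ using $c \geq 1$. Both factors then collapse to $e$ simultaneously, which is something your split cannot achieve. The regime boundary $\delta \geq c e^{-p}$ is precisely what guarantees $\lceil \log(c/\delta)\rceil \leq p$ for this choice of $q$. As a secondary point, your choice also fails the requirement $q \leq p$ exactly at the regime boundary when $c$ is close to $1$ (e.g.\ $c=1$, $\delta = e^{-p}$ gives $q = p+1$), a corner the paper's ceiling-of-$\log(c/\delta)$ choice avoids. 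The fix is to re-do your case~(a) with $q$ indexed by $\log(c/\delta)$ rather than $\log(1/\delta)$, which is the one genuinely new idea your plan is missing.
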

\begin{proof}
    By Markov's inequality, for any integer $q \in [2, p]$,
    \begin{equation}
        \probof{|Z| \geq t} \leq \frac{\exof{|Z|^q}}{t^q} \leq \parens*{\frac{f(q) + c^{1/q} g(q)}{t}}^q\,.
    \end{equation}
    Setting the right-hand side to $\delta$ and solving for $t$ gives
    \begin{equation}
        t = \frac{f(q) + c^{1/q} g(q)}{\delta^{1/q}}\,, 
        \label{eq:abstract-concentration-intermediate}
    \end{equation}
    
    If $\delta < c e^{-p}$, we can take $q = p$ to obtain the second case of the result. If $\delta \geq c e^{-p}$, we take $q$ the smallest integer such that $q \geq \log(c/\delta)$. Note that $q$ is in $[2, p]$ and $q \leq \log(c/\delta) + 2$.

    Since $c \geq 1$ and $\delta \leq 1$, we have $\log(c/\delta) \geq 0$ and thus $\parens*{\frac{c}{\delta}}^{1/q} \leq \parens*{\frac{c}{\delta}}^{1/\log(c/\delta)} = e$.
     Plugging this into \cref{eq:abstract-concentration-intermediate} gives the bound in the first case.
\end{proof}

\revise{We state the following lemma about norm-sub-Gaussian random vectors that will be useful later.}
\begin{lemma}
\label{lem:symmetrize_lemma}
\revise{Let $X \in \R^m$ satisfy the norm-sub-Gaussian tail condition: for any $\alpha \geq 0$,
\begin{equation}
    \probof*{\norm{X - \exof{X}} \geq \alpha} \leq 2 \exp\parens*{-\frac{\alpha^2}{2\sigma^2}}\,.
\end{equation}
Then, for $X'$ an \ac{iid} copy of $X$ and $\rad$ a Rademacher random variable independent of $X, X'$, the random variable $\rad \norm{X - X'}$ is sub-Gaussian with parameter at most $64 \sigma^2$.}
\end{lemma}
\begin{proof}
    \revise{For any $\alpha \geq 0$, by the triangle inequality and a union bound,
    \begin{equation}
        \probof*{\norm{X - X'} \geq \alpha}
        \leq
        \probof*{\norm{X - \exof{X}} \geq \alpha/2}
        +
        \probof*{\norm{X' - \exof{X}} \geq \alpha/2}
        \leq
        4 \exp\parens*{-\frac{\alpha^2}{8\sigma^2}}\,.
    \end{equation}
    Since $Z \defeq \rad \norm{X - X'}$ is symmetric, this tail bound implies the scalar sub-Gaussian moment generating function bound
    \begin{equation}
        \log \exof{e^{\lambda Z}} \leq \frac{64\sigma^2 \lambda^2}{2}\,,
        \qquad \lambda \in \R\,,
    \end{equation}
    by the standard tail-to-MGF conversion for symmetric random variables \citep[see, e.g.,][Chap.~2]{wainwright2019hds}.}
\end{proof}

We will require the following chaining lemma for processes with $L^p$-Lipschitz increments. 
This result is a variant of the famous Dudley's entropy integral bound for sub-Gaussian processes, adapted to the $L^p$-Lipschitz setting. 

This lemma is a direct consequence of the general chaining theory of \citet{talagrand2022upper} (see \citet[Thm.~B.2.3]{talagrand2022upper} with $\phi(x) = x^p$). Let us also mention \citet{dirksen2015tail} refined these ideas in the context of subpexponential processes while \citet{latala2015note} further developed these tools for processes with heavier tails but still admitting a control over all moments. In our setting, the increments are assumed to be controlled only in $L^p$, which requires a different treatment of the maximal inequalities at each scale.
\begin{lemma}[Dudley--type entropy integral under $L^p$ increments]\label{lem:heavy-dudley}
Let $(X_t)_{t\in T}$ be a real-valued process indexed by a pseudometric space $(T,d)$.
Assume $T$ is totally bounded with diameter $\Delta:=\mathrm{diam}_d(T)\in(0,\infty)$ and that for some $p>1$ and $L>0$,
\begin{equation}\label{eq:Ap}
  \|X_t - X_s\|_p \;\le\; L\, d(t,s) \qquad \forall\, s,t\in T.
\end{equation}
Then
\begin{equation}\label{eq:heavy-dudley}
  \mathbb{E}\!\left[\sup_{s,t\in T}\bigl(X_t - X_{s}\bigr)\right]
  \;\le\; C\, L \int_0^{\Delta} \bigl(\covering(T,d,\varepsilon)\bigr)^{1/p}\, d\varepsilon,
\end{equation}
where $\covering(T,d,\varepsilon)$ is the $\varepsilon$-covering number and $C<\infty$ is an absolute constant.
\end{lemma}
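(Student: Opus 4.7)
The plan is to follow the classical Dudley chaining scheme, but with the sub-Gaussian maximal inequality at each scale replaced by an $L^p$ moment bound: for any $N$ random variables $Y_1, \ldots, Y_N$ in $L^p$, $\mathbb{E}[\max_i |Y_i|] \le \bigl(\sum_i \mathbb{E}[|Y_i|^p]\bigr)^{1/p} \le N^{1/p} \max_i \|Y_i\|_p$. This polynomial-in-$N$ cost, rather than the usual $\sqrt{\log N}$, is exactly what turns the standard Dudley integral $\int \sqrt{\log \covering}$ into the integral $\int \covering^{1/p}$ appearing in~\eqref{eq:heavy-dudley}.

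First I would reduce the two-sided supremum to a one-sided one: fixing any $t_0 \in T$, note $\sup_{s,t}(X_t - X_s) \le 2 \sup_{t \in T}(X_t - X_{t_0})_+$, so it suffices to bound $\mathbb{E}[\sup_{t \in T}(X_t - X_{t_0})]$. Then for each $k \ge 0$ I would fix a $2^{-k}\Delta$-net $T_k \subset T$ with $|T_k| \le \covering(T,d,2^{-k}\Delta)$, taking $T_0 = \{t_0\}$, together with selection maps $\pi_k : T \to T_k$ to a nearest point. Crucially, I would choose $\pi_{k-1}$ so that $\pi_{k-1}(t)$ depends on $t$ only through $\pi_k(t)$ (by mapping each $q \in T_k$ to its nearest point in $T_{k-1}$); this ensures that the random variables $\{X_{\pi_k(t)} - X_{\pi_{k-1}(t)} : t \in T\}$ take at most $|T_k|$ distinct values. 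The hypothesis~\eqref{eq:Ap} implies $\pi_k(t) \to t$ in $L^p$, which yields the telescoping identity $X_t - X_{t_0} = \sum_{k \ge 1}\bigl(X_{\pi_k(t)} - X_{\pi_{k-1}(t)}\bigr)$ in $L^p$.

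At each scale $k$, the triangle inequality gives $d(\pi_k(t),\pi_{k-1}(t)) \le 3 \cdot 2^{-k}\Delta$, hence $\|X_{\pi_k(t)} - X_{\pi_{k-1}(t)}\|_p \le 3L \cdot 2^{-k}\Delta$ by~\eqref{eq:Ap}. Applying the $L^p$ maximal inequality over the at most $|T_k|$ distinct increments yields
\[
\mathbb{E}\Bigl[\sup_{t \in T} |X_{\pi_k(t)} - X_{\pi_{k-1}(t)}|\Bigr] \le 3L \cdot 2^{-k}\Delta \cdot \covering(T,d,2^{-k}\Delta)^{1/p}.
\]
Summing over $k \ge 1$ and comparing the resulting dyadic series with an integral (using that $\varepsilon \mapsto \covering(T,d,\varepsilon)^{1/p}$ is non-increasing) gives $\mathbb{E}[\sup_t(X_t - X_{t_0})] \le C L \int_0^{\Delta} \covering(T,d,\varepsilon)^{1/p}\,d\varepsilon$, which is the claim up to an absolute constant.

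The main technical point, and the reason the statement attains the sharp exponent $1/p$, is the single-valuedness of $\pi_{k-1}\circ \pi_k^{-1}$ mentioned above: without it one would bound the number of distinct increments at level $k$ by $|T_k|\cdot|T_{k-1}|$ and obtain a strictly worse $\covering^{2/p}$ rate. A minor secondary issue is the exchange of $\sup_t$ with the infinite sum, which is standard: it follows by first proving the bound on a fixed increasing sequence of finite subsets $S_n \subset T$ with $S_n$ dense in the limit, using monotone convergence in $n$, and invoking $L^p$-continuity of $t \mapsto X_t$ to identify the limit with $\sup_{t \in T}$.
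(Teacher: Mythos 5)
Your proposal is a correct, self-contained chaining proof, whereas the paper does not prove the lemma at all: it simply cites \citet[Thm.~B.2.3]{talagrand2022upper} with $\phi(x)=x^p$ and leaves it at that. What you write is essentially the direct specialization of that theorem: the $L^p$ maximal inequality $\mathbb{E}[\max_i|Y_i|]\le N^{1/p}\max_i\|Y_i\|_p$ replaces the sub-Gaussian $\sqrt{\log N}$ bound, which is exactly what the paper's remark ``requires a different treatment of the maximal inequalities at each scale'' is alluding to, so your approach buys a self-contained argument at the cost of a page, while theirs buys brevity at the cost of an external reference. Your structural observation — forcing $\pi_{k-1}$ to factor through $\pi_k$ so that the level-$k$ increments take at most $|T_k|$ rather than $|T_k|\cdot|T_{k-1}|$ values, which is what preserves the exponent $1/p$ rather than $2/p$ — is correct and is the standard fix in chaining. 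One genuine slip: the inequality $\sup_{s,t}(X_t-X_s)\le 2\sup_{t}(X_t-X_{t_0})_{+}$ is false (take $t_0$ to be the pointwise argmax of $X$ and the right-hand side vanishes while the left does not). The correct reduction is the identity $\sup_{s,t}(X_t-X_s)=\sup_t(X_t-X_{t_0})+\sup_s(X_{t_0}-X_s)$, after which you bound each term by the same chaining argument (the second by applying it to the process $-X$, which satisfies the same $L^p$ increment hypothesis). This changes nothing downstream. The passage from finite to infinite $T$ via increasing finite nets and monotone convergence is a standard point and is fine as sketched, provided one reads $\sup_{t\in T}$ as the lattice supremum over a countable dense subset, which is the usual convention for separable processes.
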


\subsection{Concentration bounds for \ac{icl}}
\label{app:subsec:concentration-icl}

We now apply the moment symmetrization results to derive concentration bounds for \ac{icl} in the dependent data setting. 
These concentration bounds will then be translated into generalization bounds in the next subsection.

Let us recall ICL notations.

We denote by $\taskspace \subset \R^\taskdim$ the space of tasks $\task$ and by $\prior(\task)$ the density of the pretraining task distribution.
Given a task $\task$, the data is generated according to a task-specific distribution with density $\densityof{\cdot \given \task}$.
The training data is then generated by first sampling a task $\task$ from the task distribution $\prior$, and then sampling data points $(\sample_\run)_{\run \geq 1}$ according to
\begin{equation}
    \sample_{\run  +  1} \sim \densitywrt{\run  + 1}{\cdot \given \sample_{1:\run}, \task}\,.
\end{equation}
where $\sample_{1:\run} = (\sample_1, \ldots, \sample_\run)$.

Given a dataset of tasks $\task_1,\dots, \task_\nTasks$ and associated samples $\sample_{1:\nRuns}^{(1)},  \dots, \sample_{1:\nRuns}^{(\nTasks)}$, a model $\model$ is trained by minimizing the next-sample prediction loss
\begin{equation}
    \empLoss(\model, (\task_\runtask, \sample_{1:\nRuns}^{\runtask})_{\runtask \leq \nTasks})
    =
    \frac{1}{\nTasks \nRuns} 
    \sum_{\runtask = 1}^{\nTasks} \sum_{\run=1}^{\nRuns} \loss_\run(\model(\sample_{1:\run-1}^\runtask), \sample_\run^\runtask)\,,
\end{equation}
where $\loss_\run \from \sspace \times \sspace \to [0, +\infty)$ is a loss function at step $\run$.

We now provide a detailed version of \cref{asm:gen:dependence}.
\begin{assumption}[Weak dependence]
\label{asm:weak_dependence}
We assume that there are deterministic coefficients $(\depA_\run)_{\run \geq 1}$ and $(\depB_{\runalt, \run})_{\run\geq \runalt \geq 1}$ such that, for any $\run \geq \runalt \geq 1$, $\task, \task' \in \taskspace$, any $\sample_{1:(\runalt-1)} \in \sspace^{\runalt-1}$, and any $\sample_\run, \sample_\run' \in \sspace$,
\begin{align}
    \wass{1}{\densitywrt{\run}{d \sample_\run \given \task}}{\densitywrt{\run}{d \sample_\run' \given \task'}}
    &\leq \depA_\run \norm{\task - \task'}\\ 
    \wass{1}{\densitywrt{\run}{d \sample_\run \given \sample_{1:\runalt}, \task}}{\densitywrt{\run}{d \sample_\run' \given \sample_{1:(\runalt-1)}, \sample_{\runalt}', \task}} &\leq \depB_{\runalt, \run} \norm{\task}\,.
\end{align}
\end{assumption}
In the second assumption, the Wasserstein  distance between the conditional distributions of $\sample_\run$ given $\sample_{\runalt}$ and $\sample_{\runalt}'$ is assumed to be controlled by the norm of the task $\task$. 
This is a slight difference with \cref{asm:gen:dependence} where we assumed a dependence on $1 + \norm{\task}$. This is however without loss of generality as we can always consider $\widetilde{\task} = (1, \task) \in \R^{\taskdim + 1}$ and redefine the task distribution accordingly and this cosmetic change simplifies the presentation. 
We could also consider a dependence on $\norm{\sample_{\runalt} - \sample_{\runalt}'}$, see \cref{thm:causal-symmetrization}, but we omit this for simplicity.

We restate \cref{asm:gen:moment}.
\begin{assumption}[Finite moments of the task distribution]
    \label{asm:finite_moments_task}
    There exists $\expA \geq 2$ integer such that $\exof{\norm{\task}^{\expA}} < +\infty$.
\end{assumption}

The next three assumptions are refined versions of \cref{asm:gen:model}.
\revise{Our theory could be extended to more general assumptions on the distributions of sample, but, for simplicity, we will make the following norm-sub-Gaussian assumption on the data, conditionally on the past data and the task. Hence, this assumption does not restrict the task distribution in any way.}
\begin{assumption}[\revise{Norm-sub-Gaussian data}]
    \label{asm:sub_gaussian_data}
    \revise{There exists $\subgauss > 0$ such that, for any $\run \geq 1$, $\task \in \taskspace$, and any $\sample_{1:(\run-1)} \in \sspace^{\run-1}$, $\sample_\run \sim \densitywrt{\run}{\cdot \given \sample_{1:(\run-1)}, \task}$ satisfies the norm-sub-Gaussian tail condition, \ie, for any $\alpha \geq 0$,
    \begin{equation}
        \probwrt*{\sample_\run \sim \densitywrt{\run}{\cdot \given \sample_{1:(\run-1)}, \task}}{
            \norm*{\sample_\run -
            \exwrt*{\sample_\run \sim \densitywrt{\run}{\cdot \given \sample_{1:(\run-1)}, \task}}{\sample_\run}}
            \geq \alpha
        }
        \leq 2 \exp\parens*{-\frac{\alpha^2}{2\subgauss^2}}\,.
    \end{equation}
    }
\end{assumption}

\begin{assumption}[Lipschitz model and loss]
    \label{asm:lipschitz_model_loss}
    The models $\model \in \hypspace$ are uniformly Lipschitz in the following sense: there exists $\modelLip_\nRuns > 0$ such that,
     for any $\model \in \hypspace$, any $\sample_{1:\nRuns}$, $\sample_{\run}'$, 
            \begin{equation}
                \tfrac{1}{\nRuns} \sum_{\runalt=1}^\nRuns \norm{\model(\sample_{1:\runalt-1}) - \model(\sample_{1:\run-1}, \sample_{\run}', \sample_{\run+1:\runalt-1})} \leq \modelLip_\nRuns \norm{\sample_\run - \sample_{\run}'}\,, 
            \end{equation} 
    The losses $\loss_\run$ are uniformly 1-Lipschitz: for any $\run \geq 1$, any $\sample, \sample' \in \sspace$, 
    \begin{equation}
        |\loss_\run(\sample, \sample') - \loss_\run(\sample, \sample')| \leq \norm{\sample - \sample'}\,.
    \end{equation}
\end{assumption}

We will consider the following assumption on the function class $\hypspace$.
\begin{assumption}
\label{asm:hypclass}
Assume that the hypothesis class $\hypspace$ is bounded for \wrt some distance $\dist$ on $\hypspace$ and that, the following extended Lipschitz condition holds: for any $\hyp, \hyp' \in \hypspace$, any $\sample_{1:\nRuns}$, any $\run \geq 1$, any $\sample_{\run}'$, 
     for any $\model \in \hypspace$, any $\sample_{1:\nRuns}$, $\sample_{\run}'$, 
            \begin{align}
                &\frac{1}{\nRuns} \sum_{\runalt=1}^\nRuns \norm{
                \model(\sample_{1:\runalt-1}) - \model(\sample_{1:\run-1}, \sample_{\run}', \sample_{\run+1:\runalt-1})
                - \parens*{
                    \model'(\sample_{1:\runalt-1}) - \model'(\sample_{1:\run-1}, \sample_{\run}', \sample_{\run+1:\runalt-1})
                }
            }\\ &\leq \hypclassLip_\nRuns \norm{\sample_\run - \sample_{\run}'} \dist(\model, \model')\,. 
            \end{align} 
\end{assumption}

Note that \cref{asm:lipschitz_model_loss} is implied of \cref{asm:hypclass} when the constant function equal to zero is in $\hypspace$ with $\modelLip_\nRuns = \hypclassLip_\nRuns \sup_{\model \in \hypspace} \dist(\model, 0)$.

We denote by $\norm{X}_\expC$ the $L^\expC$ norm of a random variable $X$, \ie $\norm{X}_\expC = (\exof{\norm{X}^\expC})^{1/\expC}$.

\begin{lemma}
\label{lem:icl_moment}
For any $\expB \in [2, \expA]$ integer, under \cref{asm:weak_dependence,asm:finite_moments_task,asm:sub_gaussian_data,asm:lipschitz_model_loss}, we have
\begin{align}
    &\Bigg\|\sup_{\hyp \in \hypspace} \setof*{\exof*{\empLoss(\hyp, (\task_\runtask, \sample_{1:\nRuns}^{\runtask})_{\runtask \leq \nTasks})} -  \empLoss(\hyp, (\task_\runtask, \sample_{1:\nRuns}^{\runtask})_{\runtask \leq \nTasks})}\\
    &\quad- \exof*{
    \sup_{\hyp \in \hypspace} \setof*{\exof*{\empLoss(\hyp, (\task_\runtask, \sample_{1:\nRuns}^{\runtask})_{\runtask \leq \nTasks})} -  \empLoss(\hyp, (\task_\runtask, \sample_{1:\nRuns}^{\runtask})_{\runtask \leq \nTasks})}}\Bigg\|_\expB\\
    &\leq 
    \const \subgauss  \modelLip_{\nRuns} \sqrt{\frac{ \nRuns \expB}{\nTasks}} \\
    &\quad+\const \sqrt{\expB} \frac{\modelLip_{\nRuns}}{\sqrt{\nTasks}} 
    \sqrt{\sum_{\run=1}^\nRuns \parens*{\sum_{\runalt>\run} \depB_{\run, \runalt}}^2}
    \norm*{
        \task_1
    }_2 + 
    \const \expB^{3/2} \frac{\modelLip_{\nRuns}}{\nTasks^{1-1/\expB}}
    \sqrt{\sum_{\run=1}^\nRuns \parens*{\sum_{\runalt>\run} \depB_{\run, \runalt}}^2}
    \norm*{
        \task_1
    }_\expA \\
    &\quad +  \const \sqrt{\expB} \frac{\modelLip_{\nRuns}}{\sqrt{\nTasks}} \parens*{
        \sum_{\run=1}^\nRuns
        \depA_\run
    }
    \norm*{
    {\task_1 - \exof{\task_1}}}_2
    +
    \const \expB \frac{\modelLip_{\nRuns}}{\nTasks^{1-1/\expB}}
    \parens*{
        \sum_{\run=1}^\nRuns
        \depA_\run
    }
    \norm*{
    {\task_1 - \exof{\task_1}}}_\expA\,,
\end{align}
where $\const > 0$ is a universal constant.

\end{lemma}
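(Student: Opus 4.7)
The plan is to apply the causal symmetrization result (Theorem on causal symmetrization) to the random variable $S = \sup_{\hyp \in \hypspace}\{\exof{\empLoss(\hyp,\cdot)} - \empLoss(\hyp,\cdot)\}$, treating the data as a single ordered sequence $Z_1,\dots,Z_m$ with $m = \nTasks(\nRuns+1)$ obtained by concatenating the blocks $(\task_\runtask, \sample_1^\runtask,\dots,\sample_\nRuns^\runtask)$ for $\runtask = 1,\dots,\nTasks$. The index set $J$ will be the set of positions corresponding to the tasks $\task_\runtask$: these are independent of the past by assumption, so condition (iii) of the theorem holds automatically; the complement of $J$ holds the sample positions.

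The first step is to identify the coordinate-wise Lipschitz constants and Wasserstein coefficients. Since $\empLoss$ does not depend on $\task_\runtask$ explicitly, one gets $L_j = 0$ at task positions $j \in J$. At a sample position $i$ corresponding to $\sample_\run^\runtask$, the average-Lipschitz condition~(\cref{asm:lipschitz_model_loss}) combined with the $1$-Lipschitzness of the losses yields $L_i \leq \modelLip_\nRuns / \nTasks$, with cost $\rho_i(\sample,\sample') = \|\sample - \sample'\|$. For the Wasserstein coefficients $c_{ik}, d_{ik}$, the only nonzero contributions come from pairs $(i,k)$ lying in the same task block: by \cref{asm:weak_dependence}, swapping a sample $\sample_\run^\runtask$ affects a later sample $\sample_\runalt^\runtask$ only through the coupling $\depB_{\run,\runalt}\|\task_\runtask\|$, yielding $c_{ik}=0$ and $d_{ik}(\task_\runtask) = \depB_{\run,\runalt}\|\task_\runtask\|$ with $j(i)$ equal to the position of $\task_\runtask$; and swapping a task $\task_\runtask$ itself affects a later sample in the same block with constant $c_{jk} = \depA_\runalt$ and $d_{jk} = 0$. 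Cross-block pairs vanish by task independence. For condition (ii), the sub-Gaussian assumption~(\cref{asm:sub_gaussian_data}) combined with \cref{lem:symmetrize_lemma} gives $\sigma_i^2 = 4\sigma^2$ at each sample position.

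With these ingredients, the conclusion of Theorem~\ref{thm:causal-symmetrization} applied with $\cvxfunc(x) = |x|^\expB$ reduces the problem to computing the $L^\expB$-norm of a Rademacher chaos consisting of four types of summands: (a) diagonal sample terms $\rad_i (\modelLip_\nRuns/\nTasks)|G_i|$, summing over $\nTasks \nRuns$ positions; (b) sample cross terms of the form $\rad_i (\modelLip_\nRuns/\nTasks) \sum_{\runalt>\run} \depB_{\run,\runalt} \|\task_{\runtask}\|$, symmetrized at the task position $j(i)$; (c) task cross terms of the form $\rad_j(\modelLip_\nRuns/\nTasks)\sum_{\runalt} \depA_\runalt \|\task_\runtask - \task_\runtask'\|$; and a vanishing contribution at task positions since $L_j=0$. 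Step by step I will (i) collapse terms within each task block by the Cauchy--Schwarz inequality to produce the $\sqrt{\sum_\run(\sum_{\runalt>\run}\depB_{\run,\runalt})^2}$ factor, then (ii) decouple the task blocks using independence of $\task_\runtask$ across $\runtask$, and finally (iii) invoke Khintchine's inequality in the form $\|\sum_\runtask \rad_\runtask X_\runtask\|_\expB \leq C\sqrt{\expB} (\sum_\runtask \|X_\runtask\|_\expB^2)^{1/2}$ to reduce to moments of a single block. The Gaussian contribution in (a) becomes the $\const \subgauss \modelLip_\nRuns \sqrt{\nRuns \expB/\nTasks}$ term.

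The main obstacle lies in controlling the heavy-tailed summands (b) and (c), where the factor $\|\task_\runtask\|$ only admits an $L^\expA$ bound. Here I will replace Khintchine by a refined moment inequality for sums of independent heavy-tailed centered variables (in the spirit of Rosenthal/Latala): $\|\sum_{\runtask} \rad_\runtask X_\runtask\|_\expB \leq C\sqrt{\expB}\, \sqrt{\nTasks}\,\|X_1\|_2 + C \expB\, \nTasks^{1/\expB} \|X_1\|_\expB$, which produces the two-regime structure $\sqrt{\expB/\nTasks}$ vs.\ $\expB^{3/2}/\nTasks^{1-1/\expB}$ appearing in the statement. An additional factor of $\sqrt{\expB}$ arises from the sub-Gaussian Khintchine step in (a) being combined via the triangle inequality with the heavy-tailed Khintchine step. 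Once the $L^\expB$ norms of each block $X_\runtask$ are bounded by $\|\task_1\|_2$ and $\|\task_1\|_\expA$ (or $\|\task_1 - \exof{\task_1}\|_q$ in the task-cross-term case, where centering follows from the original symmetrization), assembling the four contributions yields the five-term bound stated in the lemma, with universal constant $\const$ absorbing the combinatorial factors from the Wasserstein theorem and Khintchine inequalities.
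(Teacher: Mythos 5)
The proposal matches the paper's proof in all essential respects: you apply the causal symmetrization theorem to the concatenated sequence $(\task_1,\sample_{1:\nRuns}^{(1)},\dots,\task_\nTasks,\sample_{1:\nRuns}^{(\nTasks)})$ with $J$ the task positions, identify $L_i=\modelLip_\nRuns/\nTasks$ at sample positions and zero at task positions, and pick up the Wasserstein coefficients $\depA$ and $\depB$ exactly as the paper does, then reduce to moment bounds on Gaussian terms, $\depB$-type cross terms, and $\depA$-type task terms, handled via a Rosenthal-type inequality (the paper's \citet[Thm.~15.11]{boucheron2005moment}). Two small inaccuracies are worth flagging. First, the $\sqrt{\sum_\run(\sum_{\runalt>\run}\depB_{\run,\runalt})^2}$ factor does not come from Cauchy--Schwarz within a block (a literal application would contribute an extraneous $\sqrt{\nRuns}$); it comes from Hoeffding's lemma or a Khintchine-type moment bound applied to the within-block Rademacher sum $\sum_\run\rad_\run\sum_{\runalt>\run}\depB_{\run,\runalt}$, exactly as the paper does. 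Second, the $\expB^{3/2}$ factor is not produced by combining terms (a) and (b) via the triangle inequality (which only adds, never multiplies); it arises \emph{within} term (b) as the product of the $\expB$ from the heavy-tailed regime of the Rosenthal bound over tasks and the $\sqrt{\expB}$ from the subGaussian moment bound on the inner Rademacher sum evaluated at order $\expB$. With these corrections your proof would align with the paper's.
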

\begin{proof}
    We apply \cref{thm:causal-symmetrization} with 
    \begin{equation}
        (Z_1, \ldots, Z_m) = (\task_1, \sample_1^{(1)}, \ldots, \sample_\nRuns^{(1)}, \ldots, \task_\nTasks, \sample_1^{(\nTasks)}, \ldots, \sample_\nRuns^{(\nTasks)})\,,
    \end{equation}
    and
    \begin{align}
        &\func(\task_1, \sample_{1:\nRuns}^{(1)}, \ldots, \task_\nTasks, \sample_{1:\nRuns}^{(\nTasks)})
        \\
        &= \sup_{\hyp \in \hypspace} 
        \setof*{
        \exof*{
        \empLoss(\hyp, (\task_\runtask, \sample_{1:\nRuns}^{\runtask})_{\runtask \leq \nTasks})
        }
        - 
        \empLoss(\hyp, (\task_\runtask, \sample_{1:\nRuns}^{\runtask})_{\runtask \leq \nTasks})
        }\\
        &= \sup_{\hyp \in \hypspace} \frac{1}{\nTasks \nRuns}
        \setof*{
            \exof*{
            \sum_{\runtask = 1}^{\nTasks} \sum_{\run=1}^{\nRuns} \loss_\run(\hyp(\sample_{1:\run-1}^\runtask), \sample_\run^\runtask)
        }
        -
        \sum_{\runtask = 1}^{\nTasks} \sum_{\run=1}^{\nRuns} \loss_\run(\hyp(\sample_{1:\run-1}^\runtask), \sample_\run^\runtask)
        }\,.
    \end{align}
By \cref{asm:lipschitz_model_loss}, $\func$ is coordinate-wise Lipschitz with respect to $\sample_\run^\runtask$ with constant $\modelLip_{\nTasks, \nRuns} \defeq \modelLip_\nRuns/\nTasks$ and formally constant with respect to $\task_\runtask$.

\revise{By \cref{lem:symmetrize_lemma} and \cref{asm:sub_gaussian_data}, $\rad_\run^\runtask \norm{\sample_\run^\runtask - \sample_\run'^\runtask}$ is $64\subgauss^2$-sub-Gaussian conditionally on $\sample_{1:(\run-1)}, \task_\runtask$, for $\rad_\run^\runtask$ a Rademacher variable independent of all data.}

We now apply \cref{thm:causal-symmetrization} with $h(x) = |x|^\expB$ for $\expB$ integer such that $2 \leq \expB \leq \expA$ and $J$ corresponding to the indices of the tasks $\task_1, \ldots, \task_\nTasks$. We obtain that
\begin{align}
    &\norm*{f - \exof{f}}_\expB\\
    &\leq 
    \norm*{
        \sum_{\runtask=1}^\nTasks \sum_{\run=1}^\nRuns 
        \symwrt*{\runtask}{\rad_\run^\runtask \parens*{
            \modelLip_{\nTasks,\nRuns} \abs{G_\run^\runtask}
            + \sum_{\runalt>\run} \modelLip_{\nTasks,\nRuns} \depB_{\run, \runalt} \norm{\task_\runtask}
    }}
    + \sum_{\runtask=1}^\nTasks \sum_{\run=1}^\nRuns
    \modelLip_{\nTasks,\nRuns} \rad_\runtask \depA_\run \norm{\task_\runtask - \task_\runtask'}
}_\expB\,,
\end{align}
where
\begin{align}
    &\symwrt*{\runtask}{\rad_\run^\runtask \parens*{
            \modelLip_{\nTasks,\nRuns} \abs{G_\run^\runtask}
            + \sum_{\runalt>\run} \modelLip_{\nTasks,\nRuns} \depB_{\run, \runalt} \norm{\task_\runtask}
    }} \defeq \\
    &\rad_\runtask \parens*{
        \rad_\run^\runtask \parens*{
            \modelLip_{\nTasks,\nRuns} \abs{G_\run^\runtask}
            + \sum_{\runalt>\run} \modelLip_{\nTasks,\nRuns} \depB_{\run, \runalt} \norm{\task_\runtask}
        }
        - 
        \rad_\run^\runtask' \parens*{
            \modelLip_{\nTasks,\nRuns} \abs{G_\run^\runtask'}
            + \sum_{\runalt>\run} \modelLip_{\nTasks,\nRuns} \depB_{\run, \runalt} \norm{\task_\runtask}
        }
    }\,,
\end{align}
\revise{and $G_\run^\runtask, G_\run'^\runtask \sim \mathcal{N}(0, 512 \subgauss^2)$ independent of all data and Rademacher variables.}

Using Minkowski's inequality, we have
\begin{align}
    &\norm*{f - \exof{f}}_\expB \\
    &\leq
    \label{eq:icl_moment_decomposition_1}
    \norm*{
        \sum_{\runtask=1}^\nTasks \rad_\runtask\sum_{\run=1}^\nRuns 
        \modelLip_{\nTasks,\nRuns} \parens*{
            \rad_\run^\runtask \abs{G_\run^\runtask}
            - 
            \rad_\run^\runtask' \abs{G_\run^\runtask'}
        }
    }_\expB \\
    &\quad +
    \label{eq:icl_moment_decomposition_2}
    \norm*{
        \sum_{\runtask=1}^\nTasks \rad_\runtask \parens*{\norm{\task_\runtask}
        \sum_{\run=1}^\nRuns 
        \modelLip_{\nTasks,\nRuns} \sum_{\runalt>\run} \depB_{\run, \runalt} \rad_\run^\runtask
        - \norm{\task_\runtask'} \sum_{\run=1}^\nRuns
        \modelLip_{\nTasks,\nRuns} \sum_{\runalt>\run} \depB_{\run, \runalt} \rad_\run^\runtask'
        }
    }_\expB \\
    &\quad +
    \label{eq:icl_moment_decomposition_3}
    \norm*{
        \sum_{\runtask=1}^\nTasks \rad_\runtask {
        \norm{\task_\runtask
            -
        \task_\runtask'}
        }
        \sum_{\run=1}^\nRuns
        \modelLip_{\nTasks,\nRuns} \depA_\run
    }_\expB \,.
\end{align}

We  now bound each term \cref{eq:icl_moment_decomposition_1,eq:icl_moment_decomposition_2,eq:icl_moment_decomposition_3} separately.

We begin with \cref{eq:icl_moment_decomposition_1}. 
By independence of the Rademacher variables and the Gaussian variables, we have that
\cref{eq:icl_moment_decomposition_1} can be rewritten as 
\begin{align}
    \Cref{eq:icl_moment_decomposition_1}
    &=
    \sqrt{2} \modelLip_{\nTasks,\nRuns}
    \norm*{
        \sum_{\runtask=1}^\nTasks 
        \sum_{\run=1}^\nRuns
        G_\run^\runtask
    }_\expB \\
    &=
    8 \subgauss  \modelLip_{\nTasks,\nRuns} \sqrt{\nTasks \nRuns}
    \norm{G}_\expB\,,
    \label{eq:icl_moment_decomposition_1_rewritten}
\end{align}
where $G \sim \mathcal{N}(0, 1)$.
Using standard bounds on sub-Gaussian random variables, we have that $\norm{G}_\expB \leq \const \sqrt{\expB}$ for some universal constant $\const > 0$ (see e.g. \citet[Chap.~2]{vershynin2018high}).
Hence, we have
\begin{equation}
    \Cref{eq:icl_moment_decomposition_1} \leq \const \subgauss \modelLip_{\nTasks,\nRuns} \sqrt{\nTasks \nRuns \expB}\,,
    \label{eq:icl_moment_decomposition_1_bound}
\end{equation}
for some universal constant $\const > 0$.

We now turn to \cref{eq:icl_moment_decomposition_2}.
By \citet[Thm.~15.11]{boucheron2005moment}, applied to each independent and zero-mean term
\begin{equation}
    \rad_\runtask
    \parens*{
    \norm{\task_\runtask}
    \sum_{\run=1}^\nRuns
        \rad_\run^\runtask \sum_{\runalt>\run} \depB_{\run, \runalt}
        - \norm{\task_\runtask'}
        \sum_{\run=1}^\nRuns
        {\rad_\run^\runtask}' \sum_{\runalt>\run} \depB_{\run, \runalt}
    }\,,
\end{equation}
we have
\begin{align}
    \Cref{eq:icl_moment_decomposition_2} 
    &\leq
    \const \sqrt{\expB} \modelLip_{\nTasks,\nRuns} \sqrt{\nTasks}
    \norm*{
        \norm{\task_1}
        \sum_{\run=1}^\nRuns
        \rad_\run^1 \sum_{\runalt>\run} \depB_{\run, \runalt}
        - \norm{\task_1'}
        \sum_{\run=1}^\nRuns
        {\rad_\run^1}' \sum_{\runalt>\run} \depB_{\run, \runalt}
    }_2\\
    &+ \quad
    \const \expB \modelLip_{\nTasks,\nRuns} \nTasks^{1/\expB}
    \norm*{
        \norm{\task_1}
        \sum_{\run=1}^\nRuns
        \rad_\run^1 \sum_{\runalt>\run} \depB_{\run, \runalt}
        - \norm{\task_1'}
        \sum_{\run=1}^\nRuns
        {\rad_\run^1}' \sum_{\runalt>\run} \depB_{\run, \runalt}
    }_\expB\,,
\end{align}
where $\const > 0$ is a universal constant.

Using Minkowski's inequality again, we have
\begin{align}
    \Cref{eq:icl_moment_decomposition_2} 
    &\leq
    \const \sqrt{\expB} \modelLip_{\nTasks,\nRuns} \sqrt{\nTasks}
    \norm*{
        \norm{\task_1}
        \sum_{\run=1}^\nRuns
        \rad_\run^1 \sum_{\runalt>\run} \depB_{\run, \runalt}
    }_2\\
    &+ \quad
    \const \expB \modelLip_{\nTasks,\nRuns} \nTasks^{1/\expB}
    \norm*{
        \norm{\task_1}
        \sum_{\run=1}^\nRuns
        \rad_\run^1 \sum_{\runalt>\run} \depB_{\run, \runalt}
    }_\expB\\
    &\leq
    \const \sqrt{\expB} \modelLip_{\nTasks,\nRuns} \sqrt{\nTasks}
    \norm{\task_1}_2
    \norm*{
        \sum_{\run=1}^\nRuns
        \rad_\run^1 \sum_{\runalt>\run} \depB_{\run, \runalt}
    }_2\\
    &+ \quad
    \const \expB \modelLip_{\nTasks,\nRuns} \nTasks^{1/\expB}
    \norm{\task_1}_\expB
    \norm*{
        \sum_{\run=1}^\nRuns
        \rad_\run^1 \sum_{\runalt>\run} \depB_{\run, \runalt}
    }_\expB\,,
    \label{eq:icl_moment_decomposition_2_intermediate}
\end{align}
where we used that $\task_1$ and $(\rad_\run^1)_{\run \geq 1}$ are independent.
Now, $\sum_{\run=1}^\nRuns \rad_\run^1 \sum_{\runalt>\run} \depB_{\run, \runalt}$ is a zero-mean sub-Gaussian random variable with parameter $\sum_{\run=1}^\nRuns \parens*{\sum_{\runalt>\run} \depB_{\run, \runalt}}^2$ by Hoeffding's lemma (see e.g. \citet[Exercise~2.4]{wainwright2019hds}) and we have, for some universal constant $\const > 0$, for any integer $\expC$
\begin{equation}
    \norm*{
        \sum_{\run=1}^\nRuns
        \rad_\run^1 \sum_{\runalt>\run} \depB_{\run, \runalt}
    }_\expC \leq \const \sqrt{\expC} \parens*{\sum_{\run=1}^\nRuns \parens*{\sum_{\runalt>\run} \depB_{\run, \runalt}}^2}^{1/2}\,.
    \label{eq:icl_moment_decomposition_2_subgaussian_bound}
\end{equation}
Plugging this into \cref{eq:icl_moment_decomposition_2_intermediate} with $\expC = 2$ and $\expC = \expB$ gives
\begin{align}
    \Cref{eq:icl_moment_decomposition_2} 
    &\leq
    \const \sqrt{\expB} \modelLip_{\nTasks,\nRuns} \sqrt{\nTasks}
    \sqrt{\sum_{\run=1}^\nRuns \parens*{\sum_{\runalt>\run} \depB_{\run, \runalt}}^2}
    \norm*{
        \task_1
    }_2 + 
    \const \expB^{3/2} \modelLip_{\nTasks,\nRuns} \nTasks^{1/\expB}
    \sqrt{\sum_{\run=1}^\nRuns \parens*{\sum_{\runalt>\run} \depB_{\run, \runalt}}^2}
    \norm*{
        \task_1
    }_\expB\\
    &\leq
    \const \sqrt{\expB} \modelLip_{\nTasks,\nRuns} \sqrt{\nTasks}
    \sqrt{\sum_{\run=1}^\nRuns \parens*{\sum_{\runalt>\run} \depB_{\run, \runalt}}^2}
    \norm*{
        \task_1
    }_2 + 
    \const \expB^{3/2} \modelLip_{\nTasks,\nRuns} \nTasks^{1/\expB}
    \sqrt{\sum_{\run=1}^\nRuns \parens*{\sum_{\runalt>\run} \depB_{\run, \runalt}}^2}
    \norm*{
        \task_1
    }_\expA\\
    \label{eq:icl_moment_decomposition_2_bound}
\end{align}
where we used that $\expB \leq \expA$ to obtain the last inequality.

Finally, we proceed similarly for \cref{eq:icl_moment_decomposition_3}.
By \citet[Thm.~15.11]{boucheron2005moment} applied to each independent and zero-mean term
\begin{equation}
    \rad_\runtask
    \norm{\task_\runtask - \task_\runtask'}
    \sum_{\run=1}^\nRuns
    \modelLip_{\nTasks,\nRuns} \depA_\run\,,
\end{equation}
we have
\begin{align}
    \Cref{eq:icl_moment_decomposition_3} 
    &\leq
    \const \sqrt{\expB} \modelLip_{\nTasks,\nRuns} \sqrt{\nTasks} \parens*{
        \sum_{\run=1}^\nRuns
        \depA_\run
    }
    \norm*{
        {\task_1 - \task_1'}
    }_2 + 
    \const \expB \modelLip_{\nTasks,\nRuns} \nTasks^{1/\expB}
    \parens*{
        \sum_{\run=1}^\nRuns
        \depA_\run
    }
    \norm*{
        {\task_1 - \task_1'}
    }_\expB
    \label{eq:icl_moment_decomposition_3_intermediate}
    \\
    &\leq 
    \const \sqrt{\expB} \modelLip_{\nTasks,\nRuns} \sqrt{\nTasks} \parens*{
        \sum_{\run=1}^\nRuns
        \depA_\run
    }
    \norm*{
    {\task_1 - \exof{\task_1}}}_2
    +
    \const \expB \modelLip_{\nTasks,\nRuns} \nTasks^{1/\expB}
    \parens*{
        \sum_{\run=1}^\nRuns
        \depA_\run
    }
    \norm*{
    {\task_1 - \exof{\task_1}}}_\expA\,,
    \label{eq:icl_moment_decomposition_3_bound}
\end{align}
where we use Minkowski's inequality and the fact that $\expB \leq \expA$ to obtain the last inequality.

Combining \cref{eq:icl_moment_decomposition_1_bound,eq:icl_moment_decomposition_2_bound,eq:icl_moment_decomposition_3_bound} and replacing $\modelLip_{\nTasks,\nRuns}$ by $\modelLip_\nRuns/\nTasks$ gives the result.
\end{proof}

\begin{proposition}[Concentration bound for ICL]
\label{prop:icl-concentration}
Under \cref{asm:weak_dependence,asm:finite_moments_task,asm:sub_gaussian_data,asm:lipschitz_model_loss}, for any $\delta \in (0,e^{-2}]$, with probability at least $1-\delta$,
\begin{align}
    &\abs*{\sup_{\hyp \in \hypspace} \setof*{\exof*{\empLoss(\hyp, (\task_\runtask, \sample_{1:\nRuns}^{\runtask})_{\runtask \leq \nTasks})} - { \empLoss(\hyp, (\task_\runtask, \sample_{1:\nRuns}^{\runtask})_{\runtask \leq \nTasks})}}
        - \exof*{
    \sup_{\hyp \in \hypspace} \setof*{\exof*{\empLoss(\hyp, (\task_\runtask, \sample_{1:\nRuns}^{\runtask})_{\runtask \leq \nTasks})} - { \empLoss(\hyp, (\task_\runtask, \sample_{1:\nRuns}^{\runtask})_{\runtask \leq \nTasks})}}}}
\end{align}
is bounded by
\begin{enumerate}[label=(\alph*)]
    \item If $\thresh \geq \nTasks e^{-\expA}$, 
    {
        \renewcommand{\expB}{(\log(\nTasks/\thresh) + 1)}
        \begin{align}
    &\const \subgauss  \frac{\modelLip_{\nRuns}}{\sqrt{\nTasks}} \sqrt{ \nRuns \expB} \\
    &+\const \sqrt{\expB} \frac{\modelLip_{\nRuns}}{\sqrt{\nTasks}} 
    \sqrt{\sum_{\run=1}^\nRuns \parens*{\sum_{\runalt>\run} \depB_{\run, \runalt}}^2}
    \norm*{
        \task_1
    }_2 + 
    \const \expB^{3/2} \frac{\modelLip_{\nRuns}}{\nTasks}
    \sqrt{\sum_{\run=1}^\nRuns \parens*{\sum_{\runalt>\run} \depB_{\run, \runalt}}^2}
    \norm*{
        \task_1
    }_\expA \\
    & +  \const \sqrt{\expB} \frac{\modelLip_{\nRuns}}{\sqrt{\nTasks}} \parens*{
        \sum_{\run=1}^\nRuns
        \depA_\run
    }
    \norm*{
    {\task_1 - \exof{\task_1}}}_2
    +
    \const \expB \frac{\modelLip_{\nRuns}}{\nTasks}
    \parens*{
        \sum_{\run=1}^\nRuns
        \depA_\run
    }
    \norm*{
    {\task_1 - \exof{\task_1}}}_\expA
        \end{align}
}
    \item If $\thresh < \nTasks e^{-\expA}$,
    {
        \renewcommand{\expB}{\expA}
        \begin{align}
        &\frac{1}{\thresh^{1/\expA}}\bigg(
    \const \subgauss  \modelLip_{\nTasks,\nRuns} \sqrt{\frac{ \nRuns \expB}{\nTasks}} \\
    &+\const \sqrt{\expB} \frac{\modelLip_{\nRuns}}{\sqrt{\nTasks}} 
    \sqrt{\sum_{\run=1}^\nRuns \parens*{\sum_{\runalt>\run} \depB_{\run, \runalt}}^2}
    \norm*{
        \task_1
    }_2 + 
    \const \expB^{3/2} \frac{\modelLip_{\nRuns}}{\nTasks^{1-1/\expB}}
    \sqrt{\sum_{\run=1}^\nRuns \parens*{\sum_{\runalt>\run} \depB_{\run, \runalt}}^2}
    \norm*{
        \task_1
    }_\expA \\
    & +  \const \sqrt{\expB} \frac{\modelLip_{\nRuns}}{\sqrt{\nTasks}} \parens*{
        \sum_{\run=1}^\nRuns
        \depA_\run
    }
    \norm*{
    {\task_1 - \exof{\task_1}}}_2
    +
    \const \expB \frac{\modelLip_{\nRuns}}{\nTasks^{1-1/\expB}}
    \parens*{
        \sum_{\run=1}^\nRuns
        \depA_\run
    }
    \norm*{
    {\task_1 - \exof{\task_1}}}_\expA \bigg)
        \end{align}
}
\end{enumerate}
\end{proposition}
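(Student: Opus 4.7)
The plan is to convert the $L^\expB$ moment estimate of \cref{lem:icl_moment} into a tail bound via the abstract moment-to-tail lemma \cref{lem:abstract-concentration}. Let $Z$ denote the centered supremum deviation appearing on the left-hand side of the proposition. Then \cref{lem:icl_moment} provides, for every integer $\expB \in [2,\expA]$, a bound of the shape
\[
\|Z\|_\expB \;\leq\; f(\expB) \;+\; \nTasks^{1/\expB}\, g(\expB),
\]
which is precisely the structure required by \cref{lem:abstract-concentration} with $c = \nTasks \geq 1$ and $p = \expA$.

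The first step is to split the five terms in the moment bound into those whose $\nTasks$-dependence is $\nTasks^{-1/2}$ and those whose $\nTasks$-dependence is $\nTasks^{1/\expB - 1}$. The former (the sub-Gaussian term together with the contributions involving $\|\task_1\|_2$ and $\|\task_1 - \exof{\task_1}\|_2$) are collected into $f(\expB)$ and scale as $\sqrt{\expB}$; the latter (the contributions involving $\|\task_1\|_\expA$ and $\|\task_1 - \exof{\task_1}\|_\expA$), after pulling out the factor $\nTasks^{1/\expB}$, form $g(\expB)$ with a $\modelLip_\nRuns/\nTasks$ prefactor and scale as $\expB^{3/2}$ or $\expB$. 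Both $f$ and $g$ are manifestly non-decreasing in $\expB$ on $[2,\expA]$, since the task-norm quantities are $\expB$-independent and the remaining dependence is monomial.

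The second step is to invoke \cref{lem:abstract-concentration} with $p = \expA$, $c = \nTasks$, and the $f,g$ just identified. The two cases of that lemma map directly onto the two cases of the proposition: when $\thresh \geq \nTasks e^{-\expA}$, the conclusion $|Z| \leq e\, f(\log(\nTasks/\thresh)+1) + e\, g(\log(\nTasks/\thresh)+1)$ reproduces bound (a) upon substituting $\expB = \log(\nTasks/\thresh) + 1$, with the factor $\nTasks^{1/\expB}$ absorbed into $e$ at this value of $\expB$; when $\thresh < \nTasks e^{-\expA}$, the conclusion $|Z| \leq (f(\expA) + \nTasks^{1/\expA} g(\expA))/\thresh^{1/\expA}$ reproduces bound (b), since $\nTasks^{1/\expA - 1}$ is exactly $1/\nTasks^{1-1/\expA}$ appearing in the denominators of the statement.

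I do not expect any genuine obstacle: this is a bookkeeping reduction of an already-established moment inequality to the abstract moment-to-tail lemma. The only delicate point is to group the five terms of \cref{lem:icl_moment} correctly into the $f + c^{1/\expB} g$ split, so that substituting $\expB = \log(\nTasks/\thresh)+1$ (case a) or $\expB = \expA$ (case b) yields exactly the forms stated in the proposition, including the correct powers of $\expB$ (such as the $\expB^{3/2}$ multiplying the weak-dependence quantity $\sqrt{\sum_\run (\sum_{\runalt > \run} \depB_{\run,\runalt})^2}$) and the correct exponents of $\nTasks$.
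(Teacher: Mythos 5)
Your proposal is correct and follows essentially the same route as the paper's proof: it applies \cref{lem:abstract-concentration} with $p=\expA$ and $c=\nTasks$ to the moment bound of \cref{lem:icl_moment}, grouping the three $\nTasks^{-1/2}$ terms into $f(\expB)$ and the two $\nTasks^{1/\expB-1}$ terms (after extracting $\nTasks^{1/\expB}$) into $g(\expB)$, then reads off cases (a) and (b) from the two regimes of the lemma. The only thing to watch is that $f$ and $g$ be genuinely non-decreasing in $\expB$, which you note holds since their dependence on $\expB$ is monomial; the paper's own proof does exactly this splitting (modulo a harmless notational choice of whether the $\nTasks^{1/\expB}$ factor is kept inside $g$ or factored out as $c^{1/\expB}$).
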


\begin{proof}
We apply \cref{lem:abstract-concentration} to the moment bound from \cref{lem:icl_moment}. 

For \cref{lem:abstract-concentration}, we use:
\begin{align}
    f(\expB) &= \const \subgauss \modelLip_{\nRuns} \sqrt{\frac{\nRuns \expB}{\nRuns}} + 
    \const \sqrt{\expB} \frac{\modelLip_{\nRuns}}{\sqrt{\nTasks}} 
    \sqrt{\sum_{\run=1}^\nRuns \parens*{\sum_{\runalt>\run} \depB_{\run, \runalt}}^2}
    \norm*{\task_1}_2 
    +
    \const \sqrt{\expB} \frac{\modelLip_{\nRuns}}{\sqrt{\nTasks}} \parens*{\sum_{\run=1}^\nRuns \depA_\run} \norm*{\task_1 - \exof{\task_1}}_2\\
    g(\expB) &= \const \expB^{3/2} \frac{\modelLip_{\nRuns}}{\nTasks^{1-1/\expB}}
    \sqrt{\sum_{\run=1}^\nRuns \parens*{\sum_{\runalt>\run} \depB_{\run, \runalt}}^2}
    \norm*{\task_1}_\expA + \const \expB \frac{\modelLip_{\nRuns}}{\nTasks^{1-1/\expB}} \parens*{\sum_{\run=1}^\nRuns \depA_\run} \norm*{\task_1 - \exof{\task_1}}_\expA\,.
\end{align}

Applying \cref{lem:abstract-concentration} then gives the desired concentration bound.
\end{proof}

\subsection{Complexity bounds for ICL}

We now derive bounds for the analogue of the Rademacher complexity term in our setting.
We will again rely on \cref{thm:causal-symmetrization}.

\begin{lemma}
\label{lem:rademacher_icl_moment}
Under \cref{asm:weak_dependence,asm:finite_moments_task,asm:sub_gaussian_data,asm:lipschitz_model_loss,asm:hypclass}, we have
\begin{align}
    &\exof*{\sup_{\hyp \in \hypspace} \exof*{\empLoss(\hyp, (\task_\runtask, \sample_{1:\nRuns}^{\runtask})_{\runtask \leq \nTasks})} - {\empLoss(\hyp, (\task_\runtask, \sample_{1:\nRuns}^{\runtask})_{\runtask \leq \nTasks})}} \\
    &\leq 
    \const \dudley(\hypspace, \dist, \expA) 
    \Bigg(
    \subgauss  \hypclassLip_{\nRuns} \sqrt{\frac{ \nRuns \expA}{\nTasks}} \\
    &\quad+\const \sqrt{\expA} \frac{\hypclassLip_{\nRuns}}{\sqrt{\nTasks}} 
    \sqrt{\sum_{\run=1}^\nRuns \parens*{\sum_{\runalt>\run} \depB_{\run, \runalt}}^2}
    \norm*{
        \task_1
    }_2 + 
    \expA^{3/2} \frac{\hypclassLip_{\nRuns}}{\nTasks^{1-1/\expA}}
    \sqrt{\sum_{\run=1}^\nRuns \parens*{\sum_{\runalt>\run} \depB_{\run, \runalt}}^2}
    \norm*{
        \task_1
    }_\expA \\
    &\quad +  \sqrt{\expA} \frac{\hypclassLip_{\nRuns}}{\sqrt{\nTasks}} \parens*{
        \sum_{\run=1}^\nRuns
        \depA_\run
    }
    \norm*{
    {\task_1 - \exof{\task_1}}}_2
    +
    \const \expA \frac{\hypclassLip_{\nRuns}}{\nTasks^{1-1/\expA}}
    \parens*{
        \sum_{\run=1}^\nRuns
        \depA_\run
    }
    \norm*{
    {\task_1 - \exof{\task_1}}}_\expA\Bigg)\,,
\end{align}
where $\const > 0$ is a universal constant and where the Dudley-type integral $\dudley_{\dist}(\hypspace)$ is defined as
\begin{equation}
    \dudley(\hypspace, \dist, \expA) = \int_0^{\Delta} \parens*{\covering(\hypspace, \dist, u)}^{1/\expA} d u\,, \quad\text{ with }\Delta = \diam_{\dist}(\hypspace) = \sup_{\hyp, \hyp' \in \hypspace} \dist(\hyp, \hyp')\,.
\end{equation}

\end{lemma}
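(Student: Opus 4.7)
The plan is to combine a standard centering argument with the $L^\expA$--chaining bound of \cref{lem:heavy-dudley}, by treating $\func_\hyp \defeq \exof*{\empLoss(\hyp, (\task_\runtask, \sample_{1:\nRuns}^{\runtask})_{\runtask \leq \nTasks})} - \empLoss(\hyp, (\task_\runtask, \sample_{1:\nRuns}^{\runtask})_{\runtask \leq \nTasks})$ as a stochastic process indexed by $\hyp \in \hypspace$. Since $\exof{\func_\hyp} = 0$ for every $\hyp$, fixing an arbitrary $\hyp_0 \in \hypspace$ gives
\[
\exof*{\sup_{\hyp \in \hypspace} \func_\hyp} \;=\; \exof*{\sup_{\hyp \in \hypspace}(\func_\hyp - \func_{\hyp_0})} \;\leq\; \exof*{\sup_{\hyp, \hyp' \in \hypspace}(\func_\hyp - \func_{\hyp'})},
\]
so it suffices to control the expected diameter of the centered process.

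The key step is to establish the $L^\expA$ increment estimate
\[
\norm{\func_\hyp - \func_{\hyp'}}_\expA \;\leq\; L \cdot \dist(\hyp, \hyp') \qquad \text{for all } \hyp, \hyp' \in \hypspace,
\]
where $L$ is precisely the constant in parenthesis in the statement of the lemma (with $\modelLip_\nRuns$ replaced by $\hypclassLip_\nRuns$). To prove this, I re-run the argument of \cref{lem:icl_moment} applied to the \emph{fixed} function $f \defeq \func_\hyp - \func_{\hyp'}$ rather than to a supremum. The extended Lipschitz condition of \cref{asm:hypclass}, combined with the $1$-Lipschitzness of $\loss_\run$ from \cref{asm:lipschitz_model_loss}, guarantees that $f$ is coordinate-wise Lipschitz in each $\sample_\run^\runtask$ with constant $\hypclassLip_\nRuns \dist(\hyp, \hyp')/\nTasks$ and is formally constant in each $\task_\runtask$. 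Applying \cref{thm:causal-symmetrization} with $\cvxfunc(x) = |x|^\expA$ and with $J$ indexing the $\task_\runtask$ coordinates then reproduces the symmetrized upper bound of \cref{lem:icl_moment} line by line, with $\modelLip_\nRuns$ replaced throughout by $\hypclassLip_\nRuns \dist(\hyp, \hyp')$; the Minkowski, Khintchine and sub-Gaussian estimates for the three telescoped terms \labelcref{eq:icl_moment_decomposition_1,eq:icl_moment_decomposition_2,eq:icl_moment_decomposition_3} transfer verbatim.

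With the $L^\expA$--Lipschitz bound in hand, \cref{lem:heavy-dudley} applied with $p = \expA$, pseudometric $\dist$, Lipschitz constant $L$, and diameter $\Delta = \diam_\dist(\hypspace)$ yields
\[
\exof*{\sup_{\hyp, \hyp' \in \hypspace}(\func_\hyp - \func_{\hyp'})} \;\leq\; \const \, L \int_0^\Delta \bigl(\covering(\hypspace, \dist, u)\bigr)^{1/\expA} du \;=\; \const \, L \, \dudley(\hypspace, \dist, \expA),
\]
which is exactly the claimed bound after expanding $L$.

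The main obstacle is the verification of the increment estimate: one must check carefully that the proof of \cref{lem:icl_moment} goes through when the inner function is the signed difference $\func_\hyp - \func_{\hyp'}$ rather than a supremum, and in particular that \cref{asm:hypclass} furnishes exactly the right Lipschitz constant so that the factor $\dist(\hyp, \hyp')$ factors out of every one of the three symmetrized contributions before the Rademacher/Gaussian moment estimates are applied. Once this is done, invoking \cref{lem:heavy-dudley} is routine.
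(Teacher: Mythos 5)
Your proposal is correct and follows essentially the same route as the paper's proof: establish the $L^\expA$ increment estimate $\norm{\func_\hyp - \func_{\hyp'}}_\expA \leq L\, \dist(\hyp,\hyp')$ by re-running the symmetrization argument of \cref{lem:icl_moment} on the difference $\func_\hyp - \func_{\hyp'}$ (using \cref{asm:hypclass} in place of \cref{asm:lipschitz_model_loss} to obtain the coordinate-wise Lipschitz constant $\hypclassLip_\nRuns\dist(\hyp,\hyp')/\nTasks$), then invoke the $L^p$-chaining bound of \cref{lem:heavy-dudley}, and finish with the centering trick $\exof{\sup_\hyp \func_\hyp} \le \exof{\sup_{\hyp,\hyp'}(\func_\hyp - \func_{\hyp'})}$ since each $\func_\hyp$ has zero mean. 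The only cosmetic difference is that the paper does the centering step at the very end while you state it up front, which is immaterial.
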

\begin{proof}
    The main idea of the proof is to use \cref{lem:heavy-dudley} and to rely on \cref{thm:causal-symmetrization} to control the moments of the increments of the process $\sup_{\hyp \in \hypspace} \empLoss(\hyp, (\task_\runtask, \sample_{1:\nRuns}^{\runtask})_{\runtask \leq \nTasks}) - \exof*{\empLoss(\hyp, (\task_\runtask, \sample_{1:\nRuns}^{\runtask})_{\runtask \leq \nTasks})}$.
    Fix $\hyp, \hyp' \in \hypspace$.
    We apply \cref{thm:causal-symmetrization} with  
    \begin{equation}
        (Z_1, \ldots, Z_m) = (\task_1, \sample_1^{(1)}, \ldots, \sample_\nRuns^{(1)}, \ldots, \task_\nTasks, \sample_1^{(\nTasks)}, \ldots, \sample_\nRuns^{(\nTasks)})\,,
    \end{equation}
    and
    \begin{align}
        &\func(\task_1, \sample_{1:\nRuns}^{(1)}, \ldots, \task_\nTasks, \sample_{1:\nRuns}^{(\nTasks)})\\
        &=
        \exof*{\empLoss(\hyp, (\task_\runtask, \sample_{1:\nRuns}^{\runtask})_{\runtask \leq \nTasks})}
        - {\empLoss(\hyp, (\task_\runtask, \sample_{1:\nRuns}^{\runtask})_{\runtask \leq \nTasks})}\\
        &\quad- \parens*{
            \exof*{\empLoss(\hyp', (\task_\runtask, \sample_{1:\nRuns}^{\runtask})_{\runtask \leq \nTasks})}
            - {\empLoss(\hyp', (\task_\runtask, \sample_{1:\nRuns}^{\runtask})_{\runtask \leq \nTasks})}
        }
    \end{align}
    and proceed as in the proof of \cref{lem:icl_moment} except that $\func$ is now $\hypclassLip_\nRuns \dist(\hyp, \hyp')$ coordinate-wise Lipschitz by \cref{asm:hypclass} to  obtain that:
\begin{align}
    &\norm*{\empLoss(\hyp, (\task_\runtask, \sample_{1:\nRuns}^{\runtask})_{\runtask \leq \nTasks}) - \exof*{\empLoss(\hyp, (\task_\runtask, \sample_{1:\nRuns}^{\runtask})_{\runtask \leq \nTasks})}
        - \parens*{
            \empLoss(\hyp', (\task_\runtask, \sample_{1:\nRuns}^{\runtask})_{\runtask \leq \nTasks})
            - \exof*{\empLoss(\hyp', (\task_\runtask, \sample_{1:\nRuns}^{\runtask})_{\runtask \leq \nTasks})}
        }}_\expA\\
    &\leq 
    \dist(\hyp, \hyp')
    \Bigg(
    \const \subgauss  \hypclassLip_{\nRuns} \sqrt{\frac{ \nRuns \expA}{\nTasks}} \\
    &\quad+\const \sqrt{\expA} \frac{\hypclassLip_{\nRuns}}{\sqrt{\nTasks}} 
    \sqrt{\sum_{\run=1}^\nRuns \parens*{\sum_{\runalt>\run} \depB_{\run, \runalt}}^2}
    \norm*{
        \task_1
    }_2 + 
    \const \expA^{3/2} \frac{\hypclassLip_{\nRuns}}{\nTasks^{1-1/\expA}}
    \sqrt{\sum_{\run=1}^\nRuns \parens*{\sum_{\runalt>\run} \depB_{\run, \runalt}}^2}
    \norm*{
        \task_1
    }_\expA \\
    &\quad +  \const \sqrt{\expA} \frac{\hypclassLip_{\nRuns}}{\sqrt{\nTasks}} \parens*{
        \sum_{\run=1}^\nRuns
        \depA_\run
    }
    \norm*{
    {\task_1 - \exof{\task_1}}}_2
    +
    \const \expA \frac{\hypclassLip_{\nRuns}}{\nTasks^{1-1/\expA}}
    \parens*{
        \sum_{\run=1}^\nRuns
        \depA_\run
    }
    \norm*{
    {\task_1 - \exof{\task_1}}}_\expA\Bigg)\,.
\end{align}
Applying \cref{lem:heavy-dudley} then gives that  
\begin{align}
   \exof*{\sup_{\hyp, \hyp' \in \hypspace} \exof*{\empLoss(\hyp, (\task_\runtask, \sample_{1:\nRuns}^{\runtask})_{\runtask \leq \nTasks})} - {\empLoss(\hyp, (\task_\runtask, \sample_{1:\nRuns}^{\runtask})_{\runtask \leq \nTasks})}
   - \parens*{
       \exof*{\empLoss(\hyp', (\task_\runtask, \sample_{1:\nRuns}^{\runtask})_{\runtask \leq \nTasks})}
    - \empLoss(\hyp', (\task_\runtask, \sample_{1:\nRuns}^{\runtask})_{\runtask \leq \nTasks})}
   }
\end{align}
is bounded by the \ac{RHS} of the statement of the lemma.
To conclude, it suffices to notice that, for any $\hyp_0 \in \hypspace$ fixed,
\begin{align}
    &\exof*{\sup_{\hyp \in \hypspace} \exof*{\empLoss(\hyp, (\task_\runtask, \sample_{1:\nRuns}^{\runtask})_{\runtask \leq \nTasks})} - {\empLoss(\hyp, (\task_\runtask, \sample_{1:\nRuns}^{\runtask})_{\runtask \leq \nTasks})}}\\
    &= \exof*{\sup_{\hyp \in \hypspace} \exof*{\empLoss(\hyp, (\task_\runtask, \sample_{1:\nRuns}^{\runtask})_{\runtask \leq \nTasks})} - {\empLoss(\hyp, (\task_\runtask, \sample_{1:\nRuns}^{\runtask})_{\runtask \leq \nTasks})}
        - \parens*{
            \exof*{\empLoss(\hyp_0, (\task_\runtask, \sample_{1:\nRuns}^{\runtask})_{\runtask \leq \nTasks})}
            - \empLoss(\hyp_0, (\task_\runtask, \sample_{1:\nRuns}^{\runtask})_{\runtask \leq \nTasks})
        }}\\
    &\leq 
    \exof*{\sup_{\hyp, \hyp' \in \hypspace} \exof*{\empLoss(\hyp, (\task_\runtask, \sample_{1:\nRuns}^{\runtask})_{\runtask \leq \nTasks})} - {\empLoss(\hyp, (\task_\runtask, \sample_{1:\nRuns}^{\runtask})_{\runtask \leq \nTasks})}
   - \parens*{
       \exof*{\empLoss(\hyp', (\task_\runtask, \sample_{1:\nRuns}^{\runtask})_{\runtask \leq \nTasks})}
    - \empLoss(\hyp', (\task_\runtask, \sample_{1:\nRuns}^{\runtask})_{\runtask \leq \nTasks})}
   }\,,
\end{align}
which concludes the proof.
\end{proof}

\subsection{Generalization bounds for ICL}

Putting together the concentration bound from \cref{prop:icl-concentration} and the complexity bound from \cref{lem:rademacher_icl_moment}, we obtain the following generalization bound for ICL:
\begin{theorem}[Generalization bound for ICL]
    \label{thm:icl-gen-app}
    Under \cref{asm:weak_dependence,asm:finite_moments_task,asm:sub_gaussian_data,asm:lipschitz_model_loss,asm:hypclass}, for any $\delta \in (0,e^{-2}]$, for any $\thresh \in (0, \nTasks e^{-\expA}]$,
 with probability at least $1-\delta$, the generalization gap
\begin{align}
    &\sup_{\hyp \in \hypspace} \exof*{\empLoss(\hyp, (\task_\runtask, \sample_{1:\nRuns}^{\runtask})_{\runtask \leq \nTasks})} - { \empLoss(\hyp, (\task_\runtask, \sample_{1:\nRuns}^{\runtask})_{\runtask \leq \nTasks})}
\end{align}
is bounded by
\begin{enumerate}[label=(\alph*)]
    \item If $\thresh \geq \nTasks e^{-\expA}$, 
    {
        \renewcommand{\expB}{(\log(\nTasks/\thresh) + 1)}
        \begin{align}
    &\const \subgauss  \sqrt{\frac{\nRuns}{{\nTasks}}} \parens*{\modelLip_\nRuns\sqrt{\expB} + \hypclassLip_\nRuns \dudley(\hypspace, \dist, \expA) \sqrt \expA}
        \\
    &+\const\parens*{\modelLip_\nRuns\sqrt{\expB} + \hypclassLip_\nRuns \dudley(\hypspace, \dist, \expA) \sqrt \expA}  \frac{1}{\sqrt{\nTasks}} 
    \sqrt{\sum_{\run=1}^\nRuns \parens*{\sum_{\runalt>\run} \depB_{\run, \runalt}}^2}
    \norm*{
        \task_1
    }_2 \\
    &\quad+ 
    \const \left(\expB^{3/2} \modelLip_\nRuns + \expA^{3/2} \nTasks^{1/\expA}\hypclassLip_\nRuns \dudley(\hypspace, \dist, \expA)\right) \frac{1}{\nTasks}
    \sqrt{\sum_{\run=1}^\nRuns \parens*{\sum_{\runalt>\run} \depB_{\run, \runalt}}^2}
    \norm*{
        \task_1
    }_\expA \\
    & +  \const\parens*{\modelLip_\nRuns\sqrt{\expB} + \hypclassLip_\nRuns \dudley(\hypspace, \dist, \expA) \sqrt \expA}  \frac{1}{\sqrt{\nTasks}} \parens*{
        \sum_{\run=1}^\nRuns
        \depA_\run
    }
    \norm*{
    {\task_1 - \exof{\task_1}}}_2\\
    &\quad+
    \const \parens*{\expB \modelLip_{\nRuns} + \expA\nTasks^{1/\expA} \hypclassLip_\nRuns \dudley(\hypspace, \dist, \expA)}
        \frac{1}{\nTasks}
    \parens*{
        \sum_{\run=1}^\nRuns
        \depA_\run
    }
    \norm*{
    {\task_1 - \exof{\task_1}}}_\expA
        \end{align}
}
    \item If $\thresh < \nTasks e^{-\expA}$,
    {
        \renewcommand{\expB}{\expA}
        \begin{align}
        &\parens*{\frac{\modelLip_{\nRuns}}{\thresh^{1/\expA}} + \hypclassLip_\nRuns \dudley(\hypspace, \dist, \expA)}
            \bigg(
                \const \subgauss   \sqrt{\frac{\nRuns \expB}{\nTasks}} \\
    &+\const \sqrt{\expB} \frac{\modelLip_{\nRuns}}{\sqrt{\nTasks}} 
    \sqrt{\sum_{\run=1}^\nRuns \parens*{\sum_{\runalt>\run} \depB_{\run, \runalt}}^2}
    \norm*{
        \task_1
    }_2 + 
    \const \expB^{3/2} \frac{\modelLip_{\nRuns}}{\nTasks^{1-1/\expB}}
    \sqrt{\sum_{\run=1}^\nRuns \parens*{\sum_{\runalt>\run} \depB_{\run, \runalt}}^2}
    \norm*{
        \task_1
    }_\expA \\
    & +  \const \sqrt{\expB} \frac{\modelLip_{\nRuns}}{\sqrt{\nTasks}} \parens*{
        \sum_{\run=1}^\nRuns
        \depA_\run
    }
    \norm*{
    {\task_1 - \exof{\task_1}}}_2
    +
    \const \expB \frac{\modelLip_{\nRuns}}{\nTasks^{1-1/\expB}}
    \parens*{
        \sum_{\run=1}^\nRuns
        \depA_\run
    }
    \norm*{
    {\task_1 - \exof{\task_1}}}_\expA \bigg),,
        \end{align}
}
\end{enumerate}
where $\const > 0$ is a universal constant and where the Dudley-type integral $\dudley_{\dist}(\hypspace)$ is defined as
\begin{equation}
    \dudley(\hypspace, \dist, \expA) = \int_0^{\Delta} \parens*{\covering(\hypspace, \dist, u)}^{1/\expA} d u\,, \quad\text{ with }\Delta = \diam_{\dist}(\hypspace) = \sup_{\hyp, \hyp' \in \hypspace} \dist(\hyp, \hyp')\,.
\end{equation}
\end{theorem}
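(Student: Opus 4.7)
The plan is to decompose the generalization gap into a deviation piece and a complexity piece, and then apply the two results proved in the previous subsections. More precisely, I would write
\begin{align*}
\sup_{\hyp \in \hypspace}\!\bigl\{\exof{\empLoss(\hyp,\cdot)} - \empLoss(\hyp,\cdot)\bigr\}
&= \underbrace{\Bigl(\sup_{\hyp \in \hypspace}\!\bigl\{\exof{\empLoss(\hyp,\cdot)} - \empLoss(\hyp,\cdot)\bigr\} - \exof*{\sup_{\hyp \in \hypspace}\!\bigl\{\exof{\empLoss(\hyp,\cdot)} - \empLoss(\hyp,\cdot)\bigr\}}\Bigr)}_{\text{(I): deviation}}\\
&\quad+ \underbrace{\exof*{\sup_{\hyp \in \hypspace}\!\bigl\{\exof{\empLoss(\hyp,\cdot)} - \empLoss(\hyp,\cdot)\bigr\}}}_{\text{(II): complexity}}.
\end{align*}
Term (I) is exactly the quantity controlled by \cref{prop:icl-concentration}, whose two regimes (small/large deviations) match the two cases (a)/(b) of the theorem we want to prove; term (II) is exactly bounded in expectation by \cref{lem:rademacher_icl_moment}.

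The first step is to invoke \cref{prop:icl-concentration} with threshold $\thresh$, which yields on an event of probability at least $1-\thresh$ an upper bound on (I) of the form
\[
C\bigl(\subgauss, \modelLip_\nRuns, \depA_\run, \depB_{\run,\runalt}, \expA, \thresh, \norm{\task_1}_2, \norm{\task_1}_\expA, \norm{\task_1 - \exof{\task_1}}_2, \norm{\task_1 - \exof{\task_1}}_\expA\bigr),
\]
with one explicit expression when $\thresh \geq \nTasks e^{-\expA}$ (polylogarithmic in $1/\thresh$) and another when $\thresh < \nTasks e^{-\expA}$ (with a $\thresh^{-1/\expA}$ factor). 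The second step is to upper bound (II) deterministically by \cref{lem:rademacher_icl_moment}, producing a term of the same structural form but with $\modelLip_\nRuns \sqrt{\expA}$ replaced by $\hypclassLip_\nRuns \sqrt{\expA}\, \dudley(\hypspace,\dist,\expA)$ and with the $\nTasks^{1/\expA}$ prefactor in the heavy-tail part (since here $\expB=\expA$ is the only moment available).

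To conclude, I would simply add the two bounds term by term. In case (a), $\thresh \geq \nTasks e^{-\expA}$, the deviation contribution scales in the exponent $\expB = \log(\nTasks/\thresh)+1$ while the complexity contribution scales in $\expA$; factoring out the common dependency structure (i.e.~the $\frac{1}{\sqrt{\nTasks}}$ and $\frac{1}{\nTasks}$ scalings multiplied respectively by $\sqrt{\nRuns}$, the $\depB$-sum, and the $\depA$-sum against the moments of $\task_1$) yields precisely the stated bound. In case (b), $\thresh < \nTasks e^{-\expA}$, the deviation term is inflated by $\thresh^{-1/\expA}$ and the complexity term is multiplied by $\nTasks^{1/\expA}$; collecting these two contributions under a common $(\modelLip_\nRuns/\thresh^{1/\expA} + \hypclassLip_\nRuns \dudley(\hypspace,\dist,\expA))$ prefactor against the heavy-tail envelope reproduces the stated expression. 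The only real work is bookkeeping: matching the factors of $\sqrt{\expA}$, $\expA^{3/2}$, $\nTasks^{1/\expA}$ and the moment norms across the two bounds so that the sum can be rewritten in the compact form of the theorem. No new probabilistic idea is needed beyond what is already contained in \cref{prop:icl-concentration} and \cref{lem:rademacher_icl_moment}; the main obstacle is purely organizational, namely arranging the many heterogeneous terms into the two clean cases stated.
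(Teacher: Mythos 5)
Your proposed decomposition into a deviation term, controlled by \cref{prop:icl-concentration}, and a complexity term, controlled by \cref{lem:rademacher_icl_moment}, is exactly the proof given in the paper; the only remaining work is the bookkeeping you describe. Your proposal matches the paper's argument.
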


\begin{proof}
    The result is obtained by combining \cref{prop:icl-concentration} and \cref{lem:rademacher_icl_moment}: we write the decomposition
    \begin{align}
        &\sup_{\hyp \in \hypspace} \setof*{\exof*{\empLoss(\hyp, (\task_\runtask, \sample_{1:\nRuns}^{\runtask})_{\runtask \leq \nTasks})} - { \empLoss(\hyp, (\task_\runtask, \sample_{1:\nRuns}^{\runtask})_{\runtask \leq \nTasks})}}\\
        &=
        \exof*{
        \sup_{\hyp \in \hypspace} \setof*{\exof*{\empLoss(\hyp, (\task_\runtask, \sample_{1:\nRuns}^{\runtask})_{\runtask \leq \nTasks})} - { \empLoss(\hyp, (\task_\runtask, \sample_{1:\nRuns}^{\runtask})_{\runtask \leq \nTasks})}}}
        \label{eq:icl_gen_decomposition_1}
        \\
        &\quad+
        \sup_{\hyp \in \hypspace} \setof*{\exof*{\empLoss(\hyp, (\task_\runtask, \sample_{1:\nRuns}^{\runtask})_{\runtask \leq \nTasks})} - { \empLoss(\hyp, (\task_\runtask, \sample_{1:\nRuns}^{\runtask})_{\runtask \leq \nTasks})}}
        - \exof*{
        \sup_{\hyp \in \hypspace} \setof*{\exof*{\empLoss(\hyp, (\task_\runtask, \sample_{1:\nRuns}^{\runtask})_{\runtask \leq \nTasks})} - { \empLoss(\hyp, (\task_\runtask, \sample_{1:\nRuns}^{\runtask})_{\runtask \leq \nTasks})}}}\,,
        \label{eq:icl_gen_decomposition_2}
    \end{align}
    and we bound \cref{eq:icl_gen_decomposition_1} using \cref{lem:rademacher_icl_moment} and \cref{eq:icl_gen_decomposition_2} with high probability using \cref{prop:icl-concentration}.
    
\end{proof}

\subsection{In-distribution vs. out-of-distribution generalization}
\label{subsec:ood_generalization}
\Cref{thm:generalization,thm:icl-gen-app} 
focuses on in-distribution generalization, \ie when the tasks at test time are sampled from the same prior $\prior$ as during training.
However, the key challenge of ICL is often to generalize to out-of-distribution tasks, \ie tasks that are sampled from a different task distribution $\testprior$ at test time.
In particular, in \cref{sec:experiments}, we will evaluate ICL on test tasks samples from distributions of the form $\testprior = \mathcal{N}(\task^*, \sigma^2 I_d)$ with $\task^*$ increasingly far from the mode of the training prior $\prior$. This yields a principled way to evaluate the robustness of ICL to distribution shifts.

In that case, assuming $\prior$ has a density,  the test error \wrt $\testprior$ can be controlled using the test error \wrt $\prior$:
\begin{align*}
    &\exwrt[\bigg]{\task \sim \testprior}{\exwrt[\bigg]{\sample_{1:\nRuns} \sim \densitywrt{\nRuns}{\cdot \given \task}}{
        \tfrac{1}{\nRuns} \sum_{\run=1}^\nRuns \loss_\run(\trainedhyp(\sample_{1:\run-1}), \sample_\run)
    }}
    \\
    &=
    \exwrt[\bigg]{\task \sim \prior}{\frac{d \testprior(\task)}{d \prior(\task)}
    \exwrt[\bigg]{\sample_{1:\nRuns} \sim \densitywrt{\nRuns}{\cdot \given \task}}{
        \tfrac{1}{\nRuns} \sum_{\run=1}^\nRuns \loss_\run(\trainedhyp(\sample_{1:\run-1}), \sample_\run)
    }}\\
    &\leq
    \left\| \frac{d \testprior}{d \prior} \right\|_{\infty}
    \exwrt[\bigg]{\task \sim \prior}{\exwrt[\bigg]{\sample_{1:\nRuns} \sim \densitywrt{\nRuns}{\cdot \given \task}}{
        \tfrac{1}{\nRuns} \sum_{\run=1}^\nRuns \loss_\run(\trainedhyp(\sample_{1:\run-1}), \sample_\run)
    }}\,.
\end{align*}
The right-most term is exactly the in-distribution test error \wrt $\prior$ controlled by \cref{thm:generalization}, while the multiplicative factor $\| d \testprior / d \prior \|_{\infty}$ quantifies the distribution shift between $\testprior$ and $\prior$.
For $\testprior = \mathcal{N}(\task^*, \sigma^2 I_d)$ with small $\sigma$, this factor $\| d \testprior / d \prior \|_{\infty}$ is proportional to $1/\prior(\task^*)$. As expected, the further the test task $\task^*$ is from the mode of the training prior $\prior$, the worse the out-of-distribution generalization. Heavier-tailed priors $\prior$ mitigate this effect: since $\prior(\task^*)$ decays more slowly as $\task^*$ moves away from the mode for priors, the distribution shift factor $\| d \testprior / d \prior \|_{\infty}$ grows more slowly, leading to better out-of-distribution generalization. 
Already, the trade-off in the choice of the prior $\prior$ starts to appear: heavier-tailed priors improve out-of-distribution generalization but harm in-distribution generalization.

\revise{
\revsubsection{Extension: repeated tasks}
\label{sec:repeated-tasks}

In some ICL settings, tasks may be repeated multiple times in the training set.
In this section, we extend our generalization bound \cref{thm:icl-gen-app} to this setting.

We introduce $\nRepeats > 0$, the number of times each task is repeated in the training set.
The training data is now generated by first sampling a set of tasks 
$\task_1, \ldots, \task_\nTasks$ independently
and identically according to the task distribution $\prior$, and then, for each task $\task_\runtask$, independently
sampling $\nRepeats$ sequences of
data points $(\sample_\run^{\runtask, \repeatindex})_{\run \geq 1}$ for $\repeatindex = 1, \ldots, \nRepeats$ according to
\begin{equation}
    \sample_{\run  +  1}^{\runtask, \repeatindex} \sim \densitywrt{\run  + 1}{\cdot \given \sample_{1:\run}^{\runtask, \repeatindex}, \task_\runtask}\,,
\end{equation}
where $\sample_{1:\run}^{\runtask, \repeatindex} = (\sample_1^{\runtask, \repeatindex}, \ldots, \sample_\run^{\runtask, \repeatindex})$.

\newcommand{\dataset}{(\task_\runtask, (\sample_{1:\nRuns}^{\runtask, \repeatindex})_{\repeatindex \leq \nRepeats})_{\runtask \leq \nTasks}}

Given such a dataset, a model $\model$ is trained by minimizing the next-sample prediction loss
\begin{equation}
    \empLoss(\model, \dataset)
    =
    \frac{1}{\nTasks \nRuns \nRepeats}
    \sum_{\runtask = 1}^{\nTasks} 
    \sum_{\repeatindex = 1}^{\nRepeats}
    \sum_{\run=1}^{\nRuns} \loss_\run(\model(\sample_{1:\run-1}^{\runtask, \repeatindex}
        ), \sample_\run^\runtask)\,.
\end{equation}

Applying the same proof as \cref{lem:icl_moment}, we obtain the following moment bound.

\begin{lemma}
    \label{lem:icl_moment_repeat}
For any $\expB \in [2, \expA]$ integer, under \cref{asm:weak_dependence,asm:finite_moments_task,asm:sub_gaussian_data,asm:lipschitz_model_loss}, we have
\begin{align}
    &\Bigg\|\sup_{\hyp \in \hypspace} \setof*{\exof*{\empLoss(\hyp, \dataset)} -  \empLoss(\hyp, \dataset)}\\
    &\quad- \exof*{
    \sup_{\hyp \in \hypspace} \setof*{\exof*{\empLoss(\hyp, \dataset)} -  \empLoss(\hyp, \dataset)}}\Bigg\|_\expB\\
    &\leq 
    \const \subgauss  \modelLip_{\nRuns} \sqrt{\frac{ \nRuns \expB}{\nTasks \nRepeats}} \\
    &\quad+\const \sqrt{\expB} \frac{\modelLip_{\nRuns}}{\sqrt{\nTasks \nRepeats}} 
    \sqrt{\sum_{\run=1}^\nRuns \parens*{\sum_{\runalt>\run} \depB_{\run, \runalt}}^2}
    \norm*{
        \task_1
    }_2 + 
    \const \expB^{3/2} \frac{\modelLip_{\nRuns}}{\nTasks^{1-1/\expB}\sqrt{\nRepeats}}
    \sqrt{\sum_{\run=1}^\nRuns \parens*{\sum_{\runalt>\run} \depB_{\run, \runalt}}^2}
    \norm*{
        \task_1
    }_\expA \\
    &\quad +  \const \sqrt{\expB} \frac{\modelLip_{\nRuns}}{\sqrt{\nTasks \nRepeats}} \parens*{
        \sum_{\run=1}^\nRuns
        \depA_\run
    }
    \norm*{
    {\task_1 - \exof{\task_1}}}_2
    +
    \const \expB \frac{\modelLip_{\nRuns}}{\nTasks^{1-1/\expB} \nRepeats}
    \parens*{
        \sum_{\run=1}^\nRuns
        \depA_\run
    }
    \norm*{
    {\task_1 - \exof{\task_1}}}_\expA\,,
\end{align}
where $\const > 0$ is a universal constant.

\end{lemma}
\begin{proof}[Proof sketch]
    The analogue of $\func$ in the proof of \cref{lem:icl_moment} is now coordinate-wise Lipschitz with respect to $\sample_\run^{\runtask, \repeatindex}$ with constant $\frac{\modelLip_{\nRuns}}{\nTasks \nRepeats}$.
    The proof proceeds as in \cref{lem:icl_moment} with minor modifications to account for the $\nRepeats$ independent repetitions.
    When going from \cref{eq:icl_moment_decomposition_1} to \cref{eq:icl_moment_decomposition_1_rewritten}, an additional factor $\sqrt{\nRepeats}$ appears due to the sum of the independent repetitions.
    In the Hoeffding bound \cref{eq:icl_moment_decomposition_2_subgaussian_bound}, a factor $\sqrt{\nRepeats}$ also appears.
    Finally, when bounding \cref{eq:icl_moment_decomposition_3}, an additional $\nRepeats$ factor also appears in \cref{eq:icl_moment_decomposition_3_intermediate}.
\end{proof}

We now proceed with an analogue of \cref{prop:icl-concentration}.
\begin{proposition}[Concentration bound for ICL]
    \label{prop:icl-concentration_repeat}
Under \cref{asm:weak_dependence,asm:finite_moments_task,asm:sub_gaussian_data,asm:lipschitz_model_loss}, for any $\delta \in (0,e^{-2}]$, with probability at least $1-\delta$,
\begin{align}
    \Bigg|&\sup_{\hyp \in \hypspace} \setof*{\exof*{\empLoss(\hyp, \dataset)} - { \empLoss(\hyp, \dataset)}}
    \\
    &- \exof*{
    \sup_{\hyp \in \hypspace} \setof*{\exof*{\empLoss(\hyp, \dataset)} - { \empLoss(\hyp, \dataset)}}}\Bigg|
\end{align}
is bounded by
\begin{enumerate}[label=(\alph*)]
    \item If $\thresh \geq \nTasks e^{-\expA}$, 
    {
        \renewcommand{\expB}{(\log(\nTasks/\thresh) + 1)}
        \begin{align}
    &\const \subgauss  \frac{\modelLip_{\nRuns}}{\sqrt{\nTasks \nRepeats}} \sqrt{ \nRuns \expB} \\
    &+\const \sqrt{\expB} \frac{\modelLip_{\nRuns}}{\sqrt{\nTasks \nRepeats}} 
    \sqrt{\sum_{\run=1}^\nRuns \parens*{\sum_{\runalt>\run} \depB_{\run, \runalt}}^2}
    \norm*{
        \task_1
    }_2 + 
    \const \expB^{3/2} \frac{\modelLip_{\nRuns}}{\nTasks \sqrt{\nRepeats}}
    \sqrt{\sum_{\run=1}^\nRuns \parens*{\sum_{\runalt>\run} \depB_{\run, \runalt}}^2}
    \norm*{
        \task_1
    }_\expA \\
    & +  \const \sqrt{\expB} \frac{\modelLip_{\nRuns}}{\sqrt{\nTasks} } \parens*{
        \sum_{\run=1}^\nRuns
        \depA_\run
    }
    \norm*{
    {\task_1 - \exof{\task_1}}}_2
    +
    \const \expB \frac{\modelLip_{\nRuns}}{\nTasks}
    \parens*{
        \sum_{\run=1}^\nRuns
        \depA_\run
    }
    \norm*{
    {\task_1 - \exof{\task_1}}}_\expA
        \end{align}
}
    \item If $\thresh < \nTasks e^{-\expA}$,
    {
        \renewcommand{\expB}{\expA}
        \begin{align}
        &\frac{1}{\thresh^{1/\expA}}\bigg(
    \const \subgauss  \modelLip_{\nTasks,\nRuns} \sqrt{\frac{ \nRuns \expB}{\nTasks \nRepeats}} \\
        &+\const \sqrt{\expB} \frac{\modelLip_{\nRuns}}{\sqrt{\nTasks \nRepeats}} 
    \sqrt{\sum_{\run=1}^\nRuns \parens*{\sum_{\runalt>\run} \depB_{\run, \runalt}}^2}
    \norm*{
        \task_1
    }_2 + 
    \const \expB^{3/2} \frac{\modelLip_{\nRuns}}{\nTasks^{1-1/\expB} \sqrt{\nRepeats}}
    \sqrt{\sum_{\run=1}^\nRuns \parens*{\sum_{\runalt>\run} \depB_{\run, \runalt}}^2}
    \norm*{
        \task_1
    }_\expA \\
    & +  \const \sqrt{\expB} \frac{\modelLip_{\nRuns}}{\sqrt{\nTasks}} \parens*{
        \sum_{\run=1}^\nRuns
        \depA_\run
    }
    \norm*{
    {\task_1 - \exof{\task_1}}}_2
    +
    \const \expB \frac{\modelLip_{\nRuns}}{\nTasks^{1-1/\expB}}
    \parens*{
        \sum_{\run=1}^\nRuns
        \depA_\run
    }
    \norm*{
    {\task_1 - \exof{\task_1}}}_\expA \bigg)
        \end{align}
}
\end{enumerate}
\end{proposition}
\begin{proof}[Proof sketch]
    As for \cref{prop:icl-concentration}, we apply \cref{lem:abstract-concentration} to the moment bound from \cref{lem:icl_moment_repeat}.
\end{proof}

We now proceed with the analogue of \cref{lem:rademacher_icl_moment} whose proof is similar.

\begin{lemma}
    \label{lem:rademacher_icl_moment_repeat}
Under \cref{asm:weak_dependence,asm:finite_moments_task,asm:sub_gaussian_data,asm:lipschitz_model_loss,asm:hypclass}, we have
\begin{align}
    &\exof*{\sup_{\hyp \in \hypspace} \exof*{\empLoss(\hyp, \dataset)} - {\empLoss(\hyp, \dataset)}} \\
    &\leq 
    \const \dudley(\hypspace, \dist, \expA) 
    \Bigg(
    \subgauss  \hypclassLip_{\nRuns} \sqrt{\frac{ \nRuns \expA}{\nTasks \nRepeats}} \\
    &\quad+\const \sqrt{\expA} \frac{\hypclassLip_{\nRuns}}{\sqrt{\nTasks \nRepeats}} 
    \sqrt{\sum_{\run=1}^\nRuns \parens*{\sum_{\runalt>\run} \depB_{\run, \runalt}}^2}
    \norm*{
        \task_1
    }_2 + 
    \expA^{3/2} \frac{\hypclassLip_{\nRuns}}{\nTasks^{1-1/\expA} \sqrt \nRepeats}
    \sqrt{\sum_{\run=1}^\nRuns \parens*{\sum_{\runalt>\run} \depB_{\run, \runalt}}^2}
    \norm*{
        \task_1
    }_\expA \\
    &\quad +  \sqrt{\expA} \frac{\hypclassLip_{\nRuns}}{\sqrt{\nTasks}} \parens*{
        \sum_{\run=1}^\nRuns
        \depA_\run
    }
    \norm*{
    {\task_1 - \exof{\task_1}}}_2
    +
    \const \expA \frac{\hypclassLip_{\nRuns}}{\nTasks^{1-1/\expA}}
    \parens*{
        \sum_{\run=1}^\nRuns
        \depA_\run
    }
    \norm*{
    {\task_1 - \exof{\task_1}}}_\expA\Bigg)\,,
\end{align}
where $\const > 0$ is a universal constant. 
\end{lemma}

Putting together \cref{prop:icl-concentration_repeat} and \cref{lem:rademacher_icl_moment_repeat}, we obtain the following generalization bound for ICL with repeated tasks.

\begin{theorem}[Generalization bound for ICL]
    \label{thm:icl-gen-app-repeat}
    Under \cref{asm:weak_dependence,asm:finite_moments_task,asm:sub_gaussian_data,asm:lipschitz_model_loss,asm:hypclass}, for any $\delta \in (0,e^{-2}]$, for any $\thresh \in (0, \nTasks e^{-\expA}]$,
 with probability at least $1-\delta$, the generalization gap
\begin{align}
    &\sup_{\hyp \in \hypspace} \exof*{\empLoss(\hyp, \dataset)} - { \empLoss(\hyp, \dataset)}
\end{align}
is bounded by
\begin{enumerate}[label=(\alph*)]
    \item If $\thresh \geq \nTasks e^{-\expA}$, 
    {
        \renewcommand{\expB}{(\log(\nTasks/\thresh) + 1)}
        \begin{align}
    &\const \subgauss  \sqrt{\frac{\nRuns}{{\nTasks \nRepeats}}} \parens*{\modelLip_\nRuns\sqrt{\expB} + \hypclassLip_\nRuns \dudley(\hypspace, \dist, \expA) \sqrt \expA}
        \\
    &+\const\parens*{\modelLip_\nRuns\sqrt{\expB} + \hypclassLip_\nRuns \dudley(\hypspace, \dist, \expA) \sqrt \expA}  \frac{1}{\sqrt{\nTasks \nRepeats}} 
    \sqrt{\sum_{\run=1}^\nRuns \parens*{\sum_{\runalt>\run} \depB_{\run, \runalt}}^2}
    \norm*{
        \task_1
    }_2 \\
    &\quad+ 
    \const \left(\expB^{3/2} \modelLip_\nRuns + \expA^{3/2} \nTasks^{1/\expA}\hypclassLip_\nRuns \dudley(\hypspace, \dist, \expA)\right) \frac{1}{\nTasks \sqrt{\nRepeats}}
    \sqrt{\sum_{\run=1}^\nRuns \parens*{\sum_{\runalt>\run} \depB_{\run, \runalt}}^2}
    \norm*{
        \task_1
    }_\expA \\
    & +  \const\parens*{\modelLip_\nRuns\sqrt{\expB} + \hypclassLip_\nRuns \dudley(\hypspace, \dist, \expA) \sqrt \expA}  \frac{1}{\sqrt{\nTasks}} \parens*{
        \sum_{\run=1}^\nRuns
        \depA_\run
    }
    \norm*{
    {\task_1 - \exof{\task_1}}}_2\\
    &\quad+
    \const \parens*{\expB \modelLip_{\nRuns} + \expA\nTasks^{1/\expA} \hypclassLip_\nRuns \dudley(\hypspace, \dist, \expA)}
        \frac{1}{\nTasks}
    \parens*{
        \sum_{\run=1}^\nRuns
        \depA_\run
    }
    \norm*{
    {\task_1 - \exof{\task_1}}}_\expA
        \end{align}
}
    \item If $\thresh < \nTasks e^{-\expA}$,
    {
        \renewcommand{\expB}{\expA}
        \begin{align}
        &\parens*{\frac{\modelLip_{\nRuns}}{\thresh^{1/\expA}} + \hypclassLip_\nRuns \dudley(\hypspace, \dist, \expA)}
            \bigg(
                \const \subgauss   \sqrt{\frac{\nRuns \expB}{\nTasks \nRepeats}} \\
    &+\const \sqrt{\expB} \frac{\modelLip_{\nRuns}}{\sqrt{\nTasks \nRepeats}} 
    \sqrt{\sum_{\run=1}^\nRuns \parens*{\sum_{\runalt>\run} \depB_{\run, \runalt}}^2}
    \norm*{
        \task_1
    }_2 + 
    \const \expB^{3/2} \frac{\modelLip_{\nRuns}}{\nTasks^{1-1/\expB}\sqrt{\nRepeats}}
    \sqrt{\sum_{\run=1}^\nRuns \parens*{\sum_{\runalt>\run} \depB_{\run, \runalt}}^2}
    \norm*{
        \task_1
    }_\expA \\
    & +  \const \sqrt{\expB} \frac{\modelLip_{\nRuns}}{\sqrt{\nTasks }} \parens*{
        \sum_{\run=1}^\nRuns
        \depA_\run
    }
    \norm*{
    {\task_1 - \exof{\task_1}}}_2
    +
    \const \expB \frac{\modelLip_{\nRuns}}{\nTasks^{1-1/\expB} }
    \parens*{
        \sum_{\run=1}^\nRuns
        \depA_\run
    }
    \norm*{
    {\task_1 - \exof{\task_1}}}_\expA \bigg),,
        \end{align}
}
\end{enumerate}
where $\const > 0$ is a universal constant and where the Dudley-type integral $\dudley_{\dist}(\hypspace)$ is defined as
\begin{equation}
    \dudley(\hypspace, \dist, \expA) = \int_0^{\Delta} \parens*{\covering(\hypspace, \dist, u)}^{1/\expA} d u\,, \quad\text{ with }\Delta = \diam_{\dist}(\hypspace) = \sup_{\hyp, \hyp' \in \hypspace} \dist(\hyp, \hyp')\,.
\end{equation}
\end{theorem}

The proof of \cref{thm:icl-gen-app-repeat} is the same as that of \cref{thm:icl-gen-app}, using \cref{prop:icl-concentration_repeat} instead of \cref{prop:icl-concentration} and \cref{lem:rademacher_icl_moment_repeat} instead of \cref{lem:rademacher_icl_moment}.

We also provide a simplified version of \cref{thm:icl-gen-app-repeat} in the spirit of \cref{thm:generalization}. 
\begin{theorem}
    \label{thm:generalization_repeat}
    Under \cref{asm:gen:moment,asm:gen:dependence,asm:gen:model}, for any $\thresh \in (0, e^{-2})$, with probability at least $1-\thresh$, it holds:
    \begin{enumerate}[label=(\alph*), itemsep=2pt, topsep=2pt]
        \item If $\thresh \geq \nTasks e^{-\expA}$, then 
        \vspace*{-1em}
        \begin{equation}
            \widehat{\gen} \leq 
            \bigoh\parens*{
            \frac{(\log 1/\thresh)^{3/2} \modelLip_\nRuns \sqrt{\nRuns}}{\sqrt{\nTasks \nRepeats}}
            \left(1 +  \depA_\nRuns \sqrt{\nRuns \nRepeats} + \depB_\nRuns \nRuns\right) 
            }\,,
        \end{equation}
        \vspace*{-1em}
    \item If $\thresh < \nTasks e^{-\expA}$, then
        \vspace*{-1em}
        \begin{equation}
            \widehat{\gen} \leq 
            \bigoh\parens*{
                \frac{ \modelLip_\nRuns \sqrt{\nRuns}}{\thresh^{1/\expA}\sqrt{\nTasks \nRepeats}}
                \left(1 +  \depA_\nRuns \sqrt{\nRuns \nRepeats} + \depB_\nRuns \nRuns\right) 
            }\,,
        \end{equation}
        where the terms in $\bigoh(\cdot)$ depend polynomially on $\expA$, $\log \nTasks$, the scale of $\prior$ and the size of $\hypspace$.
    \end{enumerate}
\end{theorem}

}

\newpage

\section{Task Selection}
\label{app:sec:bayes_task_selection}

In this section, we study how tasks are selected at test time in ICL.
This section is structured as follows.
First we consider an abstract setting for \cref{app:subsec:preliminary-lemmas-task-selection,app:subsec:template-task-selection-bound} where in \cref{app:subsec:preliminary-lemmas-task-selection} we state a few preliminary lemmas that will be useful in the analysis, and in \cref{app:subsec:template-task-selection-bound} we prove a template task selection bound under minimal assumptions.
Then, in \cref{app:subsec:icl-setting}, we reintroduce the ICL setting along with the detailed assumptions before proving the main task selection bound in \cref{app:subsec:task-selection-bound-icl}, which is where the main contribution of this section lies.

\subsection{Preliminary Lemmas}
\label{app:subsec:preliminary-lemmas-task-selection}

\begin{definition}[Kullback-Leibler divergence]
    For $\prob$ and $\probalt$ two probability measures on a measurable space $\sspace$, the \emph{Kullback-Leibler (KL) divergence} from $\prob$ to $\probalt$ is defined as
    \begin{equation}
        \dkl{\prob}{\probalt} = 
        \begin{cases}
            \int_{\sspace} \log \parens*{\frac{d \prob}{d \probalt}(\sample)} d \prob(\sample) &\text{if } \prob \ll \probalt\\
            +\infty &\text{otherwise.}
        \end{cases}
    \end{equation}
\end{definition}

We now state the Donsker-Varadhan lemma, also known as the Gibbs variational principle.

\begin{lemma}[Donsker-Varadhan lemma, Gibbs variational principle]
\label{lem:dv}

Consider $\prob$ probability measure on a measurable $\sspace$ and $\func \from \sspace \to \reals$ a measurable function such that $\ex_{\prob}[\exp(\func)] < \infty$. Then, we have
\begin{equation}
    \log \ex_{\prob}[e^{\func(\sample)}] = \sup_{\probalt} \braces*{\ex_{\probalt}[\func(\sample)] - \dkl{\probalt}{\prob}},
\end{equation}
with equality attained in particular for $\frac{d\probalt}{d\prob}(\sample) \propto e^{\func(\sample)}$.
\end{lemma}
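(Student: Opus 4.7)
The plan is to exhibit the maximizer explicitly via a Gibbs tilting argument and then reduce the inequality to nonnegativity of relative entropy. First I would define the tilted probability measure $\probalt^{\ast}$ on $\sspace$ by
\begin{equation*}
    \frac{d\probalt^{\ast}}{d\prob}(\sample) = \frac{e^{\func(\sample)}}{\ex_{\prob}[e^{\func(\sample)}]},
\end{equation*}
which is well defined since the denominator is finite and strictly positive by the hypothesis $\ex_{\prob}[e^{\func}] < \infty$. The claim will then reduce to showing that, for every probability measure $\probalt$ on $\sspace$,
\begin{equation*}
    \ex_{\probalt}[\func(\sample)] - \dkl{\probalt}{\prob} \leq \log \ex_{\prob}[e^{\func(\sample)}],
\end{equation*}
with equality at $\probalt = \probalt^{\ast}$.

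Next I would split the analysis by absolute continuity. If $\probalt \not\ll \prob$, then $\dkl{\probalt}{\prob} = +\infty$, the inequality is vacuous, and such $\probalt$ never achieve the supremum, so one may restrict to $\probalt \ll \prob$. In that regime, since $\probalt^{\ast}$ is mutually absolutely continuous with $\prob$ on the support of $e^{\func}$, one has $\probalt \ll \probalt^{\ast}$ as well and the Radon-Nikodym chain rule yields
\begin{equation*}
    \log \frac{d\probalt}{d\prob}(\sample) = \log \frac{d\probalt}{d\probalt^{\ast}}(\sample) + \func(\sample) - \log \ex_{\prob}[e^{\func(\sample)}]
\end{equation*}
$\probalt$-almost surely. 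Integrating both sides against $\probalt$ produces the key identity
\begin{equation*}
    \dkl{\probalt}{\prob} = \dkl{\probalt}{\probalt^{\ast}} + \ex_{\probalt}[\func(\sample)] - \log \ex_{\prob}[e^{\func(\sample)}],
\end{equation*}
from which rearranging gives $\ex_{\probalt}[\func] - \dkl{\probalt}{\prob} = \log \ex_{\prob}[e^{\func}] - \dkl{\probalt}{\probalt^{\ast}}$. The announced inequality then follows immediately from Gibbs' inequality $\dkl{\probalt}{\probalt^{\ast}} \geq 0$, and equality is attained precisely for $\probalt = \probalt^{\ast}$, since $\dkl{\probalt^{\ast}}{\probalt^{\ast}} = 0$.

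The main technical point requiring care is the handling of the set where $e^{\func(\sample)}$ vanishes or where $\func$ fails to be $\probalt$-integrable. If $\func$ equals $-\infty$ on a $\prob$-null set, this poses no issue since $\probalt \ll \prob$ ensures it is also $\probalt$-null; if instead $\ex_{\probalt}[\func^{+}] = +\infty$, then the left-hand side of the target inequality is $+\infty - \dkl{\probalt}{\prob}$ and one needs to verify that either $\dkl{\probalt}{\prob} = +\infty$ as well, or that the chain-rule identity continues to hold in extended reals. Under $\ex_{\prob}[e^{\func}] < \infty$ this bookkeeping goes through by a standard monotone-convergence argument, truncating $\func$ from above and passing to the limit. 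This is the only mildly delicate step of the argument; no new idea beyond measure-theoretic housekeeping is required.
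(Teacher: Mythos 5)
The paper does not prove this lemma; it simply cites \citet{hellstrom2025generalization,rodriguez2024information} for references and proofs, so there is no in-paper argument to compare against. Your proof is the standard tilting argument and is correct: since $\func$ is real-valued, $e^{\func}>0$ everywhere, so $\probalt^{\ast}$ and $\prob$ are mutually absolutely continuous and the chain-rule identity $\dkl{\probalt}{\prob}=\dkl{\probalt}{\probalt^{\ast}}+\ex_{\probalt}[\func]-\log\ex_{\prob}[e^{\func}]$ holds whenever $\probalt\ll\prob$; the inequality then follows from nonnegativity of relative entropy, with equality at $\probalt=\probalt^{\ast}$. Your closing remark on integrability is the right thing to flag: one can either truncate $\func$ from above and use monotone convergence as you suggest, or note directly that the pointwise Fenchel--Young inequality $ab\le e^{a}+b\log b-b$ with $a=\func$ and $b=\tfrac{d\probalt}{d\prob}$, integrated against $\prob$, shows that $\ex_{\probalt}[\func^{+}]<\infty$ whenever both $\ex_{\prob}[e^{\func}]<\infty$ and $\dkl{\probalt}{\prob}<\infty$, so the rearrangement is legitimate in extended reals in every case.
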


See for instance \citet{hellstrom2025generalization,rodriguez2024information} for original references and proofs.

Let us state a technical consequence of this lemma that essentially corresponds to \citet[Lem.~3.1]{zhang2003learning}.
\begin{lemma}
\label{lem:cvx_bound}
    Consider $\rvA$ a random variable on $\rvAspace$ distributed according to $\prob_{\rvA}$
and $\task$  a random variable on $\taskspace$ with prior distribution $\prior(d \task)$ and with posterior distribution such that,
    conditionally on $\rvA$, 
    \begin{equation}
        \posteriorprobof{d \task \given \rvA} = \frac{d \probof{\rvA \given \task}}{d \probof{\rvA}} \prior(d \task)\,.
    \end{equation}
    Consider $\Loss \from \rvAspace \times \taskspace \to \R$ a measurable function.
    Then,
    \begin{equation}
        \exwrt*{\rvA, \task \sim \posteriorprobof*{\cdot \given \rvA} 
    }{\Loss(\rvA,\task) - \log \exwrt*{\rvA}{\exp(\Loss(\rvA,\task))}} \leq  \exwrt*{\rvA}{\dkl{\probwrt{\task}{\cdot \given \rvA}}{\prior}}\,.
    \end{equation}
\end{lemma}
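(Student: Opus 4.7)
The plan is to reduce the statement to a pointwise application of the Donsker--Varadhan identity (\cref{lem:dv}), followed by integration over $\theta$ and use of the symmetry of the mutual information.

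First, I would rewrite the joint distribution on $\mathcal{X}\times\Theta$ in two equivalent ways. By the definition of the posterior, $P_X(dx)\,\hat P(d\theta\mid x) = P(dx\mid\theta)\,\pi(d\theta)$; in particular, under this joint, the $\theta$-marginal is $\pi$ and the conditional of $X$ given $\theta$ is $P(\cdot\mid\theta)$. Substituting this decomposition, the left-hand side of the lemma rewrites as
\begin{equation*}
\mathbb{E}_{\theta\sim\pi}\Bigl[\mathbb{E}_{X\sim P(\cdot\mid\theta)}[L(X,\theta)] - \log\mathbb{E}_{X'\sim P_X}[e^{L(X',\theta)}]\Bigr].
\end{equation*}

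Next, for each fixed $\theta\in\Theta$ I would invoke \cref{lem:dv} with base measure $P=P_X$, test function $g(x)=L(x,\theta)$, and alternative measure $Q=P(\cdot\mid\theta)$. The variational inequality immediately yields
\begin{equation*}
\mathbb{E}_{X\sim P(\cdot\mid\theta)}[L(X,\theta)] - \log\mathbb{E}_{X'\sim P_X}[e^{L(X',\theta)}] \leq \mathrm{KL}\bigl(P(\cdot\mid\theta)\,\big\|\,P_X\bigr).
\end{equation*}
Integrating this pointwise bound against $\pi(d\theta)$ upper-bounds the LHS by $\mathbb{E}_{\theta\sim\pi}[\mathrm{KL}(P(\cdot\mid\theta)\,\|\,P_X)]$, which is precisely the mutual information $I(X;\theta)$ of the joint from step one. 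The standard Bayes identity --- rewriting the density ratio $\tfrac{dP(X\mid\theta)}{dP_X(X)}$ as $\tfrac{d\hat P(\theta\mid X)}{d\pi(\theta)}$ inside the log --- gives $I(X;\theta)=\mathbb{E}_X[\mathrm{KL}(\hat P(\cdot\mid X)\,\|\,\pi)]$, which closes the argument.

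The only subtlety is the implicit integrability assumption of Donsker--Varadhan, namely $\mathbb{E}_{X'\sim P_X}[e^{L(X',\theta)}]<\infty$ for $\pi$-almost every $\theta$. If this fails the $\log$ term is $+\infty$ and the inequality is vacuous, so the conclusion holds in full generality. Beyond that I do not anticipate any real obstacle: the result is essentially a one-line consequence of the Gibbs variational principle combined with the symmetry of mutual information, in line with its attribution to Zhang (2003).
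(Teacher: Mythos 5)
Your proof is correct, and it takes a genuinely different route from the paper's. The paper applies Donsker--Varadhan in the $\theta$ variable, \emph{conditionally on $X$}, with $\hat P(\cdot\mid X)$ against $\pi$ and test function $g(\theta)=L(X,\theta)-\log\mathbb{E}_{X'}[e^{L(X',\theta)}]$; it then exponentiates the resulting inequality, takes $\mathbb{E}_X$, and observes that the particular normalization built into $g$ forces $\mathbb{E}_{X,\theta\sim\pi}[e^{g}]=1$ (averaging $e^{L(X,\theta)}/\mathbb{E}_{X'}[e^{L(X',\theta)}]$ over an \emph{independent} $X$ integrates to one), before closing with Jensen applied to $\exp$. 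You instead apply Donsker--Varadhan in the $X$ variable, for each fixed $\theta$, with $P(\cdot\mid\theta)$ against $P_X$ and test function $g(x)=L(x,\theta)$, integrate over $\theta\sim\pi$, and then invoke the symmetry of mutual information $\mathbb{E}_{\theta\sim\pi}[\dkl{P(\cdot\mid\theta)}{P_X}]=\mathbb{E}_X[\dkl{\hat P(\cdot\mid X)}{\pi}]$. The two proofs are dual to each other across the Bayes factorization $P_X(dx)\,\hat P(d\theta\mid x)=P(dx\mid\theta)\,\pi(d\theta)$. Your version is arguably the more elementary one: it only uses the variational \emph{inequality} side of Donsker--Varadhan plus a textbook information-theoretic identity, and the integrability issue is handled cleanly by the observation that a divergent log-MGF makes the inequality vacuous. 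What the paper's route buys is the intermediate exponential-moment inequality $\mathbb{E}_X[\exp(\cdots)]\le 1$, which is the standard building block in PAC-Bayes style arguments (Zhang, 2003) and matches the flavor of manipulations used elsewhere in the paper (e.g.\ in \cref{prop:template-task-selection}); your route short-circuits directly to the expectation bound.
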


\begin{proof}
    We apply \cref{lem:dv} with $\func(\task) = \Loss(\rvA,\task) - \log \exwrt*{\rvA}{\exp(\Loss(\rvA,\task))}$ conditionally on $\rvA$ to obtain
    \begin{align}
        &\exwrt*{\task \sim \posteriorprobof*{\cdot \given \rvA}}{\Loss(\rvA,\task) - \log \exwrt*{\rvA}{\exp(\Loss(\rvA,\task))} - \dkl{\probwrt{\task}{\cdot \given \rvA}}{\prior}}\\
        &\leq \log \exwrt*{\task \sim \prior}{\exp \parens*{
            \Loss(\rvA,\task) - \log \exwrt*{\rvA}{\exp(\Loss(\rvA,\task))}
        }}\,.
    \end{align}
    We then have
    \begin{align}
        &\exwrt*{\rvA}{\exp \exwrt*{\task \sim \posteriorprobof*{\cdot \given \rvA}}{\Loss(\rvA,\task) - \log \exwrt*{\rvA}{\exp(\Loss(\rvA,\task))} - \dkl{\probwrt{\task}{\cdot \given \rvA}}{\prior}}}\\
        &\leq  \exwrt*{\rvA, \task \sim \prior}{\exp \parens*{
            \Loss(\rvA,\task) - \log \exwrt*{\rvA}{\exp(\Loss(\rvA,\task))}
        }} = 1\,,
    \end{align}
    and the result follows by Jensen's inequality with the convex function $\exp$.
\end{proof}

\subsection{Template Task Selection Bound}
\label{app:subsec:template-task-selection-bound}
Let us start with a template task selection bound under minimal assumptions. This proof is adapted from \citet[Thm.~4.1]{zhang2003learning} to the case of non-i.i.d.\ data and when the true task is not necessarily in the support of the prior. 

\begin{proposition}[Template task selection bound]
    \label{prop:template-task-selection}
    Consider $\rvA$ a random variable on $\rvAspace$ distributed according to $\prob_{\rvA}$
and $\task$  a random variable on $\taskspace$ with prior distribution $\prior(d \task)$ such that,
    conditionally on $\rvA$, $\task$ is distributed according to
    \begin{equation}
        \posteriorprobof{d \task \given \rvA} = \frac{d \probof{\rvA \given \task}}{d \probof{\rvA}} \prior(d \task)\,.
    \end{equation}
    Then, we have, for any $\reftask \in \taskspace$, 
    for any $\renexp \in (0, 1)$, $\coef > 1$,
    \begin{align}
        &\exwrt*{\rvA, \task \sim \posteriorprobof*{\cdot \given \rvA}
            }{- \log \exwrt*{\rvA}{
                \parens*{
                    \frac{
                    d \probwrt{\rvA}{\cdot \given \task}
                }{
                    d \probwrt{\rvA}{\cdot}
                }
            }^{\renexp}
        }
        }\\
        &\leq
        -\coef {
         \log \exwrt*{\task \sim \prior}{\exp \parens*{
                 - \ex_{\rvA}{\log \frac{d \probwrt{\rvA}{\cdot \given \reftask}}{d \probwrt{\rvA}{\cdot \given \task}}}
        }
}} + \coef\dkl*{\probwrt{\rvA}{\cdot}}{\probwrt*{\rvA}{\cdot \given \reftask}} 
\\
    &\quad+ (\coef - 1) \exwrt*{\rvA}{\log \exwrt*{\task \sim \prior}{
        \exp \parens*{
            - \frac{\coef - \renexp}{\coef - 1} \log \frac{d \probwrt{\rvA}{\cdot \given \reftask}}{d \probwrt{\rvA}{\cdot \given \task}}
        }
}}
    \end{align}
\end{proposition}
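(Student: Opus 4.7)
The plan is to adapt the Bayesian consistency argument of~\citet{zhang2003learning} to this template setting, which generalizes his result in two respects: $\prob_\rvA$ may be arbitrary (not necessarily the Bayesian marginal $\int \probof{\cdot \given \task}\prior(d\task)$, which is crucial for the ICL application where data comes from a specific $\task^{*}$), and the parameter $\coef > 1$ introduces an interpolation knob balancing a Laplace-concentration term against a local-geometry term when specializing to \cref{thm:task_selection}.

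I would begin by introducing the shorthands $\ell(\rvA, \task) = \log \tfrac{d\probof{\rvA \given \reftask}}{d\probof{\rvA \given \task}}$ and $s(\rvA) = \log \tfrac{d\prob_\rvA}{d\probof{\cdot \given \reftask}}(\rvA)$, so that the log-density ratio inside the R\'enyi-type quantity $G(\task) \defeq -\log \exwrt*{\rvA' \sim \prob_\rvA}{\parens{d\probof{\rvA' \given \task}/d\probof{\rvA'}}^\renexp}$ decomposes cleanly via $\log \tfrac{d\probof{\rvA \given \task}}{d\probof{\rvA}} = -\ell - s$. With this notation, $A(\task) = \exwrt*{\rvA \sim \prob_\rvA}{\ell(\rvA, \task)}$ and $B(\rvA, \task) = \tfrac{\coef - \renexp}{\coef - 1} \ell(\rvA, \task)$ arise as natural building blocks, and the bias term is pinned down as $\dkl{\prob_\rvA}{\probof{\cdot \given \reftask}} = \exwrt*{\rvA \sim \prob_\rvA}{s(\rvA)}$. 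The Bayes-swap identity rewrites the LHS as an expectation with respect to $\task \sim \prior$, after which I would apply \cref{lem:dv} exactly as in the proof of \cref{lem:cvx_bound}, with loss $\Loss = \renexp r$ where $r = \log \tfrac{d\probof{\rvA \given \task}}{d\probof{\rvA}}$, together with a change of measure between $\prob_\rvA$ and $\probof{\cdot \given \reftask}$ in the inner $\rvA'$-integral to extract the bias contribution $\coef \dkl{\prob_\rvA}{\probof{\cdot \given \reftask}}$.

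The key step is a H\"older split with conjugate exponents $p = \coef$ and $q = \coef/(\coef - 1)$, applied to the prior integral to introduce the parameter $\coef$:
\begin{equation*}
    \coef \log \exwrt*{\task \sim \prior}{e^{-A(\task)/\coef - B(\rvA, \task)(\coef - 1)/\coef}} \leq \log \exwrt*{\task \sim \prior}{e^{-A(\task)}} + (\coef - 1) \log \exwrt*{\task \sim \prior}{e^{-B(\rvA, \task)}}\,.
\end{equation*}
The exponent $\lambda = \tfrac{\coef - \renexp}{\coef - 1}$ defining $B$ is the unique value for which the combined exponent on $\ell$ in the split integrand collapses to $\renexp$, matching the R\'enyi-type exponent defining $G(\task)$. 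After taking $-1$ times both sides and taking expectation over $\rvA \sim \prob_\rvA$, the $\rvA$-independent $A$-contribution yields $-\coef \log \exwrt*{\task \sim \prior}{e^{-A}}$, the $\rvA$-dependent $B$-contribution retains its outer $\exwrt*{\rvA \sim \prob_\rvA}{\log(\cdot)}$ and yields $(\coef - 1) \exwrt*{\rvA \sim \prob_\rvA}{\log \exwrt*{\task \sim \prior}{e^{-B}}}$, and the residual from the change of measure gives $\coef \dkl{\prob_\rvA}{\probof{\cdot \given \reftask}}$.

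The main technical obstacle is the coefficient bookkeeping across the H\"older split, the applications of the Gibbs/Donsker--Varadhan identity, and the change of measure: one must carefully track the exponents of $\ell$ and $s$ to verify that the cross-terms cancel and that $\lambda = \tfrac{\coef - \renexp}{\coef - 1}$ is forced by the matching constraint. The constraint $\coef > 1$ is essential for $q = \coef/(\coef - 1)$ to be positive and finite; the bound is expected to degenerate gracefully as $\coef \to 1^{+}$ (recovering a Bayesian consistency bound in terms of $A$ only) and as $\coef \to \infty$ (producing a coarser bound more useful when $\reftask$ lies far from the bulk of $\prior$).
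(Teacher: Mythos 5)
Your opening moves match the paper: apply \cref{lem:cvx_bound} with $\Loss = \renexp\log\frac{d\probof{\rvA\given\task}}{d\probof{\rvA}}$, then decompose via $\reftask$ to peel off a $\dkl{\prob_\rvA}{\probof{\cdot\given\reftask}}$ bias term. However, the H\"older split at the heart of your plan runs in the wrong direction. H\"older with $p=\coef$, $q=\coef/(\coef-1)$ does give $\coef\log\exwrt*{\task\sim\prior}{e^{-A/\coef-B(\coef-1)/\coef}}\leq\log\exwrt*{\task\sim\prior}{e^{-A}}+(\coef-1)\log\exwrt*{\task\sim\prior}{e^{-B}}$, but after negating you obtain a \emph{lower} bound on $-\coef\log\exwrt*{\prior}{\cdot}$, whereas the proof must continue a chain of \emph{upper} bounds starting from \cref{lem:cvx_bound}. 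Your own ``matching constraint'' is also off: with $A(\task)=\exwrt*{\rvA}{\ell}$ and $B=\lambda\ell$, the combined exponent $\tfrac{1}{\coef}A+\tfrac{\lambda(\coef-1)}{\coef}\ell$ mixes an $\rvA$-averaged quantity with a pointwise one and cannot ``collapse to $\renexp\ell$'' inside the prior integral at all; forcing them equal nonetheless gives $\lambda=\tfrac{\coef\renexp-1}{\coef-1}$ rather than $\tfrac{\coef-\renexp}{\coef-1}$, and this can be negative. (The $A$-contribution you then read off also silently acquires an extra factor of $\coef$ not present on the right side of the H\"older inequality you wrote, another symptom that the bookkeeping has not been carried through.)

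The paper does not use an interpolation inequality at this point. It uses the \emph{algebraic identity} $\renexp\exof{\ell}+\dkl{\cdot}{\prior}=\coef\big(\exof{\ell}+\dkl{\cdot}{\prior}\big)-(\coef-\renexp)\big(\exof{\ell}+\coefB\,\dkl{\cdot}{\prior}\big)$ with $\coefB=\tfrac{\coef-1}{\coef-\renexp}\in(0,1)$, and then applies \cref{lem:dv} \emph{twice, separately}. On the first bracket, the equality case of the Gibbs principle plus a Jensen swap of $\exwrt*{\rvA}{\cdot}$ with the infimum over posteriors produces $-\log\exwrt*{\prior}{\exp(-\exwrt*{\rvA}{\ell})}$ --- this swap is exactly how the $\rvA$-average finds its way into the exponent of the first term, something H\"older acting only on the $\task$-integral cannot accomplish. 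On the second bracket, \cref{lem:dv} provides a \emph{lower} bound, and multiplication by the negative coefficient $-(\coef-\renexp)$ flips it into an upper bound with prefactor $(\coef-1)$ and exponent $1/\coefB=\tfrac{\coef-\renexp}{\coef-1}$ on $\ell$. You need both the sign structure of that split (one upper bound, one lower bound times a negative coefficient) and the Jensen swap; a single H\"older inequality delivers neither.
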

\begin{proof}
    To simplify notations in this proof, unless otherwise specified, $\task$ indicates a random variable distributed according to $\posteriorprobof{\cdot \given \rvA}$.
    We start from \cref{lem:cvx_bound} with $\Loss(\rvA,\task) = \renexp \log \frac{d \probwrt{\rvA}{\cdot \given \task}}{d \probwrt{\rvA}{\cdot}}$ and rearrange to obtain:
    \begin{align}
        \exwrt*{\task}{- \log \exwrt*{\rvA}{
                \parens*{
                    \frac{
                    d \probwrt{\rvA}{\cdot \given \task}
                }{
                    d \probwrt{\rvA}{\cdot}
                }
            }^{\renexp}
        }
        }
        \leq &
        \exwrt*{\rvA,\task}{\renexp \log \revfrac{d \probwrt{\rvA}{\cdot \given \task}}{d \probwrt{\rvA}{\cdot}}}
        +\exwrt*{\rvA}{\dkl{\probwrt{\task}{\cdot \given \rvA}}{\prior}}\,.
    \end{align}
    The \ac{LHS} is the quantity we want to bound. We now only need to bound the \ac{RHS}.
    Making $\reftask \in \taskspace$ appear in the bound, we have
    \begin{align}
       &\exwrt*{\rvA,\task}{\renexp \log \revfrac{d \probwrt{\rvA}{\cdot \given \task}}{d \probwrt{\rvA}{\cdot}}}
        +\exwrt*{\rvA}{\dkl{\probwrt{\task}{\cdot \given \rvA}}{\prior}}\\
       &=
        \renexp \exwrt*{\rvA}{\log \revfrac{d \probwrt{\rvA}{\cdot \given \reftask}}{d \probwrt{\rvA}{\cdot}}}
        +\exwrt*{\rvA,\task}{\renexp \log \revfrac{d \probwrt{\rvA}{\cdot \given \task}}{d \probwrt{\rvA}{\cdot \given \reftask}}}
        +\exwrt*{\rvA}{\dkl{\probwrt{\task}{\cdot \given \rvA}}{\prior}}\\
       &=
        \renexp \dkl*{\probwrt{\rvA}{\cdot}}{\probwrt*{\rvA}{\cdot \given \reftask}}\\
       &\quad+ \exwrt*{\rvA,\task}{\renexp \log \revfrac{d \probwrt{\rvA}{\cdot \given \task}}{d \probwrt{\rvA}{\cdot}}}
        +\exwrt*{\rvA}{\dkl{\probwrt{\task}{\cdot \given \rvA}}{\prior}}\,.
        \label{eq:template-decomp-remainder}
           \end{align}
           Introducing $\coef > 1$ and defining $\coefB = \frac{\coef - 1}{\coef - \renexp} < 1$, we now bound the last two terms in \cref{eq:template-decomp-remainder} as follows:
    \begin{align}
        &\exwrt*{\rvA,\task}{\renexp \log \revfrac{d \probwrt{\rvA}{\cdot \given \task}}{d \probwrt{\rvA}{\cdot \given \reftask}}}
        +\exwrt*{\rvA}{\dkl{\probwrt{\task}{\cdot \given \rvA}}{\prior}}\\
        &= \coef \parens*{
            \exwrt*{\rvA,\task}{ \log \revfrac{d \probwrt{\rvA}{\cdot \given \task}}{d \probwrt{\rvA}{\cdot \given \reftask}}}
        +\exwrt*{\rvA}{\dkl{\probwrt{\task}{\cdot \given \rvA}}{\prior}}
    }\\ &\quad- (\coef - \renexp) \parens*{
        \exwrt*{\rvA,\task}{ \log \revfrac{d \probwrt{\rvA}{\cdot \given \task}}{d \probwrt{\rvA}{\cdot \given \reftask}}}
        +\coefB \exwrt*{\rvA}{\dkl{\probwrt{\task}{\cdot \given \rvA}}{\prior}}
    }\,.
    \label{eq:decomp}
    \end{align}
    Let us first focus on the first term.
    By the equality case in \cref{lem:dv} and the definition of $\probof{\task \given \rvA}$, we have, almost surely,
    \begin{align}
        \exwrt*{\task \sim \probof*{\cdot \given \rvA}}{\log \revfrac{d \probof{\rvA \given \task}}{d \probof{\rvA \given \reftask}}}
        + \dkl{\probwrt{\task}{\cdot \given \rvA}}{\prior}
        &=  \inf_{\probalt} \braces*{
             \exwrt*{\task \sim \probalt}{\log \revfrac{d \probof{\rvA \given \task}}{d \probof{\rvA \given \reftask}}}
             +
             \dkl{\probalt}{\prior}
        }\,.
    \end{align}
    Passing to the expectation over $\rvA$ we obtain that,
    \begin{align}
        &\exof*{\log \revfrac{d \probof{\rvA \given \task}}{d \probof{\rvA}}}
        + \exwrt*{\rvA}{\dkl{\probwrt{\task}{\cdot \given \rvA}}{\prior}}\\
        &= \exwrt*{\rvA}{\inf_{\probalt} \braces*{
             \exwrt*{\task \sim \probalt}{\log \revfrac{d \probof{\rvA \given \task}}{d \probof{\rvA \given \reftask}}}
             + \dkl{\probalt}{\prior}
        }}\\
        &\leq \inf_{\probalt} \braces*{
             {
                \exwrt*{\task \sim \probalt}{\exwrt*{\rvA}{\log \revfrac{d \probof{\rvA \given \task}}{d \probof{\rvA  \given \reftask}}}}
             + \dkl{\probalt}{\prior}
     }}\\ 
     &= 
    -\log \exwrt*{\task \sim \prior}{\exp \parens*{
            - \exwrt*{\rvA}{\log \frac{d \probwrt{\rvA}{\cdot \given \reftask}}{d  \probwrt{\rvA}{\cdot \given \task}}}
}}\,,
\label{eq:first-term}
    \end{align}
where the last line follows from \cref{lem:dv} again with $\func(\task) = - \exwrt*{\rvA}{\log \revfrac{d \probwrt{\rvA}{\cdot \given \task}}{d \probwrt{\rvA}{\cdot \given \reftask}}}$. %
    Let us now bound the second term in \cref{eq:decomp}. We have, by \cref{lem:dv} again,
    \begin{align}
        &\exwrt*{\rvA,\task}{ \log \revfrac{d \probwrt{\rvA}{\cdot \given \task}}{d \probwrt{\rvA}{\cdot \given \reftask}}}
        +\coefB \exwrt*{\rvA}{\dkl{\probwrt{\task}{\cdot \given \rvA}}{\prior}}\\
        &\geq  -\coefB \exwrt*{\rvA}{\log \exwrt*{\task \sim \prior}{
                \exp \parens*{
                    - \frac{1}{\coefB} \log \frac{d \probwrt{\rvA}{\cdot \given \reftask}}{d \probwrt{\rvA}{\cdot \given \task}}
                }
        }}\,.
        \label{eq:second-term}
    \end{align}
    Putting together \cref{eq:decomp,eq:first-term,eq:second-term} concludes the proof.

\end{proof}

\subsection{ICL setting}
\label{app:subsec:icl-setting}

Let us now re-introduce the {ICL} setting from \cref{subsec:icl-setting} along with the detailed assumptions.

$\norm{\cdot}$ denotes the Euclidean norm on $\reals^d$ for any $d \in \nats$.
Assume that task vectors live in $\taskspace \subset \R^\taskdim$ the space of tasks $\task$ and by $\prior(\task)$ the density of the pretraining task distribution.
The context sequence is then generated by first sampling a task $\task$ from the task distribution $\prior$, and then sampling data points $(\sample_\run)_{\run \geq 1}$ according to
\begin{equation}
    \sample_{\run  +  1} \sim \densitywrt{\run  + 1}{\cdot \given \sample_{1:\run}, \task}\,.
\end{equation}
where $\sample_{1:\run} = (\sample_1, \ldots, \sample_\run)$.

We denote the posterior $\posteriorwrt{\run}{\task \given \sample_{1:\run-1}}$ the posterior distribution over tasks given the input sequence $\sample_{1:\run-1}$

\Cref{asm:data_generation_full} combined with \cref{asm:laplace} are the detailed version of \cref{asm:data_generation_informal} from \cref{subsec:icl-setting}. 
Recall that we write $\poly(x)$ to denote a quantity that is polynomial in $x$ with coefficients independent of the prior $\prior$ and the number of samples $\nRuns$.
We also denote by $\clball(0, \Radius)$ the closed ball of radius $\Radius$ centered at $0$ in $\R^\taskdim$ for the Euclidean norm $\norm{\cdot}$.

\begin{assumption}[Data generation]
    \label{asm:data_generation_full}
    Fix $\task^* \in \taskspace$ the true task and $\reftask \in \taskspace$ a reference task such that $\prior(\reftask) > 0$.
\begin{itemize}
    \item Tail behaviour of $(\sample_\run)_{\run \geq 1}$: there is $k\geq 1$ such that for any $\nRuns \geq 1$, $\Radius \geq \nRuns$,
        \begin{align}
        \probwrt*{\rvA \sim \densitywrt{\nRuns}{\cdot \given \task^*}}{ 
            {\sup_{\task: \norm{\task} \geq \Radius}
        {\densitywrt{\nRuns}{\rvA \mid \task}} \geq {\densitywrt{\nRuns}{\rvA \mid \reftask}} }}
         &\leq \frac{\poly(\nRuns)}{1+\Radius^{1/k}}\\
     \probwrt[\Big]{\rvA \sim \densitywrt{\nRuns}{\cdot \given \task^*}}{ 
    {\exists \run \leq \nRuns, \norm{\sample_\run} \geq \Radius}}
         &\leq \frac{\poly(\nRuns)}{1+\Radius^{1/k}} + 
        \end{align}
    \item Moment bound on $(\sample_\run)_{\run \geq 1}$: for any $\nRuns \geq 1$
        \begin{equation}
            \exwrt*{\rvA \sim \densitywrt{\nRuns}{\cdot \given \task^*}}{
        \log^2 \parens*{
        {
            \sup_{\task \in  \taskspace}
            \frac{\densitywrt{\nRuns}{\rvA \mid \task}}
            {\densitywrt{\nRuns}{\rvA \mid \reftask}}
        }}} \leq \poly(\nRuns)\,.
        \end{equation}
    \item Regularity of the likelihood:
        for any $\run \geq 1$, $\task, \task' \in \taskspace \cap \clball(0, \Radius)$,
        \begin{equation}
            \sup_{\sample_{1:\run}  \in \clball(0, \Radius)^\run}
            \log 
            \frac{\densitywrt{\run}{\sample_{\run} \given \sample_{1:\run-1}, \task}}{\densitywrt{\run}{\sample_{\run} \given \sample_{1:\run-1}, \task'}}
            \leq \poly(\Radius) \norm{\task - \task'}\,.
        \end{equation}
\end{itemize}
\end{assumption}

For a sequence $(\sample_\run)_{\run \geq 1}$, we denote by $\sample_{a:b}$ the subsequence $(\sample_a,\sample_{a+1},\dots,\sample_b)$ for $1 \leq a \leq b$ with the convention that $\sample_{a:b} = \sample_{1:\run}$ if $a < 1$.

\subsection{Task Selection Bound for ICL}
\label{app:subsec:task-selection-bound-icl}

We begin with a discretization argument and first we generalize the bracketing numbers to the non-i.i.d. case.
This definition generalizes the bracketing numbers used in \citet{barron1999consistency,zhang2003learning,zhang2006epsilon} to the non-i.i.d case and the following result generalises the results of \citet{zhang2006epsilon} to the non-i.i.d. case. 
\begin{definition}
\label{def:bracketing-number}
Given a sequence of random variables $(\sample_\run)_{\run \leq \nRuns}$ on a measurable space $\sspace$,
with parametric densities $\densitywrt{\run}{\cdot | \task}$ parameterized by $\task \in \taskspace$, compact sets $\taskspacealt \subset \taskspace$ and $\sspacealt \subset \sspace$, the $\eps$-upper bracketing number of $\taskspacealt$, denoted by $\bracketing(\taskspacealt,\eps, \sspacealt, \nRuns)$ is the minimum number of sets $U_j$ that cover $\taskspacealt$ such that, for any $\run  \leq \nRuns-1$, any $\sample_{1:\run+1} \in \sspacealt^{\run+1}$, any $j$, 
\begin{equation}
    \int_{\sspacealt}
    \sup_{\task \in U_j}
        \densitywrt{\run+1}{\sample_{\run+1} \given \sample_{1:\run}, \task}
    d \sample_{\run+1}
    \leq 1 + \eps\,.
\end{equation}

\end{definition}

\begin{lemma}
\label{lem:discretization}
For $\coefB \in (0, 1)$, for any $\eps > 0$ and any compact set $\taskspacealt \subset \taskspace$, any set $\sspacealt \subset \sspace$, 
it holds
\begin{align}
&    \coefB
\exwrt*{\samplevec}{\log \exwrt*{\task \sim \prior}{
        \exp \parens*{
            - \frac{1}{\coefB} \log \frac{\densitywrt{\nsamples}{\samplevec \given \reftask}}{\densitywrt{\nsamples}{\samplevec \given \task}}
        }
}}\\
&\leq 2\log \parens*{ \bracketing(\taskspacealt,\eps, \sspacealt, \nRuns)} + 6 \nsamples \eps 
+ 
\prior\parens*{\task \notin \taskspacealt}^{\coefB}\\
&+ 
 \exwrt*{\samplevec}{ 
            \oneof*{\sup_{\task \notin \taskspacealt}
            \frac{\densitywrt{\nsamples}{\samplevec \mid \task}}{\densitywrt{\nsamples}{\samplevec \mid \reftask}} \geq 1}
        \cdot
        \log \parens*{1 +
        {
                \sup_{\task \notin  \taskspacealt}
                    \frac{\densitywrt{\nsamples}{\samplevec \mid \task}}{\densitywrt{\nsamples}{\samplevec \mid \reftask}}
        }}}\\
&+
\exwrt*{\samplevec}{
    \oneof*{\samplevec \notin \sspacealt^\nsamples}
    \cdot
    \log \parens*{
        {
                \sup_{\task \in  \taskspace}
                    \frac{\densitywrt{\nsamples}{\samplevec \mid \task}}{\densitywrt{\nsamples}{\samplevec \mid \reftask}}
        }}}\,.
\end{align} 
\end{lemma}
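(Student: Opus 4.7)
The plan is to discretize the likelihood ratio over $\taskspacealt$ using the $\eps$-upper bracketing cover, handle the prior tail $\task \notin \taskspacealt$ separately, and peel off the contribution from $\samplevec \notin \sspacealt^{\nRuns}$. Fix a cover $U_1, \ldots, U_N$ of $\taskspacealt$ with $N = \bracketing(\taskspacealt, \eps, \sspacealt, \nRuns)$ and define the upper brackets $\bar p_j(\samplevec) = \prod_{\run=1}^{\nRuns} \sup_{\task \in U_j} \densitywrt{\run}{\sample_\run \given \sample_{1:\run-1}, \task}$ together with $\bar g_j = \bar p_j / \densitywrt{\nRuns}{\samplevec \given \reftask}$ and the likelihood ratio $g(\samplevec, \task) = \densitywrt{\nRuns}{\samplevec \given \task}/\densitywrt{\nRuns}{\samplevec \given \reftask}$. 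Iterating the defining property of $\bracketing$ yields $\int_{\sspacealt^{\nRuns}} \bar p_j \dd \samplevec \leq (1+\eps)^{\nRuns}$, so in particular $\ex_\samplevec[\bar g_j \one_{\samplevec \in \sspacealt^{\nRuns}}] \leq (1+\eps)^{\nRuns}$ under the reference measure $\densitywrt{\nRuns}{\cdot \given \reftask}$.

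I would first split the inner prior integral as $\ex_{\task \sim \prior}[g^{1/\coefB}] = \ex_\task[g^{1/\coefB} \one_{\taskspacealt}] + \ex_\task[g^{1/\coefB} \one_{\taskspacealt^c}]$, apply the bracketing bound on the first summand (so that $g(\task) \leq \bar g_j$ for $\task \in U_j$ when $\samplevec \in \sspacealt^{\nRuns}$) and the supremum bound on the tail, then raise to the power $\coefB \in (0,1)$ and use the subadditivity inequalities $(a+b)^{\coefB} \leq a^{\coefB} + b^{\coefB}$ and $(\sum a_j)^{\coefB} \leq \sum a_j^{\coefB}$ to obtain, on $\{\samplevec \in \sspacealt^\nRuns\}$,
\[
\parens*{\ex_\task[g^{1/\coefB}]}^{\coefB} \leq \sum_{j=1}^{N} \bar g_j \prior(U_j)^{\coefB} + \prior(\taskspacealt^c)^{\coefB} \sup_{\taskspacealt^c} g.
\]
I would then separate the rightmost term via $\sup_{\taskspacealt^c} g \leq 1 + \one_{B} \sup_{\taskspacealt^c} g$ with $B = \{\sup_{\taskspacealt^c} g \geq 1\}$ and use the product bound $1 + u_1 + u_2 + u_3 \leq \prod_i (1+u_i)$ together with $\log(1+u) \leq u$ to extract the deterministic $\prior(\taskspacealt^c)^{\coefB}$ and the stochastic $\one_{B} \log(1 + \sup_{\taskspacealt^c} g)$ as additive contributions to $\coefB \log \ex_\task[g^{1/\coefB}]$.

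Next, I would take the outer expectation $\ex_\samplevec$ and split on $\{\samplevec \in \sspacealt^{\nRuns}\}$ versus its complement. On the complement the bracketing fails, and bounding $\coefB \log \ex_\task[g^{1/\coefB}] \leq \log \sup_\task g$ directly yields the last term of the lemma. On $\{\samplevec \in \sspacealt^{\nRuns}\}$, the remaining bracketing piece has the form $\ex_\samplevec[\log(1 + \sum_j \bar g_j \prior(U_j)^{\coefB})]$; this is handled by applying $\log \sum_j a_j \leq \log N + \log \max_j a_j$ in combination with Jensen's inequality $\ex[\log \cdot] \leq \log \ex[\cdot]$ to move $\log$ outside the expectation, and then plugging in the bracketing integral bound $\ex_\samplevec[\bar g_j] \leq (1+\eps)^{\nRuns}$ together with $\log(1+\eps) \leq \eps$, producing the $\log N$ and $\nRuns \eps$ contributions.

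The main obstacle is the delicate logarithmic bookkeeping required to isolate the four additive components of the target bound cleanly. Extracting $\prior(\taskspacealt^c)^{\coefB}$ as an additive constant outside the logarithm (rather than buried inside a $\log(\cdot + \prior(\taskspacealt^c)^\coefB)$) requires the product-of-$(1+u)$ trick, while producing the stated prefactor $2 \log \bracketing$ comes from applying $\log \sum \leq \log N + \log \max$ twice---once inside the integrand to reduce the sum to a maximum, and once after Jensen moves the $\log$ outside the expectation to reduce the expected maximum to a maximum of expectations---each losing a factor of $N$. The constants $2$ and $6$ then absorb the slacks accrued across these steps and in the $\log(1+u) \leq u$ estimates. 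A secondary subtlety is that the bracketing integral bound most naturally controls $\ex_\samplevec[\bar g_j]$ when $\samplevec \sim \densitywrt{\nRuns}{\cdot \given \reftask}$, so any change of measure required by the application of this lemma (for instance in \cref{prop:template-task-selection}) must be absorbed by a KL term there rather than inside the discretization argument.
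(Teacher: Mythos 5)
Your overall architecture matches the paper's proof step for step: discretize over the bracketing cover, split the prior integral over $\taskspacealt$ and its complement, use $(\sum_j a_j)^{\coefB}\le\sum_j a_j^{\coefB}$ to pull the exponent inside, bound the $\task\notin\taskspacealt$ tail by a supremum, peel off $\samplevec\notin\sspacealt^{\nRuns}$, push the outer expectation inside the logarithm by Jensen, and invoke the telescoping bracketing integral $\ex\bigl[\oneof{\samplevec\in\sspacealt^{\nRuns}}\,\bar g_j\bigr]\le(1+\eps)^{\nRuns}$. Your product-of-$(1+u_i)$ trick for isolating $\prior(\taskspacealt^c)^{\coefB}$ and the stochastic tail term as additive contributions is precisely the $\log(a+b)\le\log(1+a)+\log(1+b)$ split used in the paper's proof, and the change-of-measure caveat you raise at the end is correct: the bracketing integral is controlled under the reference law $\densitywrt{\nRuns}{\cdot\given\reftask}$, and the paper absorbs any mismatch via the KL term in the template proposition, not inside this lemma.

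The step that does not go through as you describe is your derivation of the prefactor $2$ on $\log\bracketing$. You claim it comes from applying $\log\sum_j a_j\le\log N+\log\max_j a_j$ twice, once before Jensen and once after. Chasing that route through: the pre-Jensen extraction gives $\log\bigl(1+\sum_j\bar g_j\bigr)\le\log\bracketing+\log(1+\max_j\bar g_j)$; Jensen together with $\ex[\max_j\bar g_j]\le\sum_j\ex[\bar g_j]\le\bracketing(1+\eps)^{\nRuns}$ gives $\log\bracketing+\log\bigl(1+\bracketing(1+\eps)^{\nRuns}\bigr)$; and a second extraction yields $2\log\bracketing+\log 2+\nRuns\eps$, which is \emph{not} at most $2\log\bracketing+6\nRuns\eps$ once $\nRuns\eps<(\log 2)/5$. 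Cleaning up the additive $\log 2$ via $1+N\le N^2$ instead gives $3\log\bracketing+\nRuns\eps$, which is also too large. The paper never pre-reduces the sum to a max: it applies Jensen once to the whole sum, uses $\sum_j\ex[\bar g_j]\le\bracketing(1+\eps)^{\nRuns}$, and absorbs the additive $1$ with $\log(1+\bracketing)\le 2\log\bracketing$ (implicitly assuming $\bracketing\ge 2$), giving $2\log\bracketing+\nRuns\eps$ for every $\eps>0$. The further slack up to $6\nRuns\eps$ in the stated bound is a generosity the paper grants itself by inserting an extra, essentially gratuitous, $\nRuns\log\tfrac{1+\eps}{1-\eps}$ factor in its discretization inequality; you should not try to reproduce that, but you do need to replace the double-extraction argument with the single-Jensen plus $\log(1+\bracketing)\le 2\log\bracketing$ route.
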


\begin{proof}
    First, let us consider $\task \in \taskspacealt$ and $\rvA = \sample_{1:\nsamples} \in \sspacealt^\nsamples$. We have
    \begin{align}
        \exp \parens*{
            - \frac{1}{\coefB} \log \frac{\densitywrt{\nsamples}{\rvA \given \reftask}}{\densitywrt{\nsamples}{\rvA \given \task}}
        }
        &=
        \exp \parens*{
            \frac{1}{\coefB} \sum_{\run=0}^{\nsamples - 1}
            \log \frac{
                \densitywrt{\run+1}{\sample_{\run+1} \mid \sample_{1:\run}, \task}
            }{
                \densitywrt{\run+1}{\sample_{\run+1} \mid \sample_{1:\run}, \reftask}
            }
        }\, 
    \end{align}

    Invoking the bracketing definition (Definition~\ref{def:bracketing-number}), we obtain sets $U_j$, for $j = 1, \dots, \bracketing(\taskspacealt, \eps, \sspacealt, \nRuns)$ such that, for any $\run \leq \nRuns -1$, any $\sample_{1:\run+1} \in \sspacealt^{\run+1}$, any $j$, with $\func_j(\cdot \mid \cdot) \defeq \sup_{\task \in U_j} \densitywrt{\run+1}{\cdot \given \cdot, \task}$,
    \begin{equation}
        \int_{\sspacealt}
        \func_j(\sample_{\run+1} \mid \sample_{1:\run})
        d \sample_{\run+1}
        \leq 1 + \eps\,.
    \end{equation}
    Therefore, for any $\task \in \taskspacealt$, any $\run \geq 1$, any $\sample_{1:\run+1} \in \sspacealt^{\run+1}$,
    there exists $\idx \in \{1, \dots, \bracketing(\taskspacealt, \eps, \sspacealt, \nRuns)\}$ such that
    \begin{align}
        \densitywrt{\run+1}{\sample_{\run+1} \mid \sample_{1:\run}, \task}
        \leq \func_\idx(\sample_{\run+1} \mid \sample_{1:\run}).
    \end{align}
    Hence, we can bound
    \begin{align}
        &\exp \parens*{
            - \frac{1}{\coefB} \log \frac{\densitywrt{\nsamples}{\rvA \mid \reftask}}{\densitywrt{\nsamples}{\rvA \mid \task}}
        }
        \leq 
        \exp \parens*{
            \frac{1}{\coefB} \sum_{\run=0}^{\nsamples - 1}
            \log \frac{
                \func_\idx(\sample_{\run+1} \mid \sample_{1:\run})
            }{
                \densitywrt{\run+1}{\sample_{\run+1} \mid \sample_{1:\run}, \reftask}
            }
            + \frac{\nsamples}{\coefB} \log \frac{1 + \eps}{1 - \eps}
        }.
    \label{eq:discretization-step}
    \end{align}

    We now control the contribution from $\task \notin \taskspacealt$ by simply taking the supremum over this set. We have
    \begin{align}
        &\exwrt*{\task \sim \prior}{
            \oneof{\task \notin \taskspacealt} 
            \cdot
            \exp \parens*{
                - \frac{1}{\coefB} \log \frac{\densitywrt{\nsamples}{\rvA \mid \reftask}}{\densitywrt{\nsamples}{\rvA \mid \task}}
            }
        } \\
        &=
        \prior \parens*{\task \notin \taskspacealt}
        \sup_{\task \notin \taskspacealt}
            \left( \frac{\densitywrt{\nsamples}{\rvA \mid \task}}{\densitywrt{\nsamples}{\rvA \mid \reftask}} \right)^{1/\coefB}
         \,.
    \label{eq:holder}
    \end{align}

    Combining \cref{eq:discretization-step,eq:holder}, we bound the \ac{LHS} of the statement as
    \begin{align}
    &\coefB
\exwrt*{\rvA}{
    \oneof{\rvA \in \sspacealt^\nsamples}
    \log \exwrt*{\task \sim \prior}{
        \exp \parens*{
            - \frac{1}{\coefB} \log \frac{\densitywrt{\nsamples}{\rvA \given \reftask}}{\densitywrt{\nsamples}{\rvA \given \task}}
        }
}}\\
&= 
    \coefB
\exwrt*{\rvA}{
    \oneof{\rvA \in \sspacealt^\nsamples}
    \log \exwrt*{\task \sim \prior}{
        \oneof{\task \in \taskspacealt}
        \exp \parens*{
            - \frac{1}{\coefB} \log \frac{\densitywrt{\nsamples}{\rvA \given \reftask}}{\densitywrt{\nsamples}{\rvA \given \task}}
    }}
    + \oneof{\task \notin \taskspacealt}
        \exp \parens*{
            - \frac{1}{\coefB} \log \frac{\densitywrt{\nsamples}{\rvA \given \reftask}}{\densitywrt{\nsamples}{\rvA \given \task}}
        }
}
\\
&\leq 
    \coefB
    \ex_{\rvA}
    \bigg[
    \oneof{\rvA \in \sspacealt^\nsamples}
        \log \bigg(
        \sum_{\idx=1}^{\bracketing(\taskspacealt,\eps, \sspacealt, \nRuns)}
        \exp \parens*{
            \frac{1}{\coefB} \sum_{\run=0}^{\nsamples-1} \log \frac{\func_\idx(\sample_{\run+1} \mid \sample_{1:\run})}{\densitywrt{\run+1}{\sample_{\run+1} \given \sample_{1:\run}, \reftask}} + \frac{\nsamples}{\coefB} \log \frac{1 + \eps}{1 - \eps}
        }\\
    &\quad+
    \prior \parens*{\task \notin \taskspacealt}
        \cdot
        \sup_{\task \notin  \taskspacealt}
            \left( \frac{\densitywrt{\nsamples}{\rvA \mid \task}}{\densitywrt{\nsamples}{\rvA \mid \reftask}} \right)^{1/\coefB}
        \bigg)
    \bigg]
\,.
    \end{align}
    Since $\coefB \in (0, 1)$, for any non-negative numbers $a_1,\dots,a_K$ we have $\parens*{\sum_{k=1}^K a_k}^{\coefB} \leq \sum_{k=1}^K a_k^{\coefB}$. Using this inequality and that $\log(a +b) \leq \log(1+a) + \log(1+b)$ for $a, b \geq 0$, we obtain
    \begin{align}
    &\coefB
\exwrt*{\rvA}{
    \oneof{\rvA \in \sspacealt^\nsamples}
    \log \exwrt*{\task \sim \prior}{
        \exp \parens*{
            - \frac{1}{\coefB} \log \frac{\densitywrt{\nsamples}{\rvA \given \reftask}}{\densitywrt{\nsamples}{\rvA \given \task}}
        }
}}\\
&\leq 
    \ex_{\rvA}
    \bigg[
        \oneof{\rvA \in \sspacealt^\nsamples}
        \log 
        \bigg(
        \sum_{\idx=1}^{\bracketing(\taskspacealt,\eps, \sspacealt, \nRuns)}
        \exp \parens*{
             \sum_{\run=0}^{\nsamples-1} \log \frac{\func_\idx(\sample_{\run+1} \mid \sample_{1:\run})}{\densitywrt{\run+1}{\sample_{\run+1} \given \sample_{1:\run}, \reftask}} + {\nsamples} \log \frac{1 + \eps}{1 - \eps}
        }\\
    &\quad+
    \prior\parens*{\task \notin \taskspacealt}^{{\coefB}}
        \cdot
    \sup_{\task \notin  \taskspacealt}
            \left( \frac{\densitywrt{\nsamples}{\rvA \mid \task}}{\densitywrt{\nsamples}{\rvA \mid \reftask}} \right)
        \bigg)    
    \bigg]
\\
&\leq 
    \ex_{\rvA} \bigg[
        \oneof{\rvA \in \sspacealt^\nsamples}
        \log \parens*{1+
        \sum_{\idx=1}^{\bracketing(\taskspacealt,\eps,\sspacealt, \nRuns)}
        \exp \parens*{
             \sum_{\run=0}^{\nsamples-1} \log \frac{\func_\idx(\sample_{\run+1} \mid \sample_{1:\run})}{\densitywrt{\run+1}{\sample_{\run+1} \given \sample_{1:\run}, \reftask}} + {\nsamples} \log \frac{1 + \eps}{1 - \eps}
        }
    }\\&\quad+\log \parens*{1+
    \prior\parens*{\task \notin \taskspacealt}^{{\coefB}}
        \cdot
    \sup_{\task \notin  \taskspacealt}
            \left( \frac{\densitywrt{\nsamples}{\rvA \mid \task}}{\densitywrt{\nsamples}{\rvA \mid \reftask}} \right)
    }
\bigg]
\,.
    \end{align}
    Using Jensen's inequality on the first term, we have
    \begin{align}
    &\coefB
\exwrt*{\rvA}{
    \oneof{\rvA \in \sspacealt^\nsamples}
    \log \exwrt*{\task \sim \prior}{
        \exp \parens*{
            - \frac{1}{\coefB} \log \frac{\densitywrt{\nsamples}{\rvA \given \reftask}}{\densitywrt{\nsamples}{\rvA \given \task}}
        }
}}\\
&\leq 
   \log \parens*{1+
        \exwrt*{\rvA}{
        \sum_{\idx=1}^{\bracketing(\taskspacealt,\eps,\sspacealt, \nRuns)}
        \exp \parens*{
             \sum_{\run=0}^{\nsamples-1} \log \frac{\func_\idx(\sample_{\run+1} \mid \sample_{1:\run})}{\densitywrt{\run+1}{\sample_{\run+1} \given \sample_{1:\run}, \reftask}} + {\nsamples} \log \frac{1 + \eps}{1 - \eps}
        }
}}\\
&\quad+\exwrt*{\rvA}{\log \parens*{1
    +
    \prior\parens*{\task \notin \taskspacealt}^{{\coefB}}
        \cdot
        {\sup_{\task \notin  \taskspacealt}
            \left( \frac{\densitywrt{\nsamples}{\rvA \mid \task}}{\densitywrt{\nsamples}{\rvA \mid \reftask}} \right)
        }
    }
}\\
&\leq 
    \log \parens*{
        1+\bracketing(\taskspacealt,\eps,\sspacealt, \nRuns)
        (1 + \eps)^{\nsamples}
        \parens*{
            \frac{1 + \eps}{1 - \eps}
        }^{\nsamples}
    }
    +
    \exwrt*{\rvA}{\log \parens*{1 +
    \prior \parens*{\task \notin \taskspacealt}^{{\coefB}}
        \cdot
        \exwrt*{\rvA}{\sup_{\task \notin  \taskspacealt}
            \left( \frac{\densitywrt{\nsamples}{\rvA \mid \task}}{\densitywrt{\nsamples}{\rvA \mid \reftask}} \right)
        }
    }}\,,
    \end{align}
    where we used the definition of the bracketing number \cref{def:bracketing-number} in the last line.
    To obtain the final result, we perform additional manipulations on each term. 
    For the first  term, we use that $\frac{1}{1 - x} \leq 1 + 2 x$ for $x \in (0, 1/2)$ so that
    \begin{equation}
        \log \parens*{
        (1 + \eps)^{\nsamples}
        \parens*{
            \frac{1 + \eps}{1 - \eps}
        }^{\nsamples}
    }
        \leq 
        \log \parens*{
        (1 + 2 \eps)^{3\nsamples}
    }
        \leq 6 \nsamples \eps\,,
    \end{equation}
    so that
    \begin{align}
    \log \parens*{
        1+\bracketing(\taskspacealt,\eps,\sspacealt, \nRuns)
        (1 + \eps)^{\nsamples}
        \parens*{
            \frac{1 + \eps}{1 - \eps}
        }^{\nsamples}
    }
    &\leq  \log \parens*{ 1+
        \bracketing(\taskspacealt,\eps,\sspacealt, \nRuns)
    } + 6 \nsamples \eps\\
    &\leq  2 \log \parens*{
        \bracketing(\taskspacealt,\eps,\sspacealt, \nRuns)
    } + 6 \nsamples \eps\,.
    \end{align}
    For the second term, we use that $\log(1+x) \leq x$ and distinguish two cases to obtain
    \begin{align}
    &\exwrt*{\rvA}{\log \parens*{1 +
    \prior \parens*{\task \notin \taskspacealt}^{{\coefB}}
        \cdot
        \exwrt*{\rvA}{\sup_{\task \notin  \taskspacealt}
            \left( \frac{\densitywrt{\nsamples}{\rvA \mid \task}}{\densitywrt{\nsamples}{\rvA \mid \reftask}} \right)
        }
    }}\\
    &\leq 
    \prior \parens*{\task \notin \taskspacealt}^{\coefB}
    +
    \exwrt*{\rvA}{ 
            \oneof*{\sup_{\task \notin \taskspacealt}
            \frac{\densitywrt{\nsamples}{\rvA \mid \task}}{\densitywrt{\nsamples}{\rvA \mid \reftask}} \geq 1}
        \cdot
        \log \parens*{1 +
        {
                \sup_{\task \notin  \taskspacealt}
                    \frac{\densitywrt{\nsamples}{\rvA \mid \task}}{\densitywrt{\nsamples}{\rvA \mid \reftask}}
        }}}\,.
    \end{align}
    All that is left to do is to deal with the case $\rvA \notin \sspacealt^\nsamples$. We have, as above,
    \begin{equation}
        \coefB \exwrt*{\rvA}{ 
            \oneof{\rvA \notin \sspacealt^\nsamples}
            \log \exwrt*{\task \sim \prior}{
                \exp \parens*{
                    - \frac{1}{\coefB} \log \frac{\densitywrt{\nsamples}{\rvA \given \reftask}}{\densitywrt{\nsamples}{\rvA \given \task}}
                }
        }}
        \leq \exwrt*{\rvA}{
            \oneof{\rvA \notin \sspacealt^\nsamples}
            \log \parens*{
                \sup_{\task \in  \taskspace}
                    \frac{\densitywrt{\nsamples}{\rvA \mid \task}}{\densitywrt{\nsamples}{\rvA \mid \reftask}}
        }}\,.
    \end{equation}
\end{proof}

We now leverage \cref{asm:data_generation_full} to control the different terms of \cref{lem:discretization}.

\begin{lemma}
    \label{lem:discretization_simplified}
    For $\coefB \in (0, 1)$, under \cref{asm:data_generation_full}, for any $\nRuns \geq 1$, 
it holds that
\begin{align}
&    \coefB
\exwrt*{\samplevec}{\log \exwrt*{\task \sim \prior}{
        \exp \parens*{
            - \frac{1}{\coefB} \log \frac{\densitywrt{\nsamples}{\samplevec \given \reftask}}{\densitywrt{\nsamples}{\samplevec \given \task}}
        }
}}
\leq 
\prior\parens*{\task \notin \taskspacealt}^{\coefB} + \bigoh \parens*{\log(\nsamples)}\,,
\end{align} 
where the $\bigoh(\cdot)$ hides constants that do not depend on $\prior$ or $\nRuns$.
\end{lemma}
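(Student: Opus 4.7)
\textbf{Proof plan for \cref{lem:discretization_simplified}.} The plan is to invoke \cref{lem:discretization} with a concrete, $\nRuns$-dependent truncation and discretization scale, and then to show that each of the four ``error'' terms in that bound is either $\bigoh(\log \nRuns)$ or already of the form kept on the right-hand side of the statement. Concretely, I would set
\[
    \taskspacealt = \taskspace \cap \clball(0,\Radius),\qquad \sspacealt = \clball(0,\Radius),\qquad \eps = 1/\nRuns,
\]
with $\Radius = \nRuns^{a}$ for a sufficiently large constant $a$ depending on the polynomials hidden in \cref{asm:data_generation_full}. With $\eps = 1/\nRuns$, the term $6\nRuns\eps$ is immediately $\bigoh(1)$.

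The first substantive step is to control $\log \bracketing(\taskspacealt,\eps,\sspacealt,\nRuns)$. Using the regularity condition in \cref{asm:data_generation_full}, for any $\task,\task'$ in $\taskspacealt$ with $\norm{\task-\task'} \le \eta$ and any $\sample_{1:\run+1} \in \sspacealt^{\run+1}$, one has $\densitywrt{\run+1}{\sample_{\run+1}\given \sample_{1:\run},\task} \le e^{\poly(\Radius)\eta}\, \densitywrt{\run+1}{\sample_{\run+1}\given \sample_{1:\run},\task'}$. Therefore a covering of $\taskspacealt$ by Euclidean balls of radius $\eta = \eps/\poly(\Radius)$ is a valid $\eps$-upper bracketing in the sense of \cref{def:bracketing-number}, after noting that the reference density on $\sspacealt$ integrates to at most $1$. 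Standard volume-comparison bounds give a covering number at most $(\Radius/\eta)^{\taskdim} = \poly(\nRuns)$, and hence $2\log \bracketing(\taskspacealt,\eps,\sspacealt,\nRuns) = \bigoh(\log \nRuns)$.

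The remaining two terms are the indicator expectations, and this is where I expect most of the bookkeeping. For the term involving $\oneof{\sup_{\task \notin \taskspacealt} \densitywrt{\nRuns}{\samplevec \mid \task}/\densitywrt{\nRuns}{\samplevec\mid \reftask} \ge 1}$, apply Cauchy--Schwarz to split it as the product of $\sqrt{\prob(E)}$ and $\sqrt{\ex[\log^2(1+\sup_\task \cdots)]}$, where $E$ is the indicator event. By the tail-control part of \cref{asm:data_generation_full}, $\prob(E) \le \poly(\nRuns)/\Radius^{1/k}$; by the moment bound on the global supremum of log-ratios, the second factor is $\poly(\nRuns)$. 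Choosing $\Radius$ a sufficiently large polynomial in $\nRuns$ makes the product $\bigoh(1)$, hence $\bigoh(\log \nRuns)$. The term with $\oneof{\samplevec \notin \sspacealt^\nRuns}$ is handled identically, using the second tail bound $\prob(\exists \run \le \nRuns,\ \norm{\sample_\run}\ge \Radius) \le \poly(\nRuns)/\Radius^{1/k}$ and the same moment bound on $\log^2(\sup_\task \densitywrt{\nRuns}{\samplevec\mid \task}/\densitywrt{\nRuns}{\samplevec\mid \reftask})$.

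The main obstacle, and the step requiring the most care, is the first one: verifying that the Lipschitz-in-$\task$ control yields a genuine upper bracketing despite the fact that the densities in \cref{def:bracketing-number} are conditional on $\sample_{1:\run} \in \sspacealt^\run$ (so the normalising property on $\sspacealt$ must be invoked rather than on the full ambient space). Once this is in place, collecting the four contributions gives exactly $\prior(\task \notin \taskspacealt)^{\coefB} + \bigoh(\log \nRuns)$, with all hidden constants independent of $\prior$ and $\nRuns$ since $\Radius$ and $\eps$ were chosen via polynomials from \cref{asm:data_generation_full}.
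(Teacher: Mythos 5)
Your proposal is correct and essentially reproduces the paper's argument: both set $\sspacealt=\clball(0,\Radius)$, $\taskspacealt=\clball(0,\Radius)$ with $\Radius=\poly(\nRuns)$, convert the Lipschitz regularity into an upper-bracketing by a fine Euclidean cover of $\taskspacealt$ at scale $1/(\nRuns\poly(\Radius))$ (you fix $\eps=1/\nRuns$ and solve for the cover radius, the paper fixes the cover radius $\delta$ and solves for $\eps$ — these are equivalent up to constants), bound $\log\bracketing(\taskspacealt,\eps,\sspacealt,\nRuns)$ by $\taskdim\log(1+2\Radius/\delta)=\bigoh(\log\nRuns)$, and use Cauchy--Schwarz with the tail and second-moment bounds from \cref{asm:data_generation_full} to make the two indicator terms $\bigoh(1)$. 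The step you single out as the main obstacle (that integrating the supremum density over the restricted set $\sspacealt$ rather than the ambient space still gives the bracketing bound) is in fact benign --- the conditional density integrates to at most one over any subset --- and you handle it correctly.
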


\begin{proof}
    Fix $\Radius > 0$ that will be chosen later and take $\sspacealt = \clball(0, \Radius)$ and $\taskspacealt = \clball(0, \Radius)$.
    Let us consider a $\delta$-cover of $\taskspacealt$ with $\delta > 0$ that will be chosen later: there are $K$  sets $U_j$, $j = 1, \dots, K$ that cover $\taskspacealt$ such that for any $\task, \task' \in U_j$, we have $\norm{\task - \task'} \leq \delta$. By \eg \citet[Ex.~5.2]{wainwright2019hds}, we can take $K$ such that $\log  K \leq \taskdim \log(1 + 2 \Radius/\delta)$.

    \Cref{asm:data_generation_full} ensures that the sets $U_j$ satisfy the bracketing condition of \cref{def:bracketing-number} with $\eps = \exp(\poly(\Radius) \delta) - 1$. Therefore, we have, with this choice of $\eps$,
    \begin{equation}
        \log \bracketing(\taskspacealt,\eps, \sspacealt, \nRuns) \leq \taskdim \log(1 + 2 \Radius/\delta)\,.
        \label{eq:discretization_bracketing}
    \end{equation}

    Using Cauchy-Schwarz inequality and \cref{asm:data_generation_full}, we have that, both
    \begin{align}
        \exwrt*{\samplevec}{ 
            \oneof*{\sup_{\task \notin \taskspacealt}
            \frac{\densitywrt{\nsamples}{\samplevec \mid \task}}{\densitywrt{\nsamples}{\samplevec \mid \reftask}} \geq 1}
        \cdot
        \log \parens*{1 +
        {
                \sup_{\task \notin  \taskspacealt}
                    \frac{\densitywrt{\nsamples}{\samplevec \mid \task}}{\densitywrt{\nsamples}{\samplevec \mid \reftask}}
        }}}
        &\leq \frac{\poly(\nsamples)}{1+\Radius^{1/k}}
        \label{eq:discretization_simplified_1}
        \\
    \exwrt*{\samplevec}{
    \oneof*{\samplevec \notin \sspacealt^\nsamples}
    \cdot
    \log \parens*{
        {
                \sup_{\task \in  \taskspace}
                    \frac{\densitywrt{\nsamples}{\samplevec \mid \task}}{\densitywrt{\nsamples}{\samplevec \mid \reftask}}
        }}}
    &\leq \frac{\poly(\nsamples)}{1+\Radius^{1/k}}
    \label{eq:discretization_simplified_2}\,.
    \end{align}
    Choose $\Radius = \poly(\nsamples)$ so that both \cref{eq:discretization_simplified_1,eq:discretization_simplified_2} are $\bigoh(1)$.
    Finally, we choose $\delta = (\poly(\nsamples))^{-1}$ so that $\eps = \exp(\poly(\Radius) \delta) - 1 = \bigoh(1/\nsamples)$.
    Combining this \cref{eq:discretization_bracketing,eq:discretization_simplified_1,eq:discretization_simplified_2} with \cref{lem:discretization} concludes the proof.
\end{proof}

We can now state our main result for {ICL}.
As a metric to asses the quality of a given retrieved task $\task$ \wrt the true task $\task^*$, we consider the R\'enyi divergence \citep{renyi1961measures} of order $\renexp \in (0,1)$ between the distributions $\densitywrt{\nRuns}{\cdot \given \task}$ and $\densitywrt{\nRuns}{\cdot \given \task^*}$:
\begin{equation}
    \divergence{\renexp}{\nRuns}{\task}{\task^*}
    =
 -\frac{1}{\nRuns(1-\renexp)} \log \exwrt*{\rvA \sim \densitywrt{\nRuns}{\cdot \given \task^*}}{
        \prod_{\run=1}^\nRuns \parens*{
            \tfrac{\densitywrt{\run}{\sample_\run \given \sample_{1:\run-1}, \task}}{\densitywrt{\run}{\sample_\run \given \sample_{1:\run-1}, \task^*}}
        }^{\renexp}
}\,.
\end{equation}
\begin{theorem}
\label{thm:task_selection_bound_icl}
Under \cref{asm:data_generation_full}, for any $\renexp \in (0, 1)$, 
$\nRuns \geq 1$,
it holds that, for $\samplevec \sim \densitywrt{\nsamples}{\cdot \given \task^*}$,
\begin{align}
&\exwrt*{\samplevec}{
    \exwrt*{\task \sim \posteriorwrt{\nsamples}{\cdot \given \samplevec}}{
        \divergence{\renexp}{\nsamples}{\task}{\task^*}
    }
}\\
&\leq 
 -\frac{1+\renexp}{(1-\renexp) \nRuns} \log \parens*{
    \exwrt*{\task \sim \prior}{
        \exp \parens*{
            - \exwrt*{\samplevec}{
                \log \frac{\densitywrt{\nsamples}{\samplevec \given \reftask}}{\densitywrt{\nsamples}{\samplevec \given \task}}
            } 
        }
    }
}
\label{eq:thm:app:task_selection_bound_icl_1}
\\ 
&\quad+\frac{1+\renexp}{1-\renexp} \frac{\dkl{\densitywrt{\nsamples}{\cdot \given \task^*}}{\densitywrt{\nsamples}{\cdot \given \reftask}}}{\nRuns}
\label{eq:thm:app:task_selection_bound_icl_2}
\\
&+ \bigoh\parens*{\frac{\log(\nsamples)}{\nRuns}}\,,
\end{align}
where the $\bigoh(\cdot)$ hides constants that do not depend on $\prior$ or $\nRuns$.
\end{theorem}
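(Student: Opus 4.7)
The proof follows the proof of Proposition~\ref{prop:template-task-selection} almost verbatim, but with all expectations over the data replaced by expectations under $\densitywrt{\nsamples}{\cdot \given \sol[\task]}$ rather than the prior-marginal $\prob_{\rvA}$. The enabling observation is that Lemma~\ref{lem:cvx_bound} remains valid under this substitution: inspecting its proof, the only place the marginal law of $\rvA$ is used is in the cancellation $\exwrt*{\task \sim \prior}{\ex_{\rvA}[e^{\Loss}] / \ex_{\rvA'}[e^{\Loss}]} = 1$, which requires only that $\rvA$ and $\rvA'$ share the same law. Replacing both by independent copies drawn from $\densitywrt{\nsamples}{\cdot \given \sol[\task]}$ yields the adapted inequality
\[
\exwrt*{\samplevec \sim \densitywrt{\nsamples}{\cdot \given \sol[\task]}}{\exwrt*{\task \sim \posteriorwrt{\nsamples}{\cdot \given \samplevec}}{\Loss - \log \exwrt*{\samplevec' \sim \densitywrt{\nsamples}{\cdot \given \sol[\task]}}{e^{\Loss}}}} \leq \exwrt*{\samplevec \sim \densitywrt{\nsamples}{\cdot \given \sol[\task]}}{\dkl{\posteriorwrt{\nsamples}{\cdot \given \samplevec}}{\prior}}.
\]

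Specializing to $\Loss(\samplevec, \task) = \renexp \log(\densitywrt{\nsamples}{\samplevec \given \task} / \densitywrt{\nsamples}{\samplevec \given \sol[\task]})$ gives $-\log \ex_{\samplevec' \sim \sol[\task]}[e^{\Loss}] = \nRuns(1-\renexp)\,\divergence{\renexp}{\nsamples}{\task}{\sol[\task]}$, matching the theorem's left-hand side up to the factor $\nRuns(1-\renexp)$. The remainder of the argument mimics the proof of Proposition~\ref{prop:template-task-selection} step by step: splitting the log-likelihood ratio via $\reftask$ produces the $\dkl{\densitywrt{\nsamples}{\cdot \given \sol[\task]}}{\densitywrt{\nsamples}{\cdot \given \reftask}}$ term, and a further application of the Donsker--Varadhan principle (Lemma~\ref{lem:dv}) to the prior $\prior$, parameterized by a scaling coefficient $\coef > 1$, brings the bound to the template form
\[
\nRuns(1-\renexp) \cdot \text{(LHS)} \leq -\coef \log \exwrt*{\task \sim \prior}{\exp\parens*{-\exwrt*{\samplevec \sim \sol[\task]}{\log \tfrac{\densitywrt{\nsamples}{\samplevec \given \reftask}}{\densitywrt{\nsamples}{\samplevec \given \task}}}}} + \coef \dkl{\densitywrt{\nsamples}{\cdot \given \sol[\task]}}{\densitywrt{\nsamples}{\cdot \given \reftask}} + R(\coef),
\]
where $R(\coef)$ is the same residual bracketing term as in Proposition~\ref{prop:template-task-selection}.

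Choosing $\coef = 1 + \renexp$ makes $\frac{\coef - \renexp}{\coef - 1} = \frac{1}{\renexp}$ and $\coef - 1 = \renexp \in (0,1)$, so that $R(1+\renexp)$ coincides, up to the prefactor $\renexp$, with the quantity bounded by Lemma~\ref{lem:discretization_simplified} with $\coefB = \renexp$. Invoking that lemma under \cref{asm:data_generation_full} gives $R(1+\renexp) = \bigoh(\log \nsamples)$; dividing the full bound by $\nRuns(1-\renexp)$ produces the prefactor $\frac{1+\renexp}{1-\renexp}$ on both terms of the right-hand side, exactly matching the stated inequality.

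The main obstacle is the first step: one must verify that the adaptation of Lemma~\ref{lem:cvx_bound} and every downstream application of the Gibbs--Donsker--Varadhan machinery carries through when the data law is switched from the prior-marginal to $\densitywrt{\nsamples}{\cdot \given \sol[\task]}$. Once this bookkeeping is in hand, \cref{asm:data_generation_full} enters only at the very end through Lemma~\ref{lem:discretization_simplified}, to control the bracketing-cover size and the tail contributions of $\prior$.
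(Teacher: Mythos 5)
Your proposal is correct and is essentially the paper's own proof: the paper simply invokes \cref{prop:template-task-selection} with the data law $\prob_\rvA$ taken to be $\densitywrt{\nsamples}{\cdot \given \task^*}$ and $\coef = 1 + \renexp$, then bounds the residual via \cref{lem:discretization_simplified} with $\coefB = \coef - 1 = \renexp$ (and drops $\prior(\task\notin\taskspacealt)^\coefB\le 1$ into the $\bigoh(\log\nRuns/\nRuns)$). The verification you carry out — that \cref{lem:cvx_bound} and the downstream Donsker--Varadhan steps only use that $\rvA$ and its fresh copy $\rvA'$ share a common law, never that this law is the prior-marginal — is precisely why the proposition already applies as stated and no re-derivation is needed; the only object tied to $\prior$ is the posterior normalizer $\probof{\rvA}$, while $\prob_\rvA$ is free.
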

\begin{proof}
    This is a direct consequence of \cref{prop:template-task-selection} combined with \cref{lem:discretization_simplified} with $\coef=1+\renexp$ and bounding $\prior\parens*{\task \notin \taskspacealt}^{\coefB} \leq 1$.
\end{proof}

A few comments are in order.
The first term of \cref{eq:thm:app:task_selection_bound_icl_1} captures how much the prior $\prior$ covers the reference task $\reftask$. When $\reftask = \task^*$, this term thus quantifies how well the prior covers the true task $\task^*$. When $\reftask$ is inside the support of $\prior$, this term is vanishing as $\nRuns$ grows large, see the next results below.

The second term of \cref{eq:thm:app:task_selection_bound_icl_2} captures how well the reference task $\reftask$ approximates the true task $\task^*$. 
When $\reftask = \task^*$, the term of \cref{eq:thm:app:task_selection_bound_icl_2} is 0. Otherwise, consider the case the KL will typically be of order $\nRuns$ so that this term is $\bigoh(1)$: it represents the best ICL error one can hope for when the true task $\task^*$ is not in the support of the prior $\prior$.

\subsection{Laplace Approximation}

We will make use of the following version of the Laplace approximation, see \citet[Chap.~9, Thm.~3]{wong2001asymptotic} for a proof.
\begin{lemma}[Laplace approximation]\label{lem:laplace}
Let $\mu$ be a probability measure on $\mathbb{R}^d$ with density $g:\mathbb{R}^d\to[0,\infty)$. Fix $x^*\in\mathbb{R}^d$ such that $g$ is continuous at $x^*$ and $g(x^*)>0$.
Then, as $\varepsilon \to 0$,
\[
\int_{\mathbb{R}^d} 
\exp\!\Bigl(-\tfrac{1}{2\varepsilon}\,\|x-x^*\|\Bigr)\,g(x)\,dx,
 \;=\; g(x^*)\,C\,\varepsilon^d \;+\; o(\varepsilon^d).
\]
where $C \;\coloneqq\; \int_{\mathbb{R}^d} \exp\!\Bigl(-\tfrac{1}{2}\,\|y\|\Bigr)\,dy \in (0,\infty)$.
 
\end{lemma}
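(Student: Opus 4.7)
The plan is to rescale via $y = (x - x^*)/\varepsilon$, which gives $dx = \varepsilon^d\,dy$ and rewrites the integral as
\[
\int_{\R^d} \exp\!\Bigl(-\tfrac{1}{2\varepsilon}\|x-x^*\|\Bigr)\,g(x)\,dx \;=\; \varepsilon^d \int_{\R^d} \exp\!\Bigl(-\tfrac{1}{2}\|y\|\Bigr)\,g(x^* + \varepsilon y)\,dy.
\]
Pulling out the $\varepsilon^d$, the claim reduces to showing that the inner integral converges to $g(x^*)\,C$ as $\varepsilon \to 0$. The finiteness and positivity of $C$ are immediate from the exponential being strictly positive and integrable on $\R^d$, so everything hinges on this limit.

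A naive application of dominated convergence fails because $g$ is assumed continuous only at $x^*$ and may be unbounded elsewhere. My plan is therefore to split the $y$-integral into three pieces. Continuity at $x^*$ yields $\delta > 0$ such that $g \leq 2 g(x^*)$ on the ball $\clball(x^*, \delta)$. With this $\delta$ and a large parameter $R$ (to be sent to infinity later), I decompose according to $\{\|y\| \leq R\}$, $\{R < \|y\| \leq \delta/\varepsilon\}$, and $\{\|y\| > \delta/\varepsilon\}$. On the core region, continuity at $x^*$ gives $g(x^* + \varepsilon y) \to g(x^*)$ uniformly for $\|y\| \leq R$ as $\varepsilon \to 0$, so this piece converges to $g(x^*) \int_{\|y\|\leq R} \exp(-\|y\|/2)\,dy$. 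On the intermediate annulus, the local bound $g \leq 2 g(x^*)$ controls the contribution by $2 g(x^*) \int_{\|y\|>R}\exp(-\|y\|/2)\,dy$, uniformly in $\varepsilon$; this vanishes as $R \to \infty$ by integrability of the exponential.

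The main obstacle is the far tail $\{\|y\| > \delta/\varepsilon\}$, where no local bound on $g$ is available. My plan is to use the exponential kernel itself to beat down the tail: writing $\exp(-\|y\|/2) = \exp(-\|y\|/4)\,\exp(-\|y\|/4)$ and noting that on this region one factor is bounded by $\exp(-\delta/(4\varepsilon))$ while the other is bounded by $1$, the tail contribution is at most
\[
\exp\!\Bigl(-\tfrac{\delta}{4\varepsilon}\Bigr) \int_{\R^d} g(x^* + \varepsilon y)\,dy \;=\; \varepsilon^{-d}\exp\!\Bigl(-\tfrac{\delta}{4\varepsilon}\Bigr),
\]
since $g$ is a probability density. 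This vanishes as $\varepsilon \to 0$ because the exponential dominates any polynomial in $1/\varepsilon$. Combining the three estimates, sending $\varepsilon \to 0$ and then $R \to \infty$, gives $\int \exp(-\|y\|/2)\,g(x^*+\varepsilon y)\,dy \to g(x^*)\,C$; multiplying through by $\varepsilon^d$ yields the asymptotic $g(x^*)\,C\,\varepsilon^d + o(\varepsilon^d)$ as claimed.
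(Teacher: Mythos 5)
Your proof is correct. The paper does not actually give its own proof of this lemma—it simply cites \citet[Chap.~9, Thm.~3]{wong2001asymptotic}—so you are supplying an argument where the paper defers to a textbook. Your route is the natural one: rescale by $y=(x-x^\ast)/\varepsilon$ and show that $\int e^{-\|y\|/2}\,g(x^\ast+\varepsilon y)\,dy \to g(x^\ast)\,C$, splitting into $\{\|y\|\le R\}$, $\{R<\|y\|\le\delta/\varepsilon\}$, and $\{\|y\|>\delta/\varepsilon\}$. The core piece uses continuity at $x^\ast$ (which is indeed uniform on $\{\|y\|\le R\}$ since $\varepsilon R\to 0$); the annulus is controlled uniformly in $\varepsilon$ by the local bound $g\le 2g(x^\ast)$ and vanishes as $R\to\infty$ by integrability of $e^{-\|y\|/2}$; and the far tail is handled by the observation that $g$ is a probability density, so $\int g(x^\ast+\varepsilon y)\,dy=\varepsilon^{-d}$, which is beaten by $e^{-\delta/(4\varepsilon)}$. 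That last step is the key point that lets you avoid any global boundedness or integrability assumption on $g$ beyond being a density; it is exactly what makes the argument close under the lemma's weak hypotheses (continuity only at $x^\ast$). One cosmetic remark: once you split $e^{-\|y\|/2}=e^{-\|y\|/4}\cdot e^{-\|y\|/4}$ you actually bound both factors away (one by $e^{-\delta/(4\varepsilon)}$, one by $1$), so the factorization is unnecessary—you could equally well bound $e^{-\|y\|/2}\le e^{-\delta/(2\varepsilon)}$ directly on $\{\|y\|>\delta/\varepsilon\}$. This does not affect correctness.
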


\begin{assumption}
\label{asm:laplace}
Consider the following additional assumptions to \cref{asm:data_generation_full}:
\begin{itemize}
    \item Tail behaviour: 
for any $\nRuns \geq 1$, $\Radius > 0$,
        \begin{align}
        \probwrt*{\rvA \sim \densitywrt{\nRuns}{\cdot \given \task^*}}{ 
            {\sup_{\task: \norm{\task} \geq \Radius}
        {\densitywrt{\nRuns}{\rvA \mid \task}} \geq {\densitywrt{\nRuns}{\rvA \mid \reftask}} }}
         &\leq {\poly(\nRuns)} e^{-\Radius}\\
        \probwrt[\Big]{\rvA \sim \densitywrt{\nRuns}{\cdot \given \task^*}}{ 
    {\exists \run \leq \nRuns, \norm{\sample_\run} \geq \Radius}}
         &\leq \poly(\nRuns) e^{-\Radius}\,.
        \end{align}
    \item Regularity of $\prior$: $\prior$ is continuous and positive at $\reftask$.
    \item Second moment of $\prior$:
        \begin{align}
            \exwrt*{\task \sim \prior}{\norm{\task}^2} &< \infty\,.
        \end{align}
\end{itemize}
\end{assumption}

\begin{proposition}
\label{prop:laplace}
Under \cref{asm:data_generation_full,asm:laplace}, then, for $\nRuns$ large enough,
\begin{align}
    -\log \parens*{
    \exwrt*{\task \sim \prior}{
        \exp \parens*{
            - \exwrt*{\samplevec}{
                \log \frac{\densitywrt{\nsamples}{\samplevec \given \reftask}}{\densitywrt{\nsamples}{\samplevec \given \task}}
            } 
        }
}}
\leq 
\log 1/\prior(\reftask)
    + \bigoh(\poly(\log \nRuns))\,.
\end{align}
    
\end{proposition}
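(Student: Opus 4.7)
Write $F(\task) \defeq \exwrt*{\samplevec \sim \densitywrt{\nsamples}{\cdot \given \task^*}}{\log(\densitywrt{\nsamples}{\samplevec \given \reftask} / \densitywrt{\nsamples}{\samplevec \given \task})}$; the claim is equivalent to the lower bound $\exwrt*{\task \sim \prior}{\exp(-F(\task))} \geq \prior(\reftask) \exp(-\bigoh(\poly(\log \nRuns)))$. Since $F(\reftask) = 0$ exactly, the integral is dominated by a small neighborhood of $\reftask$. The plan is to localize to such a neighborhood, bound $F$ pointwise there using the local regularity of \cref{asm:data_generation_full}, and then invoke \cref{lem:laplace} to extract the factor $\prior(\reftask)$.

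\textbf{Local control of $F$.} Fix $\Radius_\nRuns \defeq C \log \nRuns$ with $C$ large, and decompose $F(\task) = F_{\text{bulk}}(\task) + F_{\text{tail}}(\task)$ depending on whether $\samplevec$ lies in $\clball(0, \Radius_\nRuns)^\nRuns$ or not. For $\task$ in a small ball $B(\reftask, r) \subset \clball(0, \Radius_\nRuns)$, summing the per-step local-regularity bound from \cref{asm:data_generation_full} over $\runalt = 1, \ldots, \nRuns$ gives pointwise on the bulk event
\[
\abs[\Big]{\log \tfrac{\densitywrt{\nsamples}{\samplevec \given \reftask}}{\densitywrt{\nsamples}{\samplevec \given \task}}} \leq \nRuns \, \poly(\Radius_\nRuns) \, \norm{\task - \reftask},
\]
so $F_{\text{bulk}}(\task) \leq \nRuns \poly(\log \nRuns) \norm{\task - \reftask}$. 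For the complementary event, the sharpened tail bound of \cref{asm:laplace} gives probability at most $\poly(\nRuns) e^{-\Radius_\nRuns}$, which is super-polynomially small when $C$ is large; combined via Cauchy--Schwarz with a polynomial bound on $\exwrt*{\samplevec}{(\log (\densitywrt{\nsamples}{\samplevec \given \reftask}/\densitywrt{\nsamples}{\samplevec \given \task}))^2}$ (derived from the moment bound at $\reftask$, transferred to $\task \in B(\reftask, r)\cap \clball(0,\Radius_\nRuns)$ via the Lipschitz clause on a $\delta$-net of $\clball(0, \Radius_\nRuns)$), this gives $|F_{\text{tail}}(\task)| = o(1)$ uniformly in $\task \in B(\reftask, r)$.

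\textbf{Laplace step and conclusion.} Setting $L_\nRuns \defeq \nRuns \poly(\log \nRuns)$, the two bounds combine into $F(\task) \leq L_\nRuns \norm{\task - \reftask} + o(1)$ on $B(\reftask, r)$. Taking $r = 1/\log \nRuns$ so that $L_\nRuns r \to \infty$, restricting the outer expectation to this ball, and extending the integral back to $\R^\taskdim$ at the cost of an $e^{-L_\nRuns r}$ correction,
\[
\exwrt*{\task \sim \prior}{e^{-F(\task)}} \geq e^{-o(1)} \int_{\R^\taskdim} \exp\parens*{-\tfrac{1}{2\varepsilon_\nRuns} \norm{\task - \reftask}} \prior(\task) d\task \; - \; e^{-\Omega(L_\nRuns r)},
\]
with $\varepsilon_\nRuns = 1/(2 L_\nRuns)$. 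Applying \cref{lem:laplace} with $x^* = \reftask$ and $g = \prior$ (using that $\prior$ is continuous and positive at $\reftask$) identifies the main term as $\prior(\reftask)\, C\, \varepsilon_\nRuns^\taskdim (1+o(1))$. Taking logarithms yields $-\log \exwrt*{\task \sim \prior}{e^{-F(\task)}} \leq \log(1/\prior(\reftask)) + \taskdim \log(1/\varepsilon_\nRuns) + \bigoh(1) = \log(1/\prior(\reftask)) + \bigoh(\poly(\log \nRuns))$, since $1/\varepsilon_\nRuns = \poly(\nRuns)$.

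\textbf{Main obstacle.} The delicate step is the second-moment control used inside $F_{\text{tail}}$: the moment bound in \cref{asm:data_generation_full} directly controls $\log(\sup_\task \densitywrt{\nsamples}{\samplevec \given \task}/\densitywrt{\nsamples}{\samplevec \given \reftask})$ only with $\reftask$ in the denominator, whereas we need a polynomial moment of $\log(\densitywrt{\nsamples}{\samplevec \given \reftask}/\densitywrt{\nsamples}{\samplevec \given \task})$ for $\task$ merely close to $\reftask$. Bridging this gap requires chaining: the Lipschitz clause, on the event that $\samplevec$ lies in $\clball(0, \Radius_\nRuns)^\nRuns$, controls $\log(\densitywrt{\nsamples}{\samplevec \given \reftask}/\densitywrt{\nsamples}{\samplevec \given \task})$ uniformly, while on the complementary event the super-polynomially small probability absorbs the crude moment coming from the assumption. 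Balancing $\Radius_\nRuns$, $r$ and the Lipschitz blow-up is what forces the polylogarithmic error in the statement.
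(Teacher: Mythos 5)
Your proof takes essentially the same route as the paper's: both localize the outer expectation over $\task$ to a small neighborhood of $\reftask$, discard the rest (where you explicitly track the $e^{-\Omega(L_\nRuns r)}$ leakage and the paper instead keeps a Markov tail term at radius $\Radius_\nRuns = \nRuns^{(\taskdim+1)/2}$), split the inner sample expectation on the event $\sample_{1:\nRuns} \in \clball(0,\Radius_\nRuns)^\nRuns$, apply the per-step Lipschitz clause of \cref{asm:data_generation_full} on the bulk to get $F(\task) \leq \nRuns\,\poly(\log\nRuns)\norm{\task-\reftask} + o(1)$, and then invoke \cref{lem:laplace} with $\varepsilon_\nRuns \asymp (\nRuns\,\poly(\log\nRuns))^{-1}$. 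The scale choices ($\Radius_\nRuns \asymp \log\nRuns$ for the samples, a shrinking ball $B(\reftask,r)$ with $r/\varepsilon_\nRuns \to \infty$ and $L_\nRuns r \to \infty$ for $\task$) are the same modulo cosmetics. Your "main obstacle" paragraph correctly identifies a genuine subtlety that the paper somewhat glosses over: the second-moment clause of \cref{asm:data_generation_full} is stated with $\reftask$ in the denominator and does not directly yield a two-sided moment bound for $\log(\densitywrt{\nRuns}{\cdot \given \reftask}/\densitywrt{\nRuns}{\cdot \given \task})$ at nearby $\task$; the paper's "Cauchy--Schwarz combined with \cref{asm:laplace}" implicitly needs that transfer, which you flag and propose to close via the Lipschitz clause and a net argument — a reasonable patch, and arguably a cleaner accounting than the original.
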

\begin{proof}
    
For some  $\Radius_\nRuns  \geq \radius_\nRuns > 0$, we split the term as 
\begin{align}
    &-\log \parens*{
    \exwrt*{\task \sim \prior}{
        \exp \parens*{
            - \exwrt*{\samplevec}{
                \log \frac{\densitywrt{\nsamples}{\samplevec \given \reftask}}{\densitywrt{\nsamples}{\samplevec \given \task}}
            } 
        }
}}\\
    &=
    -\log \parens*{
    \exwrt*{\task \sim \prior}{
        \oneof*{\norm{\task} \leq \Radius_\nRuns}
        \exp \parens*{
            - \exwrt*{\samplevec}{
                \log \frac{\densitywrt{\nsamples}{\samplevec \given \reftask}}{\densitywrt{\nsamples}{\samplevec \given \task}}
            } 
        }
        +
        \oneof*{\norm{\task} > \Radius_\nRuns}
        \exp \parens*{
            - \exwrt*{\samplevec}{
                \log \frac{\densitywrt{\nsamples}{\samplevec \given \reftask}}{\densitywrt{\nsamples}{\samplevec \given \task}}
            } 
        }
    }
    }\\
    &\leq
    -\log \parens*{
    \exwrt*{\task \sim \prior}{
        \oneof*{\norm{\task} \leq \radius_\nRuns}
        \exp \parens*{
            - \exwrt*{\samplevec}{
                \log \frac{\densitywrt{\nsamples}{\samplevec \given \reftask}}{\densitywrt{\nsamples}{\samplevec \given \task}}
            } 
        }
        +
        \oneof*{\norm{\task} > \Radius_\nRuns}
        \exp \parens*{
            - \exwrt*{\samplevec}{
                \log \frac{\densitywrt{\nsamples}{\samplevec \given \reftask}}{\densitywrt{\nsamples}{\samplevec \given \task}}
            } 
        }
    }
}\label{eq:laplace_split}
\end{align}
Using Cauchy-Schwarz inequality and \cref{asm:data_generation_full} and its refinement in the statement, we bound the second term as, for $\task$ such that $\norm{\task} > \Radius_\nRuns$,
so that

\begin{align}
    &\abs*{\exwrt*{\samplevec}{
                \log \frac{\densitywrt{\nsamples}{\samplevec \given \reftask}}{\densitywrt{\nsamples}{\samplevec \given \task}}
                } 
            }
            \leq 
            e^{-\Radius_\nRuns/2}  \poly(\nsamples)\,.
\end{align}
so that
\begin{align}
    &\exwrt*{\task \sim \prior}{
        \oneof*{\norm{\task} > \Radius_\nRuns}
        \exp \parens*{
            - \exwrt*{\samplevec}{
                \log \frac{\densitywrt{\nsamples}{\samplevec \given \reftask}}{\densitywrt{\nsamples}{\samplevec \given \task}}
            } 
        }
    }\\
    &\leq \exp \parens*{e^{-\Radius_\nRuns/2}  \poly(\nsamples)} \prior\parens*{\norm{\task} > \Radius_\nRuns}\\
    &\leq \exp \parens*{e^{-\Radius_\nRuns/2}  \poly(\nsamples)} \frac{\exwrt*{\task \sim \prior}{\norm{\task}^{2}}}{\Radius_\nRuns^{2}}\,,
    \label{eq:laplace_tail}
\end{align}
where we used Markov's inequality in the last line. Take $\Radius_\nRuns = \nRuns^{(\taskdim+1)}/2$ so that \cref{eq:laplace_tail} is $\bigoh(1/\nRuns^{\taskdim  + 1})$.

We now focus on the first term of \cref{eq:laplace_split} and bound it as:
\begin{align}
\exwrt*{\samplevec}{
                \log \frac{\densitywrt{\nsamples}{\samplevec \given \reftask}}{\densitywrt{\nsamples}{\samplevec \given \task}}
            } 
&= \exwrt*{\samplevec}{
    \oneof*{\max_\run \norm{\sample_\run} \leq \radius_\nRuns}
                \log \frac{\densitywrt{\nsamples}{\samplevec \given \reftask}}{\densitywrt{\nsamples}{\samplevec \given \task}}
            } 
            +
            \exwrt*{\samplevec}{
    \oneof*{\max_\run \norm{\sample_\run} > \radius_\nRuns}
                \log \frac{\densitywrt{\nsamples}{\samplevec \given \reftask}}{\densitywrt{\nsamples}{\samplevec \given \task}}
            }\\
&\leq
\poly(\radius_\nRuns)T \norm{\task - \reftask} +
\poly(\nRuns) e^{-\radius_\nRuns/2}
\end{align}
where we used the regularity assumption of \cref{asm:data_generation_full} for the first term and Cauchy-Schwarz inequality combined with \cref{asm:laplace} for the second term.

Take $\radius_\nRuns = \poly(\log \nRuns)$ so that $\poly(\nRuns) e^{-\radius_\nRuns/2} = \bigoh(1)$ and assume that $\nRuns$ is large enough so that $\radius_\nRuns \geq \norm{\reftask} + 1$.

Putting everything together, we have
\begin{align}
    &-\log \parens*{
    \exwrt*{\task \sim \prior}{
        \exp \parens*{
            - \exwrt*{\samplevec}{
                \log \frac{\densitywrt{\nsamples}{\samplevec \given \reftask}}{\densitywrt{\nsamples}{\samplevec \given \task}}
            } 
        }
}}\\
&\leq 
    -\log \parens*{
    \exwrt*{\task \sim \prior}{
        \oneof*{\norm{\task} \leq \radius_\nRuns}
        \exp \parens*{
            - \poly(\radius_\nRuns) \nRuns \norm{\task - \reftask}
            + \bigoh(1)
        }
        + \bigoh\parens*{\frac{1}{\nRuns^{\taskdim + 1}}
    }
}}\\ 
&\leq 
    -\log \parens*{
    \exwrt*{\task \sim \prior}{
        \oneof*{\norm{\task} \leq \norm{\reftask} + 1}
        \exp \parens*{
            - \poly(\log \nRuns) \nRuns \norm{\task - \reftask}
            + \bigoh(1)
        }
        + \bigoh\parens*{\frac{1}{\nRuns^{\taskdim + 1}}
    }
}}\,,
\end{align}
where we used that we assumed that $\radius_\nRuns = \poly(\log \nRuns)  \geq \norm{\reftask} + 1$.

Applying \cref{lem:laplace} with $\varepsilon = 1/(\poly(\log \nRuns) \nRuns)$ yields:
\begin{align}
    \exwrt*{\task \sim \prior}{
        \oneof*{\norm{\task} \leq \norm{\reftask} + 1}
        \exp \parens*{
            - \poly(\log \nRuns) \nRuns \norm{\task - \reftask}
        }
    }
    &= \poly(\log \nRuns) \nRuns^{-\taskdim} \parens*{
        \prior(\reftask) C + o(1)
    }\,,
\end{align}
where $C$ is the constant of \cref{lem:laplace} and this concludes the proof.

\end{proof}

We can now combine \cref{thm:task_selection_bound_icl,prop:laplace} to obtain the final result in the main text.
\begin{theorem}
    \label{thm:task_selection_bound_icl_simplified}
Under \cref{asm:data_generation_full,asm:laplace}, for any $\renexp \in (0, 1)$, 
$\nRuns \geq 1$,
it holds that, for $\samplevec \sim \densitywrt{\nsamples}{\cdot \given \task^*}$,
\begin{align}
&\exwrt*{\samplevec}{
    \exwrt*{\task \sim \posteriorwrt{\nsamples}{\cdot \given \samplevec}}{
        \divergence{\renexp}{\nsamples}{\task}{\task^*}
    }
}\\
&\leq 
 \frac{1+\renexp}{(1-\renexp) \nRuns} \log  1/\prior(\reftask)
\label{eq:thm:app:task_selection_bound_icl_1_bis}
\\ 
&\quad+\frac{1+\renexp}{1-\renexp} \frac{\dkl{\densitywrt{\nsamples}{\cdot \given \task^*}}{\densitywrt{\nsamples}{\cdot \given \reftask}}}{\nRuns}
\label{eq:thm:app:task_selection_bound_icl_2_bis}
\\
&+ \bigoh\parens*{\frac{\log(\nsamples)}{\nRuns}}\,,
\end{align}
where the $\bigoh(\cdot)$ hides constants that do not depend on $\prior$ or $\nRuns$.
\end{theorem}
\begin{proof}
    This is a direct consequence of \cref{thm:task_selection_bound_icl,prop:laplace}.
\end{proof}

\begin{remark}
    Though in the main text we apply our task selection result where $\pi$ is the ``ideal'' continuous pretraining distribution, our framework can also be used to directly analyze task selection with a finite number of pretraining tasks instead. In this remark, we briefly sketch how \Cref{thm:task_selection_bound_icl_simplified} can be used in this case and how it can still show the benefits of heavier tails for task selection when $\task^*$ is far from the bulk.
    Consider $N$ tasks $\task_1, \dots, \task_N$ sampled i.i.d. from $\pi$ and denote by $\pi_N$ the discrete distribution putting mass $1/N$ on each of these tasks. We can apply \cref{thm:task_selection_bound_icl_simplified} with $\pi_N$ instead of $\pi$ and obtain a bound on the expected error for a new unseen test task $\task^*$.
    The leading error term in this case behaves as 
    \begin{equation}
        \min_{1\leq i \leq N} \dkl{\densitywrt{\nsamples}{\cdot \given \task_i}}{\densitywrt{\nsamples}{\cdot \given \task^*}}\,,
    \end{equation}
    which, assuming this divergence is regular enough in $\task$, is of order $\min_{1\leq i \leq N} \norm{\task_i - \task^*}^2$.
    We recover that, when $\task^*$ is far from the bulk, heavier tails (which increase the probability of sampling tasks near $\task^*$) make this error smaller, which is consistent with our experiments.
    For instance, for the 1D Student-t case, \ie $\pi(\task) \propto 1/\abs*{\task}^{\nu+1}$, when $\task^*$ is far from the bulk, the expectation of this error is of order 
    $\frac{\abs*{\task^*}^{2\nu+2}}{N^{2}}$.
Hence, when $\task^*$ is large, having heavier tails (small $\nu$) is beneficial for task identification (with an exponential dependency on $\nu$).
 
\end{remark}

\newpage

\section{Additional details on examples}

\subsection{Example: Volterra equation model}
\label{app:subsec:volterra}

We discuss the Volterra equation model to explicit the dependence of the generalization bounds on the memory decay parameter $\alpha>0$.

\paragraph{Setup.}
Let $(W_t)_{t\ge 1}$ be noise sequence taking values in $\mathbb R^d$.
Given a Lipschitz drift $b:\mathbb R^d\to\mathbb R^d$ with Lipschitz constant $L\ge 0$, 
we consider the discretized Volterra equation: for $t\ge 0$,
\begin{equation}
X_{t+1} \;=\; \sum_{u=1}^{t} K(t,u)\,\bigl(b(X_u)+W_u\bigr),
\qquad
K(t,u) \;=\; \frac{1}{(t-u+1)^{\alpha}},\ \ \alpha>0.
\label{eq:model}
\end{equation}

When applying the generalization framework, we would consider the augmented sequence $(X_1, W_1, X_2, W_2, \ldots)$.
To satisfy the weak dependence assumption \cref{asm:weak_dependence}, we need to bound the effect of perturbations in either the state or the noise or the drift. 
We begin with perturbations in the state or noise, and we discuss drift perturbations at the end of this section.
For perturbations in the state or noise, we will obtain bounds on the Wasserstein distance between the conditional laws of $X_t$ and $X'_t$ given the past, where $X_t$ and $X'_t$ are two versions of the process \eqref{eq:model} that differ by a perturbation at some time $s<t$.

The coefficient $\alpha$ will play a key role in the dependence structure through the sums:
\begin{equation}\label{eq:Halpha-def}
H_\alpha(n) \;=\; \sum_{r=1}^n \frac{1}{r^\alpha}.
\end{equation}
We also use $\zeta(\alpha)=\sum_{r=1}^\infty r^{-\alpha}$ for $\alpha>1$ and we have the following bounds on $H_\alpha(n)$
\begin{equation}
H_\alpha(n) \;\le\;
\begin{cases}
1+\log n, & \alpha=1,\\[4pt]
\zeta(\alpha), & \alpha>1.
\end{cases}
\label{eq:Halpha-bounds}
\end{equation}

We will make use of the following technical lemma.
\begin{lemma}
\label{lem:volterra}
Let $(a_n)_{n\ge 0}$ be nonnegative numbers and suppose that for $n\ge 1$,
\begin{equation}
\label{eq:volterra-ineq}
a_n \;\le\; L\sum_{r=1}^{n} r^{-\alpha}\,a_{n-r} \;+\; g_n,
\end{equation}
with non-decreasing $(g_n)_{n\ge 1}$ and given $a_0\ge 0$.
Define, for $N \geq 1$,
\begin{equation}
    \lambda_N \defeq \begin{cases}
        L (1+ \log N) & \text{ if } \alpha=1,\\[6pt]
        L \zeta(\alpha) & \text{ if } \alpha>1.
    \end{cases}
\end{equation}
Then, for all $1 \leq  n \leq N$,
\begin{equation}
\label{eq:volterra-exp}
a_n \;\le\; \lambda_N^n a_0 \;+\; \sum_{j=1}^{n} g_j{\lambda_N^{n - j}}.
\end{equation}
\end{lemma}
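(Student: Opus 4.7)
The plan is to prove the claim by induction on $n$ and reduce the Volterra-type inequality to a first-order linear recursion. Define the candidate upper bound by $b_0 \defeq a_0$ and $b_n \defeq \lambda_N b_{n-1} + g_n$ for $n \ge 1$; solving this affine recursion gives $b_n = \lambda_N^n a_0 + \sum_{j=1}^n g_j \lambda_N^{n-j}$, so the goal reduces to showing $a_n \le b_n$ for every $1 \le n \le N$.

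The base case $n = 1$ follows directly from \eqref{eq:volterra-ineq}: $a_1 \le L a_0 + g_1 \le \lambda_N a_0 + g_1 = b_1$, using that $\lambda_N \ge L H_\alpha(N) \ge L$ by definition and \eqref{eq:Halpha-bounds}. For the inductive step, assume $a_k \le b_k$ for all $k < n$, and observe that $(b_k)$ is non-decreasing in the regime of interest $\lambda_N \ge 1$: indeed $b_k - b_{k-1} = (\lambda_N - 1) b_{k-1} + g_k \ge 0$ since $g_k, b_{k-1} \ge 0$. Plugging the inductive hypothesis into \eqref{eq:volterra-ineq} and bounding $b_{n-r} \le b_{n-1}$ for $r \ge 1$ by monotonicity gives
\[
a_n \;\le\; L \sum_{r=1}^n r^{-\alpha} b_{n-r} + g_n \;\le\; L\, H_\alpha(n)\, b_{n-1} + g_n.
\]
Using $H_\alpha(n) \le H_\alpha(N)$ from \eqref{eq:Halpha-bounds} together with the defining inequality $L H_\alpha(N) \le \lambda_N$ (an equality up to the logarithmic bound when $\alpha=1$, and $\le L\zeta(\alpha)=\lambda_N$ when $\alpha>1$), we conclude $a_n \le \lambda_N b_{n-1} + g_n = b_n$, closing the induction.

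The key technical step, and the only place where the kernel exponent $\alpha$ enters, is the compression of the memory kernel $r^{-\alpha}$ into the effective one-step contraction constant $\lambda_N$: after replacing each $b_{n-r}$ by the largest $b_{n-1}$, the total mass of the kernel $\sum_{r=1}^n r^{-\alpha}$ is absorbed into the factor $L H_\alpha(N)$, which is precisely how $\lambda_N$ is defined in both regimes $\alpha=1$ (logarithmic growth in $N$) and $\alpha>1$ (uniform bound $L\zeta(\alpha)$). The main subtlety is the monotonicity of $(b_k)$, which requires $\lambda_N \ge 1$; this is automatic in the Volterra application where $L H_\alpha(N) \ge 1$, and otherwise one may work with $\max(\lambda_N,1)$ without altering the $\alpha$-dependence that feeds into the dependency coefficients $\depB_{\run,\runalt}$ of \cref{thm:generalization}.
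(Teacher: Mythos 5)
Your argument is essentially the same as the paper's. The paper sets $A_n := \max_{0 \le m \le n} a_m$, deduces $a_n \le L\,H_\alpha(n)\,A_{n-1} + g_n$ by replacing each $a_{n-r}$ in the convolution with $A_{n-1}$, then asserts $A_n \le L\,H_\alpha(n)\,A_{n-1} + g_n$ and iterates; you instead construct the first-order majorant $b_n = \lambda_N b_{n-1} + g_n$ and show $a_n \le b_n$ by induction. The two proofs are equivalent up to packaging---in both, the only nontrivial content is absorbing the kernel mass $\sum_{r} r^{-\alpha}$ into the factor $L\,H_\alpha(n)$ after replacing past values by a monotone majorant---and both hinge on the same implicit hypothesis: the paper's step ``so $A_n \le L\,H_\alpha(n)\,A_{n-1} + g_n$'' and your monotonicity of $(b_k)$ each require $L\,H_\alpha(n) \ge 1$, i.e.\ $\lambda_N \ge 1$. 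The paper leaves this silent; you isolate and name it, which is a genuine improvement.

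You are also right that the issue is not cosmetic: the lemma as stated is false when $\lambda_N < 1$. Take $L = 0.1$, $\alpha = 2$, $g_n \equiv 0$, $a_0 = 100$, and equality in \cref{eq:volterra-ineq}; then $a_1 = 10$ and $a_2 = 0.1\,(a_1 + a_0/4) = 3.5$, whereas $\lambda_N^2 a_0 = (0.1\,\zeta(2))^2 \cdot 100 \approx 2.71 < 3.5$. So replacing $\lambda_N$ by $\max(\lambda_N,1)$, as you propose, is not merely safe but necessary for the stated conclusion. One small caveat to your closing remark: $L\,H_\alpha(N) \ge 1$ is \emph{not} automatic in the application---\cref{app:subsec:volterra} explicitly discusses the regime $L\zeta(\alpha) < 1$ as yielding exponential decay, which this lemma as written cannot in fact deliver.
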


\begin{proof}
Let $A_n:=\max_{0\le m\le n} a_m$. From \eqref{eq:volterra-ineq},
$a_n \le L \sum_{r=1}^{n} r^{-\alpha} A_{n-r} + g_n \le L H_\alpha(n) A_{n-1} + g_n$, so $A_n \leq L H_\alpha(n) A_{n-1} + g_n$ since $(g_n)_n$ is non-decreasing.
Bounding $H_\alpha(n)$ using \eqref{eq:Halpha-bounds} gives $A_n \leq \lambda_N A_{n-1} + g_n$ for all $1 \leq n \leq N$.
Iterating this inequality yields the result.
\end{proof}

\paragraph{State perturbation.}

Fix $s\ge 1$ and let $\mathcal F_s:=\sigma\big(X_1,\dots,X_s,\ W_1,\dots,W_s\big)$ on which we condition.
Assume the two systems agree up to $s-1$, and at time $s$ we have
\[
X'_s \;=\; X_s - h
\]
with $h \neq 0$.
For $t\ge s$, define $\Delta_t:=X_t-X'_t$.
Subtracting \eqref{eq:model} for the two evolutions (they share $(W_u)$) gives for $t\ge s$:
\begin{equation}
\label{eq:delta-state}
\Delta_{t+1}
= \sum_{u=s}^{t} \frac{b(X_u)-b(X'_u)}{(t-u+1)^\alpha},\qquad
\|\Delta_{t+1}\| \le L\sum_{u=s}^{t}\frac{\|\Delta_u\|}{(t-u+1)^\alpha}.
\end{equation}
Set $n:=t-s+1$, $a_n:=\mathbb E\big(\|\Delta_{s+n}\|\,\big|\,\mathcal F_s\big)$ and $a_0=\|\Delta_s\|=\|h\|$. 
Applying Lemma~\ref{lem:volterra} with $g_n = 0$ yields, for $n \leq  N$,
\begin{equation}
\label{eq:state-cond-exp}
a_n \;\le\; \lambda_N^n\,\|h\|, 
\end{equation}
We now bound the Wasserstein distance between the conditional laws of $X_{s+n}$ and $X'_{s+n}$ given $\mathcal F_s$ by using the synchronous coupling between $X_{s+n}$ and $X'_{s+n}$ (which share the same noise sequence $(W_u)_{u>s}$):
\[
W_1\!\big(\mathcal L(X_{s+n}\mid\mathcal F_s),\ \mathcal L(X'_{s+n}\mid\mathcal F_s)\big)
\leq \mathbb E\big(\|X_{s+n}-X'_{s+n}\|\mid\mathcal F_s\big) \leq \lambda_N^n\,\|h\|. 
\]
Therefore, for any horizon $T\ge s+1$,
\begin{equation}
\label{eq:state-final}
{\quad
\sup_{\,s+1\le t\le T}
W_1\!\big(\mathcal L(X_t\mid\mathcal F_s),\ \mathcal L(X'_t\mid\mathcal F_s)\big)
\;\le\;
\|h\|\, \lambda_{T - s}^{T - s}
= \begin{cases}
    \|h\|\,(L(1+\log \parens{T-s})^{T-s} &\text{if }\alpha=1,\\[6pt]
    \|h\|\,\parens{L\zeta(\alpha)}^{T-s} &\text{if }\alpha>1.
\end{cases}
\quad}
\end{equation}

The behaviour of the bound crucially depends on $\alpha$ and $L$:  if $\alpha>1$ and $L\zeta(\alpha)<1$, the effect of the perturbation decays exponentially fast with $T-s$; if $\alpha > 1$ and $L\zeta(\alpha)>1$, the effect of the perturbation grows exponentially fast with $T-s$.
In both case, higher values of $\alpha$ (faster memory decay) lead to better dependence properties.

\paragraph{Noise perturbation.}

Fix $s\ge 1$ and let $\mathcal F_{s-1}:=\sigma\big(X_1,\dots,X_{s-1},\ W_1,\dots,W_{s-1}\big)$.
Assume the two systems agree up to time $s$ except that at time $s$ we have
\[
W'_s \;=\; W_s + \eta
\]
with $\eta \neq 0$,
and $W'_u=W_u$ for $u\neq s$.
Again define $\Delta_t:=X_t-X'_t$ for $t\ge s$.
Subtracting the two recursions gives for $t\ge s$:
\begin{equation}
\label{eq:delta-noise}
\Delta_{t+1}
= \sum_{u=s}^{t} \frac{b(X_u)-b(X'_u)}{(t-u+1)^\alpha}
\;+\; \frac{W_s-W'_s}{(t-s+1)^\alpha}.
\end{equation}
Taking norms and using Lipschitzness,
\[
\|\Delta_{t+1}\|
\;\le\;
L\sum_{u=s}^{t}\frac{\|\Delta_u\|}{(t-u+1)^\alpha}
\;+\; \frac{\|\eta\|}{(t-s+1)^\alpha}.
\]
Set $n:=t-s+1$ and $a_n:=\mathbb E\big(\|\Delta_{s+n}\|\,\big|\,\mathcal F_{s-1}\big)$.
Note $a_0=0$ (since $X_s=X'_s$).
Apply Lemma~\ref{lem:volterra} with $g_n:=\|\eta\|\,n^{-\alpha}$ to obtain,for $n \leq  N$,
\begin{equation}
\label{eq:noise-cond-exp}
a_n \;\le\; \sum_{j=1}^{n} \|\eta\| j^{-\alpha}\,{\lambda_N^{n - j}}
\leq \|\eta\| \times \frac{\lambda_N^n - 1}{\lambda_N - 1}\,,
\end{equation}
where we consider $\lambda_N \neq 1$ for simplicity.

Bounding the Wasserstein distance as before yields, for any horizon $T\ge s+1$,
\begin{equation}
\label{eq:noise-final}
{\quad
\sup_{\,s+1\le t\le T}
W_1\!\big(\mathcal L(X_t\mid\mathcal F_{s-1}),\ \mathcal L(X'_t\mid\mathcal F_{s-1})\big)
\;\le\;
\begin{cases}
    \| \eta \|\,\frac{(L(1+\log (T-s)))^{T-s} - 1}{L(1+\log (T-s)) - 1}, &\text{if }\alpha=1,\\[6pt]
    \|\eta\|\,\frac{(L\zeta(\alpha))^{T-s} - 1}{L\zeta(\alpha) - 1}, &\text{if }\alpha>1.
\end{cases}
\quad}
\end{equation}

\paragraph{Drift perturbation.}
To consider drift perturbations, we write the drift as $b_\theta$ where $\theta$ is a parameter. In addition to assuming that $b_\theta$ is uniformly $L$-Lipschitz for all $\theta$, we also assume that it is $M$-Lipschitz in $\theta$ uniformly in $x$, that is, for all $x, x'\in\R^d$ and $\theta,\theta'$,
\begin{equation}
    \norm{b_\theta(x)-b_{\theta'}(x')}\ \le\ L\,\|x-x'\| + M\,\|\theta-\theta'\|.
    \label{eq:drift-lip-ex}
\end{equation}

Consider $\theta, \theta'$ and the two systems with drifts $b_\theta$ and $b_{\theta'}$ respectively:
\begin{align}
    X_{t+1} \;&=\; \sum_{u=1}^{t} K(t,u)\,\bigl(b_\theta(X_u)+W_u\bigr),\\
    X'_{t+1} \;&=\; \sum_{u=1}^{t} K(t,u)\,\bigl(b_{\theta'}(X'_u)+W_u\bigr).
\end{align}
As before, we will bound the Wasserstein distance between $X_t$ and $X'_t$ by using the synchronous coupling. Assuming that the two sequences share the same noise sequence $(W_u)$, we define $\Delta_t=X_t-X'_t$ and obtain, using \eqref{eq:drift-lip-ex}, for $t \leq T$    
\begin{equation}
    \|\Delta_{t+1}\|
    \leq  L\sum_{u=1}^{t}\frac{\|\Delta_u\|}{(t-u+1)^\alpha} + M \|\theta-\theta'\| H_\alpha(T)\,.
\end{equation}
Setting $a_n=\norm{\Delta_n}$ and $g_n=M \|\theta-\theta'\| H_\alpha(T)$ with $a_0 = 0$, we can apply Lemma~\ref{lem:volterra} as before to obtain, for $t \leq T$,
\begin{equation}
    W_1\!\big(\mathcal L(X_t),\ \mathcal L(X'_t)\big)
    \;\le\;
    M \|\theta-\theta'\| 
    \begin{cases}
        (1+\log T)\,
        \frac{(L(1+\log T))^{t} - 1}{L(1+\log T) - 1}, &\text{if }\alpha=1,\\[6pt]
        \zeta(\alpha)\,
        \frac{(L\zeta(\alpha))^{t} - 1}{L\zeta(\alpha) - 1}, &\text{if }\alpha>1\,
    \end{cases} 
\end{equation}
where we used \eqref{eq:Halpha-bounds} to bound $H_\alpha(T)$. 

\revsubsection{Examples for task selection assumptions}
\label{app:subsec:examples}

In this section, we check that the examples of \cref{subsec:icl-setting} in the main text satisfy \cref{asm:data_generation_full,asm:laplace}. 
These are lengthy but mostly straightforward calculations, which we sketch to illustrate how to verify the assumptions in practice.
\revise{We also explicit the link between the Renyi divergence that appears in \cref{thm:task_selection} and the usual loss functions in these examples.} 

\newcommand{\Ex}{\mathbb{E}}
\begin{example}[Linear regression]
    We consider the linear regression example of \cref{subsec:icl-setting} in the main text and check that it satisfies \cref{asm:data_generation_full,asm:laplace}.
\newcommand{\Inp}{Q}
\newcommand{\Out}{Y}
Fix a true task $\sol[\task]\in\R^d$. For $t=1,\dots,\nRuns$, consider
$\inp_t\sim\mathcal N(0,\sigma_\inp^2 I_d)$ and noise $\epsilon_t\sim\mathcal N(0,\sigma_\epsilon^2)$ i.i.d., and
$\out_t= \inp_t^\top \sol[\task]+\epsilon_t$, $z_t=(\inp_t,\out_t)$, $\rvA=\{z_t\}_{t=1}^\nRuns$.
Define $\Inp\in\R^{\nRuns\times d}$ has rows $\inp_t^\top$ and $\Out=(\out_t)_{t=1}^\nRuns$,
and, for any parameter $\task\in\R^d$,
\[
\ell_{\nRuns}(\task):=\log \densitywrt{\nRuns}{\rvA\mid\task}
=-\tfrac{1}{2\sigma_\epsilon^2}\|\Out-\Inp\task\|_2^2+\text{const},
\]
where the constant term depends on $\Inp$ but not on $\theta$

Let us begin with the tail behavior.
Both $\inp_t$ and $\out_t=\inp_t^T\sol[\task]+\epsilon_t$ are sub-Gaussian; hence for some $c>0$ and all $R\ge 1$,
\[
\prob\!\left(\exists\,t\le n\,, \|z_t\|\ge R\right)\ \le\ \poly(n)\,e^{-cR^2}\ \le\ \poly(n)\,e^{-R}.
\]

For the tail condition on the likelihood, let $\Delta=\task-\reftask$ and $r_0:=\Out-\Inp\reftask$. Then
\[
\ell_{\nRuns}(\task)-\ell_{\nRuns}(\reftask)
=-\tfrac{1}{2\sigma_\epsilon^2}\big(\|\Inp\Delta\|_2^2-2\Delta^\top \Inp^\top r_0\big)
\]
Now, by \eg \citet[Thm.~6.1]{wainwright2019hds}, for $\nRuns$ large enough, there is $c>0$ constant such that, 
with probability at least  $1 - e^{-c \nRuns}$, 
$\|\Inp\Delta\|\ge c  \sqrt \nRuns\,\|\Delta\|$ and $\|\Inp^\top r_0\|\le c^{-1}\sqrt \nRuns\,\|r_0\|$. 
Hence, uniformly over $\|\task\|\ge R$ (so $\|\Delta\|\ge R-\|\reftask\|$),
\[
\ell_{\nRuns}(\task)-\ell_{\nRuns}(\reftask)\ \le\ -\tfrac{c^2 \nRuns}{2\sigma_\epsilon^2}\|\Delta\|^2
+ \tfrac{c^{-1}\sqrt \nRuns}{\sigma_\epsilon^2}\|\Delta\|\,\|r_0\|.
\]
For all $R$ larger than a constant multiple of $\|r_0\|/\sqrt \nRuns +\|\reftask\|$, the right-hand side is negative; thus
$\sup_{\|\task\|\ge R} \densitywrt{\nRuns}{\rvA\mid\task} < \densitywrt{\nRuns}{\rvA\mid\reftask}$.
Since $\|r_0\|$ is sub-Gaussian and the norm bounds above hold with probability at least $1-e^{-c n} \geq 1 - e^{- c \Radius}$ for $\Radius \geq \nRuns$, we obtain, for all $R\ge \nRuns$,
\[
\prob\!\left(\sup_{\|\task\|\ge R} \densitywrt{\nRuns}{\rvA\mid\task}\ \ge\ \densitywrt{\nRuns}{\rvA\mid\reftask}\right)
\ \le\ \poly(\nRuns)\,e^{-R}.
\]

We now consider the moment condition.
Then, for any reference $\reftask$,
\[
\sup_{\task}\frac{\densitywrt{\nRuns}{\rvA\mid\task}}{\densitywrt{\nRuns}{\rvA\mid\reftask}}
=\exp\!\Big(\sup_{\task}\{\ell_{\nRuns}(\task)-\ell_{\nRuns}(\reftask)\}\Big)
\le \exp\!\Big(\tfrac{1}{2\sigma_\epsilon^2}\,\|\Out-\Inp\reftask\|_2^2\Big),
\]
Therefore, we have
\[
\log^2\!\sup_{\task}\frac{\densitywrt{\nRuns}{\rvA\mid\task}}{\densitywrt{\nRuns}{\rvA\mid\reftask}}
\ \le\ C\,\big(\|\Inp(\sol[\task]-\reftask)\|_2^2+\|\epsilon\|_2^2\big)^2,
\]
and using Gaussian moment bounds
\[
\Ex\!\left[\log^2\!\sup_{\task}\frac{\densitywrt{\nRuns}{\rvA\mid\task}}{\densitywrt{\nRuns}{\rvA\mid\reftask}}\right]
\ \le\ \poly(n) \big(1+\|\sol[\task]-\reftask\|_2^4\big)
\ =\ \poly(n).
\]
We finally check the local regularity condition.
For any $t$ and $\task,\task'$,
\[
\log\frac{\densitywrt{t}{\out_t\mid \inp_{1:t},\out_{1:t-1},\task}}{\densitywrt{t}{\out_t\mid \inp_{1:t},\out_{1:t-1},\task'}}
= -\tfrac{1}{2\sigma_\epsilon^2}\!\big[(\out_t-\task^\top \inp_t)^2-(\out_t-{\task'}^\top \inp_t)^2\big].
\]
Assuming that $\|\inp_{1:t}\|_\infty,|\out_{1:t}|\le R$ and $\|\task\|,\|\task'\|\le R$ (with $R\ge 1$) and using that $(a -b)^2-(a-c)^2=(c-b)(2a-b-c)$, we have
\[
    \Big| \log\frac{\densitywrt{t}{\out_t\mid \inp_{1:t},\out_{1:t-1},\task}}{\densitywrt{t}{\out_t\mid \inp_{1:t},\out_{1:t-1},\task'}} \Big|
=\tfrac{1}{2\sigma_\epsilon^2}\,|(\task-\task')^\top \inp_t|\,\big|2\out_t-(\task+\task')^\top \inp_t\big|
\ \le\ \tfrac{1}{\sigma_\epsilon^2}\,R^3\,\|\task-\task'\|,
\]
so the condition holds. 

\revise{
Let us now explicit the Renyi divergence in this case. Since $\inp_t$ do not depend on $\task$ and $(\inp_t, \out_t)_t$ are i.i.d., we have
\begin{equation}
    \divergence{\renexp}{\nRuns}{\task}{\task^*}
    = - \frac{\lfloor \nRuns / 2 \rfloor}{\nRuns (1 -\renexp)} \log \Ex_{\inp, \out} \left[\left(\frac{p(\out \mid \inp, \task)}{p(\out \mid \inp, \task^*)}\right)^{\renexp}\right]\,.
\end{equation}

We now focus on the expectation and write, using standard Gaussian integrals,
\begin{align}
\Ex_{\inp, \out} \left[\left(\frac{p(\out \mid \inp, \task)}{p(\out \mid \inp, \task^*)}\right)^{\renexp}\right]
&=
\Ex_{\inp} \Ex_{\out \mid \inp} \left[\exp \left(
        \frac{\renexp}{2 \sigma_\epsilon^2} \left((\out - \inp^\top \task^*)^2 - (\out - \inp^\top \task)^2\right)
\right)\right] \\
&=
\Ex_{\inp} \Ex_{\out \mid \inp} \left[\exp \left(
        \frac{\renexp}{2 \sigma_\epsilon^2} \left(2 \epsilon \inp^\top (\task^* - \task) - (\inp^\top (\task^* - \task))^2\right)
\right)\right] \\
&=
\Ex_\inp \left[\exp \left(
        -\frac{\renexp(1- \renexp)}{2 \sigma_\epsilon^2} (\inp^\top (\task^* - \task))^2
\right)\right] \\
&=
\frac{1}{\sqrt{1+\frac{\renexp^2(1-\renexp)^2\sigma_\inp^2}{\sigma_\epsilon^4}  \|\task - \task^*\|^2}}\,.
\end{align}

The Renyi divergence is therefore
\begin{equation}
    \divergence{\renexp}{\nRuns}{\task}{\task^*}
    = \frac{\lfloor \nRuns / 2 \rfloor}{2 \nRuns(1 - \renexp)} \log \left(1+\frac{\renexp^2(1-\renexp)^2\sigma_\inp^2}{\sigma_\epsilon^4}  \|\task - \task^*\|^2\right)\,.
\end{equation}

Moreover, for $\renexp$ either close to $0$ or $1$, we have the approximation
\begin{equation}
    \divergence{\renexp}{\nRuns}{\task}{\task^*}
    =
    \frac{\renexp \lfloor \nRuns / 2 \rfloor \sigma_\inp^2 \renexp^2(1 - \renexp)
    }{2 \nRuns \sigma_\epsilon^4} \|\task - \task^*\|^2 + \mathcal{O}\left(\renexp^4(1 - \renexp)^3\right)\,.
\end{equation}
Hence, the quantity bounded in \cref{thm:task_selection} can be related to the squared loss as follows:
\begin{align}
    &\exwrt*{\task \sim \posteriorwrt{\nRuns}{\cdot \given \sample_{1:\nRuns}}}{
        \divergence{\renexp}{\nRuns}{\task}{\task^*}
    }\\
    &=
    \frac{\renexp \lfloor \nRuns / 2 \rfloor \sigma_\inp^2 \renexp^2(1 - \renexp)
    }{2 \nRuns \sigma_\epsilon^4} 
    \exwrt*{\task \sim \posteriorwrt{\nRuns}{\cdot \given \sample_{1:\nRuns}}}{
    \|\task - \task^*\|^2} + \mathcal{O}\left(\renexp^4(1 - \renexp)^3\right) \\
    &\geq
    \frac{\renexp \lfloor \nRuns / 2 \rfloor \sigma_\inp^2 \renexp^2(1 - \renexp)
    }{2 \nRuns \sigma_\epsilon^4}
    \| \exwrt*{\task \sim \posteriorwrt{\nRuns}{\cdot \given \sample_{1:\nRuns}}}{\task} - \task^* \|^2 + \mathcal{O}\left(\renexp^4(1 - \renexp)^3\right) \\
    &=
    \frac{\renexp \lfloor \nRuns / 2 \rfloor  \renexp^2(1 - \renexp)
    }{2 \nRuns \sigma_\epsilon^4}
    \ex_{\inp}
        \left\|
        \exwrt*{\task \sim \posteriorwrt{\nRuns}{\cdot \given \sample_{1:\nRuns}}}{
            \ex \left[\out \mid \inp, \task \right]
        }
        - \ex \left[\out \mid \inp, \task^* \right]
        \right\|^2
        \\
     &\quad+ \mathcal{O}\left(\renexp^4(1 - \renexp)^3\right)\,,
\end{align}
where we used Jensen's inequality in the second line. 
Note that $\exwrt*{\task \sim \posteriorwrt{\nRuns}{\cdot \given \sample_{1:\nRuns}}}{
            \ex \left[\out \mid \inp, \task \right]
        }$ is the optimal Bayesian predictor under the squared loss given the posterior distribution over $\task$, see \cref{eq:bayes_optimal_predictor_task}.
    As a conclusion, the Renyi divergence term in \cref{thm:task_selection} controls the squared prediction error of the Bayesian predictor, which models the in-context learning performance.
}

\end{example}

\begin{example}[Ornstein--Uhlenbeck process]
\newcommand{\ul}[1]{\overline{#1}}
\newcommand{\ol}[1]{\underline{#1}}
\newcommand{\Var}{\mathrm{Var}}
\newcommand{\refmu}{\mu_0}
\newcommand{\reftau}{\tau_0}
We consider the Ornstein--Uhlenbeck (OU) process example of \cref{subsec:icl-setting} in the main text and check that it satisfies \cref{asm:data_generation_full,asm:laplace}.
For simplicity, we consider the one-dimensional case $d=1$; the extension to $d>1$ with diagonal diffusion is straightforward.
We consider tasks $\task=(\mu,\tau)$ where $\mu\in\R$ and $\tau\in[\ul\tau,\ol\tau]$ with $0<\ul\tau\le \ol\tau<\infty$. Given $\task$, the Ornstein--Uhlenbeck (OU) SDE
\[
\mathrm{d}X_t=\tau(\mu-X_t)\,\mathrm{d}t+\sigma\,\mathrm{d}W_t
\]
is observed at regular times $t_r=r\,\Delta_t$ ($r=1,\dots,n$). We write $\sample_r:=X_{t_r}$ and $\rvA=\{\sample_r\}_{r=1}^n$. The Markov transition is Gaussian with mean
\[
m_\task(x):=\mu+e^{-\tau\Delta_t}(x-\mu)=e^{-\tau\Delta_t}x+\bigl(1-e^{-\tau\Delta_t}\bigr)\mu
\]
and variance $v_\task:=\Var(\sample_{r}\mid \sample_{r-1},\task)
=\sigma^2\,\frac{1-e^{-2\tau\Delta_t}}{2\tau}\,.$ For any path $\sample_{1:n}$, define $\ell_n(\task):=\log \densitywrt{n}{\rvA\mid\task}.$

Recall $\task=(\mu,\tau)$ with $\tau\in[\ul\tau,\ol\tau]$, discretization step $\Delta_t$, and
\[
m_\task(x)=\mu+\rho_\tau(x-\mu)=\rho_\tau x+(1-\rho_\tau)\mu,\qquad
v_\task=\sigma^2\frac{1-\rho_\tau^2}{2\tau},\qquad \rho_\tau:=e^{-\tau\Delta_t}.
\]
Fix a reference $\reftask=(\refmu,\reftau)$, write $m_0:=m_{\reftask}$, $v_0:=v_{\reftask}$, and let $\rvA=(\sample_1,\ldots,\sample_n)$ with \(\sample_r\) the OU samples at times \(r\Delta_t\).
The one–step densities are Gaussian, hence
\begin{equation}
\log\frac{\densitywrt{n}{\rvA\mid\task}}{\densitywrt{n}{\rvA\mid\reftask}}
=\sum_{r=1}^n\Bigg\{
-\frac{1}{2}\log\frac{v_\task}{v_0}
-\frac{\big(\sample_r-m_\task(\sample_{r-1})\big)^2}{2v_\task}
+\frac{\big(\sample_r-m_0(\sample_{r-1})\big)^2}{2v_0}
\Bigg\}.
\label{eq:ou-lr}\end{equation}

Let us begin with the tail behavior.
Each one-step innovation $\sample_r-m_\task(\sample_{r-1})$ is Gaussian with variance $v_\task$ and
\[
0< v_{\min}\ \le\ v_\task\ \le\ v_{\max}<\infty,
\quad v_{\min}:=\sigma^2\frac{1-e^{-2\ol\tau\Delta_t}}{2\ol\tau},\;
v_{\max}:=\sigma^2\frac{1-e^{-2\ul\tau\Delta_t}}{2\ul\tau}.
\]
Moreover, if $\sample_{r-1}$ satisfies $|\sample_{r-1}|\le R$, then $m_\task(\sample_{r-1})$ also satisfies $|m_\task(\sample_{r-1})|\le \rho_{\ol\tau} R + (1-\rho_{\ol\tau})|\mu|$. 
Hence, there exists $c>0$ depending only on $(\Delta_t,\ul\tau,\ol\tau,\sigma)$ and the law of $\sample_0$ such that, for all $R\ge 1$,
\begin{align}
    &\prob\!\Big(\exists r\le n\,, |\sample_r|\ge R\Big)\\ &\le\ 
    \prob\!\Big(\exists r\le n\,, |\sample_r - m_\task(\sample_{r-1})|\ge (1 - \rho_{\ol\tau}) R - |\mu|\Big)\\ &\le\
\poly(n)\,e^{-cR^2}\ \le\ \poly(n)\,e^{-R},
\end{align}
for $R$ large enough compared to $|\mu|$.

Let us continue with the tail condition on the likelihood.
We have the bound
\begin{equation}
\Big|\sum_{r=1}^n\!-\tfrac12\log\tfrac{v_\task}{v_0}\Big|
\le \tfrac{n}{2}\,\log\!\frac{v_{\max}}{v_{\min}}
=: C_{\mathrm{var}}\,n.
\label{eq:var-term}
\end{equation}

For each $r$, abbreviate $m:=m_\task(\sample_{r-1})$ and $m_0:=m_0(\sample_{r-1})$. Using $v_\task\ge v_{\min}$ and $v_0\ge v_{\min}$,
\[
-\frac{(\sample_r-m)^2}{2v_\task}+\frac{(\sample_r-m_0)^2}{2v_0}
\le \frac{1}{2v_{\min}}\Big((\sample_r-m_0)^2-(\sample_r-m)^2\Big).
\]
Expanding the square,
\[
(\sample_r-m_0)^2-(\sample_r-m)^2
= -\big(m-m_0\big)^2 + 2\,(\sample_r-m_0)\,\big(m-m_0\big).
\]
Summing over $r$ and applying Cauchy–Schwarz,
\begin{equation}
\sum_{r=1}^n\!\left(-\frac{(\sample_r-m)^2}{2v_\task}+\frac{(\sample_r-m_0)^2}{2v_0}\right)
\le -\frac{1}{2v_{\min}}\sum_{r=1}^n \Delta_r^2
\;+\;\frac{1}{v_{\min}}\Big(\sum_{r=1}^n (\sample_r-m_0)^2\Big)^{\!1/2}
\Big(\sum_{r=1}^n \Delta_r^2\Big)^{\!1/2},
\label{eq:quad-split}
\end{equation}
where $\delta_r:=m_\task(\sample_{r-1})-m_0(\sample_{r-1})$.

On events where $|\sample_{1:n}|\le R$, %
we have the 
conditions
\[
    c\norm{\mu - \refmu} - C(1+R)
|\delta_r| \le L(1+R)\,\|\task-\reftask\|,
\]
for constants $c, C, L$ depending only on $(\ul\tau,\ol\tau,\Delta_t)$. Therefore, for $\norm{\mu - \refmu}$ larger than a constant multiple of $(1+R)$, we have
\begin{equation}
\sum_{r=1}^n \delta_r^2 \;\ge\; n\,c\,\|\mu-\refmu\|^2
\quad\text{and}\quad
\Big(\sum_{r=1}^n \delta_r^2\Big)^{\!1/2}\;\leq\;\sqrt{n}\,C(1+R)\,\|\task-\reftask\|,
\label{eq:lip}
\end{equation}
for constants $c,C$ depending only on $(\ul\tau,\ol\tau,\Delta_t)$.

Combining \eqref{eq:var-term}, \eqref{eq:quad-split}, and \eqref{eq:lip},
\begin{equation}
\log\frac{\densitywrt{n}{\rvA\mid\task}}{\densitywrt{n}{\rvA\mid\reftask}}
\;\le\;
C n
-{cn} \|\mu-\refmu\|^2
+\,\Big(\sum_{r=1}^n (\sample_r-m_0(\sample_{r-1}))^2\Big)^{\!1/2}
{\sqrt{n}C(1+R)}\,\|\task-\reftask\|,,
\label{eq:master-upper}
\end{equation}
for constants $c,C$ depending only on $(\ul\tau,\ol\tau,\Delta_t)$.

\medskip
Fix $R\ge 1$ and assume that $|\sample_{1:n}|\le R$: we have shown that it holds with probability at least $1-\poly(n)e^{-cR^2}$.

In that case, 
$\Big(\sum_{r=1}^n (\sample_r-m_0(\sample_{r-1}))^2\Big)^{\!1/2}$ in
\cref{eq:master-upper} 
is bounded $\bigoh\parens*{\sqrt{n} R}$  so the RHS can be made negative for all sufficiently large $\|\task\|$: more precisely, it is negative for $\norm{\task} \geq R'$ with $R' \geq C(1+R)^2$ for a constant $C$ depending only on $(\ul\tau,\ol\tau,\Delta_t)$. 
Since the event we are considering holds with probability at least $1-\poly(n)e^{-cR^2}$, it means that  it holds with probability at least $1-\poly(n)e^{-R'}$.
This proves the required tail bound with $R \gets R'$.

Moving to the moment  condition, by Gaussian moment bounds, \cref{eq:ou-lr} readily implies
\[
\Ex\!\left[\log^2\!\sup_{\task}\frac{\densitywrt{n}{\rvA\mid\task}}{\densitywrt{n}{\rvA\mid\reftask}}\right]
\;\le\; C\,n^2 \;=\; \poly(n),
\]
which verifies the likelihood-ratio moment condition in \cref{asm:data_generation_full}.

Finally, we  show the local regularity condition.
For fixed $\sample_{1:r-1}$, the conditional density is
\[
\log \densitywrt{r}{\sample_r\mid \sample_{1:r-1},\task}
=-\tfrac12\log(2\pi v_\task)-\frac{\bigl(\sample_r-m_\task(\sample_{r-1})\bigr)^2}{2v_\task}.
\]
On sets where $|\sample_{1:r}|\le R$, $\|\task\| \le R$ (so $\mu,\tau$ bounded) and with $\tau\in[\ul\tau,\ol\tau]$, the maps
\[
\task\mapsto m_\task(\sample_{r-1})
=e^{-\tau\Delta_t}\sample_{r-1}+\bigl(1-e^{-\tau\Delta_t}\bigr)\mu,
\qquad
\task\mapsto v_\task=\sigma^2\frac{1-e^{-2\tau\Delta_t}}{2\tau}
\]
are smooth with bounded first derivatives:
\(
|\partial_\mu m_\task|\le 1,\ |\partial_\tau m_\task|\le C_R,
\ |\partial_\tau v_\task|\le C,\ \partial_\mu v_\task=0.
\)
Since $x_r - m_\task(\sample_{r-1})$ is also bounded by a constant multiple of $R$ on these sets, we obtain, for all $\task,\task'$ with $\|\task\|,\|\task'\|\le R$,
\[
\sup_{\substack{|\sample_{1:r}|\le R\\ \|\task\|,\|\task'\|\le R}}
\left|
\log \frac{\densitywrt{r}{\sample_r\mid \sample_{1:r-1},\task}}{\densitywrt{r}{\sample_r\mid \sample_{1:r-1},\task'}}
\right|
\ \le \poly(R)  \|\task-\task'\|\,.
\]

\end{example}

\end{appendices}
\end{document}